\title{On Oracle-Efficient PAC RL with Rich Observations}
\newcommand{\ntrain}{n_{\textrm{train}}}
\newcommand{\neval}{n_{\textrm{eval}}}
\newcommand{\ntest}{n_{\textrm{test}}}
\newcommand{\nexp}{n_{\textrm{exp}}}
\newcommand{\epsstat}{\epsilon_{\textrm{stat}}}
\newcommand{\epsfeas}{\epsilon_{\textrm{feas}}}
\newcommand{\epssub}{\epsilon_{\textrm{sub}}}
\newcommand{\rt}{\varnothing}
\newcommand{\mainAlg}{\textsc{Valor}\xspace} 
\newcommand{\lsvee}{\textsc{Lsvee}\xspace}
\newcommand{\Tmax}{T_{\max}}
\newcommand{\ii}{^{(i)}}
\newcommand{\order}{\ensuremath{O}}
\newcommand{\otil}{\ensuremath{\tilde{\order}}}
\newcommand{\bd}{b}
\newcommand{\Ndfs}{N_{\text{dfs}}}
\newcommand{\Ex}{\mathbf E}
\newcommand{\EEx}{\hat{\mathbf E}}
\newcommand{\prob}{\mathbf P}
\newcommand{\para}[1]{\textbf{#1}~}
\newcommand{\olive}{\textsc{Olive}\xspace}
\newcommand{\algname}{\textsc{VaLoR}\xspace}
\definecolor{darkgreen}{rgb}{0,0.5,0}
\definecolor{darkred}{rgb}{0.7,0,0}
\definecolor{teal}{rgb}{0.3,0.8,0.8}
\newcommand{\kibitz}[2]{\ifnum\Comments=1\textcolor{#1}{#2}\fi}
\author{
    Christoph Dann\\
    Carnegie Mellon University\\
    Pittsburgh, Pennsylvania\\
    \texttt{cdann@cdann.net}
    \And
    Nan Jiang\thanks{The work was done while NJ was a postdoc researcher at MSR NYC.}\\
    UIUC\\
    Urbana, Illinois\\
    \texttt{nanjiang@illinois.edu}
    \And
    Akshay Krishnamurthy\\
    Microsoft Research\\
    New York, New York\\
    \texttt{akshay@cs.umass.edu}
    \And
    Alekh Agarwal\\
    Microsoft Research\\
    Redmond, Washington\\
    \texttt{alekha@microsoft.com}
    \And
    John Langford\\
    Microsoft Research\\
    New York, New York\\
    \texttt{jcl@microsoft.com}
    \And
    Robert E.~Schapire\\
    Microsoft Research\\
    New York, New York\\
    \texttt{schapire@microsoft.com}
}
\begin{document}

\maketitle

\begin{abstract}
  We study the computational tractability of 
  PAC reinforcement learning with rich observations.
  We present new provably sample-efficient algorithms for environments with
  deterministic hidden state dynamics and stochastic rich
  observations. These methods operate in an oracle model of
  computation---accessing policy and value function classes
  exclusively through standard optimization primitives---and therefore
  represent computationally efficient alternatives to prior
  algorithms that require enumeration.
  With stochastic hidden state dynamics, 
  we prove that the
  only known sample-efficient
  algorithm, \olive~\citep{jiang2017contextual}, cannot be implemented in the
  oracle model.
  We also present several examples that illustrate fundamental
  challenges of tractable PAC reinforcement learning in such general
  settings.
\end{abstract}

\section{Introduction}
\label{introduction}

We study
episodic reinforcement learning (RL)
in environments with realistically rich observations such as images or text, which we refer to broadly as \emph{contextual decision processes}.
We aim for methods that use {function approximation} in a provably
effective manner to find the best possible
policy through strategic exploration. 

While such problems are central to empirical RL research
\citep{mnih2015human}, most theoretical results on strategic
exploration focus on tabular MDPs with small state spaces
\citep{kearns2002near, brafman2003r,
  strehl2005theoretical, strehl2006pac, auer2009near,
  dann2015sample,azar2017minimax, dann2017unifying}.
Comparatively little work exists on provably effective exploration
with large observation spaces that require generalization through
function approximation.  The few algorithms that do exist either have
poor sample complexity
guarantees~\citep[e.g.,][]{kakade2003exploration,
  pazis2013pac,grande2014sample,pazis2016efficient} or require fully
deterministic environments~\citep{wen2013efficient, wen2017efficient}
and are therefore inapplicable to most real-world applications and
modern empirical RL benchmarks.  This scarcity of positive results on
efficient exploration with function approximation can likely be
attributed to the challenging nature of this problem rather than a
lack of interest by the research community.

On the statistical side, recent important progress was made by showing
that contextual decision processes (CDPs) with rich stochastic observations and
deterministic dynamics over $M$ hidden states can be learned with a
sample complexity polynomial in
$M$~\citep{krishnamurthy2016contextual}.  This was followed by an
algorithm called \olive~\citep{jiang2017contextual} that enjoys a polynomial sample
complexity guarantee for a broader range of CDPs, including ones with stochastic hidden state transitions.  While
encouraging, these efforts focused exclusively on statistical issues,
ignoring computation altogether.  Specifically, the proposed
algorithms exhaustively enumerate candidate value functions to
eliminate the ones that violate Bellman equations, 
an approach that is computationally intractable for any function class
of practical interest. 
Thus, while
showing that RL with rich observations can be statistically tractable, these results
leave open the question of computational feasibility.

In this paper, we focus on this difficult computational
challenge.  We work in an oracle model of computation, meaning that we
aim to design sample-efficient algorithms whose computation can be
reduced to common optimization primitives over function spaces, such
as linear programming and cost-sensitive classification. The
oracle-based approach has produced practically effective algorithms
for active learning~\citep{hsu2010algorithms}, contextual
bandits~\citep{agarwal2014taming}, structured
prediction~\citep{ross2014reinforcement,chang2015learning}, and
multi-class classification~\citep{allwein2000reducing}, and here, we consider oracle-based algorithms for
challenging RL settings.

We begin by studying the setting of~\citet{krishnamurthy2016contextual} with deterministic dynamics
over $M$ hidden states and stochastic rich observations.  In
Section~\ref{sec:alg}, we use cost-sensitive classification and linear
programming oracles to develop \mainAlg, the first algorithm that is
both \emph{computationally} and \emph{statistically} efficient for this
setting. While deterministic hidden-state dynamics are somewhat
restrictive, the model is considerably more general than fully
deterministic MDPs assumed by prior work~\citep{wen2013efficient, wen2017efficient}, and it accurately captures 
modern empirical benchmarks such as visual grid-worlds in
Minecraft \citep{johnson2016malmo}. As such, this method represents a
considerable advance toward provably efficient RL in practically
relevant scenarios.

Nevertheless, we ultimately seek efficient algorithms for more general
settings, such as those with stochastic hidden-state
transitions. Working toward this goal, we study the computational
aspects of the \olive\ algorithm \citep{jiang2017contextual}, which
applies to a wide range of environments.  Unfortunately, in
Section~\ref{sec:nphard}, we show that \olive \emph{cannot} be
implemented efficiently in the oracle model of computation.  As \olive
is the only known statistically efficient approach for this general
setting, our result establishes a significant barrier to computational
efficiency.  In the appendix, we also describe several other barriers,
and two other oracle-based algorithms for the deterministic-dynamics
setting that are considerably different from \mainAlg. The negative
results identify where the hardness lies while the positive results
provide a suite of new algorithmic tools. Together, these results
advance our understanding of efficient reinforcement learning with
rich observations.

\section{Related Work} \label{sec:related_work}
There is abundant work on strategic exploration in the
tabular setting \citep{kearns2002near, brafman2003r,
  strehl2005theoretical, strehl2006pac, auer2009near,
  dann2015sample,azar2017minimax, dann2017unifying}. The computation in these algorithms often involves planning in optimistic models and can be solved efficiently via dynamic programming.
To extend
the theory to the more practical settings of large state spaces,
typical approaches include (1)
distance-based state identity test under smoothness assumptions
\citep[e.g.,][]{kakade2003exploration, pazis2013pac,grande2014sample,pazis2016efficient}, or (2) working
with factored MDPs \citep[e.g.,][]{kearns1999efficient}. The former
approach is similar to the use of state abstractions
\cite{li2006towards}, 
and typically incurs exponential sample
complexity in state dimension. The latter approach does have
sample-efficient results, but the factored representation assumes
relatively disentangled state variables which cannot model rich
sensory inputs (such as images).

\citet{azizzadenesheli2016romdp} have studied regret minimization in
rich observation MDPs, a special case of contextual decision processes with a small number of hidden states and reactive policies. 
They do not utilize function approximation, 
and hence incur polynomial dependence on
the number of
unique observations in both sample and computational complexity. 
Therefore, this approach, along with related works \cite{azizzadenesheli2016reinforcement, guo2016pac},
does not scale to the rich observation settings that we focus on here.

\citet{wen2013efficient, wen2017efficient} have studied
exploration with function approximation 
in fully deterministic MDPs, which is considerably more restrictive than our
setting of deterministic hidden state dynamics with stochastic observations and rewards. 
Moreover, their analysis measures representation complexity using \emph{eluder dimension} \cite{russo2013eluder, osband2014model},
which is only known to be small for some simple function classes. In comparison, our bounds scale with more standard complexity measures and can easily extend to VC-type quantities,
which allows our theory to apply
to practical and popular function approximators including neural
networks~\citep{anthony2009neural}.

\section{Setting and Background}
\label{sec:prelim}

\begin{figure}
	\centering\includegraphics[width=.9\textwidth]{./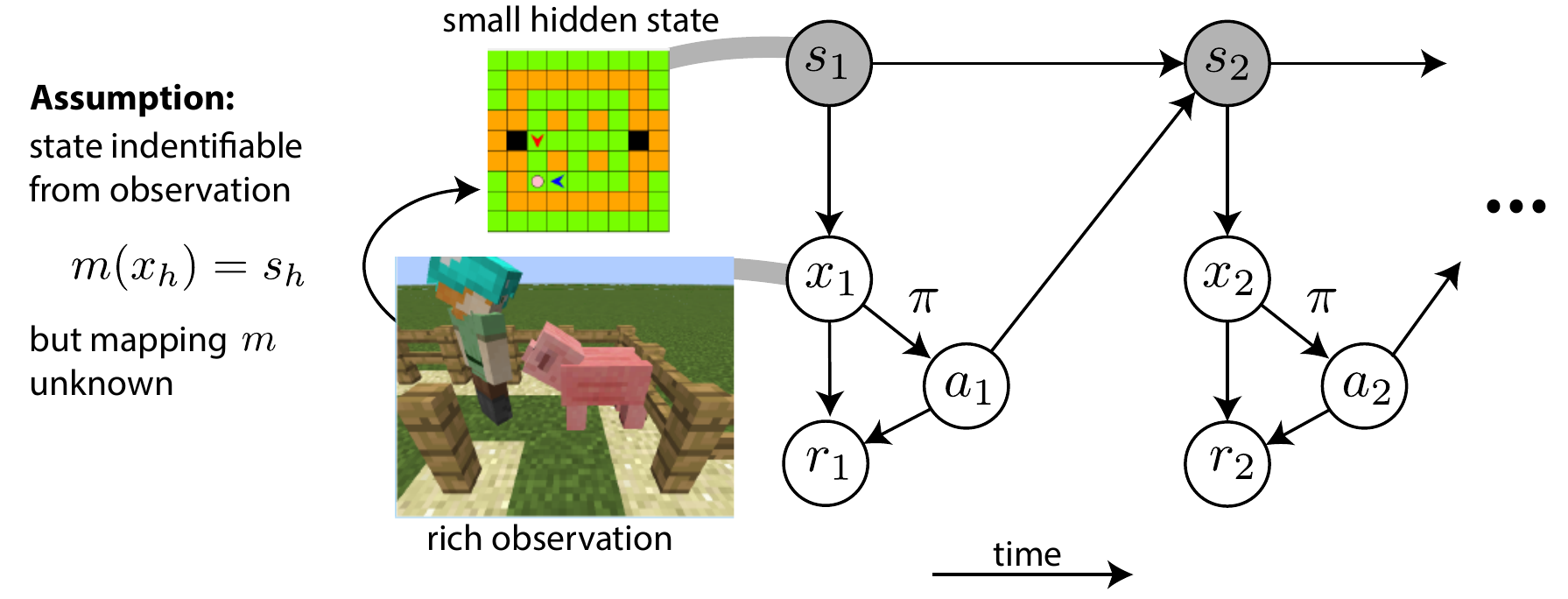}
    \caption{Graphical representation of the problem class considered by our algorithm, \mainAlg: The main assumptions that enable sample-efficient learning are (1) that the small hidden state $s_h$ is identifiable from the rich observation $x_h$ and (2) that the next state is a deterministic function of the previous state and action. State and observation examples are from \url{https://github.com/Microsoft/malmo-challenge}. }
	\label{fig:environexample}
\end{figure}

We consider reinforcement learning (RL) in a common special case of contextual decision processes~\citep{krishnamurthy2016contextual,jiang2017contextual}, sometimes referred to as rich observation MDPs~\citep{azizzadenesheli2016romdp}. 
We assume an $H$-step process where in each episode, a
random \emph{trajectory}
$s_1, x_1, a_1, r_1, s_2, x_2, \ldots, s_H, x_H, a_H, r_H$ is
generated.  For each time step (or \emph{level}) $h\in[H]$,
$s_h \in \Scal$ where $\Scal$ is a finite hidden state space,
$x_h\in\Xcal$ where $\Xcal$ is the rich observation (context) space,
$a_h \in\Acal$ where $\Acal$ is a finite action space of size $K$, and
$r_h \in \RR$.  Each hidden state $s\in\Scal$ is associated with an
emission process $O_s \in \Delta(\Xcal)$, and we use $x \sim s$ as a
shorthand for $x \sim O_s$.
We assume that each rich observation contains enough information so that $s$ can in principle be identified just from $x\sim O_s$---hence $x$ is a Markov state and the process is in fact an MDP over $\Xcal$---but the mapping $x \mapsto s$ is unavailable to the agent and $s$ is never observed. 
The hidden states $\Scal$ introduce structure into the problem, which is essential since we allow the observation space $\Xcal$ to be infinitely large.\footnote{Indeed, the lower bound in Proposition~6 in~\citet{jiang2017contextual} show that ignoring underlying structure precludes provably-efficient RL, even with function approximation.}
The issue of partial observability is not the focus of the paper. 

Let $\Gamma: \Scal\times\Acal\to\Delta(\Scal)$ define
transition dynamics over the hidden states, and let
$\Gamma_1 \in \Delta(\Scal)$ denote an initial distribution over
hidden states.
$R: \Xcal \times \Acal \to \Delta(\RR)$
is the reward function; this differs from partially observable MDPs where reward depends only on $s$, making the problem more challenging. With this notation, a trajectory is generated
as follows: $s_1 \sim \Gamma_1$, $x_1 \sim s_1$,
$r_1 \sim R(x_1, a_1)$, $s_2 \sim \Gamma(s_1, a_1)$, $x_2 \sim s_2$,
\ldots, $s_H \sim \Gamma(s_{H-1}, a_{H-1})$, $x_H \sim s_H$,
$r_H \sim R(x_H, a_H)$, with actions $a_{1:H}$ chosen by the
agent.  We emphasize that $s_{1:H}$ are unobservable to the agent.

To simplify notation, we assume that each observation and hidden state
can only appear at a particular level. This implies that $\Scal$ is
partitioned into $\{\Scal_h\}_{h=1}^H$ with size $M := \max_{h\in[H]}
|\Scal_h|$.  For regularity, assume $r_h \ge 0$ and $\sum_{h=1}^H
r_h\le 1$ almost surely.

In this setting, the learning goal is to
find a policy $\pi: \Xcal \to \Acal$ that maximizes the expected
return $V^\pi:=\Ex[\sum_{h=1}^H r_h \,|\, a_{1:H} \sim \pi]$.
Let $\pi^\star$ denote the optimal policy, which maximizes $V^\pi$, with optimal value function $g^\star$
defined as $g^\star(x) := \Ex[\sum_{h'=h}^H r_{h'} | x_h=x, a_{h:H} \sim \pi^\star]$.
As is standard, $g^\star$ satisfies the Bellman equation:
$\forall x$ at level $h$,
\[
g^\star(x) = \max_{a\in\Acal} \Ex[r_h + g^\star(x_{h+1}) | x_h = x, a_h = a],
\]
with the understanding that $g^\star(x_{H+1})\equiv 0$. A similar equation holds for the optimal Q-value function $Q^\star(x,a) := \Ex[\sum_{h'=h}^H r_{h'} | x_h = x, a_h = a, a_{h+1:H}\sim\pi^\star]$, and  $\pi^\star = \argmax_{a\in\Acal}Q^\star(x,a)$.\footnote{Note that the optimal policy and value functions depend on $x$ and not just $s$ even if $s$ was known, since reward is a function of $x$.}

Below are two special cases of the setting described above that will be important for later discussions.\\
\textbf{Tabular MDPs}: An MDP with a finite and small state space is
a special case of
this model, where $\Xcal = \Scal$ and $O_s$ is the identity map for each $s$. This setting is relevant in our discussion of oracle-efficiency of the existing \olive algorithm in Section~\ref{sec:nphard}.\\
\textbf{Deterministic dynamics over hidden states}: Our algorithm, \mainAlg, works in this special case, which
requires $\Gamma_1$ and $\Gamma(s,a)$ to be point masses. Originally proposed by \citet{krishnamurthy2016contextual}, this setting can model some challenging benchmark environments in modern reinforcement learning, including visual grid-worlds common to the deep RL literature \citep[e.g.,][]{johnson2016malmo}. In such tasks, the state records the position of each game element in a grid but the agent observes a rendered 3D view.
Figure~\ref{fig:environexample} shows a visual summary of this setting. We describe \mainAlg in detail in Section~\ref{sec:alg}.

Throughout the paper, we use $\EEx_D[\cdot]$ to denote empirical expectation
over samples from a data set $D$.

\subsection{Function Classes and Optimization Oracles}
\label{sec:oracle}

As $\Xcal$ can be rich, the agent must use function approximation to
generalize across observations.
To that end, we assume a given value function class
$\Gcal\subset (\Xcal \to [0,1])$ and policy class
$\Pi \subset (\Xcal\to \Acal)$.
Our algorithm is agnostic to the specific function classes used,
but for the guarantees to hold, they must be expressive enough to represent the optimal value function and policy,
 that is, $\pi^\star \in \Pi$
and $g^\star \in \Gcal$.
Prior works often use $\Fcal \subset (\Xcal\times\Acal\to[0,1])$ to
approximate $Q^\star$ instead, but for example \citet{jiang2017contextual} point out that
their \olive algorithm can equivalently work with
$\Gcal$ and
$\Pi$.
This $(\Gcal, \Pi)$ representation is useful in resolving the computational difficulty in
the deterministic setting, and has also been used in practice~\citep{dai2018sbeed}.

When working with large and abstract function classes
as we do here, it is natural to consider an oracle model of
computation and assume that these classes support various optimization
primitives. We adopt this \emph{oracle-based} approach here, and
specifically use the following oracles:

\para{Cost-Sensitive Classification (CSC) on Policies.}
A cost-sensitive
classification (CSC) oracle receives as inputs a parameter $\epssub$ and a sequence $\{(x\ii, c\ii)\}_{i\in[n]}$ of observations
$x\ii \in \Xcal$ and cost vectors $c\ii \in \RR^K$, where $c\ii(a)$ is the cost of predicting action $a \in \Acal$ for $x\ii$. The oracle returns a policy whose average cost is within $\epssub$ of the minimum average
cost, $\min_{\pi \in \Pi} \tfrac{1}{n} \sum_{i=1}^n c\ii(\pi(x\ii))$.
While CSC is NP-hard in the worst case, CSC can be further reduced to
binary
classification~\citep{beygelzimer2009error,langford2005sensitive} for
which many practical algorithms exist and actually form the core of empirical machine learning.
As further motivation, the CSC oracle has been used in practically
effective algorithms for contextual
bandits~\citep{langford2008epoch,agarwal2014taming}, imitation
learning~\citep{ross2014reinforcement}, and structured
prediction~\citep{chang2015learning}.

\para{Linear Programs (LP) on Value Functions.}
A linear program (LP) oracle considers an optimization problem where the objective $o: \Gcal \rightarrow \RR$ and the constraints $h_1, \dots h_m$ are
linear functionals of $\Gcal$ generated by finitely many function evaluations.
That is, $o$ and each $h_j$ have the form $\sum_{i=1}^{n}\alpha_i g(x_i)$ with coefficients $\{\alpha_{i}\}_{i\in[n]}$ and contexts $\{x_i\}_{i\in[n]}$.
Formally, for a program of the form
\begin{align*}
   \textstyle \max_{g \in \Gcal} o(g) \textrm{, ~ subject to } h_j(g) \leq c_j, ~~  \forall j \in [m],
\end{align*}
with constants $\{c_j\}_{j\in[m]}$, an LP oracle with approximation
parameters $\epssub,\epsfeas$ returns a function $\hat{g}$ that is at
most $\epssub$-suboptimal and that violates each constraint by at most
$\epsfeas$.  For intuition, if the value functions $\Gcal$ are linear with
parameter vector $\theta \in \RR^d$, i.e.,
$g(x) = \langle \theta, x\rangle$, then this reduces to a linear
program in $\RR^d$ for which a plethora of provably efficient solvers
exist. Beyond the linear case, such problems can be practically
solved using standard continuous optimization methods.
LP oracles are also employed in prior work focusing on deterministic MDPs~\citep{wen2013efficient, wen2017efficient}.

\para{Least-Squares (LS) Regression on Value Functions.}
We also consider a least-squares regression (LS) oracle that returns
the value function which minimizes a square-loss objective. Since
\mainAlg does not use this oracle, we defer details to the appendix.

We define the following notion of oracle-efficiency based on the
optimization primitives above.
\begin{definition}[Oracle-Efficient]
  An algorithm is \emph{oracle-efficient} if it can be implemented
  with polynomially many basic operations and calls to CSC, LP, and LS
  oracles.
\end{definition}
Note that our algorithmic results continue to hold if we include
additional oracles in the definition, while our hardness results
easily extend, provided that the new oracles can be efficiently
implemented in the tabular setting (i.e., they satisfy
Proposition~\ref{prop:oracles_tabular_easy}; see Section~\ref{sec:towardgeneral}).

\SetKwFunction{dfslearn}{dfslearn}
\SetKwFunction{polvalfun}{polvalfun}
\SetKwFunction{metaalg}{MetaAlg}
\SetKw{glob}{Global}
\SetKwInOut{Inputa}{Input}
\SetKwInOut{Outputa}{Output}

\section{\mainAlg: An Oracle-Efficient Algorithm}

\label{sec:alg} In this section we propose and analyze a new
algorithm, \mainAlg (Values stored Locally for RL) shown in
Algorithm~\ref{alg:metaalg} (with
\ref{alg:local_value_polval_unconstr} \&
\ref{alg:local_value_dfslearn} as subroutines).  As we will show, this
algorithm is oracle-efficient and enjoys a polynomial
sample-complexity guarantee in the deterministic
hidden-state dynamics setting described earlier, which was originally introduced
by~\citet{krishnamurthy2016contextual}.

    \scalebox{0.95}{
\begin{minipage}[t]{.53\textwidth}
\begin{algorithm}[H]

\glob: $\Dcal_1, \dots \Dcal_{H}$ initialized as $\emptyset$\;
\SetKwProg{myfun}{Function}{}{}
    \SetInd{2mm}{2mm}

    \myfun{\metaalg}{

        \dfslearn($\rt$) \label{lin:expfirst}\tcp*{Alg.\ref{alg:local_value_dfslearn}}
\For{$k=1,\ldots, MH$}{
    $\hat \pi^{(k)}, \hat V^{(k)} \gets \polvalfun()$ \label{lin:polval} \tcp*{Alg.\ref{alg:local_value_polval_unconstr}}
    $T \gets $ sample $n_{eval}$ trajectories with $\hat \pi^{(k)}$\;\label{lin:deployeval}
    $\hat V^{\hat \pi^{(k)}} \gets$ average return of $T$\;
    \lIf{$\hat V^{(k)} \leq \hat V^{\hat \pi^{(k)}} + \frac{\epsilon}{2}$}
    { \label{lin:meta_return}
        \Return $\hat \pi^{(k)}$
    }
\For{$h=1\ldots H-1$}{
    \For{all $a_{1:h}$ of $n_{expl}$ traj. $\in T$}
{\dfslearn($a_{1:h}$) \tcp*{Alg.\ref{alg:local_value_dfslearn}}
\label{lin:newlearn}}
}
}
\Return failure\;
}
\caption{Main Algorithm \mainAlg}
    \label{alg:metaalg}
\end{algorithm}

\end{minipage}
}
    \scalebox{0.95}{
\begin{minipage}[t]{.52\textwidth}
\begin{algorithm}[H]
\SetInd{2mm}{2mm}
\SetKwProg{myfun}{Function}{}{}

    \myfun{\polvalfun{}}{
        $\hat V^{\star} \gets V$ of the only dataset in $\Dcal_1$\;
        \For{$h=1:H$}{
            \tcp{CSC-oracle } 
    $\hat \pi_h \gets \argmax\limits_{\pi \in \Pi_h}\mkern-6mu \sum\limits_{(D,V, \{V_{a}\}) \in \Dcal_h} \mkern-18mu V_D(\pi; \{V_a\})$\;
    \label{lin:local_value_fit_policy}
        }
        \Return $\hat \pi_{1:H}, \hat V^{\star}$\;
    }
    \caption{Subroutine: Policy optimization with local values}
    \label{alg:local_value_polval_unconstr}
\end{algorithm}
        \vspace{0.4cm}
        \framebox{
        \begin{minipage}{0.94\textwidth}
        \textbf{Notation:} \\
            $V_D(\pi; \{V_a\}) := \EEx_{D} [K\one\{\pi(x) = a\}(r + V_a)]$
        \end{minipage}
    }
\end{minipage}
}

        \begin{algorithm}[H]

    \SetInd{2mm}{2mm}
            $\epsfeas = \epssub = \epsstat =\tilde O(\epsilon^2 / MH^3)$ \tcp*{see exact values in Table~\ref{tab:param} in the appendix}
            \vspace{.5mm}
            $\phi_{h} = (H+1-h)(6 \epsstat + 2 \epssub + \epsfeas)$ \tcp*{accuracy of learned values at level $h$}  \vspace{1mm}
\SetKwProg{myfun}{Function}{}{}
    \myfun{\dfslearn{path $p$ with length $h-1$}}{
        \For{$a \in \Acal$}{
            $D' \gets$ Sample $\ntest$ trajectories with actions $p\circ a$ \label{lin:sample_consensus}\;

            \tcp{compute optimistic / pessimistic values using LP-oracle}
            $V_{opt} \gets  \max_{g \in \Gcal_{h+1}} \EEx_{D'}[g(x_{h+1})]$ \quad (and $V_{pes}\gets \min_{g \in \Gcal_{h+1}} \EEx_{D'}[g(x_{h+1})]$)
            \label{lin:local_value_vopt}
            \hspace{2em} s.t.~ $ \forall (D, V,\_) \in \Dcal_{h+1}: 
            \vspace*{1mm}
            ~~|V - \EEx_{D}[g(x_{h+1})]| \leq \phi_{h+1} $ \label{lin:constraint2}\;
            \eIf{$|V_{opt} - V_{pes}| \leq 2\phi_{h+1} + 4 \epsstat + 2\epsfeas$ \label{lin:consensus}}{
                $V_a  \gets (V_{opt} + V_{pes})/2$
            \tcp*{consensus among remaining functions}
            \label{lin:return1}
            }{
                $V_a \gets \dfslearn(p \circ a)$ \label{lin:dfslearnchild}
            \tcp*{no consensus, descend}
            }
        }

            $\tilde D \gets $ Sample $\ntrain$ traj. with $p$ and $a_h \sim \textrm{Unif}(K)$\; \label{lin:sample_learn}
            $\tilde V \gets \max_{\pi \in \Pi_h} V_{\tilde D}(\pi; \{V_a\})$\label{lin:local_value_V}\tcp*{CSC-oracle} 

        Add $(\tilde D, \tilde V, \{V_a\}_{a \in \Acal})$ to $\Dcal_h$\;
        \Return $\tilde V$\;
        \label{lin:return2}
    }
    \caption{Subroutine: DFS Learning of local values}
    \label{alg:local_value_dfslearn}
    \end{algorithm}

Since hidden states can be deterministically reached by sequences of
actions (or \emph{paths}), from an algorithmic perspective, the process can be thought of as an
exponentially large tree where each node is associated with a hidden
state (such association is unknown to the agent). Similar to \lsvee \citep{krishnamurthy2016contextual}, \mainAlg first explores this tree (Line~\ref{lin:expfirst}) with
a form of depth first search (Algorithm~\ref{alg:local_value_dfslearn}).
To avoid visiting all of the exponentially many paths, \mainAlg performs a state identity test
(Algorithm~\ref{alg:local_value_dfslearn},
Lines~\ref{lin:sample_consensus}--\ref{lin:return1}): the data collected so far is used to (virtually) eliminate functions in $\Gcal$ (Algorithm~\ref{alg:local_value_dfslearn}, Line~\ref{lin:constraint2}), and we do not descend to a child if the remaining functions agree on the value of the child node (Algorithm~\ref{alg:local_value_dfslearn}, Line~\ref{lin:consensus}).

The state identity test prevents exploring the same hidden state twice but might also incorrectly prune unvisited states if all functions happen to agree on the value. Unfortunately, with no data from such pruned states, we are unable to
learn the optimal policy on them.
To address this issue, after \dfslearn returns, we first use the stored data and values (Line~\ref{lin:polval}) to compute a policy (see Algorithm~\ref{alg:local_value_polval_unconstr}) that is near optimal on all explored states.
Then, \mainAlg deploys the computed policy (Line~\ref{lin:deployeval}) and
only terminates if the estimated optimal value is achieved (Line~\ref{lin:meta_return}). If not, the policy has good probability of visiting those accidentally pruned states (see Appendix~\ref{app:meta_analysis}), so we invoke \dfslearn on the generated paths to complement the data sets (Line~\ref{lin:newlearn}).

In the rest of this section we describe \mainAlg in more detail, and then state its statistical and  computational guarantees.
\mainAlg follows a dynamic programming style and learns in a
bottom-up fashion. As a result, even given stationary function classes
$(\Gcal, \Pi)$ as inputs, the algorithm can return a non-stationary
policy $\hat \pi_{1:H} := (\hat \pi_1, \ldots, \hat \pi_H) \in \Pi^H$ that may use different policies at different
time steps.\footnote{This is not rare in RL; see e.g., Chapter 3.4 of \citet{ross2013interactive}.}
To avoid ambiguity,
we define $\Pi_h:=\Pi$ and $\Gcal_h := \Gcal$ for $h\in [H]$, to
emphasize the time point $h$ under consideration. For convenience, we also define $\Gcal_{H+1}$ to be the singleton $\{x\mapsto 0\}$. This notation also
allows our algorithms to handle more general non-stationary
function classes.

\para{Details of depth-first search exploration.} \mainAlg maintains many data sets collected at paths visited by \dfslearn.
Each data set $D$ is collected from some path $p$, which leads to some
hidden state $s$. (Due to determinism, we will refer to $p$ and $s$
interchangeably throughout this section.) $D$ consists of tuples
$(x,a,r)$ where $x\sim p$ (i.e., $x \sim O_s$),
$a \sim \textrm{Unif}(K)$, and $r$ is the instantaneous
reward. Associated with $D$, we also store a scalar $V$ which
approximates $V^\star(s)$, and $\{V_a\}_{a \in \Acal}$ which
approximate $\{V^\star(s\circ a)\}_{a \in \Acal}$, where $s \circ a$ denotes the state reached when taking $a$ in $s$.
The estimates $\{V_a\}_{a \in \Acal}$ of the future optimal values associated with the current path $p \in \Acal^{h-1}$ are either determined through a recursive
call (Line~\ref{lin:dfslearnchild}), or through a \emph{state-identity test} (Lines~\ref{lin:sample_consensus}--\ref{lin:return1} in \dfslearn). To check if we already know $V^\star(p\circ a)$, we
solve constrained optimization problems to compute optimistic and
pessimistic estimates, using a small amount of data from $p \circ
a$. The constraints
eliminate all $g \in \Gcal_{h+1}$ that make incorrect
predictions for $V^\star(s')$ for any previously visited $s'$ at level $h+1$.
As such, if we have learned the value of $s \circ a$ on a
different path, the optimistic and pessimistic values must agree (``consensus''), so
we need not descend.
Once we have the future values $V_a$, the value estimate $\tilde V$ (which approximates $V^\star(s)$) is computed (in Line~\ref{lin:local_value_V}) by maximizing the sum of immediate reward and future values, re-weighted using importance sampling to reflect the policy under consideration $\pi$:
\begin{align}\label{eq:shortened_cs}
V_D(\pi; \{V_a\}) := \EEx_{D} [K\one\{\pi(x) = a\}(r + V_a)].
\end{align}

\para{Details of policy optimization and exploration-on-demand.}
\polvalfun performs a sequence of policy optimization steps using all
the data sets collected so far to find a non-stationary policy that is
near-optimal at all explored states simultaneously. Note that this policy differs from that computed in (Alg.~\ref{alg:local_value_dfslearn}, Line~\ref{lin:local_value_V}) as it is common for all datasets at a level $h$. And finally using
this non-stationary policy, \metaalg estimates its suboptimality and
either terminates successfully, or issues several other calls to
\dfslearn to gather more data sets. This so-called
exploration-on-demand scheme is due
to~\citet{krishnamurthy2016contextual}, who describe the subroutine in
more detail.

\subsection{What is new compared to \lsvee?}
The overall structure of \mainAlg is similar to \lsvee~\citep{krishnamurthy2016contextual}. The main differences are
in the pruning mechanism, where we use a novel state-identity
  test, and the policy optimization step in
Algorithm~\ref{alg:local_value_polval_unconstr}.

\lsvee uses a $Q$-value function
class $\Fcal \subset (\Xcal \times \Acal \to [0,1])$ and a state
identity test based on Bellman errors on data sets $D$ consisting of
$(x,a,r,x')$ tuples:
\begin{align*}
\textstyle\EEx_{D} \left[\left(f(x, a) - r - \EEx_{x'\sim a}\max_{a'\in\Acal}f(x', a')\right)^2\right].
\end{align*}
This enables a conceptually simpler statistical analysis, but the
coupling between value function and the policy yield
challenging optimization problems that do not obviously admit
efficient solutions.

In contrast, \mainAlg uses dynamic programming to propagate optimal
value estimates from future to earlier time points. From an
optimization perspective, we fix the future value and only optimize
the current policy, which can be implemented by standard oracles, as
we will see. However, from a statistical perspective, the inaccuracy
of the future value estimates leads to bias that accumulates over
levels. By a careful design of the algorithm and through an intricate
and novel analysis, we show that this bias only accumulates linearly
(as opposed to exponentially; see e.g.,
Appendix~\ref{sec:examples_learning}), which leads to a polynomial
sample complexity guarantee.

\subsection{Computational and Sample Complexity of \mainAlg}
\mainAlg requires two types of nontrivial computations over the
function classes. We show that they can be reduced to CSC on $\Pi$ and LP on $\Gcal$ (recall Section~\ref{sec:oracle}), respectively, and hence \mainAlg is oracle-efficient.

First, Lines~\ref{lin:local_value_fit_policy} in \polvalfun
and~\ref{lin:local_value_V} in \dfslearn involve optimizing $V_D(\pi; \{V_a\})$ (Eq.~\eqref{eq:shortened_cs}) over $\Pi$,
which can be reduced to CSC as follows: We first form tuples
$(x\ii, a\ii, y\ii)$ from $D$ and $\{V_a\}$ on which $V_D(\pi; \{V_a\})$ depends, where we bind $x_h$ to $x\ii$, $a_h$ to $a\ii$,
and $r_h+V_{a_h}$ to $y\ii$. From the tuples, we  construct a CSC data set
$(x\ii, -[K\one\{a=a\ii\}y\ii]_{a \in \Acal})$. 
On this data set, the cost-sensitive error of any policy (interpreted as a classifier) is exactly $-V_D(\pi; \{V_a\})$, so minimizing error (which the oracle does) maximizes the original objective.

Second, the state identity test requires solving the following  problem over the function class $\Gcal$:
\begin{align}
& V_{opt} = \max_{g\in\Gcal} \EEx_{D'}[g(x_h)] \quad \textrm{(and $\min$ for $V_{pes}$)}\\
\textrm{s.t.} &~ V - \phi_h \le \EEx_D[g(x_h)] \le V+\phi_h, \forall (D, V) \in \Dcal_h.\nonumber
\end{align}
The objective and the constraints are linear functionals of $\Gcal$, all empirical expectations involve polynomially many samples, and the number of
constraints is $|\Dcal_h|$ which remains polynomial
throughout the execution of the algorithm, as we will show in the sample complexity analysis. Therefore, the LP oracle can directly handle this optimization problem.

We now formally state the main computational and statistical guarantees for \mainAlg.
\begin{theorem}[Oracle efficiency of \mainAlg]\label{thm:local_value_comp}
  Consider a contextual decision process with deterministic dynamics over $M$
    hidden states as described in Section~\ref{sec:prelim}. Assume
  $\pi^\star \in \Pi$ and $g^\star \in \Gcal$. Then
  for any $\epsilon, \delta \in (0, 1)$, with probability at least $1- \delta$,
    \mainAlg makes $O\left( \frac{MH^2}{\epsilon}\log \frac {MH}{\delta}\right)$  CSC oracle calls and at most $O\left( \frac{MKH^2}{\epsilon}\log \frac {MH}{\delta}\right)$ LP oracle calls with required accuracy $\epsfeas = \epssub =\tilde O(\epsilon^2 / MH^3)$. 
\end{theorem}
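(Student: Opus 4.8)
The plan is to separate the argument into two parts: (i) showing that the only two non-trivial computations in \mainAlg---the policy-optimization steps (Algorithm~\ref{alg:local_value_polval_unconstr}, Line~\ref{lin:local_value_fit_policy}, and Algorithm~\ref{alg:local_value_dfslearn}, Line~\ref{lin:local_value_V}) and the state-identity test (Algorithm~\ref{alg:local_value_dfslearn}, Lines~\ref{lin:local_value_vopt}--\ref{lin:constraint2})---can each be implemented by a single call to a CSC oracle on $\Pi$ and a single call to an LP oracle on $\Gcal$, respectively; and (ii) bounding how often these subroutines run over a full execution. The required oracle accuracy $\epsfeas=\epssub=\tilde O(\epsilon^2/MH^3)$ is simply the value hard-coded in the first line of Algorithm~\ref{alg:local_value_dfslearn}, so the counting in part (ii) carries the weight of the theorem, with part (i) contributing only the per-call bookkeeping.

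For part (i), the CSC reduction is the one already sketched before the statement: for a data set $D$ and future values $\{V_a\}$, create one CSC example per tuple $(x,a,r)\in D$, with context $x$ and cost vector $c(a')=-K\,\one\{a'=a\}(r+V_a)$; then the average cost of any $\pi\in\Pi$ on these examples equals $-V_D(\pi;\{V_a\})$ up to the fixed $1/|D|$ normalization, so a call with slack $\epssub$ returns an $\epssub$-approximate maximizer of $V_D(\pi;\{V_a\})$ (for Line~\ref{lin:local_value_fit_policy} one concatenates the examples of all $D\in\Dcal_h$, rescaled so the total becomes $\sum_{D}V_D(\pi;\{V_a\})$). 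The number of examples is $\sum_{D\in\Dcal_h}|D|$, which is polynomial once $|\Dcal_h|$ is controlled. For the LP reduction, the objective $\EEx_{D'}[g(x_{h+1})]=\tfrac1{|D'|}\sum_x g(x)$ and each constraint $|V-\EEx_D[g(x_{h+1})]|\le\phi_{h+1}$ (two linear inequalities) are linear functionals of $g$ generated by finitely many function evaluations---the observations in $D'$ and in the data sets of $\Dcal_{h+1}$---with $2|\Dcal_{h+1}|$ constraints in total, so an LP-oracle call with slacks $(\epssub,\epsfeas)$ returns the $V_{opt}$ (resp.~$V_{pes}$) of Line~\ref{lin:local_value_vopt} up to the advertised accuracy.

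For part (ii), I would first record per-invocation costs: each \polvalfun call makes exactly $H$ CSC calls (one per level), and each \dfslearn call makes $2K$ LP calls (the $V_{opt},V_{pes}$ pair for each of $K$ actions) plus one CSC call (Line~\ref{lin:local_value_V}), excluding recursive descents. Next, count invocations. \polvalfun is called once per iteration of the main loop of \metaalg, which runs at most $MH$ times, contributing at most $MH$ \polvalfun calls and $MH^2$ CSC calls. The \dfslearn invocations split into top-level calls---one at Line~\ref{lin:expfirst} plus at most $(H-1)\nexp$ per main-loop iteration at Line~\ref{lin:newlearn}, hence $O(MH^2\nexp)$ in all---and recursive calls. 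The key structural fact, which I would lift from the statistical analysis (Appendix~\ref{app:meta_analysis}), is that on a high-probability event the state-identity test is sound: when it fails to reach consensus at a child, that child is a hidden state for which $\Dcal_{h+1}$ does not yet store an accurate value, and one does after the resulting \dfslearn call returns; thus each recursive call is charged to a distinct hidden state, so there are at most $MH$ of them. Altogether $\Ndfs=O(MH^2\nexp)$ \dfslearn invocations, i.e.~$O(MH^2\nexp)$ CSC calls and $O(MKH^2\nexp)$ LP calls; the same accounting gives $|\Dcal_h|=O(MH\nexp)$, so the per-call problem sizes from part (i) are indeed polynomial. Plugging in $\nexp=\tilde O(1/\epsilon)$ from Table~\ref{tab:param} yields the stated $O(MH^2\epsilon^{-1}\log(MH/\delta))$ CSC and $O(MKH^2\epsilon^{-1}\log(MH/\delta))$ LP bounds.

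The crux---and the step I expect to be hardest---is the high-probability event behind the ``at most $MH$ recursive calls'' claim. If the state-identity test ever spuriously failed to reach consensus on an already-explored state, \dfslearn could re-descend into it and, cascading, traverse a tree of exponential size, voiding every bound. Establishing that this never happens with probability $1-\delta$ is exactly the concentration content of the sample-complexity analysis---uniform control of the empirical quantities $\EEx_D[g(x)]$ and $\EEx_D[K\one\{\pi(x)=a\}(r+V_a)]$ across the polynomially many data sets and oracle queries, glued by a union bound at confidence $1-\delta/\mathrm{poly}$---so in this proof it is invoked as a black box. A minor but genuine care-point is tracking the normalization through the CSC reduction, so that the oracle's average-cost slack $\epssub$ maps onto precisely the suboptimality the analysis allots; this is routine once the $1/|D|$ (resp.~$1/\sum_D|D|$) factors are spelled out.
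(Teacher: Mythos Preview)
Your proposal is correct and follows essentially the same approach as the paper: reduce the two nontrivial optimizations to CSC and LP oracles exactly as you describe, then invoke the high-probability event $\Ecal$ (Definition~\ref{def:devbounds} and Lemma~\ref{lem:probE}) to bound the number of \dfslearn invocations, from which the oracle counts follow by your per-invocation bookkeeping. The paper organizes the count per level (at most $\Tmax=MH\nexp+M$ calls to \dfslearn at each of the $H$ levels) rather than your global split into top-level versus recursive calls, but the arithmetic is identical and your identification of the ``consensus test is sound on learned states'' step as the crux borrowed from the concentration analysis is exactly right.
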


\begin{theorem}[PAC bound of \mainAlg]\label{thm:local_value}
    Under the same setting and assumptions as in Theorem~\ref{thm:local_value_comp},
  \mainAlg
returns a policy $\hat{\pi}$ such that
  $V^\star - V^{\hat{\pi}} \le \epsilon$ with probability at least
  $1-\delta$, after collecting at most
  $\otil\left(\frac{M^3H^8K}{\epsilon^5}\log(|\Gcal||\Pi|/\delta)\log^3(1/\delta)\right)$
trajectories.\footnote{
  $\otil(\cdot)$ suppresses logarithmic dependencies on
  $M$, $K$, $H$, $1/\epsilon$ and doubly-logarithmic dependencies on
  $1/\delta$, $|\Gcal|$, and $|\Pi|$.}
\end{theorem}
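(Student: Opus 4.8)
The plan is to combine three ingredients. \emph{(i) DFS correctness:} every scalar $\tilde V$ (resp.\ $V_a$) recorded by the search in Algorithm~\ref{alg:local_value_dfslearn} approximates $V^\star$ of the associated hidden state (resp.\ child) up to the tolerance $\phi_h$ declared in the algorithm, and $\phi_h$ grows only \emph{linearly} in $H-h$. \emph{(ii) Policy optimization:} the non-stationary policy $\hat\pi_{1:H}$ returned by Algorithm~\ref{alg:local_value_polval_unconstr} is near-optimal when started from \emph{any explored} hidden state. \emph{(iii) Exploration on demand:} whenever the deployment test on Line~\ref{lin:meta_return} of Algorithm~\ref{alg:metaalg} fails, the calls to Algorithm~\ref{alg:local_value_dfslearn} issued on Line~\ref{lin:newlearn} record data from at least one hidden state never explored before, so the outer loop of length $MH$ suffices. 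Throughout I would condition on a single ``good event'' that every empirical expectation $\EEx_D[\cdot]$ arising in the execution --- uniformly over $g\in\Gcal$, over $\pi\in\Pi$, and over the cumulative reward --- is within $\epsstat$ of its population value; by Hoeffding's inequality and a union bound this holds with probability $\ge 1-\delta$ once each sampled dataset contains $\tilde O(\epsstat^{-2}\log(|\Gcal||\Pi|/\delta))$ points, which is legitimate because step (i) shows the total number of datasets is polynomial.

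For (i) I would induct on the level downward from $h=H+1$, where $\Gcal_{H+1}=\{x\mapsto 0\}$ lets us take $\phi_{H+1}=0$. In the inductive step, first: since $g^\star\in\Gcal$ and (on the good event) $g^\star$'s predictions match the stored $V$ of every already-recorded level-$(h+1)$ dataset up to $\phi_{h+1}$, $g^\star$ is feasible for the program on Line~\ref{lin:constraint2}, so $V_{opt}$ and $V_{pes}$ bracket $\EEx_{D'}[g^\star(x_{h+1})]$, which is within $\epsstat$ of $V^\star(p\circ a)$; hence whenever the consensus test on Line~\ref{lin:consensus} passes, $V_a$ is accurate up to $\phi_{h+1}+O(\epsstat+\epsfeas)$, and if the child state had been recorded on an earlier path its stored value is equally accurate by induction, so the two can never disagree enough to break consensus --- meaning the search descends into each distinct hidden state at most once over the whole run. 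Only $O(MH)$ states are thus ever freshly explored, so together with the $MH$ outer rounds and the prefix calls on Line~\ref{lin:newlearn} the total number of recorded datasets --- hence every $|\Dcal_h|$ --- stays polynomial in $M,H,K,1/\epsilon$, which closes the union bound above and supplies the $|\Dcal_h|$ bound used in Theorem~\ref{thm:local_value_comp}. Second, with the $V_a$'s accurate, the CSC step on Line~\ref{lin:local_value_V} returns $\tilde V$ within $\phi_{h+1}+O(\epsstat+\epssub+\epsfeas)$ of $\max_a \Ex_{x\sim p}[\,\Ex[r_h\mid x,a]+V^\star(p\circ a)\,]=V^\star(p)$, using $\pi^\star\in\Pi$ as the comparator and the fact that $V_D(\pi;\{V_a\})$ is an (importance-weighted) unbiased estimate of the reweighted return; this is precisely the linear recursion $\phi_h=\phi_{h+1}+O(\epsstat+\epssub+\epsfeas)$ hard-wired into the algorithm, so the error accumulates linearly, not geometrically, in the horizon.

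For (ii) and (iii): on the good event $\hat V^\star$, the stored root value, is within $\phi_1=O(H(\epsstat+\epssub+\epsfeas))$ of $V^\star$. A backward induction along trajectories of $\hat\pi_{1:H}$, again using $\pi^\star$ as comparator at each level, shows that from any \emph{explored} state at level $h$ the policy collects at least its stored value minus $O((H-h)(\epsstat+\epssub+\epsfeas))$; the argument can fail only at a transition into a state pruned by a spurious consensus but never subsequently explored, since there we have no data and hence no guarantee. Consequently, if the test on Line~\ref{lin:meta_return} fails --- i.e.\ $\hat V^{(k)}>\hat V^{\hat\pi^{(k)}}+\epsilon/2$ --- then, after absorbing the $O(\phi_1)$ bias and the $\tilde O(\sqrt{\log(1/\delta)/\neval})$ estimation error of $\hat V^{\hat\pi^{(k)}}$, the policy $\hat\pi^{(k)}$ is genuinely $\Omega(\epsilon)$-suboptimal, which (by the explored-state guarantee) forces it to reach some unexplored pruned state with non-negligible probability $\Omega(\epsilon/H)$. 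Sampling a suitable polynomial number $\nexp$ of trajectories and running Algorithm~\ref{alg:local_value_dfslearn} on their prefixes then records a previously-unexplored hidden state with high probability; since there are only $MH$ such states, within $MH$ outer iterations every reachable state is explored, spurious pruning becomes harmless, the test passes, and the returned $\hat\pi$ satisfies $V^\star-V^{\hat\pi}\le\epsilon$.

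To finish, choosing $\epsstat=\epssub=\epsfeas=\tilde O(\epsilon^2/(MH^3))$ makes $\phi_1$ and all per-level and estimation slacks small enough for the above deductions, after which a routine accounting --- $O(MH)$ freshly explored states and $O(\mathrm{poly}(M,H,K,1/\epsilon))$ value computations and consensus tests, each over $\tilde O(\epsstat^{-2}\log(|\Gcal||\Pi|/\delta))$ trajectories, plus $MH$ rounds of $\neval$ evaluation and $\nexp$ exploration trajectories --- yields the stated $\otil(\frac{M^3H^8K}{\epsilon^5}\log(|\Gcal||\Pi|/\delta)\log^3(1/\delta))$ trajectory bound, with the exact parameter values as in Table~\ref{tab:param}. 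The main obstacle is the bias-control argument in (i): storing scalars $V_a$ in place of re-optimizing value functions is exactly what makes the per-level optimization reducible to a CSC oracle, but it replaces the statistically clean Bellman-error test of \lsvee with one whose errors could \emph{a priori} compound geometrically over the $H$ levels (cf.\ Appendix~\ref{sec:examples_learning}); showing that the tolerances $\phi_h$ can be propagated so the error grows only additively while the consensus test still prunes aggressively enough to keep $|\Dcal_h|$ polynomial is the crux, and it is tightly coupled with the quantitative visitation lower bound needed for exploration on demand.
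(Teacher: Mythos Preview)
Your three-step decomposition (DFS value accuracy by downward induction, policy guarantee on explored states, exploration-on-demand to cover the rest) matches the paper's proof in Appendix~\ref{app:valor} almost exactly, and step~(i) is essentially Proposition~\ref{prop:Vaccuracy}. However, step~(ii) contains a real gap.

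You claim that from any explored state at level $h$ the policy $\hat\pi_{1:H}$ loses at most $O((H-h)(\epsstat+\epssub+\epsfeas))$ relative to its stored value. This is the guarantee one gets for the \emph{constrained} policy optimization of Algorithm~\ref{alg:local_value_polval_constr} (see Proposition~\ref{prop:localvalue_polguarantee_constr}), but \mainAlg\ uses the \emph{unconstrained} Algorithm~\ref{alg:local_value_polval_unconstr}, where $\hat\pi_h$ maximizes the \emph{sum} $\sum_{(D,\cdot,\{V_a\})\in\Dcal_h} V_D(\pi;\{V_a\})$. Because the objective pools all datasets, $\hat\pi_h$ may sacrifice value at one explored state to gain at others; using $\pi^\star$ as comparator at a single state $s$ therefore does not work directly. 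The paper's Proposition~\ref{prop:localvalue_polguarantee} handles this by first padding $V^\star(s)-Q^\star(s,\hat\pi_h)$ to $\sum_{s'\in\Slearned_h}[V^\star(s')-Q^\star(s',\hat\pi_h)]$ (legitimate since each summand is nonnegative), and only then comparing $\hat\pi_h$ to $\pi^\star$ on the summed empirical objective. The price is an extra factor of $|\Dcal_h|\le\Tmax$ in the per-level loss, yielding
\[
V^\star - V^{\hat\pi} \;\le\; p_{ul}^{\hat\pi} \;+\; O\bigl(H^2\,\Tmax\,(\epsstat+\epssub+\epsfeas)\bigr),
\]
not $p_{ul}^{\hat\pi}+O(H^2\epsstat)$ as your argument would give.

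This is not cosmetic: the $\Tmax$ factor is exactly why one must take $\epsstat=\Theta(\epsilon/(H^2\Tmax))=\tilde\Theta(\epsilon^2/(MH^3))$ rather than $\Theta(\epsilon/H^2)$, and hence why the final bound is $\otil(M^3H^8K/\epsilon^5)$ rather than the $\otil(MKH^6/\epsilon^3)$ of Theorem~\ref{thm:local_value_constr}. You quote the correct value of $\epsstat$ from Table~\ref{tab:param}, but your stated reasoning (``makes $\phi_1$ and all per-level and estimation slacks small enough'') does not justify it: with your per-state bound, $\phi_1=O(H\epsstat)$ would be orders of magnitude smaller than needed. To close the argument you must either (a) insert the $|\Dcal_h|$ factor into the policy-suboptimality bound as in Proposition~\ref{prop:localvalue_polguarantee}, or (b) switch to the constrained feasibility formulation of Section~\ref{sec:lagrange}, in which case you would be proving Theorem~\ref{thm:local_value_constr} rather than Theorem~\ref{thm:local_value}.
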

Note that this bound assumes finite value function and policy classes
for simplicity, but can be extended to infinite function classes with
bounded statistical complexity using standard tools, as in Section~5.3
of \citet{jiang2017contextual}. The resulting bound scales linearly
with the Natarajan and Pseudo-dimension of the function classes, which
are generalizations of VC-dimension.
We further expect that one can generalize the theorems above to an
approximate version of realizability as in Section~5.4
of~\citet{jiang2017contextual}.  

Compared to the guarantee
for \lsvee~\citep{krishnamurthy2016contextual},
Theorem~\ref{thm:local_value} is worse in the dependence on $M$, $H$,
and $\epsilon$.
Yet, in Appendix~\ref{sec:lagrange} we show that
a version of \mainAlg with alternative oracle assumptions enjoys a
better PAC bound than \lsvee.  Nevertheless, we emphasize that our
main goal is to understand the interplay between statistical and
computational efficiency to discover new algorithmic ideas that may
lead to practical methods, rather than improve sample complexity
bounds.

\section{Toward Oracle-Efficient PAC-RL with Stochastic Hidden State Dynamics}
\label{sec:towardgeneral}

\mainAlg demonstrates that provably sample- and oracle-efficient RL with rich
stochastic observations is possible and, as such, makes progress toward reliable and
practical RL in many applications. In this section, we discuss the natural
next step of allowing stochastic hidden-state transitions.

\subsection{\olive is not Oracle-Efficient}

\label{sec:nphard}

For this more general setting with stochastic hidden state dynamics,
\olive \citep{jiang2017contextual} is the only known algorithm with polynomial
sample complexity, but its computational properties remain underexplored.
We show here that \olive is in fact not oracle-efficient.
A brief description of the algorithm is provided below, and in the theorem
statement, we refer to a parameter $\phi$, which the algorithm uses as a
tolerance on deviations of empirical expectations.
\begin{theorem}\label{thm:olive_notoracle}
    Assuming $P \neq NP$, even with algorithm parameter $\phi=0$ and perfect evaluation of expectations, \olive is not oracle-efficient, that is, it cannot be implemented with polynomially many basic arithmetic operations and calls to CSC, LP, and LS oracles.
\end{theorem}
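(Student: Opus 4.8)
The plan is to exhibit a single family of problem instances on which (i) the CSC, LP, and LS oracles can be implemented in polynomial time, yet (ii) one of the optimization problems \olive \emph{must} solve at an intermediate iteration encodes an NP-hard combinatorial problem; together these imply that no polynomial number of oracle calls (plus polynomially many arithmetic operations) can carry out \olive's computation unless $P=NP$.

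First I would isolate \olive's computational bottleneck. Recall that \olive maintains a version space of value functions --- equivalently, of $(g,\pi)$ pairs --- cut out by the average Bellman-error constraints it has accumulated, and at each iteration it must return the \emph{optimistic} element of that version space, i.e.\ it solves a problem of the form $\max g(\text{start})$ subject to the accumulated Bellman-error constraints (with tolerance $\phi$). Every other step of \olive (estimating Bellman errors by sample averages, picking the level of largest violation, rolling out the greedy policy) is trivially polynomial and, in the $(\Gcal,\Pi)$ representation, needs nothing beyond simulation and averaging. With $\phi=0$ and exact expectations the version space is the exact zero set of these constraints, and, crucially, each such constraint is \emph{not} linear in the value function because of the $\max_{a'}$ Bellman backup --- equivalently, in the $(\Gcal,\Pi)$ view, because the constraint couples $g$ to the action choices of $\pi$ --- so it cannot be discharged by a single LP-oracle call, and the naive ``enumerate every policy / every argmax pattern'' reduction is exponential.

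Next I would build the reduction. Starting from, say, a $3$-SAT instance (independent set would work equally well), I would construct a small, essentially tabular CDP with only a constant number of levels, together with finite classes $(\Gcal,\Pi)$, chosen so that: the rewards and emission distributions steer \olive through a short, fully predetermined prefix of iterations (the first optimistic function, the level at which it fails, and hence the constraint added are all fixed by the construction); and after this prefix the accumulated Bellman-error constraints together with the objective $g(\text{start})$ are equivalent to deciding satisfiability --- the SAT variables being encoded by the action \olive's candidate policy takes at a designated block of states (equivalently, by the argmax pattern of the candidate value function over a block of observations), and the clauses by individual Bellman-error equalities, so that the optimal value of $g(\text{start})$ meets a fixed threshold iff the formula is satisfiable. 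Since the CDP is, up to the trivial emission map, tabular, Proposition~\ref{prop:oracles_tabular_easy} yields polynomial-time CSC, LP, and LS oracles on $(\Gcal,\Pi)$ (alternatively one checks this directly for the simple classes used). Hence an oracle-efficient implementation of \olive, run for the (polynomially many) iterations needed to reach the engineered optimization, would decide $3$-SAT in polynomial time.

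I expect the main obstacle to be the tension inside this construction: the constraints \olive works with are \emph{generated by its own exploration}, not handed to it adversarially, so the gadget must simultaneously (a) stay simple enough that the three oracles remain polynomial and (b) drive \olive --- through the optimistic functions it picks and the violated levels it chooses across several iterations --- into accumulating precisely the constraint system that encodes SAT. Making this ``self-generated hard instance'' line up, and in particular controlling which optimistic function \olive returns when several are tied (the $\phi=0$, exact-expectation regime makes such ties generic), is the delicate part; the remaining pieces are routine gadgeteering and a standard reduction argument.
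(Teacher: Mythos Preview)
Your proposal is correct and takes essentially the same approach as the paper: a 3-SAT reduction encoded in a tabular MDP (so the CSC, LP, and LS oracles are polynomial-time via Proposition~\ref{prop:oracles_tabular_easy}), with \olive steered through a short prefix of iterations until its optimistic-planning step decides satisfiability. On the tie-breaking obstacle you flag, the paper's resolution is exactly that \olive is underspecified at ties---both in the argmax over the version space and in the choice of violated level $h$---so adversarial tie-breaking is within spec; the paper also factors the argument by first proving the easier statement where the constraint set is handed in adversarially (Theorem~\ref{thm:olive_nphard_g_simple}) and only then exhibiting a concrete run of \olive that generates precisely that constraint set (Theorem~\ref{thm:olive_nphard_g}).
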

The assumptions of perfect evaluation of expectations and $\phi=0$ are merely
to unclutter the constructions in the proofs.  We show this result by proving
that even in tabular MDPs, \olive solves an NP-hard problem to determine its
next exploration policy, while all oracles we consider have polynomial runtime in the tabular setting.
While we only show this for CSC, LP, and LS oracles explicitly, we expect other
practically relevant oracles to also be efficient in the tabular setting, and therefore they
could not help to implement \olive efficiently.

This theorem shows that there are no known oracle-efficient PAC-RL
methods for this general setting and that simply applying clever optimization
tricks to implement \olive is not enough to achieve a practical algorithm.
Yet, this result does not preclude tractable PAC RL
altogether, and we discuss plausible directions in the subsequent section.
Below we highlight the main
arguments of the proof.

\para{Proof Sketch of Theorem~\ref{thm:olive_notoracle}. }
\olive is round-based
and follows the \emph{optimism in the face of uncertainty}
principle. At round $k$ it selects a value function and a policy to
execute $(\hat g_k,\hat \pi_k)$
that promise the highest return while satisfying all average Bellman
error constraints:
\begin{align} \label{eqn:oliveprobg}
    &\hat g_k, \hat \pi_k = \argmax_{g \in \Gcal, \pi \in \Pi} \EEx_{D_0}[g(x)]
    \\
    &
     \textrm{s.t. }~
    |\EEx_{D_i}[K\one\{a = \pi(x) \}(g(x) - r - g(x'))]| \leq \phi, ~~\forall ~ D_i \!\in\! \Dcal\!.     \nonumber
\end{align}
Here $D_0$ is a data set of initial contexts $x$, $\Dcal$ consists of
data sets of $(x,a,r,x')$ tuples collected in the previous rounds, and
$\phi$ is a statistical tolerance parameter.
If this optimistic policy $\hat \pi_{k}$ is close to optimal, \olive
returns it and terminates. Otherwise we add a constraint
to~\eqref{eqn:oliveprobg} by (i) choosing a time point $h$, (ii)
collecting trajectories with $\hat \pi_{k}$ but choosing the $h$-th
action uniformly, and (iii) storing the tuples $(x_h,a_h,r_h,x_{h+1})$
in the new data set $D_k$ which is added to the constraints for the
next round.

The following theorem shows that \olive's optimization is
NP-hard even in tabular MDPs.

\begin{theorem}\label{thm:olive_nphard_g}
  Let $\Pcal_{\olive}$ denote the family of problems of the
  form~\eqref{eqn:oliveprobg}, parameterized by
  $(\Xcal,\Acal,\textrm{Env}, t)$, which describes the optimization
  problem induced by running \olive in the MDP
  $\textrm{Env}$ (with states $\Xcal$, actions $\Acal$, and perfect
  evaluation of expectations) for $t$ rounds. \olive is given tabular function classes
  $\Gcal = (\Xcal \to [0,1])$ and $\Pi = (\Xcal \to \Acal)$  and uses
  $\phi=0$. Then $\Pcal_{\olive}$ is NP-hard.
\end{theorem}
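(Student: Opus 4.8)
The plan is to reduce an NP-hard problem to deciding feasibility/optimality of an instance in $\Pcal_{\olive}$. A natural candidate is a constraint-satisfaction-style problem, since the Bellman-error constraints in~\eqref{eqn:oliveprobg} couple the value function $g$ and the policy $\pi$ in a bilinear way: the term $\one\{a=\pi(x)\}(g(x)-r-g(x'))$ is zero either because $\pi$ avoids action $a$ at state $x$, or because $g$ satisfies the one-step Bellman equation along the transition $(x,a,r,x')$. This disjunctive structure is exactly what makes the problem combinatorial. I would encode 3-SAT (or a graph problem such as independent set / 3-coloring) into a small tabular MDP: use observations/states to represent variables and clauses, use the $K$ actions at a variable-state to represent truth assignments, and engineer the data sets $\Dcal$ (the transitions collected in $t$ simulated rounds of \olive) and the rewards so that a policy $\pi$ (which here is just an assignment, one action per state) satisfies \emph{all} Bellman constraints for \emph{some} admissible $g$ if and only if the underlying formula is satisfiable. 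The objective $\EEx_{D_0}[g(x)]$ then serves to detect feasibility (e.g., the optimum exceeds a threshold iff a satisfying assignment exists), so an oracle solving $\Pcal_{\olive}$ would decide 3-SAT.

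Concretely, the key steps in order: (1) Fix the reduction source (I would use 3-SAT for clean gadgets). (2) Construct $\textrm{Env}$: a short-horizon tabular MDP whose states at level $1$ correspond to an initial ``pick a clause'' context in $D_0$, states at level $2$ to literals/variables, with $K=3$ actions chosen so that selecting an action corresponds to choosing which literal of a clause is set true, and with a handful of absorbing reward states at the last level. (3) Specify the history $\Dcal$: choose the round count $t$ and the exploration outcomes so that $\Dcal$ contains, for each clause $c$ and each literal $\ell\in c$, a tuple whose Bellman residual $g(x)-r-g(x')$ is forced to zero by the constraint only when $\pi$ selects $\ell$; and include consistency-enforcing tuples so that a single variable cannot be assigned both true and false by $g$'s values across the clause gadgets. (4) Verify the forward direction: given a satisfying assignment, exhibit $\pi$ (the assignment) and $g$ (Bellman-consistent values propagated backward through the gadget) that satisfy all constraints with $\phi=0$ and attain the target objective value. (5) Verify the reverse direction: any $(g,\pi)$ feasible with objective above the threshold must, constraint by constraint, force $\pi$ to pick a satisfying literal in every clause while $g$'s cross-gadget consistency constraints prevent contradictory variable assignments, yielding a satisfying assignment. (6) Check that the encoding size (number of states, actions, data points, rounds $t$) is polynomial in the formula size and that $\phi=0$ with exact expectations is consistent with the construction, matching the theorem's hypotheses.

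The main obstacle I anticipate is step (3)–(5): faithfully simulating a valid \olive \emph{run} so that the constraint set $\Dcal$ is exactly the one that arises after $t$ honest rounds (each round's data set $D_k$ is generated by the optimistic policy of the previous round, with the $h$-th action randomized), rather than an arbitrary adversarially chosen family of constraints. One has to argue that there is a legitimate trajectory of optimistic choices producing the desired $\Dcal$, or else weaken the claim to ``the problem \olive \emph{would} solve on \emph{some} reachable instance,'' which the theorem statement's parameterization by $(\Xcal,\Acal,\textrm{Env},t)$ seems to permit. A secondary subtlety is the coupling between $g$ and $\pi$: because $\pi$ appears only through the indicator $\one\{a=\pi(x)\}$, one must ensure the gadget does not let $\pi$ trivially ``switch off'' a constraint in a way that decouples it from the intended SAT semantics — this is controlled by making $g$'s value at the relevant states pinned down (up to the Bellman relation) by other, $\pi$-independent constraints in $D_0$ and the remaining data sets, so that dodging one constraint forces a violation elsewhere. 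Once those are in place, the realizability hypotheses ($\pi^\star\in\Pi$, $g^\star\in\Gcal$) are free because the classes are the full tabular classes $(\Xcal\to[0,1])$ and $(\Xcal\to\Acal)$.
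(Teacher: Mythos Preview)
Your high-level plan---a 3-SAT reduction via a small tabular MDP with clause and literal states---matches the paper's approach, and you correctly identify step~(3)--(5), simulating a \emph{legitimate} run of \olive that produces exactly the desired constraint set $\Dcal$, as the crux. However, this is precisely where your proposal has a genuine gap, and your suggested fallback (``weaken the claim to \ldots some reachable instance'') does not suffice: the theorem as stated requires that $\Dcal$ arise from $t$ honest rounds of \olive, not from an adversarially chosen family. The paper resolves this by exploiting the fact that \olive as specified leaves two choices underdetermined: which level $h$ to add a constraint at (among those with nonzero Bellman error), and which optimizer $(\hat g,\hat\pi)$ to pick when ties occur. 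Treating both as adversarial, one can exhibit an explicit sequence of $m+n+1$ rounds in which \olive first visits each clause state (generating the clause--literal constraints), then each literal pair (generating the $g(x_i^0)+g(x_i^1)=1$ constraints), and finally the start state, producing exactly the intended $\Dcal$ and nothing more. Without this tie-breaking observation, you have no mechanism to steer \olive toward the gadget constraints while avoiding the ``forbidden'' constraints (distributions over literal states reached via \texttt{[solve]}) that would distinguish members of the MDP family and collapse the reduction.

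A secondary issue is your gadget design. You propose $K=3$ actions at a clause state (pick which literal is true) and encode the assignment in $\pi$; the paper instead uses $7$ actions at each clause state (all nontrivial truth patterns for the three literals) and encodes the assignment in $g$, via the constraints $g(x_i^0)+g(x_i^1)=1$ together with $g\in[0,1]$. The latter is what makes consistency work: if the optimistic value at $s_0$ is $1$, every clause state must have value $1$, hence every selected literal state must have value exactly $1$, and the pairing constraint then forces the complementary literal to $0$, yielding a well-defined Boolean assignment. Your ``consistency-enforcing tuples'' are left unspecified, and with the assignment living in $\pi$ it is not clear how a Bellman-error constraint at one state can prevent $\pi$ from choosing $x_i$ in one clause and $\bar x_i$ in another. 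Pinning consistency through $g$ (and letting $\pi$ merely select a witnessing pattern per clause) is the cleaner route.
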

At the same time, oracles are implementable in polynomial time:
\begin{proposition}\label{prop:oracles_tabular_easy}
    For tabular value
    functions  $\Gcal = (\Xcal \to [0,1])$ and policies $\Pi = (\Xcal \to
    \Acal)$, the CSC, LP, and LS oracles can be implemented in time polynomial in $|\Xcal|$, $K = |\Acal|$ and the input size.
\end{proposition}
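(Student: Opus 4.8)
The plan is to exploit the defining feature of the tabular setting: since $\Gcal$ contains \emph{every} map $\Xcal \to [0,1]$ and $\Pi$ contains \emph{every} map $\Xcal \to \Acal$, a member of either class is an unconstrained coordinate-wise choice---one value in $[0,1]$ (resp.\ one action in $\Acal$) per context, with no coupling across contexts imposed by the class itself. Moreover, by definition each oracle's input specifies its objective---and, for the LP oracle, its constraints---through finitely many function evaluations; let $X_0 \subseteq \Xcal$ collect the contexts appearing in the input, so $|X_0|$ is at most the input size. The oracle's output only matters on $X_0$ and can be extended by an arbitrary default (say the constant $0$, or a fixed action) on $\Xcal \setminus X_0$ at cost $O(|\Xcal|)$. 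It therefore suffices to solve each problem over the finite restriction to $X_0$.

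For CSC, I would group the $n$ input pairs $(x\ii, c\ii)$ by context; for each $x \in X_0$ the objective contributes $\tfrac1n \sum_{i : x\ii = x} c\ii(\pi(x))$, which depends only on $\pi(x)$, so I set $\pi(x) = \argmin_{a \in \Acal} \sum_{i : x\ii = x} c\ii(a)$ by scanning the $K$ actions. This is the exact minimizer (so the slack $\epssub$ is irrelevant) and costs $O(nK + |\Xcal|)$. For LP, identify $g|_{X_0}$ with a vector in $[0,1]^{|X_0|}$; the objective $o(g) = \sum_i \alpha_i g(x_i)$ and each constraint $h_j(g) \le c_j$ are linear in this vector, so together with the box constraints inherited from $\Gcal \subseteq (\Xcal \to [0,1])$ the problem is an ordinary linear program in $|X_0|$ variables and $m + 2|X_0|$ inequalities, of bit-length polynomial in the input; any polynomial-time LP algorithm returns an essentially optimal point (exactly, or to arbitrary precision), which in particular meets the $(\epssub, \epsfeas)$ tolerances, and we extend it by the default off $X_0$. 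For LS, the square-loss objective is a sum of squared terms each linear in $g$ through evaluations at contexts in $X_0$, hence a convex quadratic in $g|_{X_0} \in [0,1]^{|X_0|}$, minimized over that box by a polynomial-time convex-QP solver; for a plain regression loss $\sum_i (g(x_i) - y_i)^2$ it even separates across contexts, with minimizer $g(x) = \mathrm{clip}_{[0,1]}$ of the mean of the $y_i$ having $x_i = x$, computable in linear time.

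The verification is routine and there is no genuine obstacle; the one point deserving care is the accounting for the LP and LS oracles---one must observe that the number of active contexts and of constraints is bounded by the input size, so that invoking a polynomial-time LP/QP solver yields runtime polynomial in $|\Xcal|$, $K$, and the input size as claimed, and that the approximate-feasibility slack in the oracle definitions is automatically satisfied because the restricted programs are solved exactly (or to arbitrary precision).
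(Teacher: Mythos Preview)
Your proposal is correct and follows essentially the same approach as the paper: coordinate-wise decomposition for the CSC and LS oracles (pick the best action / target mean per context) and reduction to an ordinary linear program for the LP oracle, solved by a polynomial-time method such as the ellipsoid algorithm. Your preliminary restriction to the finite set $X_0$ of contexts appearing in the input is a minor refinement over the paper (which simply works with all $|\Xcal|$ coordinates), but the substance of the argument is the same.
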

Both proofs are in Appendix~\ref{sec:nphardapp}.
Proposition~\ref{prop:oracles_tabular_easy} implies that if \olive could be implemented with polynomially many CSC/LP/LS oracle calls, its total runtime would be polynomial for tabular MDPs. Assuming P $\neq$ NP, this contradicts Theorem~\ref{thm:olive_nphard_g} which states that
determining the exploration policy of \olive in tabular MDPs is NP-hard. Combining both statements therefore proves Theorem~\ref{thm:olive_notoracle}.

We now give brief intuition for Proposition~\ref{prop:oracles_tabular_easy}. To implement the CSC oracle, for each of the polynomially many observations $x \in \Xcal$, we simply add the cost vectors for that observation together
and pick the action that minimizes the total cost, that is, compute the action $\hat\pi (x)$ as
$\min_{a \in \Acal} \sum_{i\in [n] :~ x\ii = x} c\ii(a)$. Similarly, the square-loss
objective of the LS-oracle decomposes and we can compute the tabular solution one entry at a time. In both cases, the oracle runtime is $O(nK|\Xcal|)$.
Finally, using one-hot encoding, $\Gcal$ can be written as a linear function in $\RR^{|\Xcal|}$ for which the LP oracle problem reduces to an LP in $\RR^{|\Xcal|}$. The ellipsoid method \citep{khachiyan1980polynomial} solves these approximately
in polynomial time.

\subsection{Computational Barriers with Decoupled Learning Rules.}
One factor contributing to the computational intractability of \olive is that~\eqref{eqn:oliveprobg} involves optimizing over policies and values \emph{jointly}.
It is therefore promising to look for
algorithms that separate optimizations over policies and values, as in \mainAlg.  In Appendix~\ref{sec:examples}, we provide a
series of examples that illustrate some limitations of such algorithms.  First,
we show that methods that compute optimal values iteratively in the style of
fitted value iteration~\citep{gordon1995stable} need additional assumptions on $\Gcal$
and $\Pi$ besides realizability (Theorem~\ref{thm:backup}). (Storing value estimates of states explicitly
allows \mainAlg to only require realizability.)  Second, we show that with
stochastic state dynamics, average value constraints, as in
Line~\ref{lin:constraint2} of Algorithm~\ref{alg:local_value_dfslearn}, can cause the
algorithm to miss a high-value state (Proposition~\ref{prop:sq_loss_good}). Finally, we show that square-loss constraints
suffer from similar problems (Proposition~\ref{prop:sq_loss_bad}).

\subsection{Alternative Algorithms.}
An important element of \mainAlg is that it explicitly stores value
estimates of the hidden states, which we call ``local values." Local
values lead to statistical and computational efficiency under weak
realizability conditions, but this approach is unlikely to generalize
to the stochastic setting where the agent may not be able to
consistently visit a particular hidden state.
In Appendices~\ref{sec:lagrange}-\ref{sec:global_policy}, we therefore derive alternative
algorithms which do not store local values to approximate the future value $g^\star(x_{h+1})$. 
Inspired by classical RL algorithms,
these algorithms approximate $g^\star(x_{h+1})$ by either bootstrap
targets $\hat g_{h+1}(x_{h+1})$ (as in TD methods) or Monte-Carlo
estimates of the return using a near-optimal roll-out policy
$\hat \pi_{h+1:H}$ (as in PSDP~\citep{bagnell2004policy}).  Using such targets can introduce additional
errors, and stronger realizability-type assumptions on $\Pi,\Gcal$ are
necessary for polynomial sample-complexity (see
Appendix~\ref{sec:alternativealgs} and~\ref{sec:examples}).
Nevertheless, these algorithms are also oracle-efficient and while we
only establish statistical efficiency with deterministic hidden state
dynamics, we believe that they considerably expand the space of
plausible algorithms for the general setting.

\section{Conclusion}
This paper describes new RL algorithms for environments with rich
stochastic observations and deterministic hidden state dynamics. Unlike other
existing approaches, these algorithms are computationally efficient in
an oracle model, and we emphasize that the oracle-based approach has
led to practical algorithms for many other settings.
We believe this work
represents an important step toward computationally and statistically
efficient RL with rich observations.

While challenging benchmark environments in modern RL (e.g. visual
grid-worlds \citep{johnson2016malmo}) often have the assumed
deterministic hidden state dynamics, the natural goal is to develop
efficient algorithms that handle stochastic hidden-state dynamics.  We
show that the only known approach for this setting is not
implementable with standard oracles, and we also provide several
constructions demonstrating other concrete challenges of RL with
stochastic state dynamics. This provides insights into the key open
question of whether we can design an efficient algorithm for the
general setting.
We hope to resolve this question in future work.

\bibliography{fast_RL}
\bibliographystyle{unsrtnat}

\clearpage

\appendix
\onecolumn
\appendixpage
\startcontents[sections]
\tableofcontents
\newpage
\section{Additional Notation and Definitions}
\label{sec:moredefinitions}
In the next few sections we analyze the new algorithms for the deterministic setting. We will adopt the following conventions:
\begin{itemize}
\item In the deterministic setting (which we focus on here), a path $p$ always deterministically leads to some state $s$, so we use them interchangeably, e.g., $V^\star(p) \equiv V^\star(s)$, $x \sim p \Leftrightarrow x \sim s$.
\item It will be convenient to define $V^\pi(s) := \Ex[\sum_{h'=h}^H r_{h'} \,|\, s_{h} = s, a_{h:H} \sim \pi]$ for $s$ at level $h$, which is the analogy of $V^\star(s)$ for $\pi$. Recall that $V^\pi \equiv V^\pi(\rt)$ and $V^\star \equiv V^\star(\rt)$. Also define $Q^\star(s, \pi) := \Ex_{x\sim s}[Q^\star(x, \pi(x))]$.
    \item We use $\EEx_D[\cdot]$ to denote empirical expectation over
  samples drawn from data set $D$, and we use $\Ex_p[\cdot]$ to denote
  population averages where data is drawn from path $p$. Often for
  this latter expectation, we will draw $(x,a,r,x')$ where $x \sim p,
  a \sim \textrm{Unif}(\Acal)$ and $r,x'$ are sampled according to the
  appropriate conditional distributions. In the notation $\Ex_p$ we
  default to the uniform action distribution unless otherwise
  specified.

\end{itemize}

\subsection{Additional Oracles}

\paragraph{Least-Squares (LS) Oracle}
The least-squares oracle takes as inputs a parameter $\epssub$ and a sequence
$\{(x\ii, v\ii)\}_{i\in[n]}$ of observations $x\ii \in \Xcal$ and values $v\ii
\in \RR$. It outputs a value function $\hat g \in \Gcal$ whose squared
error is $\epssub$ close to the least-squares fit
\begin{align}
    \min_{g \in \Gcal} \frac{1}{n}\sum_{i=1}^n (v\ii - g(x\ii))^2.
\end{align}

\paragraph{Multi Data Set Classification Oracle}
The multi data set classification oracle receives as inputs a parameter $\epsfeas$, $m$ scalars that are upper bounds on the allowed cost $\{U_j\}_{j \in [m]} \in \RR^{m}$, and $m$ cost-sensitive classification data sets $D_1, \dots D_m$, each of which
consists of a sequence
 of observations
$\{x_j\ii\}_{i \in [n]} \in \Xcal^n$ and a sequence of cost vectors $\{c\ii_j\}_{i \in [n]} \in \RR^{K \times n}$, where $c\ii_j (a)$ is the cost of predicting action $a \in \Acal$ for $x\ii_j$. The oracle returns a policy that
achieves on each data set $D_j$ at most an average cost of $U_j + \epsfeas$, if a policy exists in $\Pi$ that achieves costs at most $U_j$ on each dataset. Formally, the oracle returns a policy in
\begin{align}
    \left\{ \pi \in \Pi ~ \bigg| ~ \forall j \in [m]: ~ \frac 1 n \sum_{i =1}^{n} c\ii_j(\pi(x\ii_j)) \leq U_j + \epsfeas \right\}.
\end{align}
This oracle generalizes the CSC oracle by requiring the same policy to achieve low cost on multiple CSC data sets simultaneously.
Nonetheless, it can be implemented with a CSC oracle as follows: We associate a Lagrange parameter with each constraint, and optimize the Lagrange parameters
using multiplicative weights. In each iteration, we use the
multiplicative weights to combine the $m$ constraints into a single one, and then solve the resulting cost-sensitive problem with the CSC
oracle. The slack in the constraint as witnessed by the resulting
policy is used as the loss to update the multiplicative weights
parameters. See~\cite{arora2012multiplicative} for more details.

\subsection{Assumptions on the Function Classes}
While \mainAlg only requires realizability of the policy and the value function
classes, our other algorithms require stronger assumptions which we introduce
below.

\begin{assum}[Policy realizability] \label{asm:pol_realize}
$\pi^\star \in \Pi$.
\end{assum}

\begin{assum}[Value realizability]
\label{asm:val_realize}
$g^\star \in \Gcal$.
\end{assum}

\begin{assum}[Policy-value completeness] \label{asm:polval_complete}
At each level $h$, $\forall g' \in \Gcal_{h+1}$, there exists $\pi_{g'}^\star \in \Pi_h$ such that $\forall x\in\Xcal$,
\begin{align*}
\pi_{g'}^\star(x) = \argmax_{a\in\Acal} \Ex [r + g'(x_{h+1}) | x_h = x, a_h = a].
\end{align*}
In addition, $\forall  g'\in\Gcal_{h+1}$, $\exists g_{\star, g'}\in\Gcal_h$ s.t.~$\forall x \in \Xcal$,
\begin{align*}
g_{\star, g'}(x) = \Ex [r + g'(x') | x_h = x, a_h = \pi_{g'}^\star(x)].
\end{align*}
\end{assum}

\begin{assum}[Policy completeness] \label{asm:pol_complete}
  For every $h$, and every non-stationary policy $\pi_{h+1:H}$, there
  exists a policy $\pi \in \Pi_h$ such that, for all $x \in \Xcal_h$, we have
  \begin{align*}
    \pi(x) = \textstyle \argmax_{a} \Ex[\sum_{h'=h}^H r_{h'} | x, a, a_{h+1:H}\sim \pi_{h+1:H}].
  \end{align*}
\end{assum}

\begin{fact}[Relationship between the assumptions]~\\
Assum.\ref{asm:polval_complete} $\Rightarrow$ Assum.\ref{asm:pol_complete} $\Rightarrow$ Assum.\ref{asm:pol_realize}. \quad Assum.\ref{asm:polval_complete} $\Rightarrow$
Assum.\ref{asm:val_realize}.
\end{fact}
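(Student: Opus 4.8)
The plan is to prove the three implications by backward induction over the horizon, using the Bellman and Bellman-optimality equations; I write ``greedy improvement of a continuation $\pi_{h+1:H}$'' for the map $x\mapsto\argmax_a\Ex[\sum_{h'=h}^H r_{h'}\mid x_h=x,a_h=a,a_{h+1:H}\sim\pi_{h+1:H}]$ on $\Xcal_h$, which by conditioning equals $x\mapsto\argmax_a\Ex[r+V^{\pi_{h+1:H}}(x_{h+1})\mid x_h=x,a_h=a]$. \emph{Assumption~\ref{asm:pol_complete} $\Rightarrow$ Assumption~\ref{asm:pol_realize}} is the quickest: the Bellman-optimality equation for $Q^\star$ says that $\pi^\star_h$ is exactly the greedy improvement of $\pi^\star_{h+1:H}$, so instantiating Assumption~\ref{asm:pol_complete} at each level $h$ with that continuation gives a policy in $\Pi_h=\Pi$ agreeing with $\pi^\star_h$ on all of $\Xcal_h$ (fixing a tie-breaking rule to define $\pi^\star$ if the argmax is non-unique); ranging over $h$ yields $\pi^\star\in\Pi$.

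For \emph{Assumption~\ref{asm:polval_complete} $\Rightarrow$ Assumption~\ref{asm:val_realize}}, I would show $g^\star\in\Gcal$ by backward induction with hypothesis ``$g^\star$ restricted to $\Xcal_h$ lies in $\Gcal_h$''; the base case $h=H+1$ holds because $\Gcal_{H+1}=\{x\mapsto 0\}$ and $g^\star(x_{H+1})\equiv 0$. For the step, put $g':=g^\star|_{\Xcal_{h+1}}\in\Gcal_{h+1}$: the first clause of Assumption~\ref{asm:polval_complete} gives $\pi^\star_{g'}\in\Pi_h$ with $\pi^\star_{g'}(x)=\argmax_a\Ex[r+g^\star(x_{h+1})\mid x_h=x,a_h=a]$, and the second clause gives $g_{\star,g'}\in\Gcal_h$ with $g_{\star,g'}(x)=\Ex[r+g^\star(x_{h+1})\mid x,\pi^\star_{g'}(x)]$. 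Since $\pi^\star_{g'}(x)$ maximizes $a\mapsto\Ex[r+g^\star(x_{h+1})\mid x,a]$, the Bellman-optimality equation gives $g_{\star,g'}(x)=\max_a\Ex[r+g^\star(x_{h+1})\mid x,a]=g^\star(x)$, so $g^\star|_{\Xcal_h}=g_{\star,g'}\in\Gcal_h$; note this is insensitive to how argmax ties are resolved.

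For \emph{Assumption~\ref{asm:polval_complete} $\Rightarrow$ Assumption~\ref{asm:pol_complete}}, the key observation is that the two clauses of Assumption~\ref{asm:polval_complete} say precisely that $\Gcal$ is closed under the Bellman-optimality backup and $\Pi$ under greedy improvement with respect to functions in $\Gcal$. I would augment the previous induction to also establish that $V^{\pi_{h+1:H}}|_{\Xcal_{h+1}}\in\Gcal_{h+1}$ for every continuation $\pi_{h+1:H}$ built by iterated greedy improvement (in particular $\pi^\star_{h+1:H}$ itself, and the roll-out policies formed by the PSDP-style algorithms that invoke this assumption): if $\pi_{h+1:H}=(\pi^\star_{g''},\pi_{h+2:H})$ with $V^{\pi_{h+2:H}}|_{\Xcal_{h+2}}=g''\in\Gcal_{h+2}$, then $V^{\pi_{h+1:H}}(x)=\Ex[r+g''(x_{h+2})\mid x,\pi^\star_{g''}(x)]=g_{\star,g''}(x)\in\Gcal_{h+1}$ by the second clause, and the base case $h+1=H+1$ is $V\equiv 0\in\Gcal_{H+1}$. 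Given such a $\pi_{h+1:H}$, apply the first clause with $g':=V^{\pi_{h+1:H}}|_{\Xcal_{h+1}}\in\Gcal_{h+1}$ to obtain $\pi^\star_{g'}\in\Pi_h$ with $\pi^\star_{g'}(x)=\argmax_a\Ex[r+g'(x_{h+1})\mid x,a]$, which by the conditioning identity above is exactly the greedy improvement of $\pi_{h+1:H}$ required by Assumption~\ref{asm:pol_complete}.

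The step I expect to be the real obstacle is this last one, precisely because Assumption~\ref{asm:polval_complete} closes $\Gcal$ and $\Pi$ only along \emph{greedy} improvements: one has to be careful to delimit the continuations $\pi_{h+1:H}$ for which $V^{\pi_{h+1:H}}\in\Gcal_{h+1}$ is actually guaranteed --- the closure of the empty continuation under greedy improvement --- rather than literally all non-stationary policies. This is enough for our purposes, since that closure contains $\pi^\star_{h+1:H}$ and exactly the continuation policies the downstream algorithms ever form. The remaining points are routine: fixing a consistent tie-breaking rule for the argmax (which never affects the value identities above) and threading the level-indexed classes $\Gcal_h,\Pi_h$ through the induction.
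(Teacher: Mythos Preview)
The paper states this as a Fact without proof, so there is no argument to compare against directly. Your treatments of Assumption~\ref{asm:pol_complete} $\Rightarrow$ Assumption~\ref{asm:pol_realize} and Assumption~\ref{asm:polval_complete} $\Rightarrow$ Assumption~\ref{asm:val_realize} are correct and are the natural backward-induction arguments.

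Your caution on Assumption~\ref{asm:polval_complete} $\Rightarrow$ Assumption~\ref{asm:pol_complete} is not a gap in your proof; it is a genuine subtlety in the Fact as literally stated. Assumption~\ref{asm:pol_complete} quantifies over \emph{every} non-stationary continuation $\pi_{h+1:H}$, whereas Assumption~\ref{asm:polval_complete} only guarantees that $\Pi_h$ contains the greedy improvement with respect to functions $g'\in\Gcal_{h+1}$, and nothing forces $V^{\pi_{h+1:H}}\in\Gcal_{h+1}$ for an arbitrary continuation. A concrete counterexample: take $H=2$, a single start context $x$ with two actions leading deterministically to $y_1,y_2$, rewards only at level~2 with $\max_a \Ex[r\mid y_1,a]=1$ and $\max_a \Ex[r\mid y_2,a]=0.7$ but a suboptimal action at $y_2$ giving $0.3$; set every $\Gcal_h=\{g^\star|_{\Xcal_h}\}$ and every $\Pi_h=\{\pi^\star|_{\Xcal_h}\}$. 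Then Assumption~\ref{asm:polval_complete} holds (each level has a single $g'$ to check), yet the continuation $\pi_2$ that plays the worst action at $y_1$ and the $0.3$-action at $y_2$ has greedy improvement $x\mapsto b\notin\Pi_1$, so Assumption~\ref{asm:pol_complete} fails.

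What you actually establish---that $\Pi_h$ contains the greedy improvement of any continuation lying in the greedy-backup orbit of the empty continuation---is exactly what the paper needs: the only place Assumption~\ref{asm:pol_complete} is invoked is Lemma~\ref{lem:glb_pol_vallearn}, where the continuation $\hat\pi_{h+1:H}$ is itself built by iterated greedy improvement. So your restricted statement is both correct and sufficient, and you have in effect sharpened the paper's informal claim.
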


In words, these assumptions ask that for any possible approximation of
the future value that we might use, 
the induced square loss or cost-sensitive problems are
realizable using $\Gcal, \Pi$, which is a much stronger notion of
realizability than Assumptions~\ref{asm:pol_realize}
and~\ref{asm:val_realize}. Such assumptions are closely related to the
conditions needed to analyze Fitted Value/Policy Iteration methods
\citep[see e.g.,][]{munos2008finite, antos2008learning}, and are
further justified by Theorem~\ref{thm:backup} in Appendix~\ref{sec:examples}.

\section{Analysis of \algname} \label{app:valor}
\begin{table}
    \begin{center}
    \begin{tabular}{|c|}\hline
        \(\displaystyle \epsstat = \epssub = \epsfeas = \frac{\epsilon}{2^6 7 H^2 \Tmax}\)\\[5mm]
        \(\displaystyle \phi_h = (H-h+1)(6\epsstat + 2\epssub + \epsfeas)\)\\[5mm]
        \(\displaystyle\Tmax = MH\nexp + M\)\\[5mm]
        \(\displaystyle\ntest = \frac{\log\left(12KH\Tmax|\Gcal|/\delta\right)}{2\epsstat^2},\)\\[5mm]
        \(\displaystyle\ntrain= \frac{16K\log(12 H\Tmax |\Gcal||\Pi|/\delta)}{\epsstat^2},\)\\[5mm]
        \(\displaystyle\nexp = \frac{8\log(4MH/\delta)}{\epsilon},\)\\[5mm]
        \(\displaystyle\neval = \frac{32\log(8MH/\delta)}{\epsilon^2}\)\\[5mm]
        \hline
\end{tabular}
    \end{center}
    \caption{Exact values of parameters of \mainAlg run with inputs $\epsilon, \delta \in (0, 1)$ and $M, K \in \NN$.}
    \label{tab:param}
\end{table}

\newcommand{\Slearned}{\Scal^{\textrm{learned}}}
\begin{definition}
  A state $s \in \Scal_h$ is called \emph{learned} if there is a
  data set in $\Dcal_h$ that is sampled from a path leading to that
  state. The set of all learned states at level $h$ is $\Slearned_h$
  and 
  $\Slearned := \bigcup_{h \in [H]} \Slearned_h$.
 \end{definition}

\subsection{Concentration Results}
We now define an event $\Ecal$ that holds with high probability and will be the main concentration argument in the proof. This event uses a parameter $\epsstat$ whose value we will set later.

\begin{definition}[Deviation Bounds]
    \label{def:devbounds}
    Let $\Ecal$ denote the event that for all $h \in [H]$ the total number of calls to $\dfslearn(p)$ at level $h$ is at most $\Tmax = MH \nexp + M$ during the execution of \metaalg and that for all these calls to $\dfslearn(p)$ the following deviation bounds hold for all
    $g \in \Gcal_h$ and $\pi \in \Pi_h$ (where $D'_a$ is a data set of
    $\ntest$ observations sampled from $p \circ a$ in
    Line~\ref{lin:sample_consensus}, and $\tilde D$ is the data set of
    $\ntrain$ samples from Line~\ref{lin:sample_learn} with stored
    values $\{V_a\}_{a \in \Acal}$):
    \begin{align}
      &   \left|\hat{\Ex}_{D'_a}[g(x_{h+1})] - \Ex_{p \circ a}[g(x_{h+1})]\right| \leq \epsstat, \quad \forall a \in \Acal \label{eq:devb1}\\
      &   \left|\hat{\Ex}_{\tilde D}[g(x_h)] - \Ex_{p}[g(x_h)]\right| \leq  \epsstat\\
      &  \left|\hat{\Ex}_{\tilde D}[K \one \{\pi(x_h) = a_h\}(r_h + V_a)] - \Ex_{p}[K\one\{\pi(x_h) = a_h\}(r_h + V_a)]\right| \leq  \epsstat.\label{eq:devb2}
    \end{align}
\end{definition}

In the next Lemma, we bound $\prob[\Ecal]$, which is the main
concentration argument in the proof. The bound involves a new quantity
$T_{\max}$ which is the maximum number of calls to \dfslearn. We will
control this quantity later.
\begin{lemma}
    \label{lem:probE}
    Set
\begin{align*}
        \ntest \geq  \frac{1}{2\epsstat^2} \ln \left(\frac{ 12 K H\Tmax|\Gcal|}{\delta} \right), \qquad
        \ntrain \geq   \frac{16K}{\epsstat^2} \ln \left(\frac{ 12 H\Tmax |\Gcal||\Pi|}{\delta} \right).
\end{align*}
Then $\prob[\Ecal] \ge 1 - \delta/2$.
\end{lemma}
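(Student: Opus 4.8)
The plan is to apply a union bound over all calls to \dfslearn and, within each call, over all functions $g \in \Gcal_h$ and policies $\pi \in \Pi_h$ that appear in the three deviation inequalities~\eqref{eq:devb1}--\eqref{eq:devb2}. First I would condition on the sub-event that the total number of calls to \dfslearn at each level $h$ never exceeds $\Tmax = MH\nexp + M$; by definition of $\Ecal$ this is part of the event, and in the final accounting we only need to argue that \emph{conditioned on} the call-count bound holding, the deviation bounds hold with the stated probability. (The unconditional bound on the number of calls is established separately later in the analysis; here $\Tmax$ enters only as the number of repetitions the union bound must absorb.) So effectively I treat the number of data sets generated at level $h$ as a fixed quantity bounded by $\Tmax$.

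Next, for a single fixed call to \dfslearn with path $p$ at level $h$, I would bound the failure probability of each of the three inequalities. For~\eqref{eq:devb1}: $D'_a$ consists of $\ntest$ i.i.d.\ observations from $p\circ a$, and $g(x_{h+1}) \in [0,1]$, so Hoeffding gives $\prob[|\EEx_{D'_a}[g(x_{h+1})] - \Ex_{p\circ a}[g(x_{h+1})]| > \epsstat] \le 2\exp(-2\ntest\epsstat^2)$ for each fixed $g$ and each fixed $a$; union-bounding over $g\in\Gcal_h$ (at most $|\Gcal|$ of them, since $\Gcal_h = \Gcal$) and over $a \in \Acal$ (there are $K$) costs a factor $K|\Gcal|$. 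The second inequality is identical in form (Hoeffding on $\ntrain$ samples, bounded random variable in $[0,1]$), costing $|\Gcal|$. For the third, the random variable $K\one\{\pi(x_h)=a_h\}(r_h + V_a)$ is bounded: $r_h \ge 0$, $\sum r_h \le 1$, and the stored values $V_a$ lie in $[0,1]$ (they are averages of $V_{opt},V_{pes}$ which are values of functions in $[0,1]$, after the constraint-feasibility bookkeeping), so the random variable lies in $[0,2K]$, giving a sub-Gaussian tail of the form $2\exp(-\ntrain\epsstat^2/(2K^2))$ per fixed $(\pi, a)$ — union-bounding over $\pi\in\Pi$ and $a\in\Acal$ costs $K|\Pi|$. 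The constant $16K$ in the stated bound on $\ntrain$ (rather than $2K^2$) comes from the standard trick of noting the variable is a sum of $K$ indicator-masked terms, or equivalently from Bernstein with the variance proxy $\le 2K$; either way the stated $\ntrain$ suffices.

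Then I would collect the per-call failure probabilities and multiply by the number of calls. Summed over all $H$ levels and all $\le \Tmax$ calls per level, the total failure probability is at most $H\Tmax$ times the per-call bound. Plugging in $\ntest \ge \frac{1}{2\epsstat^2}\ln(12KH\Tmax|\Gcal|/\delta)$ makes the contribution from~\eqref{eq:devb1} at most $H\Tmax \cdot K|\Gcal| \cdot 2\exp(-2\ntest\epsstat^2) \le H\Tmax K|\Gcal| \cdot \frac{\delta}{6KH\Tmax|\Gcal|} = \delta/6$; the second inequality contributes another $\le \delta/6$ under the $\ntrain$ bound (here the $|\Gcal|$ and $H\Tmax$ factors are absorbed by the log, and the looseness of $16K$ vs.\ what's strictly needed only helps); and the third contributes $\le \delta/6$ likewise. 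Summing the three gives total failure probability $\le \delta/2$, i.e.\ $\prob[\Ecal] \ge 1-\delta/2$, as claimed.

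The main obstacle — really the only non-routine point — is handling the fact that the number of data sets at each level is itself a random quantity determined by the algorithm's adaptive behavior, and that the data sets $D'_a$, $\tilde D$ are collected at paths chosen adaptively based on earlier data. The clean way around this is the standard observation that the deviation bounds hold for \emph{any} fixed path $p$ (since, once $p$ is fixed, the fresh samples in Lines~\ref{lin:sample_consensus} and~\ref{lin:sample_learn} are drawn i.i.d.\ from the emission process of the state reached by $p$, independent of the history), and there are at most $K^{H}$ distinct paths — but that count is exponential, so instead we bound the number of \emph{calls} by $H\Tmax$ and union-bound over calls rather than over all possible paths, which is valid because each call uses fresh, conditionally-i.i.d.\ samples. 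Making this ``fresh randomness per call'' argument precise (e.g.\ via a filtration / stopping-time argument, or simply noting that we can union-bound over the $\le H\Tmax$ calls in the order they occur, each conditionally on the past) is the one place where care is needed; everything else is Hoeffding plus arithmetic.
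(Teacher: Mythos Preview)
Your concentration mechanics (Hoeffding for~\eqref{eq:devb1}--\eqref{eq:devb2}, Bernstein for~\eqref{eq:devb2} to get the linear-in-$K$ dependence, union bound over $\Gcal$, $\Pi$, $\Acal$, levels, and calls) are essentially what the paper does. However, there is a genuine gap in how you handle the call-count bound.

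You write that the bound ``$\le \Tmax$ calls per level'' is ``established separately later in the analysis'' and that here $\Tmax$ ``enters only as the number of repetitions the union bound must absorb.'' This is incorrect: the call-count bound is \emph{not} proved anywhere else; it is proved inside this very lemma, and it \emph{depends on} the deviation bounds you are trying to establish. The logic is circular as you have written it. The number of calls to \dfslearn is controlled only because the state-identity test in Lines~\ref{lin:sample_consensus}--\ref{lin:consensus} correctly detects already-learned states, and that test is reliable \emph{only when the deviation bounds hold}. Without those bounds, nothing prevents the algorithm from recursing exponentially many times.

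The paper's fix is to define an auxiliary event $\Ecal'$ = ``the deviation bounds hold for the \emph{first} $\Tmax$ calls at each level'' (padding if there are fewer). One proves $\prob[\Ecal'] \ge 1-\delta/2$ by exactly your union-bound arithmetic over $H\Tmax$ potential calls. Then one shows $\Ecal' \subseteq \Ecal$: assuming the deviation bounds hold on the first $\Tmax$ calls, one verifies that whenever $p\circ a$ leads to an already-learned state, the consensus test on Line~\ref{lin:consensus} fires (using that $D'_a$ and the stored $D$ come from the same emission distribution, so $V_{opt}-V_{pes}$ is small). Consequently at most $M$ recursive calls descend to each level, and combined with the $\le MH\nexp$ non-recursive calls from \metaalg, the total is $\le \Tmax$. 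This closes the loop without circularity. Your proposal is missing exactly this $\Ecal' \subseteq \Ecal$ step.

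A minor point: for~\eqref{eq:devb2} there is no separate union bound over $a\in\Acal$ needed; $a_h$ is the random action inside the expectation and the stored $\{V_a\}$ are measurable in the past (they are fixed before $\tilde D$ is sampled on Line~\ref{lin:sample_learn}), so the only union bound is over $\pi\in\Pi$.
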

\begin{proof}
    Let us denote the total number of calls to \dfslearn before the algorithm stops by $\Ndfs$ (which is random) and first focus on the $j$-th call to \dfslearn. 
    Let $\Bcal_j$ be the sigma-field of all samples collected before the $j$th call to \dfslearn (if it exists, or otherwise the last call to \dfslearn) and all intrinsic randomness of the algorithm.
    The current path is denoted by $p_j$ at level $h_j$ and data sets $D'_{a}$, $\tilde D$ collected are denoted by $D'_{j,a}$ and $\tilde D_{j}$ respectively.
    Consider a fix $a \in \Acal$ and $g \in \Gcal$ and define 
    \begin{align}
        Y_{i,j} = \begin{cases}
            0 & \textrm{if } j > N_{dfs}\\
            g(x_{h+1}^{(i,j)}) - \Ex_{p_j \circ a}[ g(x_{h+1})] & \textrm{otherwise}
        \end{cases}
    \end{align}
    which is well-defined even if $j > N_{dfs}$ and where $x_{h+1}^{(i,j)}$ is the $i$-th sample of $x_{h+1}$ in $D'_{j,a}$. 
Since $|Y_{i,j}| \leq 1$ and since contexts $x_{h+1}$ are sampled i.i.d. from $p_j \circ a$ conditioned on $p_j$ which is measurable in $\Bcal_j$, we get by Hoeffding's lemma that $\Ex[\exp(\lambda Y_{i,j}) | Y_{1:i-1, j}, \Bcal_j] = \Ex[\exp(\lambda Y_{i,j}) | \Bcal_j] \leq \exp(\lambda^2 / 2)$ for $\lambda \in \RR$. 
    As a result, we have $
    \Ex[ \exp(\lambda \sum_{i=1}^{\ntest} Y_{i,j})] 
    = \Ex[\Ex[ \exp(\lambda \sum_{i=1}^{\ntest} Y_{i,j}) | \Bcal_j]] 
    \leq \exp(\ntest \lambda^2 /2)$ and by Chernoff's bound the following concentration result holds
\begin{align*}
     \left| \hat{\Ex}_{D'_{j,a}}[g(x_{h+1})] - \Ex_{p_j \circ a}[g(x_{h+1})]\right|
\leq \sqrt{\frac{\log(2K|\Gcal|/\delta')}{2\ntest}}
\end{align*}
    with probability at least $1 - \frac{\delta'}{K|\Gcal|}$ for a fixed $a$ and $g$ and $j$ as long as $j \leq N_{dfs}$.
    With a union bound over $\Acal$ and $\Gcal$, the following statement holds: Given a fix $j \in \NN$, with probability at least $1-\delta'$, if $j \leq N_{dfs}$ then for all $g \in \Gcal_{h+1}$ and $a \in \Acal$    
    \begin{align*}
    \left| \hat{\Ex}_{D'_{j,a}}[g(x_{h+1})] - \Ex_{p_j\circ a}[g(x_{h+1})]\right|
\leq \sqrt{\frac{\log(2K|\Gcal|/\delta')}{2\ntest}}.
\end{align*}
Choosing 
    $ \ntest \geq  \frac{1}{2\epsstat^2} \ln \left(\frac{ 12 K H\Tmax|\Gcal|}{\delta} \right)$ and  $\delta' =
\frac{\delta}{6HT_{\max}}$ allows us to bound the LHS by $\epsstat$.
In exactly the same way since the data set $\tilde D_j$
consists of $\ntrain$ samples that, given $\Bcal_j$, are sampled i.i.d. from $p_j$,  we have for all $g \in \Gcal_h$
\begin{align*}
    \left| \hat{\Ex}_{\tilde D_j} [g(x_h)] - \Ex_{p_j}[g(x_h)]\right| \leq \sqrt{\frac{\log(2|\Gcal|/\delta')}{2\ntrain}},
\end{align*}
    with probability $1-\delta'$ as long as $j \leq N_{dfs}$. As above, our choice of
$\ntrain$ ensures that this deviation is bound by $\epsstat$.

Finally, for the third inequality we must use Bernstein's
inequality. For the random variable $K\one\{\pi(x_h) = a_h\}
(r_h+V_{a_h})$, since $a_h$ is chosen uniformly at random, it is not
hard to see that both the variance and the range are at most $2K$ (see for example Lemma~14 by \citet{jiang2017contextual}). As
such, Bernstein's inequality with a union bound over $\pi \in \Pi$ gives that with probability $1-\delta'$,
\begin{align*}
    \left| (\hat{\Ex}_{\tilde D_j}- \Ex_{p_j})[K \one\{\pi(x_h)=a_h\} (r_h + V_{a_h})]\right| \leq \sqrt{\frac{4 K\log(2 |\Pi|/\delta')}{\ntrain}} + \frac{4K}{3\ntrain}\log(2 |\Pi|/\delta') \leq \epsstat,
\end{align*}
    since $\{V_a\}$ and $p_j$ can essentially be considered fixed at the time when $\tilde D_j$ is collected (a more formal treatment is analogous to the proof of the first two inequalities).
Using a union bound, the deviation bounds \eqref{eq:devb1}--\eqref{eq:devb2} hold for a single call to \dfslearn with probability $1 - 3 \delta'$.

    Consider now the event $\Ecal'$ that these bounds hold for the first $\Tmax$ calls at each level $h$. Applying a union bound let us bound
$\prob(\Ecal') \geq 1 - 3 H \Tmax \delta' = 1 - \frac{\delta}{2}$.
It remains to show that $\Ecal' \subseteq \Ecal$.

First note that in event $\Ecal'$ in the first $\Tmax$ calls to \dfslearn, the algorithm does not call itself recursively if $p \circ a$ leads to a learned state. To see this assume $p \circ a$ leads to a state $s \in \Slearned$. Let $D'_a$ be
  the data set collected in Line~\ref{lin:sample_consensus} for this
  action $a$. Since the subsequent state $s \in \Slearned$, then there
  is a data set $(D, V, \{V_b\}) \in \Dcal_{h+1}$ sampled from this
  state (we will only use the first two items in the tuple). This
  means that $D'_a$ and $D$ are two data sets sampled from the same
  distribution, and as such, we have
    \begin{align*}
        V_{opt} - V_{pes}
        & = \EEx_{D_a'}[g_{opt}(x_{h+1}) - g_{pes}(x_{h+1})]
        \leq \Ex_s[g_{opt}(x_{h+1}) - g_{pes}(x_{h+1})] + 2\epsstat\\
        & \leq  \EEx_D[g_{opt}(x_{h+1}) - g_{pes}(x_{h+1})] + 4 \epsstat\\
        & \leq V + \phi_{h+1} + \epsfeas - V + \phi_{h+1} + \epsfeas + 4 \epsstat
        = 2\phi_{h+1} + 4 \epsstat + 2 \epsfeas.
    \end{align*}
    The last line holds because the constraints for $g_{opt}$ and $g_{pes}$ include the one based on $(D, V)$ (Line~\ref{lin:local_value_vopt}), so the expectation of  $g_{opt}$ and $g_{pes}$ on $D$ can only differ by the amount of the allowed slackness $2\phi_{h+1}$ and the violations of feasibility $2\epsfeas$.
    Therefore the condition in the if clause is satisfied and the algorithm
    does not call itself recursively. We here assumed that the constrained optimization problem has an approximately feasible solution but if
    that is not the case, the if condition is trivially satisfied.

    Since the number of learned states per level is bounded by $M$, this means
    that within the first $\Tmax$ calls to \dfslearn, the algorithm can make recursive calls
    to the level below at most $M$ times.  Further note that for any fixed
    level $h$ the total number of non-recursive calls to \dfslearn is bounded
    by $MH\nexp$ since \metaalg has at most $MH$ iterations
    and in each \dfslearn is called $\nexp$ times at each level (but the first). Therefore, in
    event $\Ecal'$, the total number of calls to \dfslearn at any level $h$ is
    bounded by $MH\nexp + M \leq \Tmax$ and the
    statement follows. 
\end{proof}

\subsection{Bound on Oracle Calls}

\begin{proof}[Proof of Theorem~\ref{thm:local_value_comp}]
Consider event $\Ecal$ from Definition~\ref{def:devbounds} which by Lemma~\ref{lem:probE} has probability at least $1 - \delta/2$.
\mainAlg requires two types of nontrivial computations over the
function classes. We show that they can be reduced to CSC on $\Pi$ and LP on $\Gcal$ (recall Sec.~\ref{sec:oracle}), respectively, and hence \mainAlg is oracle-efficient.

First, Line~\ref{lin:local_value_V} in \dfslearn involves optimizing $V_D(\pi; \{V_a\})$ (Eq.~\eqref{eq:shortened_cs}) over $\Pi$,
which can be reduced to CSC as follows: We first form tuples
$(x\ii, a\ii, y\ii)$ from $D$ and $\{V_a\}$ on which $V_D(\pi; \{V_a\})$ depends, where we bind $x_h$ to $x\ii$, $a_h$ to $a\ii$,
and $r_h+V_{a_h}$ to $y\ii$. From the tuples, we  construct a CSC data set
$(x\ii, -[K\one\{a=a\ii\}y\ii]_{a \in \Acal})$, where the second
 argument is a $K$-dimensional vector with one non-zero.
    On this data set, the cost-sensitive risk of any policy (interpreted as a classifier) is exactly $-V_D(\pi; \{V_a\})$, so minimizing risk (which the oracle does) maximizes the original objective.\footnote{Note that the inputs to the oracle have polynomial length: $D$ consists of polynomially many $(x, a, r, x')$ tuples, each of which should be assumed to have polynomial description length, and $\{V_a\}$ similarly.}

Second, the optimization in Line~\ref{lin:local_value_fit_policy} in \polvalfun can be reduced to CSC with the very same argument, except that we now accumulate all CSC inputs for each data set in $\Dcal_h$. Since $|\Dcal_h| \leq \Tmax$ is polynomial, the total input size is still polynomial.

    Third, the state identity test in Line~\ref{lin:local_value_vopt} in \dfslearn requires solving the following  problem over the function class $\Gcal$:
\begin{align}
& V_{opt} = \max_{g\in\Gcal} \EEx_{D'}[g(x_h)] \quad \textrm{(and $\min$ for $V_{pes}$)}\\
\textrm{s.t.} &~ V - \phi_h \le \EEx_D[g(x_h)] \le V+\phi_h, \forall (D, V) \in \Dcal_h.
\end{align}
The objective and the constraints are linear functionals of $\Gcal$, all empirical expectations involve polynomially many samples, and the number of
constraints is $|\Dcal_h| \leq \Tmax$ which remains polynomial
throughout the execution of the algorithm. Therefore, the LP oracle can directly handle this optimization problem.

Altogether, we showed that all non-trivial computations can be reduced to oracle calls with inputs with polynomial description length.
    It remains to show that the number of calls is bounded. Since there are at most $\Tmax$ calls to \dfslearn at each level $h \in [H]$, the total number of calls to the LP oracle is $\Tmax H K$. Similarly, the number of CSC oracle calls from \dfslearn is at most $\Tmax H$. In addition, there at at most $MH$ calls to the CSC oracle in \polvalfun.
    The statement follows with realizing that $\Tmax = MH \nexp + M  = O\left(\frac{MH}{\epsilon} \ln \left( \frac{MH}{\delta}\right) \right)$.
\end{proof}

\subsection{Depth First Search and Estimated Values}
In this section, we show that in the high-probability event $\Ecal$
(Definition~\ref{def:devbounds}), \dfslearn produces good estimates of optimal values on learned states.
The next lemma first quantifies the error in the value estimate at
level $h$ in terms of the estimation error of the values of the next
time step $\{V_a\}_a$.
\begin{lemma}[Error propagation when learning a state]
\label{lem:errorprop}
Consider a call to \dfslearn with input path
$p$ of depth $h$. Assume that all values $\{V_a\}_{a \in \Acal}$ in
Algorithm~\ref{alg:local_value_dfslearn} satisfy
$|V_a - V^\star(p\circ a)| \leq \beta$ for some $\beta > 0$. Then
in event $\Ecal$, $\tilde V$ returned in Line~\ref{lin:return2}
satisfies $|\tilde V - V^\star(p)| \leq \epsstat + \beta + \epssub$.
\end{lemma}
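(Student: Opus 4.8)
The plan is to directly compare the estimate $\tilde V = \max_{\pi \in \Pi_h} V_{\tilde D}(\pi; \{V_a\})$ against $V^\star(p)$ by inserting the population objective $\Ex_p[K\one\{\pi(x) = a\}(r + V_a)]$ as an intermediate quantity, and then separately accounting for (i) the statistical deviation between $\tilde V$'s empirical objective and its population counterpart, (ii) the oracle suboptimality $\epssub$, and (iii) the error introduced by using approximate future values $\{V_a\}$ in place of the true $\{V^\star(p\circ a)\}$. Throughout, we work in the event $\Ecal$ so that deviation bound \eqref{eq:devb2} applies to $\tilde D$.

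First I would establish two elementary facts about the population objective. Fact one: for the true future values, importance-weighting identities give $\Ex_p[K\one\{\pi(x) = a_h\}(r_h + V^\star(p\circ a_h))] = \Ex_{x \sim p}[r_h + V^\star(p\circ \pi(x)) \mid a_h = \pi(x)]$, which equals $\Ex_{x\sim p}[Q^\star(x,\pi(x))]$ only approximately; more carefully, this equals $\Ex_{x\sim p}[\Ex[r_h \mid x, \pi(x)] + V^\star(p\circ\pi(x))]$, and for $\pi = \pi^\star$ this is exactly $V^\star(p)$ by the Bellman equation, while for any other $\pi$ it is at most $V^\star(p)$. Fact two: replacing $V^\star(p\circ a)$ by $V_a$ changes this population quantity by at most $\beta$, uniformly over $\pi$, since $\left| \Ex_p[K\one\{\pi(x)=a_h\}(V_{a_h} - V^\star(p\circ a_h))] \right| = \left|\Ex_{x\sim p}[V_{\pi(x)} - V^\star(p\circ\pi(x))]\right| \le \beta$. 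Combining these, for every $\pi \in \Pi_h$ we have $\Ex_p[K\one\{\pi(x)=a_h\}(r_h + V_{a_h})] \le V^\star(p) + \beta$, and moreover using $\pi^\star \in \Pi_h$ (realizability) as a feasible choice gives $\max_{\pi \in \Pi_h}\Ex_p[\cdots] \ge V^\star(p) - \beta$.

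Next I would chain the three error sources. Let $\hat\pi$ be the policy the CSC oracle actually returns in Line~\ref{lin:local_value_V}, so $\tilde V = V_{\tilde D}(\hat\pi; \{V_a\})$. On one side: $\tilde V \le \Ex_p[K\one\{\hat\pi(x)=a_h\}(r_h+V_{a_h})] + \epsstat \le V^\star(p) + \beta + \epsstat$, where the first inequality is deviation bound \eqref{eq:devb2} and the second is Fact one-plus-two above. On the other side: letting $\pi^\star$ be the realizable optimal policy, $\tilde V \ge V_{\tilde D}(\pi^\star;\{V_a\}) - \epssub \ge \Ex_p[K\one\{\pi^\star(x)=a_h\}(r_h+V_{a_h})] - \epsstat - \epssub \ge V^\star(p) - \beta - \epsstat - \epssub$, using oracle suboptimality, then \eqref{eq:devb2}, then the lower bound from the previous paragraph. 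The two bounds together give $|\tilde V - V^\star(p)| \le \epsstat + \beta + \epssub$, matching the claim. I do not expect a serious obstacle here; the only delicate point is making the importance-sampling identity for the population objective fully rigorous (keeping careful track that the expectation is over $x\sim O_s$ with $a_h \sim \mathrm{Unif}(K)$ and $r_h$ drawn from $R(x,a_h)$), and verifying that the $\max$ over $\Pi_h$ behaves correctly on both sides — which is why invoking policy realizability $\pi^\star \in \Pi$ at exactly the right place is essential.
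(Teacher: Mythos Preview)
Your proposal is correct and follows essentially the same approach as the paper's proof: both decompose the error into the deviation bound $\epsstat$, the oracle suboptimality $\epssub$, and the future-value error $\beta$, chaining them in the same order for the upper and lower bounds on $\tilde V$. The only cosmetic difference is that the paper writes the intermediate quantity using $g^\star(x_{h+1})$ rather than $V^\star(p\circ a)$ (equivalent under deterministic dynamics), and uses the exact empirical maximizer $\tilde\pi$ rather than the oracle's returned policy $\hat\pi$ in the upper-bound chain, but the logic is identical.
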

\begin{proof}
  The proof follows a standard analysis of empirical risk minimization
  (here we are maximizing). Let $\tilde \pi$ denote the empirical risk
  maximizer in Line~\ref{lin:local_value_V} and let $\pi^\star$ denote
  the globally optimal policy (which is in our class due to
  realizability). Then
\begin{align*}
    \tilde V & \leq \EEx_{\tilde D} [ K \one\{ \tilde \pi(x_h) = a_h\}(r_h + V_{a_h})]
    \leq \Ex_p [ K \one\{ \tilde \pi(x_h) = a_h\}(r_h + V_{a_h})] + \epsstat\\
    & \leq \Ex_p [ K\one\{ \tilde \pi(x_h) = a_h\}(r_h + g^\star(x_{h+1}))]
          +  \beta + \epsstat\\
    & \leq \Ex_p [ K \one\{ \pi^\star(x_h) = a_h\}(r_h + g^\star(x_{h+1}))]
    + \beta + \epsstat = V^\star(s) + \beta + \epsstat.
\end{align*}
The first inequality is the deviation bound, which holds in event
$\Ecal$. The second inequality is based on the precondition on
$\{V_a\}_{a \in \Acal}$, linearity of expectation, and the
realizability property of $g^\star_{h+1}$. The third inequality uses
that $\pi^\star$ is the global and point-wise maximizer of the
long-term expected reward, which is precisely $r_h+g^\star$.

Similarly, we can lower bound $\tilde V$ by
\begin{align*}
    \tilde V
    & = \EEx_{\tilde D} [ K \one\{ \tilde \pi(x_h) = a_h\}(r_h + V_{a_h})] - \epssub
    \geq  \EEx_{\tilde D} [ K \one\{ \pi^\star(x_h) = a_h\}(r_h + V_{a_h})] - \epssub\\
    & \geq \Ex_p [ K \one\{ \pi^\star(x_h) = a_h\}(r_h + V_{a_h})] - \epsstat - \epssub\\
    & \geq \Ex_p [ K \one\{ \pi^\star(x_h) = a_h\}(r_h + g^\star(x_{h+1}))]
         - \epsstat - \beta - \epssub
    = V^\star(s) - \epsstat - \beta - \epssub.
\end{align*}
Here we first use $\tilde V$ is optimal up to $\epssub$ and then  that $\tilde \pi$ is the empirical maximizer. Subsequently, we
leveraged the deviation bounds of event $\Ecal$ and finally used the
assumption about the estimation accuracy from the level below. This
proves the claim.
\end{proof}

The goal of the proof is to apply the above lemma inductively so that
we can learn all of the values to reasonable accuracy. Before doing
so, we need to quantify the estimation error when $V_a$ is set in
Line~\ref{lin:return1} of the algorithm without a recursive call.
\begin{lemma}[Error when not recursing]
    \label{lem:errornotlearn}
    Consider a call to \dfslearn with input
    path $p$ of depth $h$. If $g^\star$ is feasible for Line~\ref{lin:constraint2} of \dfslearn and
    $V_a$ is set in Line~\ref{lin:return1} of
    Algorithm~\ref{alg:local_value_dfslearn}, then in event $\Ecal$,
    the value $V_a = \frac{V_{opt} + V_{pes}}{2}$ satisfies
    $ |V_a - V^\star(p \circ a)| \leq \phi_{h+1} + 3 \epsstat + \epsfeas + \epssub$.
\end{lemma}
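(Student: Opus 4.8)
The plan is to show that when $V_a$ is set in Line~\ref{lin:return1}, both $V_{opt}$ and $V_{pes}$ are close to $V^\star(p \circ a)$, and since $V_a$ is their midpoint, it inherits the accuracy. First I would use the hypothesis that $g^\star$ is feasible for the constrained problem in Line~\ref{lin:constraint2}: since the LP oracle is $\epssub$-suboptimal and $\epsfeas$-feasible, and $g^\star$ itself is feasible, the returned maximizer $g_{opt}$ satisfies $\EEx_{D'}[g_{opt}(x_{h+1})] \ge \EEx_{D'}[g^\star(x_{h+1})] - \epssub$, and similarly $\EEx_{D'}[g_{pes}(x_{h+1})] \le \EEx_{D'}[g^\star(x_{h+1})] + \epssub$. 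Then, using the deviation bound \eqref{eq:devb1} of event $\Ecal$ on the data set $D'_a$ of $\ntest$ samples from $p\circ a$, I can pass from empirical to population expectations: $\Ex_{p\circ a}[g(x_{h+1})]$ is within $\epsstat$ of $\EEx_{D'}[g(x_{h+1})]$ for every $g \in \Gcal_{h+1}$, in particular for $g_{opt}$, $g_{pes}$, and $g^\star$. Since $\Ex_{p\circ a}[g^\star(x_{h+1})] = V^\star(p\circ a)$ (realizability of $g^\star$), chaining these gives
\begin{align*}
V^\star(p\circ a) - \epsstat - \epssub - \epsstat \le \Ex_{p\circ a}[g_{pes}(x_{h+1})] \le V_{pes} + \epsstat,
\end{align*}
and symmetrically $V_{opt} - \epsstat \le \Ex_{p\circ a}[g_{opt}(x_{h+1})] \le V^\star(p\circ a) + 2\epsstat + \epssub$; so each of $V_{opt}, V_{pes}$ lies within $2\epsstat + \epssub$ of $V^\star(p\circ a)$ on one side. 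I would also need to account for the $\epsfeas$-infeasibility of $g_{opt}, g_{pes}$, which is where the extra $\epsfeas$ term in the bound comes from — the returned functions may slightly violate the constraints, but that only affects the argument through the slackness, not the objective value chaining above; the cleaner route is to note the bound should be stated loosely enough to absorb it.

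Next, I would invoke the \textbf{if}-condition that triggered Line~\ref{lin:return1}, namely $|V_{opt} - V_{pes}| \le 2\phi_{h+1} + 4\epsstat + 2\epsfeas$. Combined with the fact (from the first step, after being careful about which side each bound is tight on) that $V_{pes} \le V^\star(p\circ a) + (\text{small})$ and $V_{opt} \ge V^\star(p\circ a) - (\text{small})$, so $V^\star(p\circ a)$ is sandwiched between $V_{pes}$ and $V_{opt}$ up to $O(\epsstat + \epssub)$ slack, I get that $V_a = (V_{opt}+V_{pes})/2$ differs from $V^\star(p\circ a)$ by at most roughly half the gap $|V_{opt}-V_{pes}|$ plus the sandwich slack, i.e. at most $\phi_{h+1} + 2\epsstat + \epsfeas + O(\epsstat + \epssub)$. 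Tracking constants carefully, this should come out to exactly $\phi_{h+1} + 3\epsstat + \epsfeas + \epssub$ as claimed. The bookkeeping here — deciding which of $V_{opt}$, $V_{pes}$ gives which one-sided bound and then combining with the triangle-inequality-style midpoint argument — is the crux.

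The main obstacle I expect is the careful constant accounting: there are three distinct $\epsilon$-type errors ($\epsstat$ from concentration, $\epssub$ from LP suboptimality, $\epsfeas$ from LP infeasibility) entering at different places, and I must avoid double-counting while being tight enough that the sum is the stated $\phi_{h+1} + 3\epsstat + \epsfeas + \epssub$ rather than something larger. In particular, I need the sandwich of $V^\star(p\circ a)$ between $V_{pes}$ and $V_{opt}$ to cost only $\epsstat + \epssub$ on each side (one $\epsstat$ for the $D'_a$ deviation bound applied to $g^\star$, one $\epssub$ for LP suboptimality), and the midpoint argument to then contribute $\tfrac12|V_{opt}-V_{pes}| \le \phi_{h+1} + 2\epsstat + \epsfeas$ plus the sandwich slack; adding these and checking the total matches requires attention but no deep idea. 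The realizability of $g^\star$ and the defining properties of the LP oracle are the only external inputs, both available from the excerpt.
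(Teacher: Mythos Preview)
Your approach is correct and essentially identical to the paper's: use feasibility of $g^\star$ together with $\epssub$-suboptimality of the LP oracle to sandwich $\EEx_{D'}[g^\star(x_{h+1})]$ (equivalently, $V^\star(p\circ a)$ up to one $\epsstat$) between $V_{pes}-\epssub$ and $V_{opt}+\epssub$, then bound $|V_a - V^\star(p\circ a)|$ by half the gap $|V_{opt}-V_{pes}|$ (controlled by the \textbf{if}-condition) plus the sandwich slack, giving exactly $\phi_{h+1}+3\epsstat+\epsfeas+\epssub$. One minor cleanup: your displayed chain through $\Ex_{p\circ a}[g_{pes}(x_{h+1})]$ is in the wrong direction and unnecessary---you only need the one-sided bounds $V_{pes}\le V^\star(p\circ a)+\epsstat+\epssub$ and $V_{opt}\ge V^\star(p\circ a)-\epsstat-\epssub$ that you correctly state in the next paragraph, and the paper avoids this detour by working with $\EEx_{D'}[g^\star]$ throughout and applying the single deviation bound $|\EEx_{D'}[g^\star]-V^\star(p\circ a)|\le\epsstat$ only at the very end.
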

\begin{proof}
  Recall that $D'_a$ is the data set sampled in
  Line~\ref{lin:sample_consensus} for the particular action $a$ in
  consideration. Since $g^\star_{h+1}$ is feasible for both $V_{opt}$
  and $V_{pes}$, we have
    \begin{align*}
        V_{pes} - \epssub =& \EEx_{D_a'}[g_{pes}(x_{h+1})] - \epssub \leq
        \EEx_{D_a'}[g^\star(x_{h+1})]
        \leq
        \EEx_{D_a'}[g_{opt}(x_{h+1})] + \epssub = V_{opt} + \epssub.
    \end{align*}
    Without loss of generality, we can assume that $V_{pes} \leq V_{opt}$, otherwise we can just exchange them.
    This implies that
    $0 \leq V_{opt} - V_a = V_a - V_{pes} = \frac{V_{opt} - V_{pes}}{2}  \leq \phi_{h+1} + 2 \epsstat + \epsfeas$. Therefore,
    \begin{align*}
      \EEx_{D'}[g^\star(x_{h+1})] - V_a & \leq V_{opt} - V_a + \epssub = \frac{V_{opt} - V_{pes}}{2} + \epssub \leq \phi_{h+1} + 2\epsstat + \epsfeas + \epssub.\\
      V_a - \EEx_{D'}[g^\star(x_{h+1})] & \leq V_a - V_{pes}  + \epssub = \frac{V_{opt} - V_{pes}}{2} + \epssub \leq \phi_{h+1} + 2\epsstat + \epsfeas + \epssub.
    \end{align*}
	By the triangle inequality
    \begin{align*}
        |V_a - V^\star(p \circ a)|
        \leq& | \EEx_{D_a'}[g^\star(x_{h+1})] - V_a|  + | \EEx_{D_a'}[g^\star(x_{h+1})] - V^\star(p \circ a)| \\
        \leq & \phi_{h+1} + 3 \epsstat + \epsfeas + \epssub.
    \end{align*}
    The last inequality is the concentration statement, which holds in
    event $\Ecal$.
\end{proof}

We now are able to apply Lemma~\ref{lem:errorprop} inductively in combination with Lemma~\ref{lem:errornotlearn} to obtain the main result of \dfslearn in this section.
\begin{proposition}[Accuracy of learned values]
    \label{prop:Vaccuracy}
    Assume the realizability condition $g^\star \in \Gcal_h$.
    Set $\phi_h = (H+1 - h)(6\epsstat+2\epssub + \epsfeas)$ for all $h \in
    [H]$. Then under event $\Ecal$, for any level $h \in [H]$ and any
    state $s \in \Scal_h$ all triplets $(D, V, \{V_a\}) \in \Dcal_h$
    associated with state $s$ (formally with paths $p$ that lead to
    $s$) satisfy
    \begin{align*}
        |V - V^\star(s)| \leq \phi_h - 2 \epsstat, \qquad
        |V_a - V^\star(s \circ a)| \leq \phi_{h+1} + 3 \epsstat + \epsfeas +  \epssub.
    \end{align*}
    Moreover, under event $\Ecal$, we have $g^\star$ is feasible for Line~\ref{lin:constraint2} of \dfslearn for all $h$, at all times.
\end{proposition}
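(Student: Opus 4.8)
The plan is to prove the proposition by \emph{backward induction on the level} $h$, from $h=H$ down to $h=1$, establishing simultaneously (i) the two accuracy bounds for every triplet ever added to $\Dcal_h$ and (ii) feasibility of $g^\star$ for the constraint in Line~\ref{lin:constraint2} of every call to \dfslearn at level $h$. The observation that makes this induction sound is that everything controlling a triplet $(D,V,\{V_a\})\in\Dcal_h$ lives one level below: each $V_a$ is either returned by a recursive call to \dfslearn at level $h+1$ (Line~\ref{lin:dfslearnchild}) or produced by the state-identity test, whose constraints (Line~\ref{lin:constraint2}) reference only $\Dcal_{h+1}$; and $V=\tilde V$ is then obtained from $\{V_a\}$ through Line~\ref{lin:local_value_V}. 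Thus, although recursive calls keep adding triplets to $\Dcal_{h+1}$ while level $h$ is being processed, the \emph{accuracy} of those triplets is governed purely by the inductive hypothesis, which itself only appeals to level $h+2$; there is no circular dependency.

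For the base case I take $h=H$: since $\Gcal_{H+1}=\{x\mapsto 0\}$ and \dfslearn is never invoked with a path of length $H$, the set $\Dcal_{H+1}$ is empty throughout the run, so the level-$(H+1)$ claim holds vacuously with $\phi_{H+1}=0$, and the inductive step below applies verbatim at $h=H$. For the inductive step, assume the claim at level $h+1$. I first verify feasibility of $g^\star$ for Line~\ref{lin:constraint2} at level $h$: each such constraint has the form $|V-\EEx_D[g(x_{h+1})]|\le\phi_{h+1}$ for some $(D,V,\{V_a\})\in\Dcal_{h+1}$ collected from a state $s'$; the inductive accuracy bound gives $|V-V^\star(s')|\le\phi_{h+1}-2\epsstat$, a deviation bound of event $\Ecal$ gives $|\EEx_D[g^\star(x_{h+1})]-\Ex_{x\sim s'}[g^\star(x)]|\le\epsstat$, and $\Ex_{x\sim s'}[g^\star(x)]=V^\star(s')$ by definition of $g^\star$; the triangle inequality then yields $|V-\EEx_D[g^\star(x_{h+1})]|\le\phi_{h+1}-\epsstat\le\phi_{h+1}$, so $g^\star\in\Gcal_{h+1}$ (realizability) is feasible.

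Next I bound the future values $\{V_a\}$ of a triplet created by a call to \dfslearn at level $h$ along a path $p$ leading to $s$. If $V_a$ was set by the recursive call on $p\circ a$ (Line~\ref{lin:dfslearnchild}), then $V_a$ equals the stored value $\tilde V$ of the triplet that call appended to $\Dcal_{h+1}$, which is associated with $s\circ a$, so the inductive hypothesis gives $|V_a-V^\star(s\circ a)|\le\phi_{h+1}-2\epsstat\le\phi_{h+1}+3\epsstat+\epsfeas+\epssub$. If instead $V_a$ was set in Line~\ref{lin:return1} (consensus), then since $g^\star$ is feasible (just shown) Lemma~\ref{lem:errornotlearn} gives exactly $|V_a-V^\star(p\circ a)|\le\phi_{h+1}+3\epsstat+\epsfeas+\epssub$. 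Either way all $V_a$ satisfy $|V_a-V^\star(p\circ a)|\le\beta$ with $\beta:=\phi_{h+1}+3\epsstat+\epsfeas+\epssub$, so Lemma~\ref{lem:errorprop} applies and the stored value $\tilde V$ satisfies $|\tilde V-V^\star(s)|\le\epsstat+\beta+\epssub=\phi_{h+1}+4\epsstat+2\epssub+\epsfeas$. It remains only to note that the schedule $\phi_h=(H+1-h)(6\epsstat+2\epssub+\epsfeas)$ obeys $\phi_h-2\epsstat=\phi_{h+1}+4\epsstat+2\epssub+\epsfeas$, which closes the induction, since then $|\tilde V-V^\star(s)|\le\phi_h-2\epsstat$, and the displayed bound on $V_a$ for triplets at level $h$ is already established above.

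The triangle-inequality chains and the arithmetic that the $\phi_h$ recursion is exactly tight are routine. I expect the only genuinely delicate point to be stating the induction carefully enough to make the non-circularity airtight --- that recursive \dfslearn calls populating $\Dcal_{h+1}$ during level-$h$ processing are nevertheless certified by the level-$(h+1)$ hypothesis --- together with the bookkeeping that in event $\Ecal$ the constrained optimization in Line~\ref{lin:local_value_vopt} always admits a feasible solution ($g^\star$), so that Lemma~\ref{lem:errornotlearn} is applicable whenever the consensus branch is taken.
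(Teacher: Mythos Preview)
Your proposal is correct and follows essentially the same approach as the paper: backward induction on the level, splitting into the two cases for how each $V_a$ is set (recursive call versus consensus), invoking Lemma~\ref{lem:errornotlearn} and Lemma~\ref{lem:errorprop} with $\beta=\phi_{h+1}+3\epsstat+\epsfeas+\epssub$, and closing the recursion via the identity $\phi_h-2\epsstat=\phi_{h+1}+4\epsstat+2\epssub+\epsfeas$. Your explicit treatment of the non-circularity of the induction (recursive calls populating $\Dcal_{h+1}$ while level $h$ is processed) and your derivation of $g^\star$-feasibility from the level-$(h{+}1)$ accuracy bound are exactly what the paper does, just stated a bit more carefully.
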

\begin{proof}
  We prove this statement by induction over $h$.  For $h=H+1$ the
  statement holds trivially since $\Gcal_{H+1} = \{ g^\star_{h+1} \}$
  the constant $0$ function is the only function in $\Gcal_{H+1}$ and
  therefore the algorithm always returns on Line~\ref{lin:return1} and
  never calls level $H+1$ recursively.

  Consider now some data set
  $(\tilde{D},\tilde{V}, \{V_a\}) \in \Dcal_h$ at level $h$ associated
  with state $s \in \Scal_h$. This data set was obtained by calling
  \dfslearn at some path $p$ (pointing to state $s$). Since when we
  added this data set, we have not yet exhausted the budget of $\Tmax$
  calls to \dfslearn (by the preconditions of the lemma), we have that
  the once we reach Line~\ref{lin:sample_learn} the inductive
  hypothesis applies for all data sets at level $h+1$ (which may have
  been added by recursive calls of this execution). Each of the $V_a$
  values can be set in one of two ways.
  \begin{enumerate}
  \item The algorithm did not make a recursive call. Since by the inductive assumption
    $g^\star$ is feasible for Line~\ref{lin:constraint2} of \dfslearn , we can apply
    Lemma~\ref{lem:errornotlearn} and get that
    \begin{align*}
      |V_a - V^\star(s \circ a)| \leq \phi_{h+1} + 3 \epsstat + \epsfeas + \epssub.
    \end{align*}
  \item The algorithm made a recursive call. Since the value returned was added as
    a data set at level $h+1$, it satisfies the inductive assumption
    \begin{align*}
      |V_a - V^\star(s \circ a)| \leq \phi_{h+1} - 2 \epsstat.
    \end{align*}
  \end{enumerate}
  This demonstrates the second inequality in the inductive step. For
  the first, applying Lemma~\ref{lem:errorprop} with
  $\beta = \phi_{h+1} + 3 \epsstat + \epsfeas + \epssub$, we get that
  $|\tilde V - V^\star(s)| \leq \phi_{h+1} + 4 \epsstat + \epsfeas + 2\epssub = \phi_h - 2
  \epsstat$, by definition of $\phi_h$.  Finally, this also implies that
  $|\tilde V - \EEx_{\tilde D}[g^\star_h(x_h)]| \leq |\tilde V -
  V^\star(s)| + |\EEx_{\tilde D}[g^\star_h(x_h)] - V^\star(s)| \leq
  \phi_h$ which means that $g^\star$ is still feasible.
\end{proof}

\subsection{Policy Performance}
In this section, we bound the quality of the policy returned by
\polvalfun in the good event $\Ecal$ by using the fact that \dfslearn
produces accurate estimates of the optimal values (previous section).
Before we state the main result of this section in
Proposition~\ref{prop:localvalue_polguarantee}, we prove the following
helpful lemma. This Lemma is essentially Lemma 4.3 in~\citet{ross2014reinforcement}.
\begin{lemma}
    \label{lem:subopteq}
    The suboptimality of a policy $\pi$ can be written as
\begin{align*}
    V^\star - V^{\pi}= \Ex\left[\sum_{h=1}^H (V^\star(s_h) - Q^\star(s_h, \pi_h)) ~|~ a_h \sim \pi_h \right].
\end{align*}
\end{lemma}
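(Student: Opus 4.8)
The plan is to establish this by the standard telescoping (``performance difference'') argument, with care taken for the hidden-state structure. The one ingredient specific to our model is the link between the state value and the observation value: conditioned on $s_h = s$, the observation $x_h \sim O_s$ is drawn independently of everything before it, and the remaining trajectory under $\pi^\star$ is a measurable function of $x_h$ together with fresh randomness, so the tower rule gives $V^\star(s) = \Ex_{x \sim s}[g^\star(x)]$. By the same reasoning, conditioned on $x_h$ and $a_h$ the reward $r_h$ and the next state $s_{h+1}$ depend only on $(x_h, a_h)$, hence $\Ex[V^\star(s_{h+1}) \mid x_h, a_h] = \Ex[g^\star(x_{h+1}) \mid x_h, a_h]$, and therefore $\Ex[r_h + V^\star(s_{h+1}) \mid x_h, a_h] = Q^\star(x_h, a_h)$, using the identity $Q^\star(x, a) = \Ex[r_h + g^\star(x_{h+1}) \mid x_h = x, a_h = a]$.

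Next I would telescope. Along any trajectory, adopting the convention $V^\star(s_{H+1}) \equiv 0$, the deterministic identity $V^\star(s_1) = \sum_{h=1}^H \bigl(V^\star(s_h) - V^\star(s_{h+1})\bigr)$ holds. Taking the expectation over trajectories generated by $\pi$, and using $\Ex[V^\star(s_1)] = V^\star$ (immediate since $V^\star = V^\star(\rt) = V^\star(s_1)$ in the deterministic-dynamics setting considered here; in general $\Ex_{s_1 \sim \Gamma_1}[V^\star(s_1)] = V^\star$) together with $\Ex[\sum_{h=1}^H r_h \mid a_{1:H} \sim \pi] = V^\pi$, I obtain
\[
V^\star - V^\pi = \Ex\!\left[\sum_{h=1}^H \bigl(V^\star(s_h) - r_h - V^\star(s_{h+1})\bigr) \,\Big|\, a_{1:H} \sim \pi\right].
\]

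Finally I would rewrite the $h$-th summand by conditioning on the prefix of the trajectory up to and including $s_h$. Given that prefix, $V^\star(s_h)$ is fixed, $x_h \sim O_{s_h}$ is independent of the past, and $a_h = \pi_h(x_h)$; hence by the preliminary identities the conditional expectation of $r_h + V^\star(s_{h+1})$ equals $\Ex_{x \sim s_h}[Q^\star(x, \pi_h(x))] = Q^\star(s_h, \pi_h)$, where the last equality is the definition $Q^\star(s, \pi) := \Ex_{x \sim s}[Q^\star(x, \pi(x))]$. Substituting this in and taking the outer expectation over the trajectory gives $V^\star - V^\pi = \Ex[\sum_{h=1}^H (V^\star(s_h) - Q^\star(s_h, \pi_h)) \mid a_h \sim \pi_h]$, which is the claim. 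I do not expect a genuine obstacle here; the only point that needs care is verifying that the inner conditional expectation of $r_h + V^\star(s_{h+1})$ depends on the prefix only through $s_h$ — a direct consequence of $x_h$ being conditionally independent of the past given $s_h$ and of the Markov structure of $\Gamma$ — so that it equals $Q^\star(s_h, \pi_h)$ pointwise. Everything else is linearity of expectation and the telescoping cancellation.
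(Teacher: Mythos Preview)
Your proposal is correct and follows essentially the same performance-difference/telescoping argument as the paper. The paper phrases it as a one-step recursion on $V^\star(s) - V^\pi(s)$ (adding and subtracting $V^\star(x_{h+1})$ inside the Bellman expansion of $V^\pi$ and then unrolling), whereas you telescope $V^\star(s_1)$ first and condition afterward; these are equivalent presentations of the same idea, and your handling of the hidden-state conditioning is exactly the point that makes it go through.
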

\begin{proof}
  The difference of values of a policy $\pi$ compared to the optimal
  policy in a certain state $s \in \Scal_h$ can be expressed as
  \begin{align*}
    V^\star(s) - V^{\pi}(s)
    & = V^\star(s) - \Ex_s[K\one\{\pi_h(x_h) = a_h\}(r_h + V^\pi(x_{h+1})]\\
    & = V^\star(s) - \Ex_s[K\one\{\pi_h(x_h) = a_h\}(r_h + V^\star(x_{h+1}) - V^\star(x_{h+1}) + V^\pi(x_{h+1}))]\\
    & = V^\star(s) - Q^\star(s,\pi_h) + \Ex_s[K\one\{\pi_h(x_h) = a_h\} (V^\star(x_{h+1}) - V^\pi(x_{h+1}))]\\
    & = V^\star(s) - Q^\star(s,\pi_h) + \Ex_s[V^\star(x_{h+1}) - V^\pi(x_{h+1}) \mid a_h \sim \pi_h].
  \end{align*}
  Therefore, by applying this equality recursively, the suboptimality of $\pi$ can be written as
  \begin{align}
    V^\star(s) - V^{\pi}
    = \Ex\left[\sum_{h=1}^H (V^\star(s_h) - Q^\star(s_h, \pi_h)) ~|~ a_h \sim \hat \pi_h \right]. \tag*\qedhere
  \end{align}
\end{proof}

Now we may bound the policy suboptimality.
\begin{proposition}
    \label{prop:localvalue_polguarantee}
    Assume $g^\star_h \in \Gcal_h$ and the we are in event
    $\Ecal$. Recall the definition $\phi_h = (H+1 - h)(6\epsstat+2\epssub + \epsfeas)$ for all
    $h \in [H]$.  Then the policy $\hat{\pi} = \hat \pi_{1:H}$
    returned by \polvalfun satisfies
    \begin{align*}
        V^{\hat{\pi}} \ge V^\star - p_{ul}^{\hat \pi} - 2 H^2 \Tmax (7\epsstat + 3\epssub + 2\epsfeas)
    \end{align*}
    where
    $p_{ul}^{\hat \pi}= \prob( \exists h \in [H] ~:~ s_h \notin
    \Slearned ~|~ a_{1:H} \sim \hat\pi)$ is the probability of hitting
    an unlearned state when following $\hat \pi$.
\end{proposition}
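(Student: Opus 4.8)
The plan is to decompose the suboptimality of $\hat\pi$ using Lemma~\ref{lem:subopteq}, split the per-step regret terms according to whether $s_h$ is learned or not, and control each group separately. Concretely, write
\begin{align*}
V^\star - V^{\hat\pi} = \Ex\Big[\textstyle\sum_{h=1}^H \big(V^\star(s_h) - Q^\star(s_h,\hat\pi_h)\big) \,\Big|\, a_{1:H}\sim\hat\pi\Big].
\end{align*}
For trajectories that ever hit an unlearned state, bound the contribution crudely: since rewards are nonnegative and sum to at most $1$, each term $V^\star(s_h)-Q^\star(s_h,\hat\pi_h)$ lies in $[-1,1]$ (in fact $[0,1]$ up to lower-order terms), so the total contribution of such trajectories is at most $H \cdot p_{ul}^{\hat\pi}$; actually the cleaner accounting is to note the whole sum telescopes into $V^\star(s_1)-V^{\hat\pi}(s_1)\le 1$ on any single trajectory, so restricting to the bad event costs at most $p_{ul}^{\hat\pi}$. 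I would absorb this into the $-p_{ul}^{\hat\pi}$ term in the statement. (I need to double-check the constant here — whether it is $p_{ul}^{\hat\pi}$ or $H p_{ul}^{\hat\pi}$ — but the statement only claims $-p_{ul}^{\hat\pi}$, so the telescoping argument is the one to use.)

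The main work is bounding $V^\star(s_h) - Q^\star(s_h,\hat\pi_h)$ on learned states $s_h\in\Slearned_h$. Here I would use that $\hat\pi_h$ is the CSC-optimal policy in Line~\ref{lin:local_value_fit_policy} of \polvalfun, which maximizes $\sum_{(D,V,\{V_a\})\in\Dcal_h} V_D(\pi;\{V_a\})$ over all datasets at level $h$ simultaneously. For a fixed learned state $s$ with dataset $(D,V,\{V_a\})\in\Dcal_h$ drawn from a path to $s$, the population version of $V_D(\pi;\{V_a\})$ is $\Ex_s[K\one\{\pi(x)=a\}(r+V_a)]$, which approximates $Q^\star(s,\pi)$ up to (i) the deviation bound $\epsstat$ from event $\Ecal$ (inequality~\eqref{eq:devb2}) and (ii) the error $|V_a - V^\star(s\circ a)|\le \phi_{h+1}+3\epsstat+\epsfeas+\epssub$ from Proposition~\ref{prop:Vaccuracy}. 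Since $\pi^\star\in\Pi_h$ by realizability and $\pi^\star$ pointwise maximizes $Q^\star(s,\cdot)$, I get that $\hat\pi_h$ is near-optimal for $Q^\star(s,\cdot)$ on each learned $s$: chaining the CSC suboptimality $\epssub$, the two-sided deviation $2\epsstat$, and twice the future-value error $2(\phi_{h+1}+3\epsstat+\epsfeas+\epssub)$ gives $V^\star(s) - Q^\star(s,\hat\pi_h) \le 2\phi_{h+1} + O(\epsstat+\epssub+\epsfeas)$. Summing over $h\in[H]$ and using $\phi_{h+1}\le (H+1)(6\epsstat+2\epssub+\epsfeas)$ and the crude bound $\phi_{h+1} = O(H\Tmax(\cdots))$ (to match the form in the statement, presumably because the cleaner $\phi_h$ bound is being over-generously rewritten) yields the claimed $2H^2\Tmax(7\epsstat+3\epssub+2\epsfeas)$ term.

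The subtle point — and the main obstacle — is the simultaneity of the policy optimization: $\hat\pi_h$ must be good for \emph{every} learned state at level $h$ at once, not just one at a time. A single CSC call returns a policy whose \emph{aggregate} cost over all datasets in $\Dcal_h$ is near-optimal, so I cannot immediately conclude near-optimality on each individual dataset. The resolution is that realizability gives $\pi^\star$, which is simultaneously pointwise-optimal for $Q^\star(s,\cdot)$ at every state, hence simultaneously optimal for every dataset's objective; therefore the aggregate optimum is achieved (up to $\epssub$) by $\pi^\star$, and $\hat\pi_h$'s aggregate value is within $\epssub|\Dcal_h|$... no — within $\epssub$ of it in the \emph{averaged} objective. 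Turning an averaged guarantee into a per-state guarantee requires either (a) arguing via nonnegativity that no single term can be too far below optimal without some other exceeding it, which fails, or (b) the actual argument: because $\pi^\star$ is pointwise optimal, the aggregate objective value at $\hat\pi_h$ being $\epssub$-close to that at $\pi^\star$ forces the \emph{sum} of per-dataset suboptimalities (all nonnegative, since $\pi^\star$ maximizes each) to be at most $\epssub$ times the normalization, hence each individual suboptimality is small after accounting for how many datasets share a state. I expect the paper handles this by a counting argument (at most $\Tmax$ datasets per level, each state appearing at least once), which is exactly where the extra $\Tmax$ and $H$ factors in the bound come from; I would follow that route, being careful that the per-dataset suboptimality contributions are genuinely nonnegative up to the estimation errors so that the sum-to-each-term step is valid.
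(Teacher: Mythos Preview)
Your overall plan matches the paper's proof: decompose via Lemma~\ref{lem:subopteq}, charge unlearned trajectories to $p_{ul}^{\hat\pi}$ (the paper indeed uses the telescoping bound $V^\star - V^{\hat\pi}\le 1$ on any trajectory, so the coefficient is~$1$, not~$H$), and for $s\in\Slearned_h$ bound $V^\star(s)-Q^\star(s,\hat\pi_h)$ by passing through all datasets in $\Dcal_h$.

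Two points of precision where your write-up wobbles. First, the per-state bound \emph{without} $\Tmax$ that you sketch in the second paragraph, $V^\star(s)-Q^\star(s,\hat\pi_h)\le 2\phi_{h+1}+O(\cdots)$, is not obtainable from a single aggregate CSC call; the paper's actual bound is $V^\star(s)-Q^\star(s,\hat\pi_h)\le 2|\Dcal_h|(\phi_{h+1}+4\epsstat+\epsfeas+2\epssub)$, and the $\Tmax$ in the final statement comes from $|\Dcal_h|\le\Tmax$, not from~$\phi_{h+1}$. Second, the nonnegativity that makes the aggregation work is at the \emph{true-value} level: $V^\star(s')-Q^\star(s',\hat\pi_h)\ge 0$ for every $s'$, so the single-state regret is upper-bounded by $\sum_{s'\in\Slearned_h}\big(V^\star(s')-Q^\star(s',\hat\pi_h)\big)$, and then by the sum over all datasets in $\Dcal_h$. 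Only after this enlargement do you replace $g^\star$ by $\{V_a\}$ (Proposition~\ref{prop:Vaccuracy}, paying the error once per dataset), pass to empirical averages (event $\Ecal$, again per dataset), and invoke CSC optimality of $\hat\pi_h$ on the full sum. Your alternative phrasing---``the sum of per-dataset suboptimalities is small, hence each individual is small''---inverts the logic and is not what is used; you do not conclude each dataset term is small, you conclude the \emph{single} state's true regret is bounded by the full (possibly $\Tmax$-sized) sum of per-dataset errors.
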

\begin{proof}
  To bound the suboptimality of the learned policy, we bound the
  difference of how much following $\hat \pi_h$ for one time step can
  hurt per state using Proposition~\ref{prop:Vaccuracy}. For a state
  $s \in \Slearned$ at level $h$, we have
  \begin{align*}
      & V^\star(s) - Q^\star(s, \hat \pi_h)\\
    & = \Ex_s[K(\one\{\pi^\star_h(x_h) = a_h\}- \one\{\hat{\pi}_h(x_h) = a_h\})(r_h + g^\star_{h+1}(x_{h+1})]\\
    & \leq \sum_{s \in \Slearned_h} \Ex_s[K(\one\{\pi^\star_h(x_h) = a_h\} - \one\{\hat{\pi}_h(x_h) = a_h\})(r_h + g^\star_{h+1}(x_{h+1})]\\
    & \leq \sum_{(s, \_, \{V_a\}) \in \Dcal_h} \Big(\Ex_s[K(\one\{\pi^\star_h(x_h) = a_h\} - \one\{\hat{\pi}_h(x_h) = a_h\})(r_h + V_{a_h})] \\
      & \qquad + 2 \phi_{h+1} + 6 \epsstat + 2 \epsfeas + 2 \epssub \Big)\\
    & \leq \sum_{(\tilde D, \_, \{V_a\}) \in \Dcal_h} \Big(\Ex_{\tilde D}[K(\one\{\pi^\star_h(x_h) = a_h\} - \one\{\hat{\pi}_h(x_h) = a_h\})(r_h + V_{a_h})] \\
      & \qquad + 2 \phi_{h+1} + 8 \epsstat + 2 \epsfeas + 2 \epssub \Big)\\
    & \leq 2 | \Dcal_h| (\phi_{h+1} + 4 \epsstat + \epsfeas + 2\epssub).
  \end{align*}
  Here the first identity is based on expanding definitions. For the
  first inequality, we use that $s \in \Slearned$ and also that
  $\pi^\star$ simultaneously maximizes the long term reward from all
  states, so the terms we added in are all non-negative. In the second
  inequality, we introduce the notation $(s, \_, \{V_a\}) \in \Dcal_h$ to
  denote a data set in $\Dcal_h$ associated with state $s$ with
  successor values $\{V_a\}$. For this inequality we use
  Proposition~\ref{prop:Vaccuracy} to control the deviation of the
  successor values. The third inequality uses the deviation bound that
  holds in event $\Ecal$.

    Since per \dfslearn call, only one data set can be added to $\Dcal_{h}$, the magnitude $|\Dcal_{h}| \leq \Tmax$ is bounded by the total number of calls to \dfslearn at each level. Using
  Lemma~\ref{lem:subopteq}, the suboptimality of $\hat \pi$ is therefore at most
\begin{align*}
    V^\star - V^{\hat \pi} & \leq p_{ul}^{\hat \pi} + (1 -p_{ul}^{\hat \pi}) \sum_{h=1}^H 2| \Dcal_h| (\phi_{h+1} + 4 \epsstat + 2\epssub + \epsfeas)\\
    & \leq  p_{ul}^{\hat \pi} + 2 H T_{\max}(4 \epsstat + 2\epssub + \epsfeas) + 2 T_{\max} \sum_{h=1}^H \phi_{h+1}\\
    & \leq p_{ul}^{\hat \pi} + 2 H T_{\max}(4 \epsstat + 2\epssub + \epsfeas)  +  2 T_{\max} (6 \epsstat + 2 \epssub + \epsfeas) \sum_{h=1}^H (H-h)\\
    & \leq p_{ul}^{\hat \pi} + 2 H T_{\max} (4 \epsstat + 2\epssub + \epsfeas) +  H^2 T_{\max} (6 \epsstat + 2 \epssub + \epsfeas)\\
    & \leq p_{ul}^{\hat \pi} +  14H^2T_{\max} \epsstat + 6H^2T_{\max} \epssub + 3H^2T_{\max} \epsfeas.
\end{align*}
This argument is similar to the proof of Lemma 8
in~\citet{krishnamurthy2016contextual}. Note that we introduce the
dependency on $T_{\max}$ since we perform joint policy optimization, which will
degrade the sample complexity.
\end{proof}

\subsection{Meta-Algorithm Analysis} \label{app:meta_analysis}
Now that we have the main guarantees for \dfslearn and \polvalfun, we
may turn to the analysis of \metaalg.
\begin{lemma}
    \label{lem:metaalg}
    Consider running \metaalg with \dfslearn and \polvalfun (Algorithm~\ref{alg:metaalg} + \ref{alg:local_value_polval_unconstr} + \ref{alg:local_value_dfslearn}) with parameters
    \begin{align*}
      \nexp \ge \frac{8}{\epsilon}\ln\left(\frac{4MH}{\delta}\right), \quad n_{eval} \geq \frac{32}{\epsilon^2}\ln\left( \frac{8MH}{\delta}\right),
        \quad \epsstat = \epssub = \epsfeas = \frac{\epsilon}{2^6 7 H^2 \Tmax}
    \end{align*}
    Then with probability at least $1-\delta$,
    \metaalg returns a policy that is at least $\epsilon$-optimal
    after at most $MK$ iterations.
\end{lemma}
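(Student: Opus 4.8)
\emph{Overall plan.} I would follow the ``exploration-on-demand'' template of \citet{krishnamurthy2016contextual}: condition on a single high-probability event, and then prove a dichotomy holding in every outer iteration $k$ --- either the test on Line~\ref{lin:meta_return} fires, in which case $\hat\pi^{(k)}$ is $\epsilon$-optimal, or it does not fire, in which case that iteration is \emph{guaranteed} to add a new state to $\Slearned$. Since $|\Slearned|$ is at most the number of reachable hidden states, which is at most $MH$, the second case can occur at most $MH$ times, so \metaalg returns an $\epsilon$-optimal policy within the $MH$ iterations of the outer loop rather than falling through to \emph{return failure}.

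\emph{Fixing the randomness.} First I would condition on the event $\Ecal$ of Definition~\ref{def:devbounds}: by Lemma~\ref{lem:probE} together with the choices of $\ntest,\ntrain$ in Table~\ref{tab:param}, $\prob[\Ecal]\ge 1-\delta/2$, and on $\Ecal$ all conclusions of Proposition~\ref{prop:Vaccuracy} and Proposition~\ref{prop:localvalue_polguarantee} hold and each level sees at most $\Tmax$ calls to \dfslearn. To this I would add two further events, each a union bound over the at most $MH$ iterations: $\Ecal_{\mathrm{eval}}$, that $|\hat V^{\hat\pi^{(k)}}-V^{\hat\pi^{(k)}}|\le\epsilon/8$ in every iteration $k$ (Hoeffding for returns in $[0,1]$, with the stated $\neval$, failure probability $\le\delta/4$), and $\Ecal_{\mathrm{expl}}$, that in every iteration $k$ with $p_{ul}^{\hat\pi^{(k)}}\ge\epsilon/4$ at least one of the $\nexp$ rollouts of $\hat\pi^{(k)}$ visits a state outside $\Slearned$ (using $(1-\epsilon/4)^{\nexp}\le\delta/(4MH)$ with the stated $\nexp$, failure probability $\le\delta/4$). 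A union bound gives $\prob[\Ecal\cap\Ecal_{\mathrm{eval}}\cap\Ecal_{\mathrm{expl}}]\ge 1-\delta$, and the remaining argument is deterministic on this event.

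\emph{The dichotomy and the count.} If iteration $k$ returns, then $\hat V^{(k)}\le\hat V^{\hat\pi^{(k)}}+\epsilon/2$; using $\hat V^{(k)}=\hat V^\star\ge V^\star-\phi_1$ from Proposition~\ref{prop:Vaccuracy} applied to the unique triplet in $\Dcal_1$ and $\hat V^{\hat\pi^{(k)}}\le V^{\hat\pi^{(k)}}+\epsilon/8$ from $\Ecal_{\mathrm{eval}}$, I get $V^\star-V^{\hat\pi^{(k)}}\le\epsilon/2+\phi_1+\epsilon/8\le\epsilon$, since $\phi_1=9H\epsstat$ is a tiny multiple of $\epsilon$ under the stated parameters. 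If instead iteration $k$ does not return, reversing these inequalities gives $V^\star-V^{\hat\pi^{(k)}}>\epsilon/2-\phi_1-\epsilon/8$; combining with Proposition~\ref{prop:localvalue_polguarantee}, which gives $V^\star-V^{\hat\pi^{(k)}}\le p_{ul}^{\hat\pi^{(k)}}+2H^2\Tmax(7\epsstat+3\epssub+2\epsfeas)$ and whose slack term equals $\tfrac{3\epsilon}{56}$ under the stated parameters, I conclude $p_{ul}^{\hat\pi^{(k)}}\ge\epsilon/4$. By $\Ecal_{\mathrm{expl}}$, some rollout of $\hat\pi^{(k)}$ then visits a state $s\notin\Slearned$ at some level $j$, necessarily $j\ge 2$ since level $1$ is learned by the initial call \dfslearn($\rt$); the length-$(j-1)$ prefix of that rollout is exactly one of the paths on which Line~\ref{lin:newlearn} invokes \dfslearn, and that call appends a dataset sampled from $s$ to $\Dcal_j$, so $s\in\Slearned$ afterwards and $|\Slearned|$ strictly increases. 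Hence there can be at most $MH$ non-returning iterations; and once $\Slearned$ exhausts all reachable states, $p_{ul}^{\hat\pi^{(k)}}=0$, so running the inequality chain backwards shows the Line~\ref{lin:meta_return} test must fire. Thus \metaalg returns within $MH$ iterations, and by the first half of the dichotomy the returned policy is $\epsilon$-optimal.

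\emph{Main obstacle.} The crux is the non-returning branch: turning ``the termination test fails'' into ``$p_{ul}^{\hat\pi^{(k)}}$ is a constant fraction of $\epsilon$''. This requires chaining Proposition~\ref{prop:Vaccuracy} (accuracy of the stored root value), the concentration of $\hat V^{\hat\pi^{(k)}}$, and Proposition~\ref{prop:localvalue_polguarantee} (policy guarantee), and then checking that all three sources of slack --- $\phi_1$, the evaluation error $\epsilon/8$, and the policy-optimization error $2H^2\Tmax(7\epsstat+3\epssub+2\epsfeas)$, which is precisely why $\epsstat$ must be taken as small as $\epsilon/(2^6 7 H^2\Tmax)$ --- together leave a clean $\Omega(\epsilon)$ gap; and separately pinning down the structural fact that a rollout visiting an unlearned state at level $j$ forces \dfslearn on its length-$(j-1)$ prefix to register that state in $\Dcal_j$. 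The probability bookkeeping of the previous step and the arithmetic in the returning branch are routine by comparison.
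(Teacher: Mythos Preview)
Your proposal is correct and follows essentially the same approach as the paper's own proof: condition on $\Ecal$ (Lemma~\ref{lem:probE}), add a Hoeffding bound for $\hat V^{\hat\pi^{(k)}}$ and a binomial bound for the $\nexp$ rollouts, then run the same dichotomy using Proposition~\ref{prop:Vaccuracy} for the root value and Proposition~\ref{prop:localvalue_polguarantee} to extract a lower bound on $p_{ul}^{\hat\pi^{(k)}}$ in the non-returning branch. The only differences are cosmetic constants (you obtain $p_{ul}^{\hat\pi^{(k)}}\ge\epsilon/4$ where the paper uses $\epsilon/8$, both of which work with the stated $\nexp$), and you spell out more explicitly why the call to \dfslearn on the length-$(j-1)$ prefix registers the unlearned state in $\Dcal_j$; note also that both you and the paper's proof actually establish termination within $MH$ iterations, matching the outer loop bound in Algorithm~\ref{alg:metaalg}, so the ``$MK$'' in the lemma statement appears to be a typo.
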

\begin{proof}
  First apply Lemma~\ref{lem:probE} so that the good event $\Ecal$
  holds, except with probability $\delta/2$.

  In the event $\Ecal$, since before the first execution of
  \polvalfun, we called $\dfslearn(\rt)$, by
  Proposition~\ref{prop:Vaccuracy}, we know that
  $|\hat{V}^\star - V^\star| \leq \phi_1 - 2\epsstat$ where
  $\hat{V}^\star$ is the value stored in the only dataset associated
  with the root. This value does not change for the remainder of the
  algorithm, and the choice of $\epsstat, \phi$ ensure that
  \begin{align*}
    |\hat{V}^\star - V^\star| \leq \phi_1 - 2\epsstat = 6H\epsstat + 2H\epssub + H\epsfeas - 2\epsstat \leq \epsilon/8.
  \end{align*}
  This is true for all executions of \polvalfun (formally all
  $\hat{V}^{(k)}$ values).
  Next, since we perform at most $MH$ iterations of the loop in
  \metaalg, we consider at most $MH$ policies. Via a standard
  application of Hoeffding's inequality, with probability
  $1-\delta/4$, we have that for all $k \in [MH]$
  \begin{align*}
    |\hat{V}^{\hat{\pi}_k} - V^{\hat{\pi}_k}| \leq \sqrt{\frac{\log(8MH/\delta)}{2\neval}}.
  \end{align*}
  The choice of $\neval$ ensure that this is at most $\epsilon/8$.
  With these two bounds, if \metaalg terminates, the termination condition implies that
  \begin{align*}
    V^\star - V^{\hat \pi^{(k)}} \leq
    \hat V^{(k)} - \hat V^{\hat \pi^{(k)}} + \frac{\epsilon}{4} \leq
    \frac{3}{4}\epsilon \leq \epsilon
  \end{align*}
  and hence the returned policy is $\epsilon$-optimal.

  On the other hand, if the algorithm does not terminate in iteration
  $k$, we have that
  $\hat V^{(k)} - \hat V^{\hat \pi^{(k)}} > \frac{\epsilon}{2}$ and
  therefore
  \begin{align*}
    V^\star - V^{\hat \pi^{(k)}} \geq
    \hat V^{(k)} - \hat V^{\hat \pi^{(k)}} - \frac{\epsilon}{4} \geq
    \frac{\epsilon}{4}.
  \end{align*}
  We now use this fact with
  Proposition~\ref{prop:localvalue_polguarantee} to argue that the
  policy $\hat{\pi}^{(k)}$ must visit an unlearned state with sufficient
  probability. Under the conditions here, applying
  Proposition~\ref{prop:localvalue_polguarantee}, we get that
  \begin{align*}
      \frac{\epsilon}{4} \leq V^\star - V^{\hat \pi^{(k)}} \leq p_{ul}^{\hat{\pi}^{(k)}} + 2 \Tmax H^2 ( 7\epsstat + 3\epssub + 2\epsfeas).
  \end{align*}
  With the choice of $\epsstat$, rearranging this inequality reveals that
  $p_{ul}^{\hat{\pi}^{(k)}} \ge \epsilon/8 > 0$.  Hence, if the
  algorithm does not terminate there must be at least one unlearned
  state, i.e., $\Scal \setminus \Slearned \neq \emptyset$.

  For the last step of the proof, we argue that since
  $p_{ul}^{\hat{\pi}^{(k)}}$ is large, the probability of reaching an
  unlearned state is high, and therefore the additional calls to
  \dfslearn in Line~\ref{lin:newlearn} with high probability will
  visit a new state, which we will then learn. Specifically, we will
  prove that on every non-terminal iteration of \metaalg, we learn at
  least one previously unlearned state. With this fact, since there
  are at most $MH$ states, the algorithm must terminate and return a
  near-optimal policy after at most $MH$ iterations.

  In a non-terminal iteration $k$, the probability that we do not hit an
  unlearned state in Line~\ref{lin:newlearn} is
  \begin{align*}
    (1 - p_{ul}^{\hat{\pi}^{(k)}})^{\nexp} \leq(1 - \epsilon/8)^{\nexp} \leq \exp(-\epsilon\nexp/8).
  \end{align*}
  This follows from independence of the $\nexp$ trajectories sampled
  from $\hat{\pi}^{(k)}$.
  $\nexp \ge\frac{8}{\epsilon}\ln\left(\frac{4MH}{\delta}\right)$
  ensures that the probability of not hitting unlearned states in any of the $MH$ iterations is at most $\delta/4$.

  In total, except with probability $\delta/2 + \delta/4 + \delta/4$
  (for the three events we considered above), on every iteration, either
  the algorithm finds a near optimal policy and returns it, or it
  visits a previously unlearned state, which subsequently becomes
  learned. Since there are at most $MH$ states, this
  proves that with probability at least $1-\delta$, the algorithm
  returns a policy that is at most $\epsilon$-suboptimal.
\end{proof}

\subsection{Proof of Sample Complexity: Theorem~\ref{thm:local_value}}

We now have all parts to complete the proof of
Theorem~\ref{thm:local_value}. For the calculation, we instantiate all the parameters as
\begin{align*}
    \epsstat &= \epssub = \epsfeas = \frac{\epsilon}{2^6 7 H^2 \Tmax},\\
    \phi_h &= (H-h+1)(6\epsstat + 2\epssub + \epsfeas), \quad \Tmax = MH\nexp + M,\\
  \ntest &= \frac{\log\left(12KH\Tmax|\Gcal|/\delta\right)}{2\epsstat^2}, \quad \ntrain= \frac{16K\log(12 H\Tmax |\Gcal||\Pi|/\delta)}{\epsstat^2},\\
  \nexp &= \frac{8\log(4MH/\delta)}{\epsilon}, \quad \neval = \frac{32\log(8MH/\delta)}{\epsilon^2}.
\end{align*}
These settings suffice to apply all of the above lemmas and therefore
with these settings the algorithm outputs a policy that is at most
$\epsilon$-suboptimal, except with probability $\delta$. For the
sample complexity, since $\Tmax$ is an upper bound on the number of
data sets we collect (because $\Tmax$ is an upper bound on the number of execution of
\dfslearn at any level), and we also $\neval$ trajectories for each of the $MH$
iterations of \metaalg, the total sample complexity is
\begin{align*}
& H\Tmax \ntrain + KH\Tmax \ntest + MH\neval\\
& = \order\left(\frac{\Tmax^3KH^5}{\epsilon^2}\log\left(\frac{MKH}{\epsilon\delta}|\Gcal||\Pi|\log(MH/\delta)\right) + \frac{MH}{\epsilon^2}\log(MH/\delta)\right)\\
& = \order\left( \frac{M^3KH^8}{\epsilon^5}\log^3(MH/\delta)\log\left(\frac{MKH}{\epsilon\delta}|\Gcal||\Pi|\log(MH/\delta)\right)\right).
\end{align*}
This proves the theorem. \qed

\subsection{Extension: \mainAlg with Constrained Policy Optimization} \label{sec:lagrange}
We note that Theorem~\ref{thm:local_value}  suffers relatively high sample complexity compared to the original \lsvee. The issue is that \mainAlg pools all the data sets together for policy optimization (Algorithm~\ref{alg:local_value_polval_unconstr}). This implicitly weights all data sets uniformly, and allows some undesired trade-off: the policy that maximizes the objective could sacrifice significant amount of value on one data set (for some hidden state) to gain slightly more value on many others, only to find out later that the sacrificed state is visited very often during execution. This is the well-known distribution mismatch issue of reinforcement learning. 

To address this issue and attain better sample complexity results, Algorithm~\ref{alg:local_value_polval_constr} shows an alternative to
the policy optimization component of \mainAlg in
Algorithm~\ref{alg:local_value_polval_unconstr}.  Instead of using an
unconstrained optimization problem, it finds the policy through a
feasibility problem, and hence avoid the undesired trade-off mentioned above. 
The computation can be implemented by the multi data set
classification oracle defined in Section~\ref{sec:moredefinitions}. 
\begin{algorithm}[ht]
\SetKwProg{myfun}{Function}{}{}
    \myfun{\polvalfun{}}{
        $\hat V^{\star} \gets V$ associated with only dataset in $\Dcal_1$\;
        \For{$h=1:H$}{
            Pick $\hat \pi_h$ such that the following constraints are violated at most $\epsfeas$ for all $(D, V, \{V_{a}\}_a) \in \Dcal_h~:~$
            $\EEx_{D}[K \one\{\pi(x_h) = a_h\} (r_h + V_{a_h})] \geq V - 2 \phi_h + 4 \epsstat + \epssub$
            \label{lin:local_value_fit_policy_alt}\;
        }
        \Return $\hat \pi_{1:H}, \hat V^{\star}$\;
    }
    \caption{Constrained policy optimization with local values}
    \label{alg:local_value_polval_constr}
\end{algorithm}

Below, we prove a stronger version of
Proposition~\ref{prop:localvalue_polguarantee} (which is for Algorithm~\ref{alg:local_value_polval_unconstr}) for this approach based on
feasibility. First, we show that $\pi^\star$ is always a feasible
choice in Line~\ref{lin:local_value_fit_policy_alt} in event $\Ecal$.
\begin{lemma}
    \label{lem:gstarfeas}
    Assume $g^\star \in \Gcal_h$, $\pi^\star \in \Pi_h$ and $\phi_h = (H+1 - h)(6 \epsstat + 2 \epssub + \epsfeas)$ for all
    $h \in [H]$. Then $\pi^\star$ is a valid choice in
    Line~\ref{lin:local_value_fit_policy_alt} of \polvalfun in
    Algorithm~\ref{alg:local_value_polval_constr} in event $\Ecal$.
\end{lemma}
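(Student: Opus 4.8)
The plan is to show that for every data set $(\tilde D, \tilde V, \{V_a\}) \in \Dcal_h$ associated with a state $s$, the policy $\pi^\star$ satisfies the constraint in Line~\ref{lin:local_value_fit_policy_alt} up to slack $\epsfeas$ (in fact with room to spare), so that $\pi^\star$ lies in the feasible set the oracle is asked to hit. The key observation is that the constraint asks for a lower bound on $\EEx_{\tilde D}[K\one\{\pi(x_h)=a_h\}(r_h+V_{a_h})]$, so it suffices to lower-bound this quantity when $\pi=\pi^\star$ and compare against the threshold $\tilde V - 2\phi_h + 4\epsstat + \epssub$.

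The main steps, in order, would be: (i) invoke Proposition~\ref{prop:Vaccuracy}, which applies in event $\Ecal$ under the stated realizability assumptions and the given choice of $\phi_h$, to get $|\tilde V - V^\star(s)| \le \phi_h - 2\epsstat$ and $|V_a - V^\star(s\circ a)| \le \phi_{h+1} + 3\epsstat + \epsfeas + \epssub$ for all $a$; (ii) pass from the empirical expectation $\EEx_{\tilde D}$ to the population expectation $\Ex_p$ using the deviation bound \eqref{eq:devb2} of event $\Ecal$, at a cost of $\epsstat$; (iii) replace $V_{a_h}$ inside the population expectation by $g^\star_{h+1}(x_{h+1})$ using the per-successor error bound from step (i) together with linearity of expectation, at a cost of $\phi_{h+1} + 3\epsstat + \epsfeas + \epssub$; (iv) recognize $\Ex_p[K\one\{\pi^\star(x_h)=a_h\}(r_h + g^\star_{h+1}(x_{h+1}))] = Q^\star(s,\pi^\star(\cdot)) = V^\star(s)$ because $\pi^\star$ is the pointwise maximizer of the long-term reward (exactly as in the proof of Lemma~\ref{lem:errorprop}); (v) finally use step (i) again to write $V^\star(s) \ge \tilde V - (\phi_h - 2\epsstat)$, and collect all the additive errors to check that the resulting lower bound is at least $\tilde V - 2\phi_h + 4\epsstat + \epssub$ (using $\phi_{h+1} \le \phi_h$ and a crude accounting of the $\epsstat, \epssub, \epsfeas$ terms), so the constraint is met with slack $0 \le \epsfeas$.

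I do not expect any genuine obstacle here: this is essentially the lower-bound half of the calculation already carried out in Lemma~\ref{lem:errorprop}, repackaged to compare against the algorithm's threshold rather than against $\tilde V$ directly. The only thing to be careful about is the bookkeeping of constants --- making sure the chosen threshold $\tilde V - 2\phi_h + 4\epsstat + \epssub$ is loose enough to absorb the sum of the propagation error ($\phi_{h+1} + 3\epsstat + \epsfeas + \epssub$), the concentration error ($\epsstat$), and the value-estimate error ($\phi_h - 2\epsstat$), which it is precisely because $2\phi_h \ge \phi_h + \phi_{h+1}$ and the leftover $\epsstat/\epssub/\epsfeas$ terms are dominated. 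If the slack turns out to be exactly zero rather than strictly positive, that is still fine since the oracle is only required to satisfy the constraints up to $\epsfeas \ge 0$.
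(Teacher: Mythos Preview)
Your proposal is correct and follows essentially the same argument as the paper: invoke Proposition~\ref{prop:Vaccuracy} for the value estimates, use the deviation bound to pass to population, replace $V_{a_h}$ by the true successor value, recognize the resulting expression as $V^\star(s)$, and then compare against $\tilde V$ via the first part of Proposition~\ref{prop:Vaccuracy}. The bookkeeping lands exactly (the constraint is met with zero slack, since $\phi_h - \phi_{h+1} = 6\epsstat + 2\epssub + \epsfeas$), which matches the paper's computation precisely.
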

\begin{proof}
    Consider a single data set $(D, V, \{V_{a}\}_a) \in \Dcal_h$ that is associated with state $s \in \Scal_h$.
    Using Proposition~\ref{prop:Vaccuracy}, we can bound the deviation of the optimal policy for each constraint as
    \begin{align*}
        & V - \EEx_{D}[K \one\{\pi^\star(x_h) = a_h\} (r_h + V_{a_h})] \\
& \leq 
V^\star(s) + \phi_h - 2\epsstat - \EEx_{D}[K \one\{\pi^\star(x_h) = a_h\} (r_h + V_{a_h})] \\
& \leq 
V^\star(s) + \phi_h - 2\epsstat - \Ex_{s}[K \one\{\pi^\star(x_h) = a_h\} (r_h + V_{a_h})] + \epsstat\\ 
& \leq 
V^\star(s) + \phi_h + 2\epsstat - \Ex_{s}[K \one\{\pi^\star(x_h) = a_h\} (r_h + V^\star(s \circ a_h))] + \phi_{h+1} + \epssub + \epsfeas\\ 
& = 
 \phi_h + 2\epsstat + \phi_{h+1} + \epssub +\epsfeas
 =
2 \phi_h - 4 \epsstat - \epssub.
    \end{align*}
    Here we first used that $V$ is close to the optimal value $V^\star(s)$, the
    deviation bounds next and finally leveraged that $V_a$ is a good estimate.
    Since that inequality holds for all constraints, $\pi^\star$ is feasible.
\end{proof}
We now show that Algorithm~\ref{alg:local_value_polval_constr}
produces policies with a better guarantees than its unconstrained
counterpart. The difference is that we eliminate the $\Tmax$ term in
the error bound.
\begin{proposition}[Improvement over Proposition~\ref{prop:localvalue_polguarantee}]
    \label{prop:localvalue_polguarantee_constr}
    Assume $g^\star \in \Gcal_h$ and that we are in event
    $\Ecal$. Recall the definition $\phi_h = (H+1 - h)(6 \epsstat + 2 \epssub + \epsfeas)$ for all
    $h \in [H]$.  Then the policy $\hat{\pi} = \hat \pi_{1:H}$
    returned by \polvalfun in Algorithm~\ref{alg:local_value_polval_constr} satisfies
    \begin{align*}
        V^{\hat{\pi}} \ge V^\star - p_{ul}^{\hat \pi} - 32H^2(\epsstat+\epsfeas + \epssub)
    \end{align*}
    where
    $p_{ul}^{\hat \pi}= \prob( \exists h \in [H] ~:~ s_h \notin
    \Slearned ~|~ a_{1:H} \sim \hat\pi)$ is the probability of hitting
    an unlearned state when following $\hat \pi$.
\end{proposition}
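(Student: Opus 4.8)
The plan is to reuse the skeleton of the proof of Proposition~\ref{prop:localvalue_polguarantee}, but to replace its per-state bound---which carried a spurious $\Tmax$ factor arising from the pooled objective of Algorithm~\ref{alg:local_value_polval_unconstr}---with a tight $O(\phi_h)$ bound made possible by the feasibility formulation of Algorithm~\ref{alg:local_value_polval_constr}. Work throughout in the good event $\Ecal$ and assume, as always, $\pi^\star\in\Pi$. \textbf{(Feasibility.)} First invoke Lemma~\ref{lem:gstarfeas}: in $\Ecal$, $\pi^\star$ satisfies every constraint in Line~\ref{lin:local_value_fit_policy_alt} at each level, so the multi data set classification oracle returns a policy $\hat\pi_h$ that violates each constraint by at most $\epsfeas$, i.e.\ for every $(D,V,\{V_a\}_a)\in\Dcal_h$ (with $D=\tilde D$ the $\ntrain$-sample set), $\EEx_D[K\one\{\hat\pi_h(x_h)=a_h\}(r_h+V_{a_h})]\ge V-2\phi_h+4\epsstat+\epssub-\epsfeas$.

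\textbf{(Per-state one-step regret.)} Fix $h$ and a learned state $s\in\Slearned_h$ with its stored triple $(D,V,\{V_a\}_a)\in\Dcal_h$. Writing $V^\star(s)-Q^\star(s,\hat\pi_h)$ with the uniform-action importance weighting, $Q^\star(s,\hat\pi_h)=\Ex_s[K\one\{\hat\pi_h(x_h)=a_h\}(r_h+g^\star(x_{h+1}))]$, I would lower bound $Q^\star(s,\hat\pi_h)$ by a four-step chain: (i) swap $g^\star(x_{h+1})$ for $V_{a_h}$ using the value-accuracy bound $|V_a-V^\star(s\circ a)|\le\phi_{h+1}+3\epsstat+\epsfeas+\epssub$ of Proposition~\ref{prop:Vaccuracy}; (ii) pass from $\Ex_s$ to $\EEx_D$ via the deviation bound~\eqref{eq:devb2} of $\Ecal$; (iii) apply the feasibility inequality above for $\hat\pi_h$ on this very data set; (iv) replace $V$ by $V^\star(s)$ using $|V-V^\star(s)|\le\phi_h-2\epsstat$ from Proposition~\ref{prop:Vaccuracy}. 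Collecting the slacks yields $V^\star(s)-Q^\star(s,\hat\pi_h)\le 4\phi_h+2\epsfeas$. The essential gain over Proposition~\ref{prop:localvalue_polguarantee} is that this bound does not involve $|\Dcal_h|$: we never have to ``borrow slack'' from the other stored states, because the feasibility constraint already pins down $\hat\pi_h$ on the data set of $s$ by itself.

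\textbf{(Aggregation.)} Apply the performance-difference identity of Lemma~\ref{lem:subopteq}, $V^\star-V^{\hat\pi}=\Ex[\sum_{h=1}^H(V^\star(s_h)-Q^\star(s_h,\hat\pi_h))\mid a_{1:H}\sim\hat\pi]$, and let $\tau$ be the first level with $s_\tau\notin\Slearned$ (set $\tau=H+1$ if none). Split the sum at $\tau$. Each term with $h<\tau$ involves only learned states, hence is at most the previous bound, contributing $\sum_{h=1}^H(4\phi_h+2\epsfeas)\le 4H^2(6\epsstat+2\epssub+\epsfeas)+2H\epsfeas$ via $\sum_h\phi_h\le H^2(6\epsstat+2\epssub+\epsfeas)$. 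For the tail $h\ge\tau$, re-applying Lemma~\ref{lem:subopteq} restarted at $s_\tau$ bounds the conditional contribution by $V^\star(s_\tau)-V^{\hat\pi}(s_\tau)\le V^\star(s_\tau)\le 1$, which is nonzero only when $\tau\le H$; so this piece is at most $\prob(\exists h: s_h\notin\Slearned\mid a_{1:H}\sim\hat\pi)=p_{ul}^{\hat\pi}$. Collecting terms and bounding constants gives $V^{\hat\pi}\ge V^\star-p_{ul}^{\hat\pi}-32H^2(\epsstat+\epsfeas+\epssub)$.

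\textbf{(Main obstacle.)} The only substantive step is the per-state bound: the entire improvement over Proposition~\ref{prop:localvalue_polguarantee} is that the feasibility constraint controls $\hat\pi_h$ on each learned state separately, turning an $O(\Tmax\,\phi_h)$ per-state regret into $O(\phi_h)$. Everything else is bookkeeping---tracking the accumulation of $\epsstat,\epssub,\epsfeas$ and $\phi_h$ through the four-step chain---plus the performance-difference decomposition, which is essentially the one used for Proposition~\ref{prop:localvalue_polguarantee}; the one mild point of care there is the truncation at the first unlearned state, which is what keeps the coefficient of $p_{ul}^{\hat\pi}$ equal to one rather than $H$.
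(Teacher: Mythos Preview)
Your proposal is correct and follows essentially the same route as the paper: invoke Lemma~\ref{lem:gstarfeas} for feasibility, derive a per-state one-step regret bound of order $O(\phi_h)$ on each learned state by combining the feasibility constraint on that state's own data set with Proposition~\ref{prop:Vaccuracy} and the deviation bounds, and then aggregate via Lemma~\ref{lem:subopteq}. The paper's version differs only cosmetically---it keeps the $\pi^\star$ term explicitly and bounds its empirical value by $V+\epssub$ (using that $V$ is the empirical max), whereas you lower-bound $Q^\star(s,\hat\pi_h)$ directly and convert $V$ to $V^\star(s)$ via Proposition~\ref{prop:Vaccuracy}; and it handles the unlearned states by conditioning on the event rather than your stopping-time split---but the resulting per-state bound ($4\phi_h-8\epsstat-2\epssub+\epsfeas$) and final constant agree.
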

\begin{proof}
  We bound the difference of how much following $\hat \pi_h$ for one
  time step can hurt per state using
  Proposition~\ref{prop:Vaccuracy}. First note that by
  Lemma~\ref{lem:gstarfeas}, the optimization problem always has a
  feasible solution in event $\Ecal$, so $\hat \pi_h$ is well
  defined. For a state $s \in \Slearned_h$, we have
  \begin{align*}
    & V^\star(s) - Q^\star(s, \hat \pi_h)\\
    & = \Ex_s[K(\one\{\pi^\star_h(x_h) = a_h\} - \one\{\hat \pi_h(x_h) = a_h\}  ) (r_h + g^\star_{h+1}(x_{h+1})]\\
    & \leq \Ex_s[K(\one\{\pi^\star_h(x_h) = a_h\} - \one\{\hat \pi_h(x_h) = a_h\}  ) (r_h + V_{a_h})] + 2 \phi_{h+1} + 6 \epsstat + 2 \epssub + 2 \epsfeas\\
    & \leq \EEx_{D}[K(\one\{\pi^\star_h(x_h) = a_h\} - \one\{\hat \pi_h(x_h) = a_h\}  ) (r_h + V_{a_h})] + 2 \phi_{h+1} + 8 \epsstat + 2 \epssub + 2 \epsfeas\\
    & \leq V + \epssub - V + 2\phi_{h} -4 \epsstat - \epssub + \epsfeas + 2 \phi_{h+1} + 8 \epsstat + 2 \epssub + 2 \epsfeas\\
      &=  4 \phi_{h+1} + 16 \epsstat + 5 \epsfeas + 6 \epssub
    =  4 \phi_{h} - 8 \epsstat - 2 \epssub + \epsfeas.
\end{align*}
Here $(D, V, \{V_a\})$ is one of the data sets in $\Dcal_h$ that is
associated with $s$, which has optimal policy value $V$ by
construction. We first applied definitions and then used that $V_a$
are good value estimates. Subsequently we applied the deviation bounds
and finally leveraged the definition of $V$ and the approximate feasibility of
$\hat{\pi}_h$.  Using Lemma~\ref{lem:subopteq}, the suboptimality of
$\hat \pi$ is therefore at most
\begin{align*}
    V^\star - V^{\hat{\pi}} & \leq p_{ul}^{\hat \pi} + (1 -p_{ul}^{\hat \pi}) \sum_{h=1}^H
    (4 \phi_{h+1} + 16 \epsstat + 5 \epsfeas + 6 \epssub)\\
    &\leq p_{ul}^{\hat \pi} + 16H\epsstat + 6H\epssub + 5H \epsfeas + 4 \sum_{h=1}^H \phi_{h+1}\\
    & \leq  p_{ul}^{\hat \pi} +16H\epsstat + 6H\epssub + 5H \epsfeas 
    + 4(6\epsstat + 2 \epssub + \epsfeas) \sum_{h=1}^H (H-h)\\
    & \leq  p_{ul}^{\hat \pi} +16H\epsstat + 6H\epssub + 5H \epsfeas 
    + 2H^2(6\epsstat + 2 \epssub + \epsfeas) \\
    & \leq p_{ul}^{\hat \pi} + 32H^2(\epsstat+\epsfeas + \epssub). \qedhere
\end{align*}
\end{proof}

Using this improved policy guarantee, we obtain a tighter analysis of
\metaalg that does not have a dependency on $\Tmax$ in $\epsstat$.
\begin{lemma}
    \label{lem:metaalg_constr}
    Consider running \metaalg with \dfslearn and \polvalfun (Algorithm~\ref{alg:metaalg} + \ref{alg:local_value_polval_constr} + \ref{alg:local_value_dfslearn}) with parameters
    \begin{align*}
        \nexp \ge \frac{8}{\epsilon}\ln\left(\frac{4MH}{\delta}\right), \quad n_{eval} \geq \frac{32}{\epsilon^2}\ln\left( \frac{8MH}{\delta}\right), \quad \epsstat = \epsfeas = \epssub = \frac{\epsilon}{2^{10} H^2}.
    \end{align*}
    Then with probability at least $1-\delta$,
    \metaalg returns a policy that is at least $\epsilon$-optimal
    after at most $MK$ iterations.
\end{lemma}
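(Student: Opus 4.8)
The plan is to re-run the proof of Lemma~\ref{lem:metaalg} essentially verbatim, with the single change that the policy-performance bound is now supplied by Proposition~\ref{prop:localvalue_polguarantee_constr} instead of Proposition~\ref{prop:localvalue_polguarantee}. The point is that the error term in Proposition~\ref{prop:localvalue_polguarantee_constr} is $32H^2(\epsstat+\epsfeas+\epssub)$ with \emph{no} $\Tmax$ factor, whereas the one in Proposition~\ref{prop:localvalue_polguarantee} carries a factor $\Tmax$; this is exactly what permits the $\Tmax$-free choice $\epsstat=\epsfeas=\epssub=\epsilon/(2^{10}H^2)$ here.

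First I would invoke Lemma~\ref{lem:probE} (taking $\ntest,\ntrain$ large enough to meet its hypotheses for the present value of $\epsstat$) so that the deviation event $\Ecal$ holds except with probability $\delta/2$; all subsequent reasoning is inside $\Ecal$. Applying Proposition~\ref{prop:Vaccuracy} to the unique root dataset created by the initial $\dfslearn(\rt)$ call gives $|\hat V^\star - V^\star|\le \phi_1 - 2\epsstat = 9H\epsstat - 2\epsstat \le \epsilon/8$ with the stated $\epsstat$, and this bound is fixed for the rest of the execution, hence holds for every $\hat V^{(k)}$. A standard Hoeffding bound over the at most $MH$ deployed policies, together with the choice of $\neval$, gives $|\hat V^{\hat\pi^{(k)}} - V^{\hat\pi^{(k)}}|\le \epsilon/8$ for all $k$, except with probability $\delta/4$.

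Next I would dispatch a generic iteration $k$. If the termination test $\hat V^{(k)}\le \hat V^{\hat\pi^{(k)}} + \epsilon/2$ fires, adding the two $\epsilon/8$ estimation errors gives $V^\star - V^{\hat\pi^{(k)}}\le \epsilon/8 + \epsilon/2 + \epsilon/8 = 3\epsilon/4 \le \epsilon$, so the output is valid. If it does not fire, the same two bounds give $V^\star - V^{\hat\pi^{(k)}} \ge \epsilon/4$. This is where the new ingredient enters: Lemma~\ref{lem:gstarfeas} guarantees that in $\Ecal$ the constrained optimization problem in Algorithm~\ref{alg:local_value_polval_constr} is feasible (with $\pi^\star$ a witness), so $\hat\pi_h$ is well defined and Proposition~\ref{prop:localvalue_polguarantee_constr} applies, yielding $V^\star - V^{\hat\pi^{(k)}}\le p_{ul}^{\hat\pi^{(k)}} + 96H^2\epsstat \le p_{ul}^{\hat\pi^{(k)}} + \epsilon/8$. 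Hence $p_{ul}^{\hat\pi^{(k)}} \ge \epsilon/8 > 0$, so an unlearned state exists and is visited with probability at least $\epsilon/8$ on each fresh trajectory.

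Finally I would close the termination argument exactly as in Lemma~\ref{lem:metaalg}: on a non-terminal iteration, all $\nexp$ trajectories drawn from $\hat\pi^{(k)}$ in Line~\ref{lin:newlearn} miss every unlearned state with probability at most $(1-\epsilon/8)^{\nexp}\le \exp(-\epsilon\nexp/8)\le \delta/(4MH)$ by the choice of $\nexp$, so with probability at least $1-\delta/4$ every non-terminal iteration converts a previously unlearned state into a learned one; since there are at most $MH$ hidden states (and the \metaalg loop runs at most $MH$ times by construction), the algorithm terminates within the claimed number of iterations with a near-optimal policy, and a union bound over the three failure events ($\delta/2 + \delta/4 + \delta/4 = \delta$) finishes the proof. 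I do not expect a genuine conceptual obstacle; the only real work is arithmetic bookkeeping — re-checking that each $\epsilon/8$ slack above truly follows from $\epsstat=\epsfeas=\epssub=\epsilon/(2^{10}H^2)$ and the stated $\nexp,\neval$, and confirming that dropping the $\Tmax$ from the policy-suboptimality bound is precisely what lets $\epsstat$ be chosen independent of $\Tmax$ without violating any of these inequalities.
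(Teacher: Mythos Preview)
Your proposal is correct and mirrors the paper's own proof essentially verbatim: the paper simply states that the argument is identical to that of Lemma~\ref{lem:metaalg}, replacing Proposition~\ref{prop:localvalue_polguarantee} by Proposition~\ref{prop:localvalue_polguarantee_constr} and invoking Lemma~\ref{lem:gstarfeas} for feasibility, which is precisely what you do. Your arithmetic checks (e.g., $32H^2(\epsstat+\epsfeas+\epssub)=96H^2\epsstat=3\epsilon/32\le\epsilon/8$ and $\phi_1-2\epsstat\le\epsilon/8$) are sound and in fact spell out details the paper leaves implicit.
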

\begin{proof}
  The proof is identical to the proof of Lemma~\ref{lem:metaalg}
  except using Proposition~\ref{prop:localvalue_polguarantee_constr}
  in place of Proposition~\ref{prop:localvalue_polguarantee}, and
  using Lemma~\ref{lem:gstarfeas} to guarantee that the optimization
  problem in Line~\ref{lin:local_value_fit_policy_alt} is always
  feasible, in event $\Ecal$.
\end{proof}
Finally, we are ready to assemble all statements to the following sample-complexity bound:
\begin{theorem}\label{thm:local_value_constr}
Consider a Markovian CDP with deterministic dynamics over $M$ hidden states, as described in Section~\ref{sec:prelim}. When $\pi^\star \in \Pi$ and $g^\star \in \Gcal$ (Assumptions~\ref{asm:pol_realize} and \ref{asm:val_realize} hold), for any $\epsilon, \delta \in (0, 1)$, the local value algorithm with constrained policy optimization (Algorithm~\ref{alg:metaalg} + \ref{alg:local_value_polval_constr} + \ref{alg:local_value_dfslearn}) returns a policy $\pi$ such that $V^\star - V^{\pi} \le \epsilon$ with probability at least $1-\delta$, after collecting at most 
$\otil\left(\frac{MKH^6}{\epsilon^3}\log(|\Gcal||\Pi|/\delta)\log(1/\delta)\right)$
trajectories.
\end{theorem}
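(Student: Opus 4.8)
The plan is to derive the sample-complexity bound by the same bookkeeping used for Theorem~\ref{thm:local_value}, but exploiting the key structural improvement of the constrained variant: with Algorithm~\ref{alg:local_value_polval_constr}, Proposition~\ref{prop:localvalue_polguarantee_constr} removes the $\Tmax$ factor from the per-level policy error, so the accuracy parameters $\epsstat,\epsfeas,\epssub$ no longer need to shrink with $\Tmax$. Concretely, I would fix all parameters as in Lemma~\ref{lem:metaalg_constr}, namely $\nexp = \Theta(\epsilon^{-1}\log(MH/\delta))$, $\neval = \Theta(\epsilon^{-2}\log(MH/\delta))$, $\epsstat = \epsfeas = \epssub = \epsilon/(2^{10}H^2)$, together with $\Tmax = MH\nexp + M$ and $\phi_h = (H+1-h)(6\epsstat+2\epssub+\epsfeas)$, and then choose $\ntest$ and $\ntrain$ as the smallest values satisfying the hypotheses of Lemma~\ref{lem:probE}, i.e.\ $\ntest = \Theta(\epsstat^{-2}\log(KH\Tmax|\Gcal|/\delta))$ and $\ntrain = \Theta(K\epsstat^{-2}\log(H\Tmax|\Gcal||\Pi|/\delta))$. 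The crucial observation is that there is no circularity here: unlike the unconstrained case, $\epsstat$ depends only on $\epsilon$ and $H$, so $\Tmax = O(\epsilon^{-1}MH\log(MH/\delta))$ is pinned down in terms of $\epsilon,M,H,\delta$ alone, and $\Tmax$ enters $\ntest,\ntrain$ only inside a logarithm.

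With these choices, I would invoke Lemma~\ref{lem:metaalg_constr} to conclude that, with probability at least $1-\delta$, \metaalg (run with Algorithm~\ref{alg:local_value_polval_constr} in place of Algorithm~\ref{alg:local_value_polval_unconstr}) returns an $\epsilon$-optimal policy within at most $MH$ iterations of its outer loop. On this event we are in particular inside the good event $\Ecal$ of Definition~\ref{def:devbounds}, which caps the number of \dfslearn calls at each level $h$ by $\Tmax$; since each such call adds exactly one data set, $|\Dcal_h|\le\Tmax$ throughout. The trajectory count is then the same sum as in the unconstrained proof: each of the at most $H\Tmax$ invocations of \dfslearn draws $\ntrain$ trajectories in Line~\ref{lin:sample_learn} and $K\ntest$ trajectories across the $K$ actions in Line~\ref{lin:sample_consensus}, and each of the at most $MH$ meta-iterations draws $\neval$ trajectories for policy evaluation in Line~\ref{lin:deployeval} (the exploration calls in Line~\ref{lin:newlearn} re-use a subset of those evaluation trajectories, and their \dfslearn invocations are already counted against $\Tmax$). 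Hence the total is at most $H\Tmax\ntrain + KH\Tmax\ntest + MH\neval$.

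It then remains to substitute the parameter values. Since $\epsstat^{-2} = \Theta(H^4/\epsilon^2)$, we have $\ntrain = \otil(KH^4\epsilon^{-2}\log(|\Gcal||\Pi|/\delta))$ and $\ntest = \otil(H^4\epsilon^{-2}\log(|\Gcal|/\delta))$, while $H\Tmax = \otil(MH^2\epsilon^{-1})$; multiplying, the first two terms are each $\otil(MKH^6\epsilon^{-3}\log(|\Gcal||\Pi|/\delta)\log(1/\delta))$, where the extra $\log(1/\delta)$ factor is exactly the $\log(MH/\delta)$ hidden inside $\Tmax$. The evaluation term $MH\neval = \otil(MH\epsilon^{-2})$ is of strictly lower order. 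Collecting, the dominant contribution is $\otil(MKH^6\epsilon^{-3}\log(|\Gcal||\Pi|/\delta)\log(1/\delta))$, which is the claimed bound. I expect the only genuinely delicate point to be checking that the single assignment $\epsstat=\epsfeas=\epssub=\epsilon/(2^{10}H^2)$ simultaneously satisfies the slack budgets needed by Proposition~\ref{prop:Vaccuracy}, Lemma~\ref{lem:gstarfeas}, Proposition~\ref{prop:localvalue_polguarantee_constr}, and Lemma~\ref{lem:metaalg_constr}; but this verification is already carried out inside Lemma~\ref{lem:metaalg_constr}, so the work left here is essentially the arithmetic of the sample count above.
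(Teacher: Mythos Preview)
Your proposal is correct and follows essentially the same approach as the paper: you set the accuracy parameters $\epsstat=\epsfeas=\epssub=\epsilon/(2^{10}H^2)$ (now independent of $\Tmax$ thanks to Proposition~\ref{prop:localvalue_polguarantee_constr}), invoke Lemma~\ref{lem:metaalg_constr} for correctness, and then sum the same three trajectory contributions $H\Tmax\ntrain + KH\Tmax\ntest + MH\neval$ to obtain the stated bound. The arithmetic and the identification of the dominant term match the paper's proof exactly.
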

\begin{proof}
We now have all parts to complete the proof of
Theorem~\ref{thm:local_value}. For the calculation, we instantiate all the parameters as
\begin{align*}
    & \epsstat = \epsfeas = \epssub= \frac{\epsilon}{2^{10} H^2}, 
    \quad \phi_h = (H-h+1)(6\epsstat + 2 \epssub + \epsfeas),\\
   &  \Tmax =MH\nexp + M,\\
 & \ntest = \frac{\log\left(12KH\Tmax|\Gcal|/\delta\right)}{2\epsstat^2}, \quad \ntrain= \frac{16K\log(12 H\Tmax |\Gcal||\Pi|/\delta)}{\epsstat^2},\\
 & \nexp = \frac{8\log(4MH/\delta)}{\epsilon}, \quad \neval = \frac{32\log(8MH/\delta)}{\epsilon^2}.
\end{align*}
These settings suffice to apply all of the above lemmas for these algorithms and therefore
with these settings the algorithm outputs a policy that is at most
$\epsilon$-suboptimal, except with probability $\delta$. For the
sample complexity, since $\Tmax$ is an upper bound on the number of
data sets we collect (because $\Tmax$ is an upper bound on the number of execution of
\dfslearn at any level), and we also $\neval$ trajectories for each of the $MH$
iterations of \metaalg, the total sample complexity is
\begin{align*}
& H\Tmax \ntrain + KH\Tmax \ntest + MH\neval\\
& = \order\left(\frac{\Tmax KH^5}{\epsilon^2}\log\left(\frac{MKH}{\epsilon\delta}|\Gcal||\Pi|\log(MH/\delta)\right) + \frac{MH}{\epsilon^2}\log(MH/\delta)\right)\\
& = \order\left( \frac{M KH^6}{\epsilon^3}\log(MH/\delta)\log\left(\frac{MKH}{\epsilon\delta}|\Gcal||\Pi|\log(MH/\delta)\right)\right). \tag*{\qedhere}
\end{align*}
\end{proof}

\section{Alternative Algorithms}
\label{sec:alternativealgs}

\begin{theorem}[Informal statement]
  Under Assumption~\ref{asm:polval_complete} or
  Assumptions~\ref{asm:val_realize}+\ref{asm:pol_complete}, there
  exist oracle-efficient algorithms with polynomial sample complexity
    in CDPs (contextual decision processes) with deterministic dynamics over small hidden states. These algorithms do not store or
  use local values.
\end{theorem}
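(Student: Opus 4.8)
The plan is to instantiate two variants of the local-value algorithm \mainAlg in which the stored scalars $\{V_a\}$ — currently approximating $V^\star(s\circ a)$ — are replaced by targets that are \emph{recomputed} from function approximators, so that nothing about individual hidden states is cached. The first variant is a TD / fitted-value style algorithm: the future value at a node $p\circ a$ is approximated by $\hat g_{h+1}(x_{h+1})$, where $\hat g_{h+1}\in\Gcal_{h+1}$ is a least-squares regressor fit to data from already-explored successor states, and the policy at level $h$ is the CSC solution to maximizing $\EEx_D[K\one\{\pi(x)=a\}(r+\hat g_{h+1}(x_{h+1}))]$. The second variant is a PSDP-style algorithm following \citet{bagnell2004policy}: we keep an explicit near-optimal roll-out policy $\hat\pi_{h+1:H}$ and estimate the future value by the Monte-Carlo return of $\hat\pi_{h+1:H}$ from $x_{h+1}$, with policy optimization at level $h$ again a single CSC call. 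In both cases the DFS exploration and the LP-based state-identity test of \dfslearn (Lines~\ref{lin:local_value_vopt}--\ref{lin:return1}) are retained nearly verbatim, since they involve only $\Gcal$ and empirical expectations; this already yields oracle-efficiency — CSC on $\Pi$, LP on $\Gcal$, plus an LS call on $\Gcal$ for the first variant — by the same reductions used to prove Theorem~\ref{thm:local_value_comp}.

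First I would set up the high-probability concentration event, mirroring Definition~\ref{def:devbounds}, now additionally controlling the least-squares fit (resp.\ the Monte-Carlo roll-out estimates) uniformly over the relevant function classes; the bound on the number of DFS invocations goes through exactly as in Lemma~\ref{lem:probE}. The core is then an error-propagation lemma analogous to Lemma~\ref{lem:errorprop} and Proposition~\ref{prop:Vaccuracy}, and this is where the stronger hypotheses enter. With bootstrap targets, Assumption~\ref{asm:polval_complete} (policy–value completeness) guarantees that for the regressed $\hat g_{h+1}$ there is a $g_{\star,\hat g_{h+1}}\in\Gcal_h$ realizing the one-step Bellman backup and a greedy $\pi^\star_{\hat g_{h+1}}\in\Pi_h$, so that the bias introduced at level $h$ is at most the error at level $h{+}1$ plus $O(\epsstat+\epssub)$, rather than blowing up — precisely the failure mode exposed by the counterexample underlying Theorem~\ref{thm:backup}. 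For the PSDP variant, Assumption~\ref{asm:pol_complete} plays the analogous role — for any roll-out policy $\hat\pi_{h+1:H}$ there is a $\pi\in\Pi_h$ greedy for its $Q$-function — while Assumption~\ref{asm:val_realize} is used only in the state-identity test. Either way, inducting over $h$ in the good event shows the estimated values on learned states are accurate to $O(H(\epsstat+\epssub+\epsfeas))$, whence a per-step policy-error bound in the spirit of Proposition~\ref{prop:localvalue_polguarantee}.

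With the policy-quality bound in hand, the exploration-on-demand meta-algorithm and its analysis (Lemma~\ref{lem:metaalg}) carry over unchanged: deploy the computed non-stationary policy, terminate if the estimated optimal value is achieved, and otherwise call \dfslearn on the sampled paths, which w.h.p.\ reach a previously unlearned hidden state; since there are at most $MH$ such states the procedure halts after $O(MH)$ outer iterations. Tracing the parameter settings through, as in Table~\ref{tab:param}, then gives polynomial sample complexity. I expect the main obstacle to be the error-propagation step — verifying that recomputed (bootstrapped or rolled-out) targets do not compound errors geometrically in $H$, which is exactly the phenomenon the examples in Appendix~\ref{sec:examples} are built to expose; making the bias telescope requires invoking the completeness assumptions at precisely the right places, and this is why the weaker realizability hypotheses of \mainAlg no longer suffice. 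A secondary subtlety is the distribution-mismatch issue already flagged for \mainAlg's pooled policy optimization; for the PSDP variant this is cleaner, since PSDP optimizes one level at a time under its roll-in distribution, whereas for the TD variant it must be absorbed into the completeness-based error analysis.
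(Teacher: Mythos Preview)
Your high-level plan matches the paper's: two variants, one with bootstrap targets under Assumption~\ref{asm:polval_complete} and one with Monte-Carlo roll-outs under Assumptions~\ref{asm:val_realize}+\ref{asm:pol_complete}, with completeness used to keep the per-level bias from compounding. But there is a genuine gap in your treatment of the state-identity test. You claim the LP test of \dfslearn (Lines~\ref{lin:local_value_vopt}--\ref{lin:return1}) ``involves only $\Gcal$ and empirical expectations'' and can be retained nearly verbatim. It cannot: the constraint on Line~\ref{lin:constraint2}, $|V-\EEx_D[g(x_{h+1})]|\le\phi_{h+1}$, is anchored at the stored scalar $V$, which is exactly the local value the theorem forbids. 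Without $V$ there is no feasible set to compute $V_{opt}/V_{pes}$ over, and no obvious replacement recomputed from $\hat g_{h+1}$ gives a nontrivial consensus test. The paper resolves this by introducing \emph{new} identity tests: for the TD variant a two-sample (MMD) test $\min_{D\in\Dcal_h}\sup_{g\in\Gcal_h}|\EEx_D g-\EEx_{\tilde D}g|$ that compares datasets directly, and for the roll-out variant a test that pairs an LP upper bound on $V^\star(p)$ against the Monte-Carlo return of the \emph{current} global policy as a lower bound. Both are substantive algorithmic ideas, not small edits to \mainAlg's LP.

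A second divergence: for the PSDP-style variant the paper does not reuse the exploration-on-demand meta-algorithm at all. Because the roll-out-based identity test is only valid relative to the current $\hat\pi_{h:H}$, pruned paths must be re-checked every time the global policy is updated; the algorithm therefore maintains explicit \textsc{learned}/\textsc{pruned} sets and re-invokes \dfslearn on pruned paths that fail re-testing, with termination coming from bounding $|\textsc{learned}(h)|\le M$ and the total size of \textsc{pruned}. Your ``carry over unchanged'' for \metaalg applies only to the TD/MMD variant. The error-propagation and completeness arguments you sketch are in the right spirit for both variants, but the pieces you flagged as routine (identity test, outer loop) are where the real work is.
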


\subsection{Algorithm with Two-Sample State-Identity Test}

\label{sec:mmd}
See Algorithm~\ref{alg:metaalg} + \ref{alg:twosam}. The algorithm uses
a novel state identity test which compares two distributions using a
two-sample test~\citep{gretton2012kernel} in Line~\ref{lin:mmd} (recall that $\Gcal_h = \Gcal$ for $h \in [H]$ and $\Gcal_{H+1} = \{ x \mapsto 0 \}$). Such
an identity test mechanism is very different from the one used in the \mainAlg algorithm, and the two mechanisms have very different
behavior. For example, if $\Gcal= \{g^\star\}$, the local value
algorithm will claim every state $s$ as ``not new'' because it knows
the optimal value $V^\star(s)$, whereas the two-sample test may still
declare a state $s$ to be new if
$\Ex_{s}[g^\star(x)] \ne \Ex_{s'}[g^\star(x)]$ for any previously
visited $s'$. On the other hand, the two-sample test algorithm may not
have learned $V^\star(s)$ at all when it claims that a state $s$ is
not new. Given the novelty of the mechanism, we believe analyzing the
two-sample test algorithm and understanding its computational and
statistical properties enriches our toolkit for dealing with the
challenges addressed in this paper.

\newcommand{\Dpol}{\Dcal^{\textrm{learned}}}
\newcommand{\Dval}{\Dcal^{\textrm{val}}}
\newcommand{\Spol}{\Scal^{\textrm{learned}}}
\newcommand{\Sval}{\Scal^{\textrm{val}}}
\newcommand{\Shit}{\Scal^{\textrm{check}}}

\begin{algorithm}

\SetKwProg{myfun}{Function}{}{}
    \myfun{\polvalfun{}}{
        $\hat g_{H+1} \gets 0$ \;
        \For{$h=H:1$}{
            $\hat \pi_h \gets \argmax_{\pi \in \Pi_h} \sum_{D \in \Dpol_h} \EEx_{D}[K \one\{\pi(x_h) = a_h\} (r_h + \hat g_{h+1}(x_{h+1}))]$ \label{lin:mmd_pol_opt}\;
            $\hat g_h \gets \argmin_{g \in \Gcal_h} \sum_{D \in \Dval_h} \EEx_{D}[K \one\{\hat \pi_h(x_h) = a_h\} (g(x_h) - r_h - \hat g_{h+1}(x_{h+1}))^2]$ \label{lin:mmd_val_reg}\;
        }
        $\hat V^{\star} \gets \EEx_{D}[\hat g_1(x_1)]$ where $D$ is the only distribution in $\Dval_1$\;
        \Return $\hat \pi_{1:H}, \hat V^{\star}$\;
    }

    \myfun{\dfslearn{$a_{1:h-1}$}}{
        $\tilde D \gets$ sample $x_h \sim a_{1:h-1}$, $a_h \sim \textrm{Unif}(\Acal)$, $r_h$, $x_{h+1}$ \label{lin:sample_learn_mmd}\;
        $d_{\textrm{MMD}} \gets \min_{D \in \Dval_h} \sup_{g \in \Gcal_h} \left| \EEx_D[g(x_h)] - \EEx_{\tilde D}[g(x_h)]\right|$ \label{lin:mmd}\;
        \If{$d_{\textrm{MMD}} \le 2 \tau$ \textbf{and} \textsc{is\_recursive\_call}}{\Return \label{lin:check_return}}
        \If{$d_{\textrm{MMD}} > 2\tau$}{Add $\tilde D$ to $\Dval_h$}
        Add $\tilde D$ to $\Dpol_h$\;
        \For{$a \in \Acal$}{
            $\dfslearn(a_{1:h-1} \circ a)$ \label{lin:mmd_descend}\; 
        }
    }
    \caption{Algorithm with Two-Sample State-Identity Test}
    \label{alg:twosam}
\end{algorithm}

\subsubsection{Computational considerations}
The two-sample test algorithm requires three types nontrivial
computation. Line~\ref{lin:mmd_pol_opt} requires importance weighted
policy optimization, which is simply a call to the CSC oracles. Line~\ref{lin:mmd_val_reg} performs squared-loss
regression on $\Gcal_h$, which is a call to a LS oracle.

The slightly unusual computation occurs on Line~\ref{lin:mmd}: we
compute the (empirical) Maximum Mean Discrepancy (MMD) between $D$ and
$\tilde D$ against the function class $\Gcal_h$, and take the minimum
over $D \in \Dval$. First, since $|\Dval_h|$ remains small over the
execution of the algorithm, the minimization over $D \in \Dval_h$ can
be done by enumeration. Then, for a fixed $D$, computing the MMD is a
linear optimization problem over $\Gcal_h$. In the special case where
$\Gcal_h$ is the unit ball in a Reproducing Kernel Hilbert Space
(RKHS)~\citep{scholkopf2002learning}, MMD can be computed in closed
form by $O(n^2)$ kernel evaluations, where $n$ is the number of data
points involved~\citep{gretton2012kernel}.

To unclutter the sample-complexity analysis, we assume that perfect oracles, i.e., $\epsfeas = \epssub = 0$.
\subsubsection{Sample complexity}
\begin{theorem}
    \label{thm:mmd_samplecomplex}
Consider the same Markovian CDP setting as in Theorem~\ref{thm:local_value} but we explicitly require here that the process is an MDP over $\Xcal$. Under Assumption~\ref{asm:polval_complete}, for any $\epsilon, \delta \in (0, 1)$, the two-sample state-identity test algorithm (Algorithm~\ref{alg:metaalg}+\ref{alg:twosam}) returns a policy $\pi$ such that $V^\star - V^{\pi} \le \epsilon$ with probability at least $1-\delta$, after collecting at most
$\otil\left(\frac{M^2K^2H^6}{\epsilon^4}\log(|\Gcal||\Pi|/\delta)\log^2(1/\delta)\right)$
trajectories.
\end{theorem}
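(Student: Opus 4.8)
\textbf{Overall approach.} The plan is to mirror the analysis structure developed for \mainAlg in Appendix~\ref{app:valor}, adapting each component to the two-sample-test machinery. The proof decomposes into three parts: (i) a concentration event $\Ecal$ under which all empirical MMDs, importance-weighted policy objectives, and squared-loss regressions are uniformly close to their population counterparts; (ii) a structural analysis of \dfslearn showing that, in event $\Ecal$, the two-sample test only prunes a recursive descent when the candidate state is genuinely "already represented" (in the sense that its emission distribution is matched, under $\Gcal_h$, by some state in $\Dval_h$), and hence the total number of datasets ever added to $\Dval_h$ is bounded by $M$ per level, giving a polynomial bound $\Tmax$ on the number of \dfslearn calls; (iii) a policy-performance analysis showing that the fitted-value-iteration-style \polvalfun (Lines~\ref{lin:mmd_pol_opt}--\ref{lin:mmd_val_reg}) returns a policy that is near-optimal on all "represented" states, where the key leverage is Assumption~\ref{asm:polval_complete} (policy-value completeness), which ensures the regression target at each level is realizable in $\Gcal_h$ and the induced greedy policy lies in $\Pi_h$. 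Finally, the exploration-on-demand wrapper \metaalg is analyzed exactly as in Lemma~\ref{lem:metaalg}: if the returned policy is sub-optimal it must hit an unrepresented state with probability $\Omega(\epsilon)$, so $\nexp$ fresh rollouts discover a new state, and since there are at most $MH$ states the loop terminates.

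\textbf{Key steps in order.} First, I would set the statistical tolerances $\tau,\epsstat$ and the sample sizes $\ntest,\ntrain,\nexp,\neval$ (analogues of Table~\ref{tab:param}), then define $\Ecal$ to include: uniform deviation of $\EEx_D[g(x_h)]$ from $\Ex_s[g(x_h)]$ over $g\in\Gcal_h$ for every dataset; uniform deviation of the importance-weighted objectives over $\pi\in\Pi_h$; and uniform deviation of the empirical square loss over $g\in\Gcal_h$. A union bound over the at-most-$\Tmax$ calls at each level (with $\Tmax = MH\nexp + M$, justified a posteriori exactly as in Lemma~\ref{lem:probE}) gives $\prob[\Ecal]\ge 1-\delta/2$; the square-loss term requires a slightly more careful argument (e.g.\ a standard $\ell_2$ uniform-convergence / Bernstein bound for bounded regression), accounting for the extra $K$ and the extra $1/\epsilon$ in the final rate. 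Second, I would prove the pruning correctness: in $\Ecal$, $d_{\textrm{MMD}}\le 2\tau$ for a candidate path $p$ reaching state $s$ implies there is $s'\in\Sval_h$ with $\sup_{g\in\Gcal_h}|\Ex_s[g]-\Ex_{s'}[g]|\le 4\tau+O(\epsstat)$, so adding $s$ to $\Dval_h$ would be redundant; conversely if $s$ is genuinely new (not matched), the test fires and we descend — bounding $|\Dval_h|\le M$. Third, I would establish the inductive value-accuracy bound: under Assumption~\ref{asm:polval_complete}, $\hat g_{h+1}$ being accurate on the represented states propagates (through the completeness of $\Pi_h$ and $\Gcal_h$, the policy-optimization step, and the regression step) to $\hat g_h$ being accurate on represented states, with the per-level error growing linearly in $h$ — this is the fitted-value-iteration argument, and it is precisely why completeness (not mere realizability) is needed. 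Fourth, convert the value accuracy into the suboptimality bound $V^{\hat\pi}\ge V^\star - p^{\hat\pi}_{ul} - O(H^2\cdot\text{poly}(\epsstat,\tau))$ via Lemma~\ref{lem:subopteq}, then plug into the \metaalg analysis to conclude; finally add up $H\Tmax\ntrain + KH\Tmax\ntest + MH\neval$ and simplify to the claimed $\otil(M^2K^2H^6/\epsilon^4)$.

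\textbf{Main obstacle.} The crux — and the step I expect to be hardest and most different from the \mainAlg analysis — is the interaction between the MMD-based pruning and the fitted-value-iteration error propagation. Unlike \mainAlg, which stores an explicit scalar value for each visited state and therefore controls bias directly, here the value estimate $\hat g_h$ is a single function fit jointly across all datasets at level $h$, and a state may be declared "not new" (and hence receive no dedicated data) even though its \emph{value} was never learned — only its emission distribution is matched under $\Gcal_h$. The delicate point is to show that distributional matching under $\Gcal_h$ (which is what MMD $\le 2\tau$ certifies) is \emph{enough} to transfer the learned value $\hat g_h$ correctly to the unvisited state, and this is exactly where realizability of the Bellman backup in $\Gcal_h$ (Assumption~\ref{asm:polval_complete}) does the work: since $\hat g_h$ is forced by the regression to match the realizable target function $g_{\star,\hat g_{h+1}}\in\Gcal_h$ up to the statistical error on the visited distributions, and since the unvisited state's distribution is $\Gcal_h$-indistinguishable from a visited one, $\EEx[\hat g_h]$ is accurate there too. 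Making this quantitative — carefully tracking how the MMD tolerance $\tau$, the regression error, and the completeness gap combine, and ensuring the bias still accumulates only linearly in $H$ — is the technical heart of the proof.
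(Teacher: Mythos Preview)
Your overall plan matches the paper's: define a high-probability concentration event, bound the number of \dfslearn calls, prove value/policy accuracy inductively under Assumption~\ref{asm:polval_complete}, and close with the \metaalg wrapper exactly as in Lemma~\ref{lem:metaalg}. Your identification of the ``main obstacle'' is on point --- the paper handles the MMD-to-value transfer precisely as you describe, routing through the Bayes-optimal regressor $g_{\hat\pi_h,\hat g_{h+1}}$ and then the realizable $g_{\star,\hat g_{h+1}}\in\Gcal_h$ guaranteed by completeness.

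Two structural details are off and would keep the argument from closing as written. First, your $\Tmax = MH\nexp + M$ is carried over from \mainAlg but is too small here: in Algorithm~\ref{alg:twosam}, the identity test is performed on the \emph{current} state (not the child), and a call that does not return early always descends to all $K$ children; moreover non-recursive calls from \metaalg never return early regardless of the test. The paper accordingly proves $\Tmax = M(K+1)(1+H\nexp)$ in Lemma~\ref{lem:probEmmd}, and the extra factor of $K$ feeds directly into the final rate. Second, you treat ``represented states'' as a single set, but the algorithm maintains two pools --- $\Dval_h$ for the regression (Line~\ref{lin:mmd_val_reg}) and $\Dpol_h$ for policy optimization (Line~\ref{lin:mmd_pol_opt}) --- with corresponding state sets $\Sval_h\subseteq\Spol_h\subseteq\Shit_h$. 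The induction (Proposition~\ref{prop:mmdlearnedvalue}) establishes value accuracy on all of $\Shit_h$ (via MMD transfer from $\Sval_h$) but the one-step policy-loss bound only on $\Spol_h$; tracking these separately is what lets the square-loss step use $|\Sval_h|\le M$ rather than $\Tmax$. Finally, there is no separate $\ntest$ dataset in this algorithm --- a single $\tilde D$ of size $\ntrain$ serves both the MMD test and learning --- so your final tally should be $H\Tmax\ntrain + MH\neval$, without the $KH\Tmax\ntest$ term.
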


\newcommand{\gh}{\hat g_h}
\newcommand{\pih}{\hat \pi_h}
\newcommand{\ghn}{\hat g_{h+1}}
\newcommand\numberthis{\addtocounter{equation}{1}\tag{\theequation}}

For this algorithm, we use the following notion of learned state:
\begin{definition}[Learned states]
    Denote the sequence of states whose data sets are added to $\Dpol_h$ as $\Spol_h$. States that are in $\Spol = \bigcup_{h \in [H]} \Spol_h$ are called learned. The sequence of states whose data sets are added to $\Dval_h$ are denoted by $\Sval_h$. Let $\Shit_h$ denote the set of all states that have been reached by any previous \dfslearn call at level $h$.
\end{definition}

\begin{fact}
We have $\Sval_h \subseteq \Spol_h \subseteq \Shit_h$. Furthermore, $\forall s \in \Spol_h$ and $a \in \Acal$, $s\circ a \in \Shit_{h+1}$.
\end{fact}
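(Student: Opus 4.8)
The plan is to verify each of the three inclusions directly by inspecting the control flow of \dfslearn in Algorithm~\ref{alg:twosam}; all three sets are defined solely in terms of which recursive calls occur and which operations those calls execute, so no probabilistic argument is needed and it suffices to track the pseudocode line by line. Throughout I treat the ``sequences'' $\Spol_h$ and $\Sval_h$ as their underlying sets of states, and I use that $\Shit_h$ is by definition the set of states $s$ for which \dfslearn has been invoked on a length-$(h-1)$ path leading to $s$.

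First, for $\Sval_h \subseteq \Spol_h$: a data set $\tilde D$, sampled in Line~\ref{lin:sample_learn_mmd} on a path leading to some state $s$, is inserted into $\Dval_h$ only in the branch where $d_{\textrm{MMD}} > 2\tau$. In that branch the early-return guard of Line~\ref{lin:check_return} (which fires only when $d_{\textrm{MMD}} \le 2\tau$) does not trigger, so execution falls through to the unconditional instruction that adds $\tilde D$ to $\Dpol_h$. Hence any state placing a data set into $\Dval_h$ also places one into $\Dpol_h$. Next, for $\Spol_h \subseteq \Shit_h$: a data set can be added to $\Dpol_h$ only from inside the body of a \dfslearn call at level $h$, so any $s \in \Spol_h$ is witnessed by such a call and thus already lies in $\Shit_h$. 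Finally, if $s \in \Spol_h$, the \dfslearn call on the corresponding path $p$ of length $h-1$ reached the ``add to $\Dpol_h$'' line, hence did not return early at Line~\ref{lin:check_return}, hence proceeds to the loop of Line~\ref{lin:mmd_descend} and issues the recursive call \dfslearn$(p \circ a)$ for every $a \in \Acal$; since $p \circ a$ has length $h$ and leads to the state $s \circ a$ at level $h+1$, this shows $s \circ a \in \Shit_{h+1}$ for all $a$.

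The only point requiring a bit of care---not really an obstacle---is the conjunct \textsc{is\_recursive\_call} in the guard of Line~\ref{lin:check_return}: the root invocation from \metaalg never returns there, so it always adds its data set to $\Dpol_h$ and always recurses on all children, even when $d_{\textrm{MMD}}$ is small. This asymmetry does not affect any of the three inclusions but should be mentioned so the reader does not worry about a seemingly missing early return. Everything else follows immediately from the structure of the pseudocode, and in particular the argument does not invoke realizability or the high-probability event $\Ecal$.
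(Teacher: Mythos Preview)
Your proof is correct. The paper states this as a \emph{Fact} without supplying a proof; it is considered immediate from the control flow of Algorithm~\ref{alg:twosam}, and your line-by-line inspection of the pseudocode---including the remark about the \textsc{is\_recursive\_call} conjunct---is exactly the intended justification.
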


Define the following short-hand notations for the objective functions used in Algorithm~\ref{alg:twosam}:
\begin{align*}
    V_D(\pi; g') :=&  \EEx_{D}[ K\one\{\pi(x) = a \}(r + g'(x'))]. \\
    V_{\Dpol_h}(\pi; g') :=& \sum_{D \in \Dpol_{h}} V_D(\pi; g'). \\
    L_D(g; \pi, g') :=& \EEx_{D} [K\one\{\pi(x) = a \} (g(x)-r - g'(x'))^2]. \\
    L_{\Dval_h}(g; \pi, g') :=& \sum_{D \in \Dval_{h}} L_D(g; \pi, g').
\end{align*}
Also define $V_s$, $V_{\Spol_h}$, $L_s$, $L_{\Sval_h}$ as the population version of $V_D$, $V_{\Dpol_h}$, $L_D$, $L_{\Dval_h}$, respectively.

\paragraph{Concentration Results.}
For our analysis we rely on the following concentration bounds that
define the good event $\Ecal$. This definition involves
parameters $\tau, \tau_L, \tau_V$ whose values we will set later.
\begin{definition}
    \label{def:devbounds_mmd}
    Let $\Ecal$ denote the event that for all $h \in [H]$ the total number of calls to $\dfslearn(p)$ at level $h$ is at most $\Tmax = M(K+1)(1 + H\nexp)$ during the execution of \metaalg and that for all these calls to $\dfslearn(p)$ the following deviation bounds hold for all
    $g \in \Gcal_h, g' \in \Gcal_{h+1}$ and $\pi \in \Pi_h$ (where $\tilde D$ is the data set of
    $\ntrain$ samples from Line~\ref{lin:sample_learn_mmd} and $s$ is the state reached by $p$):
    \begin{align}
        |\EEx_{\tilde D}[g(x)] - \Ex_{s}[g(x)] | \leq & \tau \label{eq:devb1mmd} \\
        |V_{\tilde D}(\pi; g') - V_{s}(\pi; g')| \le & \tau_V\\
        |L_{\tilde D}(g; \pi, g') - L_{s}(g; \pi, g') | \le & \tau_L. \label{eq:devb2mmd}
    \end{align}
\end{definition}
We now show that this event has high probability.

\begin{lemma}
    \label{lem:probEmmd}
    Set $\ntrain$ so that
\begin{align*}
        \ntrain \geq \max\Big\{ & \frac{1}{2\tau^2} \ln \left(\frac{ 12 H\Tmax |\Gcal|}{\delta} \right), \\
      &  \frac{16K}{\tau_V^2} \ln \left(\frac{ 12 H\Tmax |\Gcal||\Pi|}{\delta} \right), \\
      &  \frac{32K}{\tau_L^2} \ln \left(\frac{ 12 H\Tmax |\Gcal|^2|\Pi|}{\delta} \right) \Big\}.
\end{align*}
Then $\prob[\Ecal] \ge 1 - \delta/2$ where $\Ecal$ is defined in
    Definition~\ref{def:devbounds_mmd}. In addition, in event $\Ecal$, during
    all calls the sequences are bounded as $|\Sval_h| \leq M$ and $|\Spol_h|
    \leq \Tmax$.
\end{lemma}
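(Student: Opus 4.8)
The plan is to follow the proof of Lemma~\ref{lem:probE} almost verbatim, changing only the range and variance estimates to accommodate the three statistics in Definition~\ref{def:devbounds_mmd}. First I would fix an index $j$ and condition on the $\sigma$-field $\Bcal_j$ of all randomness of \metaalg produced before the $j$th call to \dfslearn; given $\Bcal_j$ the path $p_j$, its level $h$, and the state $s$ it reaches are determined, and the data set $\tilde D$ drawn in Line~\ref{lin:sample_learn_mmd} consists of $\ntrain$ conditionally i.i.d.\ draws of $(x_h, a_h, r_h, x_{h+1})$ from $p_j$ with $a_h\sim\textrm{Unif}(\Acal)$ (for $j$ larger than the number of executed calls one pads with trivial variables, exactly as in Lemma~\ref{lem:probE}). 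For \eqref{eq:devb1mmd}, $g(x_h)\in[0,1]$, so Hoeffding plus a union bound over $g\in\Gcal_h$ suffices. For the value deviation, $K\one\{\pi(x_h)=a_h\}(r_h+g'(x_{h+1}))$ has range at most $2K$ and variance $O(K)$ since $a_h$ is uniform and $r_h+g'(x_{h+1})\in[0,2]$ (cf.\ Lemma~14 of \citet{jiang2017contextual}), so Bernstein plus a union bound over $\Pi_h\times\Gcal_{h+1}$ gives the $\tau_V$ bound. For the squared-loss deviation, $g(x_h)-r_h-g'(x_{h+1})\in[-2,1]$, so $K\one\{\pi(x_h)=a_h\}(g(x_h)-r_h-g'(x_{h+1}))^2$ has range at most $4K$ and variance $O(K)$, and Bernstein plus a union bound over $\Gcal_h\times\Pi_h\times\Gcal_{h+1}$ (hence the $|\Gcal|^2|\Pi|$ factor) gives the $\tau_L$ bound. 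Choosing the per-inequality failure probability $\delta'=\delta/(6H\Tmax)$ makes each of the three stated lower bounds on $\ntrain$ exactly what the inequalities demand, so all three deviations hold, for any fixed $j$ not exceeding the number of executed calls, with probability at least $1-3\delta'$.

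Next I would union-bound over the first $\Tmax$ calls to \dfslearn at each of the $H$ levels to define $\Ecal'$, the event that \eqref{eq:devb1mmd}--\eqref{eq:devb2mmd} hold for all those calls; this gives $\prob[\Ecal']\ge 1-3H\Tmax\delta'=1-\delta/2$.

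The remaining, genuinely algorithm-specific step is to show $\Ecal'\subseteq\Ecal$ and to read off $|\Sval_h|\le M$ and $|\Spol_h|\le\Tmax$. The crux is that on $\Ecal'$ the MMD test prunes correctly: if a call at level $h$ reaches a state $s$ that already owns a data set $D\in\Dval_h$, then $D$ and $\tilde D$ are both empirical samples of $O_s$, so two applications of \eqref{eq:devb1mmd}---legitimate because both $D$ and $\tilde D$ come from calls within the first-$\Tmax$ window---give $\sup_{g\in\Gcal_h}|\EEx_D[g(x_h)]-\EEx_{\tilde D}[g(x_h)]|\le 2\tau$ and hence $d_{\textrm{MMD}}\le 2\tau$; consequently a recursive call returns immediately at Line~\ref{lin:check_return} without adding data or descending. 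Thus every productive (non-early-returning) recursive call at level $h$ reaches a state not yet in $\Sval_h$ and, being recursive and not returning early, adds that fresh state to $\Dval_h$; so the states of $\Sval_h$ are pairwise distinct and $|\Sval_h|\le M$. A telescoping count over the DFS tree, identical in structure to the one in the proof of Lemma~\ref{lem:probE} and Lemma~8 of \citet{krishnamurthy2016contextual}, then bounds the number of \dfslearn calls per level: \metaalg contributes only $O(MKH\nexp)$ top-level calls per level (the single root plus $\nexp$ prefixes at each level in each of its $O(MK)$ outer iterations), each productive call spawns exactly $K$ children, and across the whole run at most $M$ recursive calls per level are productive, and summing these contributions never exceeds $\Tmax=M(K+1)(1+H\nexp)$. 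Hence $\Ecal'\subseteq\Ecal$, and since each \dfslearn call at level $h$ adds at most one data set to $\Dpol_h$, also $|\Spol_h|\le\Tmax$.

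I expect this last counting step to be the main obstacle, for two reasons. It is the only place where the two-sample-test pruning mechanism behaves differently from the state-identity test of \mainAlg, so the combinatorial bookkeeping---top-level versus recursive calls, and the interplay between $\Dval_h$, $\Dpol_h$, and $\Shit_h$---has to be redone carefully. And one must break the circularity between ``the deviation bounds hold for the first $\Tmax$ calls'' and ``there are at most $\Tmax$ calls'' by carrying out the count as an induction over calls in execution order, invoking \eqref{eq:devb1mmd} only for calls already known to lie in the first-$\Tmax$ window---exactly the device used in the proof of Lemma~\ref{lem:probE}. The concentration part, by contrast, is routine once the $O(K)$ range and variance estimates above are in hand.
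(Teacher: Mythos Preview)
Your plan matches the paper's proof: Hoeffding for \eqref{eq:devb1mmd}, Bernstein with $O(K)$ range and variance for the other two deviations (exactly the paper's invocation of Lemma~14 of \citet{jiang2017contextual}), a union bound over the first $\Tmax$ calls at each level to define $\Ecal'$, and then the two-sample argument $\sup_g|\EEx_D[g]-\EEx_{\tilde D}[g]|\le 2\tau$ to show that recursive calls reaching states already in $\Sval_h$ return at Line~\ref{lin:check_return}, which gives $|\Sval_h|\le M$ and lets the call count stay below $\Tmax$.

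Two slips in the counting you flagged as the obstacle. First, \metaalg's outer loop runs $MH$ times, not $O(MK)$, so the number of non-recursive \dfslearn calls per level is at most $MH\nexp$ (plus the root at level~1), not $O(MKH\nexp)$. Second---and this is where the structure genuinely differs from Lemma~\ref{lem:probE}---in Algorithm~\ref{alg:twosam} the early return requires \textsc{is\_recursive\_call}, so every non-recursive call unconditionally descends and spawns $K$ children. Your sketch bounds only the children of the $\le M$ productive \emph{recursive} calls and omits the children of the non-recursive ones. The correct per-level count is: non-recursive calls at $h$ ($\le MH\nexp$) plus $K$ times the productive calls at level $h-1$ (non-recursive $\le MH\nexp$ plus recursive-productive $\le M$), giving $(K+1)MH\nexp+KM\le M(K+1)(1+H\nexp)=\Tmax$. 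With those two fixes your argument is exactly the paper's.
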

\begin{proof}
Let us first focus on one call to \dfslearn, say at path $p$ at level $h$.
First, observe that the data set $\tilde D$ is
a set of $\ntrain$ transitions sampled i.i.d. from the state $s$ that is reached by $p$. By Hoeffding's
inequality and a union bound, with probability $1-\delta'$, for all $g
\in \Gcal_{h}$
\begin{align*}
\left| \EEx_{\tilde D}[g(x_h)] - \Ex_{s}[g(x_{h})]\right|
\leq \sqrt{\frac{\log(2|\Gcal|/\delta')}{2\ntrain}}.
\end{align*}
With $\delta' = \frac{\delta}{6HT_{\max}}$ the choice for $\ntrain$ let us bound the LHS by $\tau$.

For the random variable $K\one\{\pi(x_h) = a_h\} (r_h+ g'(x_{h+1}))$,
since $a_h$ is chosen uniformly at random, it is not hard to see that
both the variance and the range are at most $2K$ (see for example
Lemma~14 by \citet{jiang2017contextual}).  Applying Bernstein's
inequality and a union bound, for all $\pi \in \Pi_h$ and
$g \in \Gcal_{h+1}$, we have
\begin{align*}
|V_{\tilde D}(\pi; g') - V_{s}(\pi; g')| \leq \sqrt{\frac{4 K\log(2 |\Gcal||\Pi|/\delta')}{\ntrain}} + \frac{4K}{3\ntrain}\log(2 |\Gcal||\Pi|/\delta')
\end{align*}
with probability $1-\delta'$. As above, with $\delta' =
\frac{\delta}{6HT_{\max}}$ our choice of
$\ntrain$ ensures that this deviation is bound by $\tau_V$.

Similarly, we apply Bernstein's inequality to the random variable
$K\one\{\pi(x_h) = a_h \} (g(x_h)-r_h - g'(x_{h+1}))^2$ which has
range and variance at most $4K$.  Combined with a union bound over all
$g \in \Gcal_h, g' \in \Gcal_{h+1}, \pi \in \Pi_h$ we have that with
probability $1-\delta'$,
\begin{align*}
|L_{\tilde D}(g; \pi, g') - L_{s}(g; \pi, g')| \leq \sqrt{\frac{8 K\log(2 |\Gcal|^2|\Pi|/\delta')}{\ntrain}} + \frac{2K}{\ntrain}\log(2 |\Gcal|^2|\Pi|/\delta') \leq \tau_L.
\end{align*}
This last inequality is based on the choice for $\ntrain$ and
$\delta' = \frac{\delta}{6H\Tmax}$. For details on this concentration
bound see for example Lemma~14 by \citet{jiang2017contextual}.  Using
a union bound, the deviation bounds
\eqref{eq:devb1mmd}--\eqref{eq:devb2mmd} hold for a single call to
\dfslearn with probability $1 - 3 \delta'$.

Consider now the event $\Ecal'$ that these bounds hold for the first
$\Tmax$ calls at each level $h$. Applying a union bound let us bound
$\prob(\Ecal') \geq 1 - 3 H \Tmax \delta' = 1 - \frac{\delta}{2}$.  It
remains to show that $\Ecal' \subseteq \Ecal$.

First note that in event $\Ecal'$ in the first $\Tmax$ calls to
\dfslearn at level $h$, the algorithm does not call itself recursively during
a recursive call 
if $p$ leads to a state $s \in \Sval_h$. To see this assume $p$ leads
to a state $s \in \Sval_h$ and let $D \in \Dval_h$ be a data set
sampled from this state. This means that $\tilde D$ and $D$ are
sampled from the same distribution, and as such, we have for every
$g \in \Gcal_h$
    \begin{align}
       |\EEx_{\tilde D}[g(x)] - \EEx_{D}[g(x)]|
       \leq
        |\EEx_{\tilde D}[g(x)] - \Ex_{s}[g(x)]| + |\Ex_{s}[g(x)]- \EEx_{D}[g(x)]|\leq 2 \tau.\label{eq:above1222}
   \end{align}
   Therefore $d_{MMD} \leq 2\tau$, the condition in the first clause is
   satisfied, and the algorithm does not recurse. If this condition is
   not satisfied, the algorithm adds $\tilde D$ to
   $\Dval_h$. Therefore, the initial call to \dfslearn at the root can
   result in at most $MK$ recursive calls per level, since the
   identity tests must return true on identical states.

   Further, for any fixed level, we issue at most $MH\nexp$ additional
   calls to \dfslearn, since \metaalg has at most $MH$ iterations and
   in each one, \dfslearn is called $\nexp$ times per level. Any new
   state that we visit in this process was already counted by the $MK$
   calls per level in the initial execution of \dfslearn. On the other
   hand, these calls always descend to the children, so the number of calls to old states is at most $M(1+K)H\nexp$ per level. In total the number
   of calls to \dfslearn per level is at most
   $M(1+K)H\nexp + MK \leq \Tmax$, and $\prob(\Ecal) \leq \delta/2$
   follows.

   Further, the bound $|\Spol_h| \leq \Tmax$ follows from the fact
   that per call only one state can be added to $\Spol_h$ and there
   are at most $\Tmax$ calls.  The bound $|\Sval_h| \leq M$ follows
   from the fact that in $\Ecal$ no state can be added twice to
   $\Sval_h$ since as soon as it is in $\Sval_h$ once,
   $d_{MMD} \leq 2 \tau$ holds (see Eq.\eqref{eq:above1222}) and the
   current data set is not added to $\Dval_h$.
\end{proof}

\paragraph{Depth-first search and learning optimal values.}
We now prove that \polvalfun and \dfslearn produce good value function estimates.
\begin{proposition}
    \label{prop:mmdlearnedvalue}
    In event $\Ecal$, consider an execution of \polvalfun and let
    $\{\hat{g}_h,\hat{\pi}_h\}_{h \in [H]}$ denote the learned value
    functions and policies. Then every state $s$ in $\Shit_h$
    satisfies
\begin{align} \label{eq:induct_g}
\left|\Ex_{s} [\gh(x_h)] - \Ex_{s}[g^\star(x_h)] \right|
\le (H+1-h)(2M\tau_V + \sqrt{4M^2\tau_V + 2\Tmax \tau_L} + 8\tau),
\end{align}
and every learned state $s \in \Spol_h$ satisfies
\begin{align} \label{eq:immediate_loss}
V^\star(s) - Q^\star(s, \pih) 
\le 2 M \tau_V + 2 (H-h)(2M\tau_V + \sqrt{4M^2\tau_V + 2\Tmax\tau_L} + 8\tau).
\end{align}
\end{proposition}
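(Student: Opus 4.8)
The plan is to prove the two displays together by a joint backward induction on $h$, from $h=H+1$ down to $h=1$, establishing the policy bound \eqref{eq:immediate_loss} at each level first and then feeding it into the value bound \eqref{eq:induct_g}. Fix the execution of \polvalfun together with the data sets $\Dpol_h,\Dval_h$ present at that moment, abbreviate $V_s(\pi;g'):=\Ex_{x\sim s}\bigl[\Ex[r_h+g'(x_{h+1})\mid x_h=x,a_h=\pi(x)]\bigr]$, and note that by determinism of the dynamics $V^\star(s)=\Ex_s[g^\star(x_h)]=\max_\pi V_s(\pi;g^\star)$ and $Q^\star(s,\pi_h)=V_s(\pi_h;g^\star)$. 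Write $\epsilon_{h+1}:=(H-h)\bigl(2M\tau_V+\sqrt{4M^2\tau_V+2\Tmax\tau_L}+8\tau\bigr)$ for the right-hand side of \eqref{eq:induct_g} at level $h+1$. The base case $h=H+1$ is immediate because $\Gcal_{H+1}=\{x\mapsto0\}$ forces $\hat g_{H+1}=g^\star_{H+1}=0$. For the inductive step I would first reduce \eqref{eq:induct_g} from all of $\Shit_h$ to the learned states $\Sval_h$: any $s\in\Shit_h$ is either in $\Sval_h$, or was pruned by a recursive \dfslearn call because $d_{\mathrm{MMD}}\le2\tau$ against a data set drawn from some $s'\in\Sval_h$, in which case adding the two size-$\tau$ sampling deviations of \eqref{eq:devb1mmd} gives $\sup_{g\in\Gcal_h}|\Ex_s[g(x_h)]-\Ex_{s'}[g(x_h)]|\le4\tau$; since $\hat g_h,g^\star\in\Gcal_h$ this transfers \eqref{eq:induct_g} from $s'$ to $s$ at the cost of the $8\tau$ term in the per-level increment.

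For the policy bound \eqref{eq:immediate_loss} on a learned state $s\in\Spol_h$, start from $V^\star(s)-Q^\star(s,\hat\pi_h)=V_s(\pi^\star;g^\star)-V_s(\hat\pi_h;g^\star)$. Because every child $s\circ a$ lies in $\Shit_{h+1}$, the inductive bound \eqref{eq:induct_g} at level $h+1$ lets me replace $g^\star$ by $\hat g_{h+1}$ in both terms at cost $2\epsilon_{h+1}$. Next, policy-value completeness (Assumption~\ref{asm:polval_complete}) supplies $\pi^\star_{\hat g_{h+1}}\in\Pi_h$ that is pointwise greedy for $r+\hat g_{h+1}$, so $V_s(\pi^\star;\hat g_{h+1})\le V_s(\pi^\star_{\hat g_{h+1}};\hat g_{h+1})=\max_\pi V_s(\pi;\hat g_{h+1})$. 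Finally, since $\hat\pi_h$ maximizes $V_{\Dpol_h}(\cdot;\hat g_{h+1})$ over $\Pi_h$ and $\pi^\star_{\hat g_{h+1}}$ is a competitor, a standard empirical-maximizer argument with the $\tau_V$-deviation bound of $\Ecal$ (Definition~\ref{def:devbounds_mmd}) — after grouping the data sets of $\Dpol_h$ by the hidden state that generated them, of which there are at most $M$ — bounds $\max_\pi V_s(\pi;\hat g_{h+1})-V_s(\hat\pi_h;\hat g_{h+1})$ by $2M\tau_V$. Chaining the three inequalities gives \eqref{eq:immediate_loss}.

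For the value bound \eqref{eq:induct_g} it suffices, by the reduction above, to treat $s\in\Sval_h\subseteq\Spol_h$. Let $g^{\hat\pi_h}(x):=\Ex[r_h+\hat g_{h+1}(x_{h+1})\mid x_h=x,a_h=\hat\pi_h(x)]$ be the Bayes regressor of the loss $L_s(\cdot;\hat\pi_h,\hat g_{h+1})$, so that $\Ex_s[g^{\hat\pi_h}(x_h)]=V_s(\hat\pi_h;\hat g_{h+1})$ and the bias--variance identity gives $L_s(g;\hat\pi_h,\hat g_{h+1})-L_s(g^{\hat\pi_h};\hat\pi_h,\hat g_{h+1})=\Ex_s[(g-g^{\hat\pi_h})^2]\ge0$. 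The target $g^{\hat\pi_h}$ need not lie in $\Gcal_h$, but completeness provides $g_{\star,\hat g_{h+1}}\in\Gcal_h$, the Bayes regressor for $\pi^\star_{\hat g_{h+1}}$; greediness makes $g_{\star,\hat g_{h+1}}-g^{\hat\pi_h}\ge0$ the per-context action gap, so $\sum_{s'\in\Sval_h}\Ex_{s'}[(g_{\star,\hat g_{h+1}}-g^{\hat\pi_h})^2]\le2\sum_{s'\in\Sval_h}\bigl(V_{s'}(\pi^\star_{\hat g_{h+1}};\hat g_{h+1})-V_{s'}(\hat\pi_h;\hat g_{h+1})\bigr)\le4M^2\tau_V$ by the policy step and $|\Sval_h|\le M$. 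Since $\hat g_h$ minimizes $L_{\Dval_h}(\cdot;\hat\pi_h,\hat g_{h+1})$ over $\Gcal_h$ while $g_{\star,\hat g_{h+1}}$ is also in $\Gcal_h$, summing the $\tau_L$-deviation bound \eqref{eq:devb2mmd} over $\Dval_h$ yields $\Ex_s[(\hat g_h-g^{\hat\pi_h})^2]\le L_{\Sval_h}(\hat g_h)-L_{\Sval_h}(g^{\hat\pi_h})\le\sum_{s'}\Ex_{s'}[(g_{\star,\hat g_{h+1}}-g^{\hat\pi_h})^2]+2\Tmax\tau_L\le4M^2\tau_V+2\Tmax\tau_L$, hence $|\Ex_s[\hat g_h]-V_s(\hat\pi_h;\hat g_{h+1})|\le\sqrt{4M^2\tau_V+2\Tmax\tau_L}$ by Cauchy--Schwarz. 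Chaining $V_s(\hat\pi_h;\hat g_{h+1})$ back to $V^\star(s)$ via the policy step ($\le2M\tau_V$), the greedy optimality of $\pi^\star_{\hat g_{h+1}}$, and the inductive swap $\hat g_{h+1}\leftrightarrow g^\star$ ($\le\epsilon_{h+1}$) completes \eqref{eq:induct_g}.

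The main obstacle is the value-regression step. Unlike \mainAlg, whose analysis needs only realizability of $\Gcal$, here the regression target $g^{\hat\pi_h}$ is built from the \emph{learned} policy $\hat\pi_h$ and is in general not representable in $\Gcal_h$; only the target $g_{\star,\hat g_{h+1}}$ for the ideal greedy policy is, so a black-box least-squares bound is unavailable and the ``realizability gap'' $\Ex_{s}[(g_{\star,\hat g_{h+1}}-g^{\hat\pi_h})^2]$ must be routed through the policy-optimization error, which is why a term scaling like $M^2\tau_V$ appears under the square root. Getting the constants right also requires carefully distinguishing the number of distinct hidden states $M$ (which controls the policy step via grouping of data sets, and the number of $\Dval_h$ constraints) from the total number of collected data sets $\Tmax$, and verifying that \eqref{eq:induct_g} at level $h+1$ is only ever invoked at states known to lie in $\Shit_{h+1}$ — namely the children of learned states — and that the two-sample test correctly reduces arbitrary reached states to the at most $M$ states in $\Dval_h$.
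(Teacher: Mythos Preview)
Your proposal follows essentially the same approach as the paper's proof: backward induction on $h$, first establishing the policy bound via the pointwise-greedy policy $\pi^\star_{\hat g_{h+1}}$ from Assumption~\ref{asm:polval_complete} and an empirical-maximizer argument over $\Dpol_h$, then the value bound via the Bayes-regressor decomposition with $g_{\star,\hat g_{h+1}}\in\Gcal_h$ as a surrogate target whose excess risk is controlled by the policy-optimization error, and finally the same MMD-based transfer from $\Sval_h$ to $\Shit_h$. The only cosmetic differences are that the paper performs the $\Shit_h\to\Sval_h$ reduction at the end of the inductive step rather than upfront, and it obtains the $2\epsilon_V$ policy-step bound by summing the nonnegative per-state differences $V_{s'}(\pi^\star_{\hat g_{h+1}};\hat g_{h+1})-V_{s'}(\hat\pi_h;\hat g_{h+1})$ over the whole sequence $\Spol_h$ rather than by ``grouping'' data sets by hidden state.
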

\begin{proof}
  We prove both inequalities simultaneously by induction over $h$. For
  convenience, we use the following short hand notations:
  $\epsilon_V = M \tau_V$ and $\epsilon_L = \Tmax \tau_L$.  Using this
  notation, in event $\Ecal$,
  $|V_{\Dval_h}(\pi; g') - V_{\Sval_h}(\pi; g')| \le \epsilon_V$ and
  $|L_{\Dpol_h}(g; \pi, g') - L_{\Spol_h}(g; \pi, g') | \le
  \epsilon_L$ hold for all $g, g'$ and $\pi$.

  \paragraph{Base case:} Both statement holds trivially for $h=H+1$
  since the LHS is $0$ and the RHS is non-negative. In particular
  there are no actions, so Eq.~\eqref{eq:immediate_loss} is trivial.

\paragraph{Inductive case:} Assume that Eq.~\eqref{eq:induct_g} holds on level $h+1$. For any learned $s \in \Spol_h$, we first show that $\hat \pi_h$ achieves high value compared to $\pi_{\ghn}^\star$ (recall its definition from Assumption~\ref{asm:polval_complete}) under $V_s(\cdot; \ghn)$:
\begin{align*}
V_s(\pi_{\ghn}^\star; \ghn) - V_s(\pih; \ghn)
\le &~ \sum_{s \in \Spol_h} V_s(\pi_{\ghn}^\star; \ghn) - V_s(\pih; \ghn) \tag{$\pi_{\ghn}^\star$ is optimal w.r.t.~$\ghn$ in all $s$}  \\
= &~ V_{\Spol_h}(\pi_{\ghn}^\star; \ghn) - V_{\Spol_h}(\pih; \ghn) \\
\le &~ V_{\Dpol_h}(\pi_{\ghn}^\star; \ghn) - V_{\Dpol_h}(\pih; \ghn) + 2\epsilon_V \leq 2\epsilon_V. \numberthis \label{eq:mmd_relative_loss}
\end{align*}
Eq.~\eqref{eq:immediate_loss} follows as a corollary:
\begin{align*}
&~ V^\star(s) - Q^\star(s, \pih) \\
= &~ V_s(\pi^\star; g^\star)  - V_s(\pih; g^\star)  \tag{definition of $V_s$}\\
\le &~ V_s(\pi^\star; \ghn) - V_s(\pih; \ghn) 
    + 2 \sup_{s' \textrm{ being child of } s}|\Ex_{x' \sim s'}[\ghn(x_{h+1}) - g^\star(x_{h+1})]|  \\
\le &~ V_s(\pi_{\ghn}^\star; \ghn) -  V_s(\pih; \ghn) 
    +  2 (H-h)(2\epsilon_V + \sqrt{4 M\epsilon_V + 2\epsilon_L} + 8\tau)
    \tag{$s \in \Sval_h \Rightarrow s' \in \Shit_{h+1}$ and induction}\\
\le &~ 2 \epsilon_V + 2 (H-h)(2 \epsilon_V + \sqrt{4M \epsilon_V + 2\epsilon_L} + 8\tau). \tag{using Eq.\eqref{eq:mmd_relative_loss}}
\end{align*}
This proves Eq.~\eqref{eq:immediate_loss} at level $h$. The rest of the proof proves Eq.\eqref{eq:induct_g}. First we introduce and recall the definitions:
\begin{align*}
&~  g_{\pih,\ghn}(x) = \Ex[r + \ghn(x_{h+1}) \mid x_h = x, a_h = \pih(x)],  \\
&~ g_{\star,\ghn}(x) = \Ex[r + \ghn(x_{h+1}) \mid x_h = x, a_h = \pi^\star_{\ghn}(x)].
\end{align*}
Note that $g_{\pih,\ghn} \notin \Gcal_h$ in general, but it is the
Bayes optimal predictor for the squared losses
$L_s(\cdot; \pih, \ghn)$ for all $s$ simultaneously.  On the other
hand, Assumption~\ref{asm:polval_complete} guarantees that
$g_{\star,\ghn} \in \Gcal_h$, for any $\ghn$.

The LHS of Eq.\eqref{eq:induct_g} can be bounded as
\begin{align} \label{eq:value_learned_decompose}
\left|\Ex_{s} [g^\star(x_h)] - \Ex_{s}[\gh(x_h)] \right|
\le \left|\Ex_{s} [g^\star(x_h)] - \Ex_{s}[g_{\pih, \ghn}(x_h)] \right| 
+ \left|\Ex_{s}[g_{\pih, \ghn}(x_h)] - \Ex_{s}[\gh(x_h)] \right|.
\end{align}
To bound the first term in Eq.\eqref{eq:value_learned_decompose},
\begin{align*}
\left|\Ex_{s} [g^\star(x_h)] - \Ex_{s}[g_{\pih, \ghn}(x_h)] \right|
\le &~ \left|\Ex_{s} [g^\star(x_h)] - \Ex_{s} [g_{\star, \ghn}(x_h)]\right| 
+ \left|\Ex_{s} [g_{\star, \ghn}(x_h)] - \Ex_{s}[g_{\pih, \ghn}(x_h)] \right| \\
= &~ \left|\Ex_{s} \left[g^\star(x_h)- g_{\star, \ghn}(x_h)\right]\right| + \left| V_s(\pi_{\ghn}^\star; \ghn) - V_s(\pih; \ghn) \right| \\
\le &~ \left|\Ex_{s} \left[g^\star(x_h)- g_{\star, \ghn}(x_h)\right]\right| + 2\epsilon_V. \tag{using Eq.\eqref{eq:mmd_relative_loss}}
\end{align*}
    Now consider each individual context $x_h$ emitted in $s \in \Scal_h$: 
\begin{align*}
&~ g^\star(x_h)- g_{\star, \ghn}(x_h) \\
    = &~ \Ex_{r_h\sim R(x_h, \pi^\star(x_h))}[r_h] + \Ex_{s \circ \pi^\star(x_h)}[g^\star(x_{h+1})] 
    - \max_{a\in\Acal} \left(\Ex_{r_h \sim R(x_h, a)}[r_h] + \Ex_{s\circ a}[\ghn(x_{h+1})]\right) \\
\le &~ \Ex_{r_h \sim R(x_h, \pi^\star(x_h))}[r_h] + \Ex_{s \circ \pi^\star(x_h)}[\ghn(x_h)] \\&~
- \max_{a\in\Acal} \left(\Ex_{r_h \sim R(x_h, a)}[r] + \Ex_{s \circ a}[\ghn(x_h)]\right) 
    + |\Ex_{s \circ \pi^\star(x_h)}[\ghn(x_{h+1}) - g^\star(x_{h+1})]| \\
\le &~ |\Ex_{s \circ \pi^\star(x_h)}[\ghn(x_{h+1}) - g^\star(x_{h+1})]| \\
\le &~ (H-h)(2\epsilon_V + \sqrt{4M\epsilon_V + 2\epsilon_L} + 8\tau).
\end{align*}
    The second inequality is true since the second term optimizes over $a\in\Acal$ and the first term is the special case of $a = \pi^\star(x_h)$. The last inequality follows from the fact that if $s \in \Spol_h \Rightarrow s \circ a \in \Shit_{h+1}$ and we can therefore apply the induction hypothesis.
We can use the same argument to lower bound the above quantity. This gives
\begin{align*}
\left|\Ex_{s} [g^\star(x_h)] - \Ex_{s}[g_{\pih, \ghn}(x_h)] \right|
\le &~ (H-h)(2\epsilon_V + \sqrt{4M\epsilon_V + 2\epsilon_L} + 8\tau) + 2\epsilon_V.
\end{align*}
Next, we work with the second term in Equation~\eqref{eq:value_learned_decompose}:
\begin{allowdisplaybreaks}
\begin{align*}
&~ \left|\Ex_{s} [\gh(x_h)] - \Ex_{s}[g_{\pih, \ghn}(x_h)] \right| \\
\le &~ \sqrt{\Ex_{s} [\left(\gh(x_h) - g_{\pih, \ghn}(x_h)\right)^2]} \tag{Jensen's inequality} \\
= &~ \sqrt{L_s(\gh; \pih, \ghn) - L_s(g_{\pih, \ghn}; \pih, \ghn)} \tag{$g_{\pih, \ghn}$ is Bayes-optimal for $L_s(\cdot; \pih, \ghn)$} \\
\le &~ \sqrt{L_{\Sval_h}(\gh; \pih, \ghn) - L_{\Sval_h}(g_{\pih, \ghn}; \pih, \ghn)} \tag{\ldots for all $s$} \\
\le &~ \sqrt{L_{\Dval_h}(\gh; \pih, \ghn) - L_{\Sval_h}(g_{\pih, \ghn}; \pih, \ghn) + \epsilon_L} \\
\le &~ \sqrt{L_{\Dval_h}(g_{\star, \ghn}; \pih, \ghn) - L_{\Sval_h}(g_{\pih, \ghn}; \pih, \ghn) + \epsilon_L} \tag{$\hat g_h$ minimizes the first term over $\Gcal_h$, and $g_{\star, \ghn} \in \Gcal_h$ from Assumption~\ref{asm:polval_complete}} \\
\le &~ \sqrt{L_{\Sval_h}(g_{\star, \ghn}; \pih, \ghn) - L_{\Sval_h}(g_{\pih, \ghn}; \pih, \ghn) + 2 \epsilon_L} \\
= &~ \sqrt{\sum_{s \in \Sval_h} \Ex_{x\sim s}[(g_{\star, \ghn}(x) - g_{\pih, \ghn}(x))^2] + 2 \epsilon_L} \\
\le &~ \sqrt{\sum_{s \in \Sval_h} \Ex_{x\sim s}[2|g_{\star, \ghn}(x) - g_{\pih, \ghn}(x)|] + 2\epsilon_L} \tag{range of variables} \\
= &~ \sqrt{\sum_{s \in \Sval_h} 2\Ex_{x\sim s}[g_{\star, \ghn}(x) - g_{\pih, \ghn}(x)] + 2\epsilon_L} \tag{$g_{\star, \ghn}(x) \ge g_{\pih, \ghn}(x)$ ~ $\forall x$} \\
= &~ \sqrt{\sum_{s \in \Sval_h} 2(V_s(\pi_{\ghn}^\star; \ghn) - V_s(\pih; \ghn)) + 2\epsilon_L} \\
\le &~ \sqrt{4 M \epsilon_V + 2\epsilon_L}. \tag{$|\Sval_h|\le M$ and Eq.\eqref{eq:mmd_relative_loss}} 
\end{align*}
\end{allowdisplaybreaks}
Put together, we get the desired result for states $s \in \Sval_h$:
\begin{align}\label{eq:value_learned_guarantee}
\left|\Ex_{s} [g^\star(x_h)] - \Ex_{s}[\gh(x_h)] \right|
\le (H-h)(2\epsilon_V + \sqrt{4M\epsilon_V + 2\epsilon_L} + 8\tau) + 2\epsilon_V + \sqrt{4M\epsilon_V + 2\epsilon_L}.
\end{align}
It remains to deal with states $s \in \Shit_h \setminus \Sval_h$. According to the algorithm, this only happens when the MMD test suggests that the data set $\tilde D$ drawn from $s$ looks very similar to a previous data set $D \in \Dval_h$, which corresponds to some $s' \in \Sval_h$. So,
\begin{align*}
    &~ |\Ex_{s}[\gh(x_h)] - \Ex_{s}[g^\star(x_h)]| \\
\le &~ |\Ex_{s'}[\gh(x_h)] - \Ex_{s'}[g^\star(x_h)]| 
+ |\Ex_{s'}[\gh(x_h)] - \Ex_{s}[\gh(x_h)]| 
+ |\Ex_{s}[g^\star(x_h)] - \Ex_{s'}[g^\star(x_h)]| \\
\le &~ (H-h)(2\epsilon_V + \sqrt{4M\epsilon_V + 2\epsilon_L} + 8\tau) + 2\epsilon_V + \sqrt{4M\epsilon_V + 2\epsilon_L} \tag{$s' \in\Sval_h$ \& Eq.\eqref{eq:value_learned_guarantee}} \\
    &~  + |\EEx_{D}[\gh(x_h)] - \EEx_{\tilde D}[\gh(x_h)]| + |\EEx_{D}[g^\star(x_h)] - \Ex_{\tilde D}[g^\star(x_h)]|
+ 4\tau \\
\le &~ (H-h)(2\epsilon_V + \sqrt{4M\epsilon_V + 2\epsilon_L} + 8\tau) + 2\epsilon_V + \sqrt{4M\epsilon_V +2\epsilon_L} 
+ 2\tau + 2\tau + 4\tau \tag{MMD test fires} \\
= &~ (H+1-h)(2\epsilon_V + \sqrt{4M\epsilon_V + 2\epsilon_L} + 8\tau). \tag*{\qedhere}
\end{align*}
\end{proof}

\paragraph{Quality of Learned Policies and Meta-Algorithm Analysis.}
After quantifying the estimation error of the value function returned by \polvalfun, it remains to translate that into a bound on the suboptimality of the returned policy:
\begin{proposition}
    \label{prop:mmd_polguarantee}

    Assume we are in event $\Ecal$. Then the policy $\hat{\pi} = \hat \pi_{1:H}$
    returned by \polvalfun in Algorithm~\ref{alg:twosam} satisfies
    \begin{align*}
        V^{\hat{\pi}} \ge V^\star - p_{ul}^{\hat \pi} - 2 HM\tau_V -  H^2(2M\tau_V + \sqrt{4M^2\tau_V + 2\Tmax \tau_L} + 8\tau)
    \end{align*}
    where
    $p_{ul}^{\hat \pi}= \prob( \exists h \in [H] ~:~ s_h \notin
    \Spol ~|~ a_{1:H} \sim \hat\pi)$ is the probability of hitting
    an unlearned state when following $\hat \pi$.
\end{proposition}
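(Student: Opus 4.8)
The plan is to follow the template of the proof of Proposition~\ref{prop:localvalue_polguarantee}, substituting the value-accuracy guarantee of Proposition~\ref{prop:mmdlearnedvalue} for the one used there. First I would invoke the performance-difference identity of Lemma~\ref{lem:subopteq}, so that
\[
V^\star - V^{\hat\pi} = \Ex\Big[ \textstyle\sum_{h=1}^H \big(V^\star(s_h) - Q^\star(s_h,\hat\pi_h)\big) \,\Big|\, a_{1:H}\sim\hat\pi \Big],
\]
noting that each summand is non-negative because $Q^\star(s,\cdot)\le V^\star(s)$ pointwise. Unlike the constrained variant, the policy $\hat\pi_h$ is the unconstrained $\argmax$ of Line~\ref{lin:mmd_pol_opt}, so it is always well defined and no feasibility lemma is needed.

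Next I would separate the trajectory distribution according to whether it ever escapes the learned states. Let $\tau$ be the first level $h$ at which $s_h \notin \Spol_h$, with $\tau = H+1$ if this never happens; then $\prob[\tau \le H] = p_{ul}^{\hat\pi}$. Applying the one-step recursion inside the proof of Lemma~\ref{lem:subopteq} only up to level $\tau-1$ gives
\[
V^\star - V^{\hat\pi} = \Ex\Big[ \textstyle\sum_{h=1}^{\tau-1}\big(V^\star(s_h) - Q^\star(s_h,\hat\pi_h)\big) + \big(V^\star(s_\tau) - V^{\hat\pi}(s_\tau)\big) \Big],
\]
where the final term is $0$ when $\tau = H+1$ and is at most $V^\star(s_\tau)\le 1$ otherwise (since $\sum_h r_h \le 1$ almost surely). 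Hence the contribution of the event $\{\tau\le H\}$ is at most $p_{ul}^{\hat\pi}$. For every $h < \tau$ we have $s_h \in \Spol_h$, so Eq.~\eqref{eq:immediate_loss} of Proposition~\ref{prop:mmdlearnedvalue} bounds $V^\star(s_h) - Q^\star(s_h,\hat\pi_h)$; since these bounds are non-negative, I may extend the inner sum from $h<\tau$ to all $h\in[H]$, obtaining
\[
V^\star - V^{\hat\pi} \le p_{ul}^{\hat\pi} + \sum_{h=1}^H \Big[ 2M\tau_V + 2(H-h)\big(2M\tau_V + \sqrt{4M^2\tau_V + 2\Tmax\tau_L} + 8\tau\big) \Big].
\]

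Finally I would carry out the elementary arithmetic: $\sum_{h=1}^H 2M\tau_V = 2HM\tau_V$ and $\sum_{h=1}^H 2(H-h) = H(H-1) \le H^2$, which reproduces the claimed inequality $V^{\hat\pi} \ge V^\star - p_{ul}^{\hat\pi} - 2HM\tau_V - H^2(2M\tau_V + \sqrt{4M^2\tau_V + 2\Tmax\tau_L} + 8\tau)$. The only step demanding any care is the ``stopped'' form of Lemma~\ref{lem:subopteq}: one must verify that truncating the telescoping expansion at the state-dependent level $\tau$ is legitimate, which it is because the one-step identity in that lemma holds state by state and $\tau$ is determined by the visited states alone; the remainder is bookkeeping already performed (in a more complicated, $\Tmax$-dependent form) for Proposition~\ref{prop:localvalue_polguarantee}.
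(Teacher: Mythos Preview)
Your proof is correct and follows essentially the same approach as the paper: invoke Lemma~\ref{lem:subopteq}, split off the probability of leaving $\Spol$, apply the per-step bound~\eqref{eq:immediate_loss} from Proposition~\ref{prop:mmdlearnedvalue} on learned states, and sum. The only (purely notational) wrinkle is that you reuse the symbol $\tau$ for the stopping time, which clashes with the deviation parameter $\tau$ appearing in the statement itself; pick a different letter.
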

\begin{proof}
    Proposition~\ref{prop:mmdlearnedvalue} states that for every learned state $s \in \Spol_h$
\begin{align}
V^\star(s) - Q^\star(s, \pih)
\le 2 M\tau_V + 2 (H-h)(2M\tau_V + \sqrt{4M^2\tau_V + 2\Tmax \tau_L} + 8\tau).
\end{align}
Using Lemma~\ref{lem:subopteq}, we can show that $\hat \pi$ yields expected return that is optimal up to
\begin{align*}
    V^\star - V^{\hat \pi} \leq &
    p_{ul}^{\hat \pi} + (1 -p_{ul}^{\hat \pi}) \sum_{h=1}^H
    (2 M\tau_V + 2 (H-h)(2M\tau_V + \sqrt{4M^2\tau_V + 2\Tmax \tau_L} + 8\tau))
    \\
    \leq &
    p_{ul}^{\hat \pi} + 2 HM\tau_V +  2(2M\tau_V + \sqrt{4M^2\tau_V + 2\Tmax \tau_L} + 8\tau)\sum_{h=1}^H
    (H-h)\\
    \leq &
    p_{ul}^{\hat \pi} + 2 HM\tau_V +  H^2(2M\tau_V + \sqrt{4M^2\tau_V + 2\Tmax \tau_L} + 8\tau)
     . \qedhere
\end{align*}
\end{proof}

\begin{lemma}
    \label{lem:metaalg_mmd}
    Consider running \metaalg with \dfslearn and \polvalfun (Algorithm~\ref{alg:metaalg} + \ref{alg:twosam}) with parameters
    \begin{align*}
        \nexp \geq & \frac{8}{\epsilon}\ln\left(\frac{4MH}{\delta}\right),
        \qquad n_{eval} \geq \frac{32}{\epsilon^2}\ln\left( \frac{8MH}{\delta}\right),\\
        \tau =& \frac{\epsilon}{2^6 3 H^2},\quad
        \tau_V = \frac{\epsilon^2}{2^8 3^4 M^2 H^4},\quad
        \tau_L = \frac{\epsilon^2}{2^7 3^2 H^4 \Tmax}
    \end{align*}
    Then with probability at least $1-\delta$,
    \metaalg returns a policy that is at least $\epsilon$-optimal
    after at most $MK$ iterations.
\end{lemma}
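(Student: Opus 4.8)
I would mirror the proof of Lemma~\ref{lem:metaalg} (the meta-algorithm analysis for \mainAlg) almost line for line, substituting for each ingredient its two-sample-test analogue. The first step is to invoke Lemma~\ref{lem:probEmmd} with the stated $\ntrain$ --- which is exactly the value that lemma requires for the chosen $\tau,\tau_V,\tau_L$ --- so that the good event $\Ecal$ of Definition~\ref{def:devbounds_mmd} holds except with probability $\delta/2$; on $\Ecal$ one also gets $|\Sval_h|\le M$ and $|\Spol_h|\le\Tmax$, which keeps the definition $\Tmax=M(K+1)(1+H\nexp)$ self-consistent. Next, conditioned on $\Ecal$, I would apply Proposition~\ref{prop:mmdlearnedvalue} at $h=1$, $s=\rt$ (the root lies in $\Shit_1$ since \metaalg opens with $\dfslearn(\rt)$) and combine it with the deviation bound \eqref{eq:devb1mmd} on the single dataset in $\Dval_1$ to conclude $|\hat V^{(k)}-V^\star|\le\epsilon/8$ for every round $k$; here I need to remark that, unlike in \mainAlg where the stored root value is literally fixed, $\hat g_1$ is re-fit each round, so the claim is that the bound holds uniformly over rounds. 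Finally I would use Hoeffding's inequality together with a union bound over the at most $MH$ candidate policies and the stated $\neval$ to get $|\hat V^{\hat\pi^{(k)}}-V^{\hat\pi^{(k)}}|\le\epsilon/8$ for all $k$, except with probability $\delta/4$.

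With these two $\epsilon/8$ estimates in hand, the rest follows the \mainAlg template. If \metaalg returns at round $k$, the termination test $\hat V^{(k)}\le\hat V^{\hat\pi^{(k)}}+\epsilon/2$ gives $V^\star-V^{\hat\pi^{(k)}}\le 3\epsilon/4\le\epsilon$. If it does not return, the same estimates give $V^\star-V^{\hat\pi^{(k)}}\ge\epsilon/4$; feeding this into Proposition~\ref{prop:mmd_polguarantee} and rearranging forces $p_{ul}^{\hat\pi^{(k)}}\ge\epsilon/4-\bigl(2HM\tau_V+H^2(2M\tau_V+\sqrt{4M^2\tau_V+2\Tmax\tau_L}+8\tau)\bigr)\ge\epsilon/8>0$, so some state reachable under $\hat\pi^{(k)}$ is not in $\Spol$. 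Because the hidden-state dynamics are deterministic, each of the $\nexp$ trajectories rolled out from $\hat\pi^{(k)}$ misses every unlearned state independently with probability at most $(1-\epsilon/8)^{\nexp}\le e^{-\epsilon\nexp/8}$, so the choice of $\nexp$ and a union bound over the $\le MH$ rounds ensure (except with probability $\delta/4$) that every non-terminal round produces a trajectory hitting an unlearned state; the \dfslearn call issued on the corresponding action prefix of that trajectory then adds the state to $\Spol$, because the early return of Line~\ref{lin:check_return} fires only on recursive calls. Thus $|\Spol|$ strictly grows on every non-terminal round, and since $\sum_h|\Scal_h|\le MH$, the algorithm terminates --- outputting an $\epsilon$-optimal policy --- within $MH$ rounds. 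A union bound over the three failure events ($\Ecal$, the evaluation concentration, the exploration step) yields total failure probability at most $\delta$.

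The one place where the concrete constants matter --- and the step I expect to be the main obstacle --- is verifying $2HM\tau_V+H^2(2M\tau_V+\sqrt{4M^2\tau_V+2\Tmax\tau_L}+8\tau)\le\epsilon/8$ and, likewise, $\tau+H(2M\tau_V+\sqrt{4M^2\tau_V+2\Tmax\tau_L}+8\tau)\le\epsilon/8$ for the root value. Using $\sqrt{a+b}\le\sqrt a+\sqrt b$, this reduces to bounding the three leading contributions $H^2\cdot 8\tau$, $2MH^2\sqrt{\tau_V}=H^2\sqrt{4M^2\tau_V}$, and $H^2\sqrt{2\Tmax\tau_L}$, plus the lower-order $M$-dependent pieces; the settings $\tau=\Theta(\epsilon/H^2)$, $\tau_V=\Theta(\epsilon^2/(M^2H^4))$, $\tau_L=\Theta(\epsilon^2/(H^4\Tmax))$ are chosen precisely so that these three are each a small fixed fraction of $\epsilon$ (roughly $\epsilon/24$, $\epsilon/72$, and $\epsilon/24$), so their sum sits comfortably below $\epsilon/8$. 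Aside from this constant-chasing, everything transfers verbatim from the \mainAlg meta-algorithm analysis, with $\Spol$ in the role of $\Slearned$ and Propositions~\ref{prop:mmdlearnedvalue}--\ref{prop:mmd_polguarantee} replacing Propositions~\ref{prop:Vaccuracy} and~\ref{prop:localvalue_polguarantee}.
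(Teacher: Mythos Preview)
Your proposal is correct and follows essentially the same approach as the paper's own proof, which consists of the single sentence ``completely analogous to the proof of Lemma~\ref{lem:metaalg} except with using Proposition~\ref{prop:mmd_polguarantee} instead of Proposition~\ref{prop:localvalue_polguarantee}'' together with the constant verification $2HM\tau_V+H^2(\cdots)\le 6MH^2\sqrt{\tau_V}+H^2\sqrt{2\Tmax\tau_L}+8H^2\tau$, each term set to $\epsilon/24$. Your write-up is in fact more careful than the paper's: you make explicit the use of Proposition~\ref{prop:mmdlearnedvalue} at the root (which the paper leaves implicit), you flag that $\hat g_1$ is re-fit each round so the $\epsilon/8$ bound must hold uniformly, and you note why the non-recursive \dfslearn calls in Line~\ref{lin:newlearn} bypass the early return and therefore actually add the freshly hit state to $\Spol$.
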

\begin{proof}
The proof is completely analogous to the proof of Lemma~\ref{lem:metaalg} except with using Proposition~\ref{prop:mmd_polguarantee} instead of Proposition~\ref{prop:localvalue_polguarantee}.
    We set the parameters $\tau$, $\tau_L$ and $\tau_V$ so that the policy guarantee of Proposition~\ref{prop:mmd_polguarantee}
is
    $V^{\hat \pi} \geq V^\star - p_{ul}^{\hat \pi} - \epsilon/8$. More specifically, we bound the guaranteed gap as
    \begin{align*}
        & 2 HM\tau_V +  H^2(2M\tau_V + \sqrt{4M^2\tau_V + 2\Tmax \tau_L} + 8\tau)
    \\ \leq
        & 2 MH\tau_V +  2MH^2\tau_V + 2MH^2 \sqrt{\tau_V} + H^2\sqrt{2\Tmax \tau_L} + 8H^2\tau
    \\ \leq
        & 6 MH^2\sqrt{\tau_V} + H^2\sqrt{2\Tmax \tau_L} + 8H^2\tau
    \end{align*}
    and then set $\tau$, $\tau_L$ and $\tau_V$ so that each terms evaluates to $\epsilon/24$.
\end{proof}

\paragraph{Proof of Theorem~\ref{thm:mmd_samplecomplex}.}
We now have all parts to complete the proof of
Theorem~\ref{thm:mmd_samplecomplex}.

\begin{proof}
    For the calculation, we instantiate all the parameters as
\begin{align*}
        \nexp = & \frac{8}{\epsilon}\ln\left(\frac{4MH}{\delta}\right),
        \quad \neval = \frac{32}{\epsilon^2}\ln\left( \frac{8MH}{\delta}\right),\quad
        \ntrain =  16K \left(\frac{2}{\tau_L^2} + \frac{1}{\tau_V^2} \right) \ln \left(\frac{ 12 H\Tmax |\Gcal|^2|\Pi|}{\delta} \right).
        \\
        \tau =& \frac{\epsilon}{2^6 3 H^2},\quad
        \tau_V = \frac{\epsilon^2}{2^8 3^4 M^2 H^4},\quad
        \tau_L = \frac{\epsilon^2}{2^7 3^2 H^4 \Tmax}\quad
    \Tmax = M(K+1)(1 + H\nexp).
\end{align*}
These settings suffice to apply all of the above lemmas for these algorithms and therefore
with these settings the algorithm outputs a policy that is at most
$\epsilon$-suboptimal, except with probability $\delta$. For the
sample complexity, since $\Tmax$ is an upper bound on the number of
datasets we collect (because $\Tmax$ is an upper bound on the number of execution of
\dfslearn at any level), and we also $\neval$ trajectories for each of the $MH$
iterations of \metaalg, the total sample complexity is
\begin{align*}
 H\Tmax \ntrain  + MH\neval = \otil\left(\frac{M^5 H^{12} K^4}{\epsilon^7}\log(|\Gcal||\Pi|/\delta) \log^3(1/\delta)\right). \tag*{\qedhere}
\end{align*}
\end{proof}

\subsection{Global Policy Algorithm}
\label{sec:global_policy}

\newcommand{\taupol}{\tau_{pol}}
\newcommand{\tauval}{\tau_{val}}
\newcommand{\pruned}{\textsc{Pruned}}
\newcommand{\learned}{\textsc{Learned}}
\newcommand{\eqs}{=_{s}}

See Algorithm~\ref{alg:global_policy}. 
As the other algorithms, this method learns states using depth-first search.
The state identity test is similar to that of \mainAlg at a high level: for any new path $p$, we derive an upper bound and a lower bound on $V^\star(p)$, and prune the path if the gap is small. Unlike in \mainAlg where both bounds are derived using the value function class $\Gcal$, here only the upper bound is from a value function (see Line~\ref{lin:stateidcond}), and the lower bound comes from \emph{Monte-Carlo roll-out} with a near-optimal policy, which avoids the need for on-demand exploration.  

More specifically, 
the global policy algorithm does not store data sets but maintains a global
policy, a set of learned  paths, and a set of pruned paths, all of which are updated over time. 
We always guarantee that $\hat \pi_{h:H}$ is near-optimal for any learned state at level $h$, and leverage this property to conduct state-identity test: if a new path $p$ leads to the same state as a learned path $q$, then Eq.\eqref{prog:global_upper} yields a tight upper bound on $V^\star(p)$, which can be achieved by $\hat \pi_{h:H}$ up to some small error and we check by Monte-Carlo roll-outs. 
If the test succeeds,
the path $p$ is added to the set $\pruned(h)$. Otherwise, all successor states are learned (or pruned) in a recursive manner, after which the state
itself becomes learned (i.e., $p$ added to $\learned(h)$).  Then, the policy at level $h$ is updated to be near-optimal for the newly learned state in addition to the previous ones (Line~\ref{lin:glb_pol_fit}). 
Once we change the global policy, however, all the pruned states need to be re-checked (Line~\ref{lin:recheck_pruned}), as their optimal values are only guaranteed to be realized by the previous global policy and not necessarily by the new policy. 

\begin{algorithm*}[htbp]
\SetKwInOut{Inputa}{Input}
\SetKwInOut{Outputa}{Output}
\SetKwProg{myfun}{Function}{}{}

\SetKwFunction{dfslearn}{dfslearn}
\SetKwFunction{metaalg}{main}
\SetKwFunction{testlearned}{TestLearned}

    \myfun{\metaalg}{
      Global $\textsc{learned}(h)$, $h\in [H]$\;
      Global $\textsc{pruned}(h)$, $h \in [H]$\;
      Global $\{\hat{\pi}_h\}_{h \in [H]}$\;
      \dfslearn($\circ$)\;
      \Return{$\{\hat{\pi}_h\}_{h \in [H]}$}\;
    }

    \myfun{\testlearned($p,h$)}{
      Collect dataset $D = \{(x_h,\bar{r})\}$ of size $\ntest$ where $x_h\sim p$, $a_{h:H} \sim \hat{\pi}_{h:H}$, $\bar{r} = \sum_{h'=h}^H r_{h'}$
      \label{lin:sample_d_global}\;
      \For{$q \in \textsc{learned}(h)$}{
        Collect dataset $D'_q = \{(x_h,\bar{r})\}$ of size $\ntest$ where $x_h\sim q$, $a_{h:H} \sim \hat{\pi}_{h:H}$, $\bar{r} = \sum_{h'=h}^H r_{h'}$
       \label{lin:sample_dprime_global} 
        \;
        Solve
        \begin{align} \label{prog:global_upper}
          V_{opt} &= \max_{g \in \Gcal} \Ex_{D} [g(x_h)] 
            \textrm{ s.t. } 
            \Ex_{D'_q}[g(x_h) - \bar{r}] \le \phi_h+2\tauval
        \end{align}
        \If{$V_{opt} \le \Ex_{D'_q}[\bar{r}] + \phi_h +4\tauval$ 
        and 
    $\Ex_{D}[\bar{r}] \ge \Ex_{D'_q}[\bar{r}] - 2\tauval$
    \label{lin:stateidcond}
    }{
          \Return{true}\;
        }
      }
      \Return{false}\;
    }

\myfun{\dfslearn($p$)}{
  Let $h = |p|-1$ the current level\;
  \If{\texttt{not\_called\_from\_Line\_\ref{lin:learn_pruned}} and \testlearned($p,h$)}{
    Add $p$ to $\textsc{pruned}(h)$\;
    \Return
    \label{lin:retlearned}\;
  }
  \For{$a \in \Acal$}{
    \dfslearn($p \circ a$) \label{lin:glb_pol_descend}\;
  }
  Add $p$ to $\textsc{learned}(h)$\;
  \For{$q \in \textsc{learned}(h)$}{
    Collect dataset $D_q$ = $\{(x_h,a_h,\bar{r})\}$ of size $\ntrain$ where $x_h \sim q$, $a_h \sim \textrm{Unif}$, $a_{h+1:H} \sim \hat{\pi}_{h+1:H}$, $\bar{r} = \sum_{h'=h}^H r_{h'}$
    \label{lin:sample_learnedq}
    \;
    $\hat V(q) \gets \max_{\pi \in \Pi} \Ex_{D_q} [K\one\{a_h = \pi(x_h)\} \bar{r}]$\label{lin:glb_localval}
    \;
  }
  Update $\hat{\pi}_h$ to be any policy satisfying \label{lin:glb_pol_fit}
  \begin{align*}
    \forall q \in \textsc{learned}(h) \quad \hat \Ex_{D_q} [K \one\{a_h = \pi(x_h)\} \bar{r}] \ge \hat V(q) - 2 \taupol
  \end{align*}
    \For{$q \in \textsc{pruned}(h)$}{
      \If{$\testlearned(q,h) = false$}{
      \label{lin:recheck_pruned}
        remove $q$ from $\pruned(h)$\;
        \dfslearn($q$) \label{lin:learn_pruned}\;
      }
    }
  \Return
  \label{lin:retpathlearned}\;
}
\caption{Global Policy}
\label{alg:global_policy}
\end{algorithm*}

\subsubsection{Computational efficiency}
The algorithm contains three non-trivial computational components. In Eq.\eqref{prog:global_upper}, a linear program is solved to determine the optimal value estimate of the current path given the value of one learned state (LP oracle).
In Line~\ref{lin:glb_localval}, computing the value of each learned path can be reduced to multi-class cost-sensitive classification as in the other two algorithms (CSC oracle).
Finally, fitting the global policy in Line~\eqref{lin:glb_pol_fit} requires the same problem as the policy fitting procedure discussed in Section~\ref{sec:lagrange} (multi data set classification oracle).

As with the previous algorithm, we assume no error in the oracles ($\epsfeas = \epssub = 0$) in the following to simplify the analysis.
\subsubsection{Sample complexity}
\begin{theorem}\label{thm:global_policy}
Consider a Markovian contextual decision process with deterministic dynamics over $M$ hidden states, as described in Section~\ref{sec:prelim}. When Assumption~\ref{asm:pol_complete} and \ref{asm:val_realize} hold, for any $\epsilon, \delta \in (0, 1)$, the global policy algorithm (Algorithm~\ref{alg:global_policy}) returns a policy $\pi$ such that $V^\star - V^{\pi} \le \epsilon$ with probability at least $1-\delta$, after collecting at most 
$\otil\left(\frac{M^3H^3K}{\epsilon^2}\log\left(|\Pi||\Gcal|/\delta\right)\right)$
trajectories.
\end{theorem}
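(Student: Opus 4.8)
I would follow the three-stage template used for \mainAlg: (i) isolate a high-probability event $\Ecal$ on which every empirical quantity the algorithm forms is close to its population value; (ii) bound, on $\Ecal$, the number of \textsc{dfslearn} calls and hence the number of trajectories; and (iii) establish, by backward induction on the level $h$, an invariant guaranteeing near-optimality of the maintained global policy on every reachable hidden state. The structural difference from \mainAlg is that there is no meta-algorithm with on-demand exploration: the Monte-Carlo lower bound computed in \textsc{TestLearned} makes pruning ``safe,'' but then $\pruned(h)$ must be re-validated every time $\hat\pi_h$ changes (Line~\ref{lin:recheck_pruned}), and controlling this feedback is where the work lies.

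\emph{Concentration.} I would let $\Ecal$ be the event that \textsc{dfslearn} is called at most $\Tmax$ times per level and that, for the first $\Tmax$ calls at each level, the following hold uniformly over $\Gcal_h$ and $\Pi_h$: the empirical means $\EEx_{D}[g(x_h)]$ and the Monte-Carlo returns $\EEx_{D}[\bar r]$, $\EEx_{D'_q}[\bar r]$ formed inside \textsc{TestLearned} deviate from their population values by at most $\tauval$ (Hoeffding, since $g(x_h),\bar r\in[0,1]$), and the importance-weighted estimates $\EEx_{D_q}[K\one\{a_h=\pi(x_h)\}\bar r]$ used in Lines~\ref{lin:glb_localval} and~\ref{lin:glb_pol_fit} deviate by at most $\taupol$ (Bernstein, since that variable has range and variance $O(K)$ under the uniform action). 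Union-bounding over $\Gcal$, $\Pi$, the $O(H\Tmax)$ calls, and the $\le M$ reference data sets inspected per call gives $\prob(\Ecal)\ge1-\delta$ once $\ntest=\otil(\tauval^{-2}\log(|\Gcal|/\delta))$ and $\ntrain=\otil(K\,\taupol^{-2}\log(|\Gcal||\Pi|/\delta))$; the claim that the call count is in fact at most $\Tmax$ on this event is verified as a consequence of the deviation bounds exactly as in Lemma~\ref{lem:probE}.

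\emph{Bounding the number of calls.} On $\Ecal$ I would first prove \emph{soundness} of the identity test: if \textsc{dfslearn}$(p)$ is invoked with $p$ leading to a state $s$ that already has a learned path $q\in\learned(h)$, then $D$ (drawn from $p$) and $D'_q$ (drawn from $q$) are i.i.d.\ from the same distribution, so both conditions in Line~\ref{lin:stateidcond} hold --- for the linear program~\eqref{prog:global_upper}, $g^\star\in\Gcal$ (Assumption~\ref{asm:val_realize}) is feasible because the inductive invariant at the learned state $s$ controls $\Ex_{D'_q}[g^\star(x_h)-\bar r]$ --- and $p$ is pruned rather than learned. Hence $|\learned(h)|\le M$ throughout, $\hat\pi_h$ is updated at most $M$ times per level, each update re-validates at most $|\pruned(h)|\le\Tmax$ pruned paths, and each pruned path that is re-explored (Line~\ref{lin:learn_pruned}) produces a fresh learned state at that level (again by soundness), so this too happens at most $M$ times. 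Combining the $\le M$ learned states per level, the $K$ children each spawns via Line~\ref{lin:glb_pol_descend}, and the $\le M$ re-explored paths yields $\Tmax=O(MK)$ \textsc{dfslearn} calls per level. Counting trajectories --- $O(M\ntest)$ per \textsc{TestLearned} call, $O(M^2K)$ \textsc{TestLearned} calls per level, $O(M\ntrain)$ per learning event, $O(M)$ learning events per level --- and setting $\tauval=\taupol=\Theta(\epsilon/H)$ gives the claimed $\otil(M^3H^3K\epsilon^{-2}\log(|\Pi||\Gcal|/\delta))$ total.

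\emph{Correctness and the main obstacle.} With $\phi_h:=\phi_{h+1}+O(\tauval+\taupol)$ (so $\phi_1=O(H(\tauval+\taupol))\le\epsilon$), I would prove by induction from $h=H+1$ down to $h=1$ that, on $\Ecal$ and at termination, every reachable state $s$ at level $h$ satisfies $V^\star(s)-V^{\hat\pi_{h:H}}(s)\le\phi_h$. For a learned $s$ this uses Assumption~\ref{asm:pol_complete} so that the maximum over $\Pi_h$ in Line~\ref{lin:glb_localval} equals the maximum over all policies (hence $\approx V^\star(s)-\phi_{h+1}$ by the hypothesis on the children of $s$) and so that the feasibility problem in Line~\ref{lin:glb_pol_fit} has a solution, whose guaranteed value on $s$ carries the bound to level $h$; one must check that this fit uses the \emph{final} $\hat\pi_{h+1:H}$, which holds because $\hat\pi_{h+1:H}$ can change only inside an enclosing level-$h$ \textsc{dfslearn} call that re-fits $\hat\pi_h$ afterwards. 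For a state $s$ that is only pruned, the two conditions in Line~\ref{lin:stateidcond} witnessed in the last re-validation, together with feasibility of $g^\star$, give $V^\star(s)\le V^{\hat\pi_{h:H}}(\hat s)+\phi_h+O(\tauval)$ and $V^{\hat\pi_{h:H}}(s)\ge V^{\hat\pi_{h:H}}(\hat s)-O(\tauval)$ for the matched learned $\hat s$; a reachable state that is neither learned nor pruned is a descendant of a pruned one and is therefore covered as well. Instantiating the invariant at $h=1$ for the initial state gives $V^\star-V^{\hat\pi}\le\phi_1\le\epsilon$ (one can also route this through the decomposition of Lemma~\ref{lem:subopteq}), and combining with stage (ii) completes the proof. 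The main obstacle is precisely this hidden bookkeeping in stage (iii): showing that the invariant, and in particular the ``completeness'' side of the identity test for pruned states, holds against the \emph{final} global policy despite the interleaving of policy updates, pruned-path re-validations, and cascaded re-explorations --- the ingredient that \mainAlg's on-demand-exploration meta-algorithm supplied more modularly.
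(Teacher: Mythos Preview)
Your proposal is correct and follows essentially the same approach as the paper: define a concentration event, bound the number of \textsc{dfslearn}/\textsc{TestLearned} calls via soundness of the identity test (which forces $|\learned(h)|\le M$), and prove a backward induction on $h$ that combines a policy-fitting lemma (using Assumption~\ref{asm:pol_complete}) for learned states with the \textsc{TestLearned} guarantee for pruned states. Two minor remarks: your inductive invariant is stated for ``every reachable state,'' which is stronger than what you can prove (descendants of pruned states carry no per-state guarantee) but also stronger than what you need, since children of learned states are always learned or pruned and the root is learned; and the timing claim is more accurately phrased as ``each re-fit of $\hat\pi_h$ uses the \emph{current} $\hat\pi_{h+1:H}$, and any subsequent change to $\hat\pi_{h+1:H}$ triggers another re-fit of $\hat\pi_h$,'' which is exactly what the paper's Lemma~\ref{lem:global_polqual} formalizes with its ``at all times except between adding and updating'' / ``whenever \textsc{dfslearn} returns'' phrasing.
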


In the following, we prove this statement but first introduce helpful notation:

\begin{definition}[Deviation Bounds]
    \label{def:devbounds_global}
We say the deviation bound holds for a data set of
    $\ntrain$ observations sampled from $q$ in
            Line~\ref{lin:sample_learnedq} during a call to \dfslearn if for all  
     $\pi \in \Pi_h$ 
    \begin{align*}
        |\EEx_{D_q} [K\one\{a = \pi(x)\} \bar{r}] - \Ex_{q, \hat \pi_{h+1}:H}[K\one\{a_h = \pi(x_h)\}\bar r]| \leq \taupol,
    \end{align*}
    where we use $\Ex_{q, \hat \pi_{h+1:H}}[\cdot]$ as shorthand for $\Ex[ \cdot | s_h = s, a_h \sim \textrm{Uniform}(K), a_{h+1:H} \sim \hat \pi_{h+1:H}]$ with $s$ being the state reached by $p$ and $\hat \pi_{h+1:H}$ being the current policy when the data set was collected.
We say the deviation  
bound holds for a data set of
    $\ntest$ observations sampled in
            Line~\ref{lin:sample_d_global} during a call to \testlearned if for all
            $g \in \Gcal_h$:
    \begin{align*}
        |\EEx_{D} [g(x_h)] - \Ex_{p}[g(x_h)]| & \leq \tauval, \qquad 
        |\EEx_{D} [\bar r] - V^{\hat \pi_{h+1:H}}(p)| \leq \tauval.
    \end{align*}
We say the deviation  
bound holds for a data set of
    $\ntest$ observations sampled in
            Line~\ref{lin:sample_dprime_global} during a call to \testlearned if for all 
            $g \in \Gcal_h$:
    \begin{align*}
        |\EEx_{D'_q} [g(x_h)] - \Ex_{q}[g(x_h)]| & \leq \tauval, \qquad 
        |\EEx_{D'_q} [\bar r] - V^{\hat \pi_{h+1:H}}(q)| \leq \tauval.\\
    \end{align*}
\end{definition}

\paragraph{Learning Values using Depth First Search.}
We first show that if the current policy is close to optimal for all learned states, then the policy is also good on all states for which \testlearned returns true.

\begin{lemma}[Policy on Tested States] \label{lem:glb_pol_testlearned}
  Consider a call of \testlearned at path $p$ and level $h$ and assume
    the deviation bounds of Definition~\ref{def:devbounds_global} hold for all data sets collected during this and all prior calls. Assume further that $\hat{\pi}_{h:H}$ satisfies
  $V^{\hat{\pi}_{h:H}}(q) \ge V^\star(q) - \phi_h$ for all
    $q \in \learned(h)$. Then $g^\star$ is always feasible for the program in Equation~\eqref{prog:global_upper}  and if \testlearned returns true, then
   the current policy $\hat{\pi}_{h:H}$ is near optimal for $p$, that is
      $V^{\hat \pi_{h:H}}(p) \geq V^\star(p) - \phi_h - 8 \tauval$.
\end{lemma}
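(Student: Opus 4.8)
The plan is to prove both claims by chaining together the deviation bounds of Definition~\ref{def:devbounds_global}; no further property of the CDP is needed beyond realizability of $g^\star$. Throughout, let $s_p$ be the hidden state reached by $p$ and $s_q$ the one reached by a learned path $q \in \learned(h)$. Since $g^\star(x_h) = V^\star(x_h)$, we have $\Ex_{p}[g^\star(x_h)] = V^\star(p)$ and $\Ex_{q}[g^\star(x_h)] = V^\star(q)$; and since the data sets $D$ and $D'_q$ are rolled out with the current policy $\hat\pi_{h:H}$, the Monte-Carlo returns $\EEx_D[\bar r]$ and $\EEx_{D'_q}[\bar r]$ concentrate (up to $\tauval$) to $V^{\hat\pi_{h:H}}(p)$ and $V^{\hat\pi_{h:H}}(q)$.

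\emph{Feasibility of $g^\star$.} I would first plug $g = g^\star$ (which lies in $\Gcal$ by Assumption~\ref{asm:val_realize}) into the single constraint of~\eqref{prog:global_upper}. Writing the left-hand side as $\EEx_{D'_q}[g^\star(x_h)] - \EEx_{D'_q}[\bar r]$ and applying the two deviation bounds for $D'_q$ bounds it by $(V^\star(q) + \tauval) - (V^{\hat\pi_{h:H}}(q) - \tauval) = V^\star(q) - V^{\hat\pi_{h:H}}(q) + 2\tauval$, which the running hypothesis $V^{\hat\pi_{h:H}}(q) \ge V^\star(q) - \phi_h$ then makes at most $\phi_h + 2\tauval$ --- exactly the constraint's right-hand side. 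Hence $g^\star$ is feasible for the program associated with every $q \in \learned(h)$ (recall $\epsfeas = 0$), which is the first claim.

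\emph{Near-optimality when the test fires.} Now suppose \testlearned returns true, and fix the particular $q$ for which both conditions on Line~\ref{lin:stateidcond} hold. Since $g^\star$ is feasible, $V_{opt} \ge \EEx_D[g^\star(x_h)] \ge \Ex_p[g^\star(x_h)] - \tauval = V^\star(p) - \tauval$. The first test condition $V_{opt} \le \EEx_{D'_q}[\bar r] + \phi_h + 4\tauval$ then forces $\EEx_{D'_q}[\bar r] \ge V^\star(p) - \phi_h - 5\tauval$; the second condition $\EEx_D[\bar r] \ge \EEx_{D'_q}[\bar r] - 2\tauval$ propagates this to $\EEx_D[\bar r] \ge V^\star(p) - \phi_h - 7\tauval$; and one further use of the deviation bound on $D$, namely $V^{\hat\pi_{h:H}}(p) \ge \EEx_D[\bar r] - \tauval$, yields $V^{\hat\pi_{h:H}}(p) \ge V^\star(p) - \phi_h - 8\tauval$, as claimed.

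\emph{Where the work is.} I do not expect a genuine technical obstacle: the lemma is essentially a careful bookkeeping argument. The one conceptual point worth stressing is that the maximization in~\eqref{prog:global_upper} and the two thresholds in the test are calibrated precisely so that, \emph{once $g^\star$ is known to be feasible}, the optimistic certificate $V_{opt}$ (an upper bound on $V^\star(p)$) and the Monte-Carlo estimate $\EEx_D[\bar r]$ (a lower bound on $V^{\hat\pi_{h:H}}(p)$) are squeezed together to within $O(\tauval)$ --- and this conclusion never uses that $p$ and $q$ reach the same hidden state, only that the current policy is good on $q$. The only thing to execute with care is tracking the $\tauval$ terms so the constants land exactly at the advertised $8\tauval$ (and the $2\tauval$ of slack inside the program); looser accounting would merely change constants. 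Establishing the hypothesis $V^{\hat\pi_{h:H}}(q) \ge V^\star(q) - \phi_h$ in the first place is a task for the inductive analysis of \dfslearn and is not addressed here.
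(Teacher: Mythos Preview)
Your proposal is correct and follows essentially the same approach as the paper's proof: both establish feasibility of $g^\star$ via the deviation bounds and the near-optimality hypothesis on $q$, then chain the two test conditions together with $V_{opt} \ge V^\star(p) - \tauval$ to reach the $8\tauval$ bound. The only cosmetic difference is that the paper writes the chain starting from $V^{\hat\pi_{h:H}}(p)$ and working forward, whereas you start from $V_{opt}$ and push the bound through $\EEx_{D'_q}[\bar r]$ and $\EEx_D[\bar r]$; the steps and constants are identical.
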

\begin{proof}
The optimal value function $g^\star$ is always feasible since
\begin{align*}
    \EEx_{D'}[g^\star(x) - \bar r]   \leq
    V^\star(q) - V^{\hat \pi_{h:H}}(q) + 2\tauval \le \phi_h + 2\tauval.
\end{align*}
Here, we first used the deviation bounds and then the assumption about the performance of the current policy on learned states.
Therefore, $V_{opt} \geq \EEx_D [g^\star(x)] \geq V^\star(p) - \tauval$ cannot underestimate the optimal value of $p$ by much.
Consider finally the performance of the current policy on $p$ if \testlearned returns true:
    \begin{align*}
V^{\hat \pi_{h:H}}(p)  \geq &~ \EEx_{D} [\bar r] - \tauval \geq \EEx_{D'}[\bar r] - 3 \tauval\\
    \geq &~ V_{opt} - 3 \tauval - 4 \tauval - \phi_h
     \geq V^\star(p) - 8 \tauval - \phi_h.
\end{align*}
Here, the first inequality follows from the deviation bounds, the second from the second condition of the if-clause in \testlearned,
    the third from the first condition of the if-clause and finally the fact that $V_{opt}$ is an accurate estimate of the optimal value of $p$.
\end{proof}
Thus, the \testlearned routine can identify paths where the current policy is close to optimal if this policy's performance on all learned states is good. Next, we
prove that the policy has near-optimal performance on all the learned states. 

\begin{lemma}[Global policy fitting] \label{lem:glb_pol_vallearn}
Consider a call of
    \dfslearn($p$) at level $h$ and assume the deviation bounds hold for all data sets collected during this and all prior calls.
    Then the program in Line~\ref{lin:glb_pol_fit} is always
feasible and after executing that line, we have $\forall q \in
\textsc{learned}(h)$,
\begin{align}\label{eq:glb_pol_relative_loss}
Q^{\hat{\pi}_{h+1:H}}(q, \hat{\pi}_h) \ge Q^{\hat{\pi}_{h+1:H}}(q, 
\star) - 3 \taupol, 
\end{align}
where $\star$ is a shorthand for $\pi_{\hat{\pi}_{h+1:H}}^\star$, the
policy defined in Assumption \ref{asm:pol_complete} w.r.t.~the current
policy $\hat \pi_{h+1:H}$.  This implies that if all children nodes $q'$ of
$q$ satisfy $V^{\hat{\pi}_{h+1:H}}(q') \ge V^\star(q') - \beta$ for some
$\beta$, then $V^{\hat{\pi}_{h:H}}(q) \ge V^\star(q) - \beta - 3\taupol$.
\end{lemma}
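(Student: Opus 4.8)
The plan is to prove the two assertions in order: (i) the program in Line~\ref{lin:glb_pol_fit} is feasible, so $\hat\pi_h$ is well defined; and (ii) any feasible $\hat\pi_h$ satisfies the claimed $Q$-value inequality on every learned state, from which the value bound follows by one backward step over the (deterministic) children.

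\emph{Feasibility.} First I would record the importance-weighting identity underlying Lines~\ref{lin:glb_localval} and~\ref{lin:glb_pol_fit}: for any $\pi$ and any learned state $q\in\learned(h)$,
\[
\Ex_{q,\hat\pi_{h+1:H}}\!\big[K\one\{a_h=\pi(x_h)\}\bar r\big]=\Ex_{x\sim q}\!\big[Q^{\hat\pi_{h+1:H}}(x,\pi(x))\big]=Q^{\hat\pi_{h+1:H}}(q,\pi).
\]
By Assumption~\ref{asm:pol_complete} applied to the current $\hat\pi_{h+1:H}$, the policy $\star:=\pi^\star_{\hat\pi_{h+1:H}}$ lies in $\Pi_h$ and picks, at every context, the maximizer of $Q^{\hat\pi_{h+1:H}}(x,\cdot)$; hence it attains $Q^{\hat\pi_{h+1:H}}(q,\star)=\max_{\pi}Q^{\hat\pi_{h+1:H}}(q,\pi)$ \emph{simultaneously} for all learned $q$. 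Using the deviation bound of Definition~\ref{def:devbounds_global} on each freshly collected $D_q$, the empirical value of $\star$ on $D_q$ is within $\taupol$ of $Q^{\hat\pi_{h+1:H}}(q,\star)$, while $\hat V(q)=\max_{\pi\in\Pi}\EEx_{D_q}[\cdots]$ is at most $Q^{\hat\pi_{h+1:H}}(q,\star)+\taupol$ (again because $\star\in\Pi_h$ attains the population max). So $\EEx_{D_q}[K\one\{a_h=\star(x_h)\}\bar r]\ge\hat V(q)-2\taupol$ for every $q$, i.e. $\star$ is feasible; feasibility of~\eqref{prog:global_upper} itself is already supplied by Lemma~\ref{lem:glb_pol_testlearned}.

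\emph{Performance and the corollary.} For (ii), I would chain: any feasible $\hat\pi_h$ obeys $\EEx_{D_q}[K\one\{a_h=\hat\pi_h(x_h)\}\bar r]\ge\hat V(q)-2\taupol\ge\EEx_{D_q}[K\one\{a_h=\star(x_h)\}\bar r]-2\taupol$ (last step since $\hat V(q)$ is the empirical maximum), then transfer both sides to the population with the deviation bound; together with the identity above this gives $Q^{\hat\pi_{h+1:H}}(q,\hat\pi_h)\ge Q^{\hat\pi_{h+1:H}}(q,\star)-3\taupol$ (up to how the deviation constant is charged). For the corollary, use $V^{\hat\pi_{h:H}}(q)=Q^{\hat\pi_{h+1:H}}(q,\hat\pi_h)$ and, for each $x\sim q$ and action $a$, the determinism of transitions: $Q^{\hat\pi_{h+1:H}}(x,a)=\Ex_{r\sim R(x,a)}[r]+V^{\hat\pi_{h+1:H}}(q\circ a)\ge\Ex_{r\sim R(x,a)}[r]+V^\star(q\circ a)-\beta=Q^\star(x,a)-\beta$ by the hypothesis on the children $q\circ a$. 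Taking $\max_a$ then $\Ex_{x\sim q}$ gives $Q^{\hat\pi_{h+1:H}}(q,\star)\ge\Ex_{x\sim q}[\max_aQ^\star(x,a)]-\beta=V^\star(q)-\beta$, and combining the two displays yields $V^{\hat\pi_{h:H}}(q)\ge V^\star(q)-\beta-3\taupol$.

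\emph{Main obstacle.} The difficulty is pointer-tracking rather than a hard estimate. One must verify that every deviation bound invoked refers to the $D_q$ collected in \emph{this} call, so the reference policy in Definition~\ref{def:devbounds_global} is exactly the current $\hat\pi_{h+1:H}$ — which may already have been modified by the recursive calls in Line~\ref{lin:glb_pol_descend} before $D_q$ is sampled — and that Assumption~\ref{asm:pol_complete} is instantiated with that same $\hat\pi_{h+1:H}$. Aligning these pointers is precisely what makes feasibility go through via the pointwise-optimal $\star$ (a generic near-maximizer would not simultaneously meet all per-state constraints) and what makes the two short empirical-to-population transfers legitimate.
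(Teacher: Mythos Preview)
Your proposal is correct and follows essentially the same route as the paper: show that $\star=\pi^\star_{\hat\pi_{h+1:H}}\in\Pi_h$ satisfies every per-$q$ constraint (hence feasibility), then combine the feasibility of $\hat\pi_h$ with the deviation bounds on the freshly collected $D_q$ to get the $Q$-value inequality, and finally push one step backward through the deterministic children for the corollary. Your hedge ``up to how the deviation constant is charged'' is apt; the paper's own chain $Q^{\hat\pi_{h+1:H}}(q,\hat\pi_h)\ge \hat V(q)-2\taupol$ appears to silently absorb one empirical-to-population transfer, and a careful count along either your route or theirs gives $4\taupol$ rather than $3\taupol$. For the corollary you argue via the pointwise bound $Q^{\hat\pi_{h+1:H}}(x,a)\ge Q^\star(x,a)-\beta$ and then take $\max_a$ and $\Ex_{x\sim q}$, whereas the paper compares $\star$ to $\pi^\star$ directly via $Q^{\hat\pi_{h+1:H}}(q,\star)\ge Q^{\hat\pi_{h+1:H}}(q,\pi^\star)$ and telescopes; these are equivalent. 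One small cleanup: your aside that ``feasibility of~\eqref{prog:global_upper} itself is already supplied by Lemma~\ref{lem:glb_pol_testlearned}'' is irrelevant here---program~\eqref{prog:global_upper} lives in \texttt{TestLearned}, not in the policy-fitting step the lemma is about---so you can drop it.
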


\begin{proof}
We prove feasibility by showing that $\pi_{\hat{\pi}_{h+1:H}}^\star$ is always feasible. For each $q\in \learned(h)$, let $\hat \pi^q_h$ denote the policy that achieves the maximum in computing $\hat V(q)$. Then 
\begin{align*}
\EEx_{D_q} [K \one\{a_h = \pi_{\hat{\pi}_{h+1:H}}^\star(x_h)\} \bar{r}]
\ge Q^{\hat{\pi}_{h+1:H}}(q, \star) - \taupol 
\ge Q^{\hat{\pi}_{h+1:H}}(q, \hat{\pi}^q_h) - \taupol 
\ge \hat V(q) - 2\taupol.
\end{align*}
The first and last inequality are due to the deviation bounds and the second inequality follows from definition of $\pi_{\hat{\pi}_{h+1:H}}^\star$. 
This proves the feasibility. 
Now, using this inequality along  with $\hat V(q) = \max_{\pi \in \Pi} \Ex_{D_q} [K\one\{a_h = \pi(x_h)\} \bar{r}]$,
    we can relate $\hat V(q)$ and $Q^{\hat{\pi}_{h+1:H}}(q, \star)$:
    \begin{align*}
\hat V(q) \ge \EEx_{D_q} [K \one\{a_h = \pi_{\hat{\pi}_{h+1:H}}^\star(x_h)\} \bar{r}] \ge Q^{\hat{\pi}_{h+1:H}}(q, \star) - \taupol.
\end{align*}
Finally, since 
    $\hat \pi_h$ is feasible in Line~\ref{lin:glb_pol_fit},
\begin{align*}
    V^{\hat \pi_{h:H}}(q) = Q^{\hat{\pi}_{h+1:H}}(q, \hat{\pi}_h) \ge \hat V(q) - 2\taupol \ge Q^{\hat{\pi}_{h+1:H}}(q, \star) - 3 \taupol.
\end{align*}
To prove the implication, consider the case where for $a \in \Acal$, all paths $q' = q \circ a$ satisfy $V^{\hat{\pi}_{h+1:H}}(q') \ge V^\star(q') - \beta$. Then
\begin{align*}
V^\star(q) - V^{\hat{\pi}_{h:H}}(q)
    \le &~ V^\star(q) - Q^{\hat{\pi}_{h+1:H}}(q, \star) + 3 \taupol 
\le V^\star(q) - Q^{\hat{\pi}_{h+1:H}}(q, \pi^\star) + 3 \taupol \\
= &~ \Ex_{q' \sim q \circ \pi^\star}[V^\star(q') - V^{\hat{\pi}_{h+1:H}}(q')] + 3 \taupol 
\le \beta + 3\taupol,
\end{align*}
    where we first used the inequality from above and then the fact that $\pi_{\hat{\pi}_{h+1:H}}^\star$ is optimal given the fixed policy $\hat \pi_{h+1:H}$.
    The equality holds since both $V^\star(q) - Q^{\hat{\pi}_{h+1:H}}(q, \pi^\star)$ both are with respect to $a_h \sim \pi^\star_h$ and finally we apply the assumption.
\end{proof}

We are now ready to apply both lemmas above recursively to control the
performance of the current policy on all learned and pruned paths:
\begin{lemma}\label{lem:global_polqual}
    Set $\phi_h = (H-h+1)(8 \tauval + 3 \taupol)$ and consider a call to $\dfslearn(p)$ at level $h$. Assume the deviation bounds hold for all data sets collected until this call terminates. Then for all $p \in \learned(h)$, the current policy satisfies 
    \begin{align*}
        V^{\hat\pi_{h:H}}(p) \ge V^\star(p) - \phi_h
    \end{align*}
    at all times except between adding a new path and updating the policy.
    Further, for all $p \in \pruned(h)$ the currently policy satisfies 
    \begin{align*}
        V^{\hat\pi_{h:H}}(p) \ge V^\star(p) - \phi_h - 8 \tauval
    \end{align*}
    whenever \dfslearn returns from level $h$ to $h-1$.
\end{lemma}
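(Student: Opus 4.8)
The plan is to prove both statements simultaneously by downward induction on the level $h$, from $h=H+1$ (where $\phi_{H+1}=0$ and both claims are vacuous, since there are no further rewards and $V^{\hat\pi_{H+1:H}}\equiv V^\star\equiv 0$ on level $H+1$, and similarly whatever sits in $\pruned(H+1)$) down to $h=1$. Throughout, I condition on the deviation bounds of Definition~\ref{def:devbounds_global} holding, as assumed. The two engines of the argument are Lemma~\ref{lem:glb_pol_vallearn} (one DFS step costs at most $3\taupol$ in policy value, once $\hat\pi_h$ has been fit on Line~\ref{lin:glb_pol_fit}) and Lemma~\ref{lem:glb_pol_testlearned} (a path that currently passes the state-identity test is served to within $\phi_h+8\tauval$ by the current policy, \emph{provided} that policy is already $\phi_h$-good on every learned state at level $h$). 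The telescoping identity $\phi_h=\phi_{h+1}+8\tauval+3\taupol$, immediate from $\phi_h=(H-h+1)(8\tauval+3\taupol)$, is what makes the per-level slacks add up correctly.

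I first establish the learned-state claim at level $h$, assuming both claims at level $h+1$. The decisive moment is an execution of Line~\ref{lin:glb_pol_fit} inside some \dfslearn call at level $h$. By the structure of \dfslearn, just before that line every child $q\circ a$ of every $q\in\learned(h)$ has been fully processed and currently lies in $\learned(h+1)\cup\pruned(h+1)$; moreover, because each time $\hat\pi_{h+1:H}$ is modified during the current level-$h$ episode (in the child descents of Line~\ref{lin:glb_pol_descend} and in the re-learning recursions of Line~\ref{lin:learn_pruned}) the routine subsequently re-runs the pruned re-check of Line~\ref{lin:recheck_pruned} at level $h+1$ before returning, every child that is still pruned currently passes \testlearned against the present $\hat\pi_{h+1:H}$. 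Combining the induction hypothesis at level $h+1$ (for learned children) with Lemma~\ref{lem:glb_pol_testlearned} at level $h+1$, whose precondition is exactly that hypothesis (for pruned children), yields $V^{\hat\pi_{h+1:H}}(q\circ a)\ge V^\star(q\circ a)-\phi_{h+1}-8\tauval$ for every child. Lemma~\ref{lem:glb_pol_vallearn} with $\beta=\phi_{h+1}+8\tauval$ then gives $V^{\hat\pi_{h:H}}(q)\ge V^\star(q)-\phi_h$ for all $q\in\learned(h)$ the instant Line~\ref{lin:glb_pol_fit} completes. This bound can be disturbed only when some $\hat\pi_{h'}$, $h'\ge h$, changes; but every such change occurs strictly inside a recursive \dfslearn call that re-executes Line~\ref{lin:glb_pol_fit} at level $h$ (after re-descending and re-checking all lower levels) before returning to level $h-1$, and the identical argument re-establishes the bound. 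The interval between such an addition and the ensuing re-fit of $\hat\pi_h$ is exactly the ``transient window'' excluded in the statement.

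For the pruned claim at level $h$, fix $p\in\pruned(h)$ at a moment when a level-$h$ \dfslearn call returns to level $h-1$. The last event involving $p$ at level $h$ is that it passed \testlearned$(p,h)$ against the then-current policy — either when it was added to $\pruned(h)$ (Line~\ref{lin:retlearned}) or during the re-check loop of Line~\ref{lin:recheck_pruned}, which is run after every policy update at level $h$ and before the call returns; had it failed it would have been removed and re-learned via Line~\ref{lin:learn_pruned}. Applying Lemma~\ref{lem:glb_pol_testlearned}, whose precondition is the learned-state claim at level $h$ that we have just proved, gives $V^{\hat\pi_{h:H}}(p)\ge V^\star(p)-\phi_h-8\tauval$, as required. (The re-check recursion terminates since each re-learn converts a pruned state into a learned one and there are at most $M$ states per level — the same fact that bounds $|\pruned(h)|$ and the number of \dfslearn calls in the concentration lemma.)

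The conceptual content is light: it is the two cited lemmas plus the telescoping of slacks. The real work, and the main obstacle, is the bookkeeping across the interleaved recursion — child descents, re-learning of pruned paths, and the fact that modifying a higher-level policy invalidates all lower-level guarantees until a re-fit — namely, tracking precisely which guarantee is in force at each line and verifying that Line~\ref{lin:recheck_pruned} is always reached before \dfslearn returns, so no child of a learned state is ever left inconsistent with the current policy. I would make this rigorous by attaching to each \dfslearn call an explicit loop/postcondition invariant (``on return, all learned states at this level are $\phi_h$-good and all pruned states pass \testlearned against the current policy'') and proving it by structural induction on the call tree, rather than arguing about ``the current time.''
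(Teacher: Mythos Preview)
Your proposal is correct and follows essentially the same approach as the paper: downward induction on $h$, using Lemma~\ref{lem:glb_pol_vallearn} (with $\beta=\phi_{h+1}+8\tauval$) for the learned claim and Lemma~\ref{lem:glb_pol_testlearned} for the pruned claim, together with the telescoping $\phi_h=\phi_{h+1}+8\tauval+3\taupol$. Your extra attention to the timing invariants across the interleaved recursion (in particular, that the re-check loop at Line~\ref{lin:recheck_pruned} restores consistency of \emph{all} pruned states at level $h+1$ before any return to level $h$, and hence that the children of \emph{every} $q\in\learned(h)$, not just the newly added $p$, are good when Line~\ref{lin:glb_pol_fit} runs) is well-placed --- the paper's proof is rather terse on exactly this point.
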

\begin{proof}
    We prove the claim inductively. For $h = H+1$ the statement is trivially
    true since there are no actions left to take and therefore the value of
    all policies is identical $0$ by definition.  
    
    Assume now the statement holds for $h+1$.  We first study the
    learned states. To that end, consider a call to $\dfslearn(p)$ at
    level $h$ that does not terminate in Line~\ref{lin:retlearned} and
    performs a policy update.  Since \dfslearn is called recursively
    for all $p \circ a$ with $a \in \Acal$ before $p$ is added to
    $\learned(h)$ and every path that \dfslearn is called with either
    makes that path learned or pruned, all successor states of $p$ are
    in $\pruned(h)$ or $\learned(h)$ when $p$ is added.  Since the
    statement holds for $h+1$, for all successor paths $p'$ we have
    $V^{\hat\pi_{h+1:H}}(p') \ge V^\star(p') - \phi_{h+1} - 8
    \tauval$.  We can apply Lemma~\ref{lem:glb_pol_vallearn} and
    obtain that after changing $\hat \pi_h$, it holds that for all $q
    \in \learned(h)$ $V^{\hat \pi_{h:H}}(q) \geq V^\star(q) -
    \phi_{h+1} - 8 \tauval - 3 \taupol = V^\star(q) - \phi_h$.  Since
    that is the only place where the policy changes or a state is
    added to $\learned(h)$, this proves the first part of the
    statement for level $h$.

    For the second part, we can apply Lemma~\ref{lem:glb_pol_testlearned} which
    claims that for all paths $q$ for which $\testlearned(q,h)$ returns true, it
    holds that $V^{\hat \pi_{h:H}}(q) \geq V^\star(q) - \phi_h - 8 \tauval$. It
    remains to show that whenever \dfslearn returns to a higher level, for all paths $q \in
    \pruned(h)$, $\testlearned(q,h)$ evaluates to true. 
    This condition can only be violated when we add a new state to $\pruned(h)$ or change the policy $\hat \pi_{h:H}$.
    
    For the later case, we explicitly check the condition in
    Lines~\ref{lin:recheck_pruned}-\ref{lin:learn_pruned} after we change the
    policy before returning.  Therefore \dfslearn can only return after
    Line~\ref{lin:learn_pruned} without further recursive calls to \dfslearn if
    \testlearned evaluated to true for all $q \in \pruned(h)$. The statement is
    therefore true if the algorithm returns after Line~\ref{lin:learn_pruned}.
    Further, a path can only be added to $\pruned(h)$ after we explicitly checked that
    \testlearned evaluates true for it before we return in
    Line~\ref{lin:retlearned}. Hence, the second part of the statement also
    holds for $h$ which completes the proof.
\end{proof}

\begin{lemma}[Termination]\label{lem:global_termination}
    Assume the deviation bounds hold for all Data sets collected during the first $\Tmax = 3 M^2HK$ calls of \dfslearn and \testlearned. The algorithm terminates during these calls and at all times for all $h \in [H]$ it holds $|\learned(h)|\leq M$. Moreover, the number of paths that have ever been added to $\pruned(h)$ (that is, counting those removed in Line~\ref{lin:recheck_pruned}) is at most $KM$.
\end{lemma}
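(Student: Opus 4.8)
The plan is to derive everything from a single structural fact about the subroutine \testlearned and then turn it into a counting bound. Since the deviation bounds of Definition~\ref{def:devbounds_global} are assumed only for data collected in the first $\Tmax = 3M^2HK$ calls, I would run the whole argument as an induction on the index of the call: assuming every call issued so far lies among the first $\Tmax$ and, given its deviation bounds, behaves as analyzed below, the data structures satisfy the invariants stated in the lemma; those invariants in turn bound the \emph{total} number of calls by $\Tmax$, so no $(\Tmax+1)$-st call is ever issued, the recursion terminates, and the conclusions follow. (Each call does only a bounded amount of non-recursive work — loops over $\Acal$, $\learned(h)$, $\pruned(h)$ and a constant number of oracle calls — so a bound on the number of calls suffices for termination.)

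The central step is to show that \testlearned is a reliable identity test: if \testlearned($p,h$) is invoked while some $q \in \learned(h)$ leads to the same hidden state as $p$, it returns \texttt{true}. I would compare $p$ against that $q$. The data set $D$ (sampled from $p$) and $D'_q$ (sampled from $q$) are drawn from the same hidden state under the same roll-out policy $\hat{\pi}_{h:H}$, so by the deviation bounds $\Ex_D[\bar r]$ and $\Ex_{D'_q}[\bar r]$ are each within $\tauval$ of $V^{\hat{\pi}_{h:H}}$ of that state — giving the second if-condition $\Ex_D[\bar r] \ge \Ex_{D'_q}[\bar r] - 2\tauval$ — while $\Ex_D[g(x_h)]$ and $\Ex_{D'_q}[g(x_h)]$ are within $\tauval$ of $\Ex_s[g(x_h)]$ for every $g \in \Gcal$, so any $g$ feasible for program~\eqref{prog:global_upper} satisfies $\Ex_D[g(x_h)] \le \Ex_{D'_q}[g(x_h)] + 2\tauval \le \Ex_{D'_q}[\bar r] + \phi_h + 4\tauval$, which after maximizing gives the first if-condition $V_{opt} \le \Ex_{D'_q}[\bar r] + \phi_h + 4\tauval$. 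I also need the program to be feasible so that $V_{opt}$ is attained, and here I would invoke Lemma~\ref{lem:glb_pol_testlearned} together with the policy-quality invariant of Lemma~\ref{lem:global_polqual}: since $q \in \learned(h)$, the current roll-out policy is near optimal on $q$'s state, so $g^\star$ is feasible. A delicate point is that Lemma~\ref{lem:global_polqual} excludes the window ``between adding a new path and updating the policy''; I must check this window never contains a \testlearned call, which holds because \testlearned is called only at the top of \dfslearn (before any addition) and inside the re-check loop (after the policy update).

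Two structural consequences follow. First, whenever a path $p$ is added to $\learned(h)$ — either because \testlearned($p,h$) returned \texttt{false} in \dfslearn, or because $p$ was just removed from $\pruned(h)$ in the re-check, which also required \testlearned($p,h$) to return \texttt{false} — the contrapositive of the identity test shows that no path already in $\learned(h)$ leads to $p$'s state; since $\learned(h)$ never loses elements, every addition introduces a new hidden state, so $|\learned(h)| \le |\Scal_h| \le M$ at all times and each path is added to $\learned(h)$ at most once (in particular \dfslearn($p$) reaches its descent loop at most once per $p$). Second, a path enters $\pruned(h)$ only inside a \dfslearn($p$) call reached through the descent loop (Line~\ref{lin:glb_pol_descend}) of some \dfslearn($p'$) with $p = p' \circ a$, where $p'$ is then added to $\learned(h-1)$; there are $\le M$ such parents, each spawning $\le K$ children, each descended into at most once, so at most $KM$ distinct paths (each added at most once) ever enter $\pruned(h)$. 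Now I count. Every \dfslearn call adds exactly one path to exactly one of $\learned(\cdot)$ or $\pruned(\cdot)$ over its lifetime, so the number of \dfslearn calls is at most $MH + KMH$. Every \testlearned call is either the single test at the top of a \dfslearn call not invoked from Line~\ref{lin:learn_pruned} (at most $KMH + 1$ of these) or one of the $\le |\pruned(h)| \le KM$ tests in a re-check pass, of which there is exactly one per addition to $\learned(h)$, i.e. $\le M$ per level, hence $\le KM^2H$ in total. Summing, the total number of calls is $O(M^2HK)$ and, tracked with exact constants, at most $3M^2HK = \Tmax$ (the case $M=1$ being even cheaper, since then re-learning never triggers). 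This closes the induction, and the three claimed conclusions are precisely the invariants just established together with the resulting termination.

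The step I expect to be the main obstacle is the bookkeeping around the re-check/re-learning mechanism: one must argue that $\pruned(h)$ only shrinks during a re-check pass, that re-learning a pruned path always adds a genuinely new hidden state to $\learned(h)$ (so re-learnings cannot cascade beyond the $\le M$ fresh states per level), and that a path is never descended into twice — all of which rest on the identity test firing on already-learned states, and all of which must be threaded through the induction on the call index so the argument never assumes deviation bounds it has not yet earned.
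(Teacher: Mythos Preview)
Your proposal is correct and follows essentially the same approach as the paper: establish that \testlearned reliably returns \texttt{true} whenever the queried path shares a hidden state with some $q \in \learned(h)$ (using the deviation bounds and the policy-quality guarantee of Lemma~\ref{lem:global_polqual} to verify both if-conditions and feasibility of $g^\star$), deduce $|\learned(h)| \le M$ and the $KM$ bound on paths ever added to $\pruned(h)$, and then count \dfslearn and \testlearned calls to get the $\Tmax$ bound. You are slightly more explicit than the paper about threading the argument as an induction on the call index and about checking that no \testlearned call lands in the excluded window of Lemma~\ref{lem:global_polqual}, but the substance is the same.
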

\begin{proof}
    Consider a call to $\testlearned(p,h)$ where $p$ leads to the same state as a $q \in \learned(h)$. Assume the deviation bounds hold for all data sets collected during this call and before, and we can show that \testlearned must evaluate to true:
    Using Lemma~\ref{lem:global_polqual}
    we get that on all learned paths $p$ it holds that $V^{\hat\pi_{h:H}}(p) \ge V^\star(p) - \phi_h$. Therefore, $g^\star$ is feasible in \eqref{prog:global_upper}
        since $\EEx_{D'}[g^\star(x) - \bar r]
        \leq
        V^\star(q) - V^{\hat \pi_{h:H}}(q) + 2\tauval \le \phi_h + 2\tauval$.
    This allows us to relate $V_{opt}$ to the optimal value as
    \begin{align*}
        V_{opt} \geq \EEx_D [g^\star(x)] \geq V^\star(p) - \tauval.
    \end{align*}
    It further holds that
    \begin{align*}
\EEx_D [\bar r] \geq V^{\hat \pi_{h:H}}(p) - \tauval = V^{\hat \pi_{h:H}}(q)
    - \tauval \geq \EEx_{D'} [\bar r] - 2 \tauval. 
\end{align*}
    and so the second condition in the if-clause holds. For the first condition, let $\hat g$ be the function that achieves the maximum in the computation of $V_{opt}$. Then
\begin{align*}
    V_{opt} = &~ \EEx_{D}[\hat g(x_h)] \leq \Ex_{s}[\hat g(x_h)] + \tauval \le \EEx_{D'_q}[\hat g(x_h)] + 2\tauval \\
    \le &~ \EEx_{D'_q}[\bar r] + \phi_h + 2 \tauval + 2\tauval = \EEx_{D'_q}[\bar r] + \phi_h + 4\tauval.
\end{align*}
Then the first condition is also true and \testlearned returns true.
    Therefore, \testlearned evaluates to true for all paths that reach the same
    state as a learned path. As a consequence, if \dfslearn is called with such a path it
    returns in Line~\ref{lin:retlearned}. Furthermore, as long as all deviation
    bounds hold, the number of learned paths per level is bounded by
    $|\learned(h)| \leq M$.  
    
    We next show that the number of paths that have ever appeared in $\pruned(h)$ is at most $KM$. 
This is true since there are at most $KM$
    recursive calls to \dfslearn at level $h$ from level $h-1$ and only during
    those calls a path can be added to $\pruned(h)$ that has not been in
    $\pruned(h)$ before.

    Assume the deviation bounds hold for all data sets collected during the first $\Tmax$  calls of \dfslearn.
    There can be at most $MH$ calls of \dfslearn in which a path is learned. 
    Since the recursive call in Line~\ref{lin:learn_pruned} always learns a new state at the next level, the only way to grow $\pruned(h)$ is via the recursive call on Line~\ref{lin:glb_pol_descend}, which occurs at most $MKH$ times. 
    Therefore
    the algorithm terminates after at most $MH+MHK$ calls to \dfslearn. Each of these  calls can make at most $1$ call to \testlearned unless it learns a new state and calls \testlearned up to $|\pruned(h)| + 1 \le MK + 1$ times. Therefore, the total number of calls to \testlearned is bounded by $MH(MK+1) + MHK$. The lemma follows by noticing that both numbers of calls are bounded by $\Tmax$.  
\end{proof}

\begin{lemma}
    \label{lem:probE_global}
    Let $\Ecal$ be the event that the deviation bounds in Definition~\ref{def:devbounds_global} hold for all data sets collected during Algorithm~\ref{alg:global_policy}.    Set $\ntrain$ and $\ntest$ such that
    \begin{align*}
        \ntrain & \geq \frac{16K}{\taupol^2}\log\left( \frac{16\Tmax M|\Pi||\Gcal|}{\delta}\right)\\
        \ntest & \geq \frac{1}{2\tauval^2}\log\left(\frac{16\Tmax M|\Pi||\Gcal|}{\delta}\right)
    \end{align*}
    Then $\prob(\bar \Ecal) \leq \delta$.
\end{lemma}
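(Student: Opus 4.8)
The plan is to mirror the concentration arguments already used for \mainAlg (Lemma~\ref{lem:probE}) and for the two-sample-test algorithm (Lemma~\ref{lem:probEmmd}), and to close the usual circular dependence between the deviation bounds and the number of calls by invoking the termination guarantee of Lemma~\ref{lem:global_termination}. First I would set up the martingale bookkeeping: order the data sets in the order they are collected over the run, and for the $j$-th one let $\Bcal_j$ be the $\sigma$-field generated by all prior randomness together with the decision to collect it. Conditioned on $\Bcal_j$, the current path, its level $h$, the comparison path $q$ (when applicable), and the current roll-out policy $\hat\pi_{h+1:H}$ are all fixed, and the tuples in the data set are i.i.d.\ draws from a fixed distribution. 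This lets me treat each data set as a clean i.i.d.\ sample and bound its failure probability conditionally on $\Bcal_j$, then combine over a \emph{deterministic} number of data sets.

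For a single data set of size $\ntest$ collected in Line~\ref{lin:sample_d_global} or~\ref{lin:sample_dprime_global}, every relevant random variable ($g(x_h)$ for $g \in \Gcal_h$, and the return $\bar r = \sum_{h'=h}^H r_{h'} \in [0,1]$) is bounded in $[0,1]$, so Hoeffding's inequality together with a union bound over $\Gcal_h$ gives, with conditional probability $1-\delta'$, that $|\EEx_D[g(x_h)] - \Ex_p[g(x_h)]| \le \sqrt{\log(2|\Gcal|/\delta')/(2\ntest)}$ for all $g$, and the same for the two $\bar r$ bounds (here $\EEx_D[\bar r]$ concentrates around the population roll-out value $V^{\hat\pi_{h+1:H}}(p)$ because the trajectories are i.i.d.). For a single data set of size $\ntrain$ collected in Line~\ref{lin:sample_learnedq}, the importance-weighted variable $K\one\{a_h = \pi(x_h)\}\bar r$ has both range and variance at most $K$ (since $a_h \sim \textrm{Unif}(\Acal)$ and $\bar r \in [0,1]$), so Bernstein's inequality and a union bound over $\Pi_h$ give the bound $\taupol$ once $\ntrain \ge \order(K \log(|\Pi|/\delta')/\taupol^2)$. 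The stated lower bounds on $\ntrain$ and $\ntest$ are then exactly what is needed after choosing $\delta' = \Theta(\delta / (\Tmax M))$, since within the first $\Tmax$ calls there are at most $\order(\Tmax M)$ data sets in total: each \testlearned call collects at most $|\learned(h)| + 1$ data sets and each \dfslearn call at most $|\learned(h)|$, and $|\learned(h)| \le M$.

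The delicate (and, I expect, only non-routine) step is justifying that $|\learned(h)| \le M$ — hence the $\Tmax M$ rather than $\Tmax^2$ data-set count — without circular reasoning, since the state-identity test only behaves correctly once the deviation bounds hold. I would handle this exactly as in the earlier lemmas: define $\Ecal'$ as the event that the deviation bounds hold for the first $M+1$ data sets of each of the first $\Tmax$ calls to \dfslearn and \testlearned, so that $\prob(\overline{\Ecal'}) \le \delta$ by the union bound above, and then show $\Ecal' \subseteq \Ecal$ by induction on the call index: assuming the deviation bounds hold for all data sets of calls $1,\dots,j-1$, the correctness of \testlearned on repeated states (established in the proof of Lemma~\ref{lem:global_termination}, which itself only uses deviation bounds from earlier calls, via Lemma~\ref{lem:global_polqual}) forces $|\learned(h)| \le M$ at the time of call $j$, so call $j$ collects at most $M+1$ data sets — all of which $\Ecal'$ covers — and the induction goes through. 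Finally, Lemma~\ref{lem:global_termination} (whose hypothesis is met on $\Ecal'$) guarantees the algorithm halts within $\Tmax$ calls, so every data set it ever collects is one for which $\Ecal'$ asserts the bounds, giving $\Ecal' \subseteq \Ecal$ and hence $\prob(\bar\Ecal) \le \prob(\overline{\Ecal'}) \le \delta$.
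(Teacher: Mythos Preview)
Your proposal is correct and follows essentially the same approach as the paper: Hoeffding for the $g(x_h)$ and $\bar r$ deviations, Bernstein for the importance-weighted policy term, a union bound with $\delta' = \Theta(\delta/(M\Tmax))$ over the at most $O(M\Tmax)$ data sets, and then Lemma~\ref{lem:global_termination} to close. Your treatment of the circular dependence (defining $\Ecal'$ on a deterministic budget of data sets per call and inducting on call index to recover $|\learned(h)|\le M$) is spelled out more carefully than in the paper's proof, which simply asserts ``Using Lemma~\ref{lem:global_termination}, this is sufficient,'' but the underlying logic is the same.
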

\begin{proof}
    Consider a single data set $D_q$ collected in $\dfslearn(p)$ at level $h$ where $p$ is learned for $q \in \learned(h)$. 
For the random variable $K\one\{\pi(x_h) = a_h\}\bar r$, since $a_h$ is chosen uniformly at random, it is not
hard to see that both the variance and the range are upper-bounded by $2K$ (see for example Lemma~14 by \citet{jiang2017contextual}). As
such, Bernstein's inequality and a union bound over all $\pi \in \Pi_h$ gives that with probability $1-\delta'$,
\begin{align*}
|\EEx_{D_q} [K\one\{a = \pi(x)\} \bar{r}] - \Ex_{q, \hat \pi_{h+1}:H}[K\one\{a_h = \pi(x_h)\}\bar r]| \leq \sqrt{\frac{4 K\log(2
   |\Pi|/\delta')}{\ntrain}} + \frac{4K}{3\ntrain}\log(2|\Pi| /\delta').
\end{align*}
Consider a single data set $D$ collected in $\testlearned(p,h)$. 
By Hoeffding's
inequality and a union bound, with probability $1-\delta'$, for all $g
\in \Gcal_{h}$ 
\begin{align*}
    |\EEx_{D} [g(x_h)] - \Ex_{p}[g(x_h)]| & \leq \sqrt{\frac{\log(2|\Gcal|/\delta')}{2\ntest}}
\end{align*}
    Analogously, for a data set $D'_q$ collected during $\testlearned(p,h)$ with $q \in \learned(q)$, we have
    with probability at least $1 - \delta'$
    that
    \begin{align*}
        |\EEx_{D'_q} [g(x_h)] - \Ex_{q}[g(x_h)]| &\leq \sqrt{\frac{\log(2|\Gcal|/\delta')}{2\ntest}}
    \end{align*}
Further, again by Hoeffding's inequality and a union bound we get that for a single data set $D$ collected in $\testlearned(p,h)$ and
a single data set $D'_q$ collected during $\testlearned(p,h)$ with $q \in \learned(q)$    with probability at least $1 - \delta'$ it holds

\begin{align*}
    |\EEx_{D'_q} [\bar r] - V^{\hat \pi_{h+1:H}}(q)| & \leq \sqrt{\frac{\log(4/\delta')}{2\ntest}} \quad \textrm{and}\\
    |\EEx_{D} [\bar r] - V^{\hat \pi_{h+1:H}}(p)| & \leq \sqrt{\frac{\log(4/\delta')}{2\ntest}}.
\end{align*}
    Combining all these bounds with a union bound and using $\delta' = \frac{\delta}{4M\Tmax }$, we get that the deviation bounds hold 
    for the first $M\Tmax$ data sets of the form $D'_q$ and $D_q$ and $D$ with probability at least $1 - \delta$.
    Using Lemma~\ref{lem:global_termination}, this is sufficient to show that $\prob(\bar \Ecal) \leq \delta$.
\end{proof}

\paragraph{Proof of Theorem~\ref{thm:global_policy}.}
We now have all parts to complete the proof of
Theorem~\ref{thm:local_value}. 
\begin{proof}For the calculation, we instantiate all the parameters as
\begin{align*}
    \taupol &= \frac{\epsilon}{6H}, 
    \quad \tauval = \frac{\epsilon}{6H}, 
    \quad \phi_h = (H-h+1)(8 \tauval + 3 \taupol),
    \quad \Tmax = 3M^2HK ,\\
  \ntest &= \frac{\log(16\Tmax M|\Pi||\Gcal|/\delta)}{2\tauval^2}, 
    \quad \ntrain= \frac{16K\log(16\Tmax M|\Pi||\Gcal|/\delta)}{\taupol^2}. 
\end{align*}
These settings suffice to apply all of the above lemmas for these algorithms and therefore
with these settings the algorithm outputs a policy that is at most
$\epsilon$-suboptimal, except with probability $\delta$. 
    For the
    sample complexity, since $\Tmax$ is an upper bound on the number of calls to \testlearned and at most $M$ states are learned per level $h \in [H]$, we collect a total of at most the following number of episodes:
\begin{align*}
    & (1+M)\Tmax \ntest + M^2H\ntrain\\
& = \otil\left(\frac{\Tmax MH^2}{\epsilon^2}\log(|\Pi||\Gcal|/\delta) + \frac{M^2H^3K}{\epsilon^2}\log(|\Pi||\Gcal|/\delta)\right)\\
& = \otil\left(\frac{M^3KH^3}{\epsilon^2}\log(|\Pi||\Gcal|/\delta)\right). \tag*{\qedhere}
\end{align*}
\end{proof}

\section{Oracle-Inefficiency of OLIVE}
\label{sec:nphardapp}

As explained in Section~\ref{sec:towardgeneral}
Theorem~\ref{thm:olive_notoracle} follows directly from
Theorem~\ref{thm:olive_nphard_g} 
and Proposition~\ref{prop:oracles_tabular_easy} by proof by contradiction with $P \neq NP$.
In the following two sections, we first prove Proposition~\ref{prop:oracles_tabular_easy} and then Theorem~\ref{thm:olive_nphard_g}.

\subsection{Proof for Polynomial Time of Oracles}
\begin{proof}[Proof of Proposition~\ref{prop:oracles_tabular_easy}]
   We prove the claim for each oracle separately 
    \begin{enumerate}
        \item \textbf{CSC-Oracle:} For tabular functions, the objective can be decomposed as
        \begin{align}
            n^{-1} \sum_{i=1}^n c\ii(\pi(x\ii))
            = \sum_{x \in \Xcal} n^{-1}
            \sum_{i=1}^n \one\{ x = x\ii\} c\ii(\pi(x)).
        \end{align}
        Each of the $|\Xcal|$ terms only depend on $\pi(x)$ but not on any
        action chosen for different observations. Hence, since $\Pi =
        (\Xcal \rightarrow \Acal) \triangleq \Acal^{|\Xcal|}$, the action
            chosen by $\hat \pi = n^{-1} \argmin_{\pi \in \Pi} \sum_{i=1}^n
        c\ii(\pi(x\ii))$ for $x \in \Xcal$ is $\argmin_{a \in \Acal}
        \sum_{i=1}^n \one\{ x = x\ii\} c\ii(\pi(x))$. To compute $\hat
        \pi$, we first compute for each $x$ the total cost vector
        $\sum_{i=1}^n \one\{ x = x\ii\} c\ii(\pi(x))$ and then pick the
        smallest entry as the action for $\hat \pi(x)$.
        Per $x$, this takes $O(Kn)$ operations and therefore, the total runtime
        for this oracle is $O(nK|\Xcal|)$. 
        \item \textbf{LS-Oracle:}
        Similarly to the CSC objective, the least-squares objective can be decomposed as
            \begin{align}
                \sum_{i=1}^n (v\ii - g(x\ii))^2
            = \sum_{x \in \Xcal}
            \sum_{i=1}^n \one\{ x = x\ii\} (v\ii - g(x))^2
        \end{align}
            and therefore $\hat g = \argmin{g \in \Gcal} \sum_{i=1}^n (v\ii - g(x\ii))^2$ can be computed for each observation separately. A minimizer per observation $x$ of $\sum_{i=1}^n \one\{ x = x\ii\} (v\ii - g(x))^2$ is 
            $\hat g(x) = \frac{\sum_{i=1}^n \one\{ x = x\ii\} v\ii}{\sum_{i=1}^n \one\{ x = x\ii\}}$, where we set $\hat g(x)$ arbitrarily if $\sum_{i=1}^n \one\{ x = x\ii\} = 0$. This can be computed with $O(n)$ operations and therefore the total runtime of the LS-oracle is $O(|\Xcal|n)$.
        \item \textbf{LP-Oracle:} 
            We parameterize $g \in \Gcal$ by vectors $\theta \in \RR^{|\Xcal|}$ where each the value of $g$ for each $x \in \Xcal$ is associated with a particular entry $\theta_x$ of $\theta$. Then the LP problem reduces to a standard linear program in $\RR^{|\Xcal|}$. \citet{khachiyan1980polynomial,grotschel1981ellipsoid} have shown using the ellipsoid method, these problems can be solved approximately in polynomial time. Note that the initial ellipsoid can be set to any ellipsoid containing $[0, 1]^{|\Xcal|}$ due to the normalization of rewards. 
            Further, the volume of the smallest ellipsoid can be upper bounded  by a polynomial in $\epsfeas$ using the fact that we only require a solution that is feasible up to $\epsfeas$ and applying the ellipsoid method to the extended polytope with all constraints relaxed by $\epsfeas$. \qedhere
    \end{enumerate}
\end{proof}

\subsection{\olive is NP-hard in tabular MDPs}
Instead of showing Theorem~\ref{thm:olive_nphard_g} directly, we first show the following simpler version: 
\begin{theorem}\label{thm:olive_nphard_g_simple}
  Let $\Pcal$ denote the family of problems of the
  form~\eqref{eqn:oliveprobg}, parameterized by
  $(\Xcal, \Acal, D_0, \Dcal)$ with implicit
  $\Gcal = (\Xcal \to [0,1])$ and $\Pi = (\Xcal \to \Acal)$ (i.e., the
  tabular value-function and policy classes) and with $\phi=0$. $\Pcal$
  is NP-hard.
\end{theorem}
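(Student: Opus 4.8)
The plan is to reduce from \textsc{Subset-Sum} (equivalently \textsc{Partition} / \textsc{3-Partition}). I first unpack the structure of problems in $\Pcal$: since $\Gcal$ and $\Pi$ are tabular, a candidate value function is an arbitrary vector $(g(x))_{x\in\Xcal}\in[0,1]^{\Xcal}$ and a candidate policy is an arbitrary vector $(\pi(x))_{x\in\Xcal}\in\Acal^{\Xcal}$; and with $\phi=0$ and perfect evaluation of expectations, each data set $D_i\in\Dcal$ contributes exactly one linear equality in $g$, namely $\tfrac{1}{|D_i|}\sum_{(x,a,r,x')\in D_i}K\one\{a=\pi(x)\}(g(x)-r-g(x'))=0$, where the term selected at each observation $x$ is the Bellman residual of $g$ for the action $\pi(x)$; the objective $\EEx_{D_0}[g]$ is a fixed linear functional of $g$. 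The one gadget I need is: a data set consisting of the single tuple $(x,a,r,x')$ enforces the conditional constraint ``$\pi(x)=a \Rightarrow g(x)=r+g(x')$'', and including one such singleton data set for \emph{every} $a\in\Acal$ enforces $g(x)=r+g(x')$ unconditionally, regardless of the policy.

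Next I would build an \emph{anchor}: for two fresh observations $z,y$, add the $K=|\Acal|$ singleton data sets $\{(z,a,1,y)\}$. These force $g(z)=1+g(y)$, and since $g\in[0,1]^{\Xcal}$ this pins $g(z)=1$ and $g(y)=0$ no matter what the policy is. Given a \textsc{Subset-Sum} instance with weights $a_1,\dots,a_n\ge 0$ and target $t$, rescale so that $a_i\in[0,1]$ and $t/n\in[0,1]$, set $\Acal=\{0,1\}$, and introduce observations $w_1,\dots,w_n$. For each $i$ and each $a\in\Acal$ add the singleton data set $\{(w_i,a,t/n,y)\}$; these pin $g(w_i)=t/n$ for all $i$. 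Finally add the single ``summation'' data set $D^{\mathrm{sum}}=\{(w_i,0,a_i,y),(w_i,1,0,y):i\in[n]\}$, and let $D_0$ be the point mass on $z$. Since $g$ is now completely pinned ($g(z)=1$, $g(y)=0$, $g(w_i)=t/n$), the only freedom affecting feasibility is the bit vector $(\pi(w_1),\dots,\pi(w_n))$, and evaluating the constraint of $D^{\mathrm{sum}}$ at the pinned values reduces it exactly to $\sum_{i:\pi(w_i)=0}a_i=t$. Hence this $\Pcal$-instance is feasible --- with optimal value $\EEx_{D_0}[g]=g(z)=1$ --- precisely when there is $I\subseteq[n]$ with $\sum_{i\in I}a_i=t$ (take $I=\{i:\pi(w_i)=0\}$), and is infeasible otherwise. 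The construction has size polynomial in the instance, so solving $\Pcal$ (equivalently, deciding feasibility, or deciding whether the optimal value is $\ge 1$) is NP-hard.

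The one point that needs care is numerical: \textsc{Subset-Sum} is only weakly NP-hard, so I should check that the rescaled $a_i$ retain enough precision --- they become rationals with polynomially many bits, which is fine --- or, to sidestep the issue, reduce instead from the strongly NP-hard \textsc{3-Partition}, whose numbers are polynomially bounded. Interference between the gadgets is a non-issue here because every ``loose end'' next-observation is routed to the single anchor $y$, so $g$ ends up with no free coordinates at all. The remaining gap between this statement and the full Theorem~\ref{thm:olive_nphard_g} is that there the data sets must actually arise from running \olive\ for some number of rounds in a genuine tabular MDP, in which case the true optimal value function $g^\star$ is always feasible and the hardness can no longer reside in feasibility; I expect the full argument to embed essentially this \textsc{Subset-Sum} gadget into an MDP-consistent trajectory-collection process so that the hardness instead manifests as a gap in the \emph{optimal value} of~\eqref{eqn:oliveprobg} --- which is exactly why it is natural to establish this simpler version first.
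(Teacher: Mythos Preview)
Your reduction is correct. The anchor gadget cleanly pins $g(z)=1$, $g(y)=0$, the per-variable gadgets pin $g(w_i)=t/n$ regardless of $\pi(w_i)$, and the single summation data set then collapses to exactly the subset-sum equation in the policy bits. The only remaining degree of freedom is $(\pi(w_i))_{i\in[n]}$, so feasibility of the instance is equivalent to solvability of the \textsc{Subset-Sum} instance, and the decision ``is the optimal value $\ge 1$'' is NP-hard. Your remark about precision is also right: either keep the rational weights or switch to \textsc{3-Partition}.

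The paper takes a rather different route: it reduces from \textsc{3-SAT}, building an MDP with literal states $x_i^0,x_i^1$, clause states $C_j$ with seven actions (one per satisfying local assignment), and a start state with \texttt{[try $x_i$]}, \texttt{[try $C_j$]}, and \texttt{[solve]} actions. Crucially, the paper's instance is \emph{always feasible}; the hardness lives in whether the optimal objective value is $1$ or strictly less, because the constraints force $g(x_i^0)+g(x_i^1)=1$ and the clause constraints then make ``value $1$ at the root'' encode a satisfying assignment. Your construction instead makes the NO case \emph{infeasible}. Both are valid proofs of this simpler theorem, and yours is noticeably more elementary. The trade-off is exactly the one you identify at the end: because the paper's instance is always feasible and is in fact a genuine family of MDPs, it lifts almost verbatim to the stronger Theorem~\ref{thm:olive_nphard_g}, where the data sets must arise from an actual run of \olive\ (so $g^\star$ is always feasible). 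Your \textsc{Subset-Sum} gadget does not obviously admit such a lift, since the hardness would have to migrate from feasibility to the objective value --- which is precisely what the \textsc{3-SAT} construction is engineered to do.
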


Some remarks are in order about this statement
\begin{enumerate}
\item Our proof actually shows that it is NP-hard to find an
  $\epsilon$-approximate solution to these optimization problems, for polynomially small $\epsilon$. 
\item The two theorems differ in whether the data sets ($D_i \in \Dcal$) are chosen adversarially (Theorem~\ref{thm:olive_nphard_g_simple}), or induced naturally from an actual run of \olive (Theorem~\ref{thm:olive_nphard_g}). Therefore, Theorem~\ref{thm:olive_nphard_g} is strictly stronger. 
\item At a high level, these results imply that \olive in general must
  solve NP-hard optimization problems, presenting a barrier for
  computational tractability.
\item These results also hold with imperfect expectations and
  polynomially small $\phi$.
\item We use the $(\Gcal, \Pi)$ representation here but similar
  results hold with $\Fcal$ representation (i.e., approximating the $Q$-function; see Theorems~\ref{thm:olive_nphard_f_simple} and \ref{thm:olive_nphard_f}).
\end{enumerate}

For intuition we first sketch the proof of Theorem~\ref{thm:olive_nphard_g_simple}. The complete proof follows below.
\begin{figure}
    \begin{center}
\includegraphics[width=.6\textwidth]{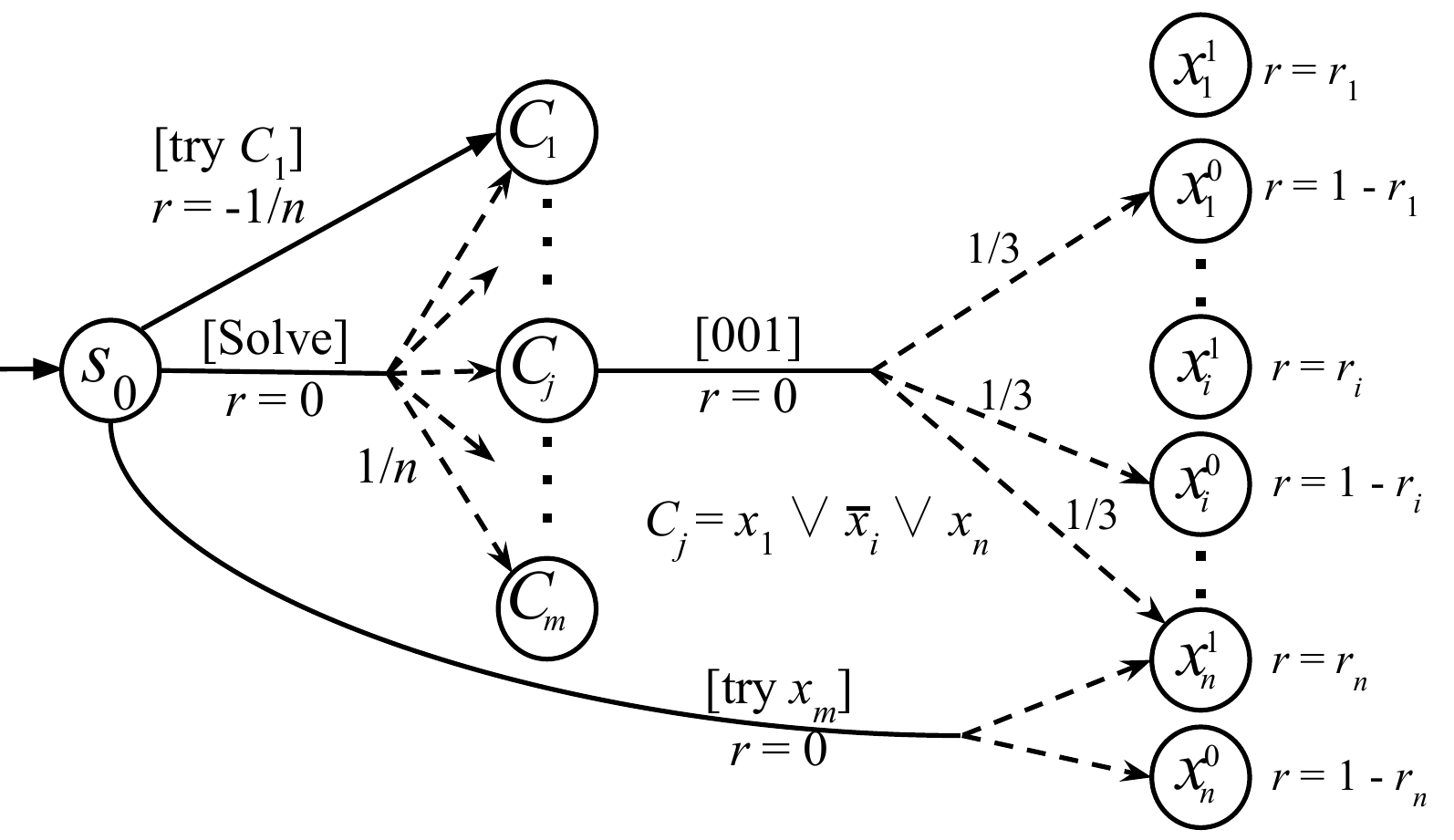}
    \end{center}
    \caption{Family of MDPs that are determined up to terminal rewards $r_1, \dots, r_n \in [0, 1]$. Finding the optimal value of the most optimistic MDP in this family solves the encoded 3-SAT instance.
Solid arrows represent actions and dashed arrows represent random transitions.}
    \label{fig:nphard1}
\end{figure}

\begin{proof}[Proof Sketch of Theorem~\ref{thm:olive_nphard_g_simple}]
  We reduce from 3-SAT. Let $\psi$ be a 3-SAT formula on $n$ variables
  $x_1,\ldots,x_n$ with $m$ clauses $c_1,\ldots,c_m$. We construct a
  family of MDPs as shown in Figure~\ref{fig:nphard1} that
  encodes the 3-SAT problem for this formula as follows: For each
  variable $x_i$ there are two terminal states $x_i^1$ and $x_i^0$
  corresponding to the Boolean assignment to the variable. For each
  variable, the reward in either $x_i^1$ or $x_i^0$ is $1$ and $0$ in
  the other. The family of MDPs contains all possible combinations of
  such terminal rewards.  There is also one state per clause $c_j$ and
  one start state $s_0$. From each clause, there are 7 actions, one
  for each binary string of length 3 except ``000." These actions all
  receive zero instantaneous reward. For clause
  $c_\ell = x_i \vee \bar{x}_j \vee \bar{x}_k$, the action
  "$b_1b_2b_3$" transitions to states $x_i^{b_1}, x_j^{1-b_2}$, or
  $x_k^{1-b_3}$, each with probability $1/3$. The intuition is that
  the action describes which literals evaluate to true for this
  clause.
  From the start state, there are $n+m+1$ actions. For each variable
  $x_i$, there is a [try $x_i$] action that transitions uniformly to
  $x_i^0, x_i^1$ and receives $0$ instantaneous reward. For each
  clause $c_j$ there is a [try $c_j$] action that transitions
  deterministically to the state for clause $c_j$, but receives reward
  $-1/n$. And finally there is a [solve] action that transitions to a
  clause state uniformly at random.

  For each $x_i$, we introduce a constraint into
  Problem \eqref{eqn:oliveprobg} corresponding to the [try $x_i$]
  action. These constraints impose that the optimal $\hat{g} \in \Gcal$
  satisfies
  $
      \forall i \in [m]: \hat g(x_i^0) + \hat g(x_i^1) = 1.
  $
  We also introduce constraints for the [try $c_j$] actions and from
  $s_0$.  Recall that values must be in $[0,1]$.

  With these constraints, if the 3-SAT formula has a satisfying
  assignment, then the optimal value from the start state is $1$, and
  it is not hard to see that there exists function $\hat{g} \in \Gcal$
  that achieves this optimal value, while satisfying all constraints
  with a $\hat \pi \in \Pi$. Conversely, if the value of the start date is $1$, we
  claim that the 3-SAT formula is satisfiable. In more detail, the
  policy must choose the [solve] action, and the value function must
  predict that each clause state has value $1$, then the literal
  constraints enforce that exactly one of $x_i^0,x_i^1$ has value $1$
  for each $i$. Thus the optimistic value function encodes a
  satisfying assignment, completing the reduction.
\end{proof}

\subsubsection{Proof of Theorem~\ref{thm:olive_nphard_g_simple}}

In this section, we prove that the optimization problem solved by \olive is
NP-hard.  The proofs rely on the fact that \olive only adds a constraint for a
single time step $h$ that has high average Bellman error. However, using an
extended construction, one can show similar statements for a version of \olive
that adds constraints for all time steps if there is high average Bellman
error in any time step.

For notational simplicity, we do not prove
Theorem~\ref{thm:olive_nphard_g_simple}
and Theorem~\ref{thm:olive_nphard_g} directly, but versions
of these statements below with a tabular $Q$-function representation $\Fcal$ instead of the $(\Gcal, \Pi)$ version presented in the paper.
For this formulation, \olive picks the policy for the next round as the greedy policy $\pi_{\hat f_k}$ of the $Q$-function that maximizes
\begin{align} \label{eqn:oliveprobf}
    &\hat f_k = \argmax_{f \in \Fcal} \EEx_{D_0}[f(x, \pi_f(x))]\\
    &\textrm{s.t. }
    ~\forall ~ D_i \in \Dcal:\nonumber\\
    &|\EEx_{D_i}[\one\{a = \pi_f(x) \}(f(x, a) - r - f(x', \pi_f(x')))]| \leq \phi.\nonumber
\end{align}

This proof naturally extends to the $(\Gcal, \Pi)$ representation: note that \olive runs in a completely equivalent way if it takes a set of $(g, \pi)$ pairs induced by $\Fcal$ as inputs, i.e., $\{(x \mapsto f(x, \pi_f(x)), x \mapsto \pi_f(x)): f \in \Fcal\}$ \citep[see Appendix A.2,]{jiang2017contextual}. When $\Fcal$ is the tabular $Q$-function class, it is easy to verify that the induced set is the same as $\Gcal \times \Pi$ where $\Gcal$ and $\Pi$ are the tabular value-function / policy classes respectively. Therefore, the
proof for Theorem~\ref{thm:olive_nphard_g_simple} just requires a
simple substitution where $f(x, \pi_f(x))$ is replaced by $g(x)$ and
$\pi_f(x)$ is replaced by $\pi$.

We first prove the simpler NP-hardness claim. 

\begin{theorem}[$\Fcal$-Version of Theorem~\ref{thm:olive_nphard_g_simple}]
    \label{thm:olive_nphard_f_simple}
    Let $\Pcal$ denote the family of problems of the
    form~\eqref{eqn:oliveprobf}, parameterized by $(\Xcal, \Acal, D_0, \Dcal)$
    with implicit $\Fcal = (\Xcal \times \Acal \to [0,1])$ (i.e., the tabular
    $Q$-function class) and with $\phi=0$. $\Pcal$ is NP-hard.
\end{theorem}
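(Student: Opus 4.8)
The plan is to reduce from 3-SAT, realizing the MDP family sketched in Figure~\ref{fig:nphard1} with the tabular $Q$-function class $\Fcal$. Given a 3-SAT instance $\psi$ with variables $x_1,\dots,x_n$ and clauses $c_1,\dots,c_m$, I would build a tabular MDP with a start state $s_0$, one state per clause, and two terminal ``literal'' states $x_i^0,x_i^1$ per variable. From $s_0$ there are $n+m+1$ actions: for each variable a [try $x_i$] action with reward $0$ that transitions uniformly to $x_i^0$ or $x_i^1$; for each clause a [try $c_j$] action with reward $-1/n$ that transitions deterministically to the clause state $c_j$; and a [solve] action with reward $0$ that transitions uniformly over the clause states. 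From a clause state $c_\ell=\ell_1\vee\ell_2\vee\ell_3$ there are seven actions, indexed by the nonzero bit-strings $b_1b_2b_3$, all with reward $0$, where action $b$ transitions uniformly among the literal states that make $\ell_t$ true for the indices $t$ with $b_t=1$. I would then supply $D_0=\{s_0\}$ together with an explicitly chosen polynomial-size collection $\Dcal$ of data sets --- each of the form \olive itself produces, obtained by rolling in with a fixed policy and randomizing the action at one level --- engineered so that the zero-Bellman-error constraints of~\eqref{eqn:oliveprobf} force every feasible $f$ to (i)~be Bellman-consistent at $s_0$ for the action $\pi_f(s_0)$, (ii)~evaluate each clause state as the expected successor value of $\pi_f$'s chosen bit-pattern, and (iii)~satisfy a complementarity bound $f(x_i^0)+f(x_i^1)\le 1$ (up to statistical slack) for every $i$, all while leaving the individual terminal values free in $[0,1]$. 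The resulting claim is that the optimum of~\eqref{eqn:oliveprobf} is $1$ precisely when $\psi$ is satisfiable.

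For the ``satisfiable $\Rightarrow$ optimum $\ge1$'' direction I would fix a satisfying assignment, set the reward to $1$ on each literal state that agrees with it and $0$ on its complement, and take $f$ to be the exact optimal $Q$-function of the resulting MDP, with $\pi_f$ its greedy policy. Here $V^\star(s_0)=1$, achieved by [solve]: every clause has a true literal, so at each clause state $\pi_f$ picks the bit-pattern routing all mass onto a literal of value $1$. This $f$ lies in $[0,1]$ by reward normalization, it is feasible because all the Bellman errors vanish exactly (the data sets are consistent with this MDP) and the complementarity bounds hold with equality, and it attains objective $\EEx_{D_0}[f(s_0,\pi_f(s_0))]=1$.

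For the converse I would show that any feasible $(f,\pi_f)$ with objective at least $1-\epsilon$, for an inverse-polynomially small $\epsilon$, encodes a satisfying assignment. First, $\pi_f(s_0)$ cannot be a [try $c_j$] action (its value is at most $-1/n$ plus the value of a clause state, hence below $1-\epsilon$) nor a [try $x_i$] action (its value is $\tfrac12(f(x_i^0)+f(x_i^1))\le\tfrac12$ by complementarity), so $\pi_f(s_0)$ must be the [solve] action; then constraint~(i) gives $\tfrac1m\sum_j f(c_j,\pi_f(c_j))\ge 1-\epsilon$, so $f(c_j,\pi_f(c_j))\ge 1-m\epsilon$ for every $j$. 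Unwinding constraint~(ii), the chosen bit-pattern at $c_j$ must place essentially all its mass on literal states of $f$-value at least $1-3m\epsilon$. Once $\epsilon$ is small enough that $1-3m\epsilon>\tfrac12$, the complementarity bound forces at most one of $x_i^0,x_i^1$ to have high value for each $i$; reading off the high-value literal yields a partial Boolean assignment, and the bit-pattern condition shows every clause is satisfied by it. Since the MDP, the data sets, and the tolerance are all polynomial in $n$ and $m$, this shows that even achieving a polynomially small additive approximation to~\eqref{eqn:oliveprobf} is NP-hard, and the argument is insensitive to replacing $\phi=0$ by a polynomially small $\phi$ or to estimating the expectations with polynomially small error.

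The step I expect to be the main obstacle is~(iii): enforcing the complementarity bounds $f(x_i^0)+f(x_i^1)\le 1$ purely through zero-Bellman-error constraints on data sets that \olive can actually produce, without any data set touching the terminal states (which would pin their values and eliminate the optimism that drives the reduction), and without the $\one\{a=\pi_f(x)\}$ indicator in~\eqref{eqn:oliveprobf} making those constraints vacuous --- so the per-variable gadget must be arranged so that the greedy best response at the relevant state is forced to be exactly the action whose Bellman constraint carries the complementarity information. The remaining work --- tracking the $\epsilon$-slack through the two layers of transitions and keeping the thresholds $m\epsilon$ and $3m\epsilon$ below $\tfrac12$ --- is routine once $\epsilon$ is taken to be, say, $\Theta(1/m)$.
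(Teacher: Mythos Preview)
Your overall reduction is the same one the paper carries out, and your two-direction argument (satisfiable $\Leftrightarrow$ optimum $=1$) matches the paper's. The only real gap is exactly the step you flag as ``the main obstacle,'' point~(iii), and the paper resolves it much more simply than you anticipate.

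You worry that any data set touching the terminal literal states would ``pin their values and eliminate the optimism.'' It does not, because \olive's constraints are \emph{average} Bellman errors. Give every terminal state $x_i^b$ a single dummy action; then $\pi_f$ has no choice there and the indicator $\one\{a=\pi_f(x)\}$ is identically $1$, so the constraint is never vacuous. Now roll in with [try $x_i$] and take the data set at the \emph{terminal} level: it is the uniform mixture over $\{x_i^0,x_i^1\}$, and the zero-Bellman-error constraint reads
\[
\tfrac12\bigl(f(x_i^0)-r(x_i^0)\bigr)+\tfrac12\bigl(f(x_i^1)-r(x_i^1)\bigr)=0,
\quad\text{i.e.}\quad f(x_i^0)+f(x_i^1)=r(x_i^0)+r(x_i^1).
\]
The paper engineers the MDP family so that $r(x_i^0)+r(x_i^1)=1$ for \emph{every} member of the family (one literal gets reward $1$, the other $0$, but which is which is the hidden assignment). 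Hence this single data set yields the same constraint $f(x_i^0)+f(x_i^1)=1$ regardless of the underlying MDP, pinning only the sum and leaving each individual value free in $[0,1]$. That is precisely the complementarity you need --- as an equality, not merely an inequality --- obtained with one data set per variable and no extra gadgetry. Note also that for this theorem the data sets in $\Dcal$ are chosen adversarially, so you need not worry about whether \olive would actually generate them; that concern is deferred to the stronger Theorem~\ref{thm:olive_nphard_g}.

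A minor discrepancy: in the paper each clause action $b_1b_2b_3$ transitions with probability $\tfrac13$ to \emph{three} literal states (one per position, sign determined by $b_t$), not only to the positions with $b_t=1$ as you describe. This makes the clause-level Bellman constraint a clean average of three terms and keeps the ``all literals must have value $1$'' logic uniform, but your variant could also be made to work with adjusted bookkeeping.
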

\begin{proof}
  For the ease of presentation, we show the statement for $\Fcal = (\Xcal
    \times \Acal \to [-1,1])$ and all values scaled to be in $[-1, 1]$. By
    linearly transforming all rewards accordingly, one obtains a proof for the
    statement with all values in $[0,1]$.

  We demonstrate a reduction from 3-SAT. Recall that an instance of
  3-SAT is a Boolean formula $\psi$ on $n$ variables can be described
  by a list of clauses $C_1, \dots C_m$ each containing at 3 literals
  (a variable $x_i$ or its negation $\bar x_i$), e.g.
    $C_1 = (\bar x_2 \vee x_3 \vee \bar x_5)$. As notation let $o_{j,i}^1$
  for $i \in \{1,2,3\}$ denote the $i^\textrm{th}$ literal in the $j^\textrm{th}$
    clause and $o_{j,i}^0$ its negation (e.g. $o_{1,3}^1 = \bar x_5$ and $o_{1,3}^0 = x_5$).
    Given a $3$-SAT instance with $m$
  clauses $C_{1:m}$ and $n$ variables $x_{1:n}$, we define a class of
  finite episodic MDPs $\Mcal$. This class contains (among others)
  $2^n$ MDPs that correspond each to an assignment of Boolean values
  to $x_{1:n}$.

  The proof proceeds as follows: First we describe the construction of
  this class of MDPs. Then we will demonstrate a set of constraints
  for the \olive program. Importantly, these constraints do not
  distinguish between the $2^n$ MDPs in the class $\Mcal$
  corresponding to the binary assignments to the variables
  $x_{1:n}$, so the optimistic planning step in \olive needs to reason about all possible assignments. Finally, we show that with the function class
  $\Fcal = (\Xcal \times \Acal) \to [-1,1]$, the solution to the
  optimization problem~\eqref{eqn:oliveprobf} determines whether
  $\psi$ is satisfiable or not.

  For simplicity, the MDPs in $\Mcal$ have different actions
  available in different states and rewards are in $[-1, 1]$ instead
  of the usual $[0, 1]$. We can however find equivalent MDPs that
  satisfy the formal requirements of \olive. 

\begin{figure}
\begin{center}
\includegraphics[width=.7\textwidth]{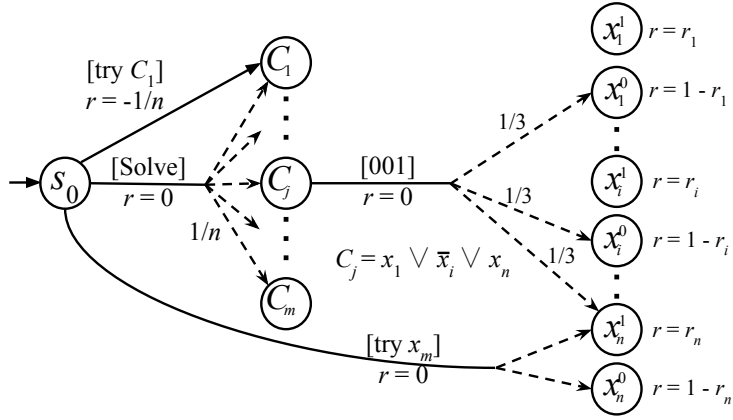}
\end{center}
\caption{Family of MDPs $\Mcal$ for a specific instance of a 3-SAT problem.}
\label{fig:nphard}
\end{figure}

\paragraph{MDP structure.} Let $\psi$ be the 3-SAT instance with
variables $x_{1:n}$ and clauses $C_{1:m}$. The state space for MDPs in $\Mcal$ consists of $m+2n+1$ states, two for each variable, one
for each clause, and one additional starting state. For each variable
$x_i$, there are two states $x_i^0, x_i^1$ corresponding to the
variable and its negation. Each clause $C_j$ has a state $C_j$, and
the starting state is denoted $s_0$.

The transitions are as follows: The states $x_i^0,  x_i^1$
corresponding to the literals are terminal, with just a single
action. The class $\Mcal$ differs only in how it assigns rewards to
these terminal states. Specifically let $y \in \{0,1\}^n$ be a binary
vector, then there is an MDP $M_y \in \Mcal$ where for all $i \in [n]$
    the reward for literal $x_i^{y_i} = 1$ and
    $x_i^{1 - y_i} = 0$. Specifically, all MDPs in $\Mcal$ have values 
that satisfy $V(x_i^1) + V(x_i^0) = 1$ for all $i \in [n]$.

Each clause state $C_j$ has 7 actions, indexed by
$b \in \{0,1\}^3 \setminus \{``000"\}$, each corresponding to an
assignment of the variables that would satisfy the clause.  Taking an
action $b$ transitions the agent to three literal states with equal
probability $1/3$ and the agent receives no immediate reward. Which
literals is determined by the clause. Assume the clause consists of
$C_t = (\bar x_i \vee x_j \vee \bar x_k)$. Then
\begin{align*}
\prob(x_i^1 | c_t, b) =& \frac 1 3 \one\{b_1 = 0\}, \quad \prob( x_i^0 | c_t, b) = \frac 1 3 \one\{b_1 = 1\}\\
\prob(x_j^1 | c_t, b) =& \frac 1 3 \one\{b_2 = 1\}, \quad \prob( x_j^0 | c_t, b) = \frac 1 3 \one\{b_2 = 0\}\\
\prob(x_k^1 | c_t, b) =& \frac 1 3 \one\{b_3 = 0\}, \quad \prob( x_k^0 | c_t, b) = \frac 1 3 \one\{b_3 = 1\}.
\end{align*}
For example, taking action $011$ in clause state
$C_1 = (\bar x_2 \vee x_3 \vee \bar x_5)$ transitions with equal
probability to $x_2^1$ (since the first component of the action is $0$),
$x_3^1$ (second component is $1$) and $x_5^0$ (last component is
$1$).

The initial state has $n+m+1$ actions. The first set of actions are
labeled \texttt{[try $x_i$]}, for $i \in [n]$. They receive zero
instantaneous reward and transition uniform to $x_i^1, x_i^0$. The
second set of actions are labeled \texttt{[try $C_j$]} (for
$j \in [m]$), which receives $1/m$ instantaneous reward and
transitions deterministically to $c_j$. Finally there is a
\texttt{[solve]} action that transitions uniformly to the
$\{C_j\}_{j=1}^m$ states and receives zero instantaneous reward.

\paragraph{\olive Constraints.}
We introduce constraints at the start state $s_0$, all of the
constraint states $C_j$, and the distributions induced when taking the
\texttt{[try $x_i$]} action. Since the literal states $x_i^1, x_i^0$
have no actions, we omit the second argument from the
$Q$-functions $f$. We list these constraints in the following writing out the constraints for each optimal action that are implied by the indicator of the original constraints in Problem~\eqref{eqn:oliveprobf}:
From initial state:
\begin{align}
    f(s_0, \texttt{[try $c_j$]}) &= \max_b f(c_1, b) - 1/m \label{eqn:npconstr1}
    & \textrm{if } \pi_{f}(s_0) =& \texttt{[try $c_j$]}\\
    f(s_0, \texttt{[solve]}) &= \frac{1}{m} \sum_{i=1}^m \max_{b} f(C_j,b)
    & \textrm{if } \pi_{f}(s_0) =& \texttt{[solve]}\\
    f(s_0, \texttt{[try $x_i$]}) &= \frac{f(x_i^0) + f(x_i^1)}{2}
    & \textrm{if } \pi_{f}(s_0) =& \texttt{[try $x_i$]}
\end{align}
From clause $j$ after \texttt{[try $C_j$]}:
\begin{align}
    f(C_j,b) &= \frac{f(o_{j,1}^{b(1)}) + f(o_{j,2}^{b(2)})
    + f(o_{j,3}^{b(3)})
    }{3}
    & \textrm{if } \pi_{f}(C_j) =& b
\end{align}
From variable $i$  after \texttt{[try $x_i$]}:
\begin{align}
  \frac{f(x_i^1) + f(x_i^0)}{2} & = \frac{1}{2}\label{eqn:npconstr2}
\end{align}
Note that all appearances of $f$ on the LHS could be replaced by $f(\cdot , \pi_f(\cdot))$.
There are other types of constraints involving literal states that could be imposed,
specifically constraints of the form
\begin{align}
\sum_{i=1}^m w_{2i-1}f(x_i^1) + w_{2i}f(x_i^0) = V\label{eqn:npnotconstr}
\end{align}
for some $V$ and
$w \in \Delta([2m])$, which appears by first applying \texttt{[solve]} or \texttt{[try $C_j$]}
and then various actions at the clause states to arrive at a
distribution over the literal states. It is important here that constraints of this type are \emph{not} included in the optimization
problem, since it distinguishes elements of the family $\Mcal$.

\paragraph{The Optimal Value.}
Consider the \olive optimization problem~\eqref{eqn:oliveprobf} on the
family of MDPs $\Mcal$ with constraints described above. Note that all MDPs in the
family generate identical constraints, so formulating the optimization
problem does not require determining whether $\psi$ has a satisfying
assignment or not.

Now, if $\psi$ has a satisfying assignment, say
$y^\star \in \{0,1\}^n$, then the MDP $M_{y^\star} \in \Mcal$ has
optimal value $1$. Moreover since the function class $\Fcal$ is
entirely unconstrained, this function class can achieve this value,
which is the solution to Problem~\eqref{eqn:oliveprobf}. To see why
$M_{y^\star}$ has optimal value $1$, consider the policy that chooses
the \texttt{[solve]} action and from each clause chooses the 3-bit
string that transitions to the literal states that have value
$1$. Importantly, since $\psi$ has a satisfying assignment, this must
be true for one of the 7 actions.

Conversely, suppose that Problem~\eqref{eqn:oliveprobf}, with all the
constraints described above, has value $1$. We argue that this implies
$\psi$ has a satisfying assignment. Let $\hat{f}, \hat{\pi}$
correspond to the $Q$-value and policy that achieve the optimal value
in the program. First, due to the constraints on the \texttt{[try
  $x_i$]} distributions and the immediate negative rewards for taking
\texttt{[try $C_j$]} actions, we must have
$\hat{\pi}(s_0) = \texttt{[solve]}$ and
$\hat{f}(s_0,\texttt{[solve]}) = 1$. The constraints on $\hat{f}$ now
imply that for each clause $C_j$ there exists a action
$b_j \in \{0,1\}^3 \setminus \{000\}$ such that
$\hat{f}(C_j,b_j) = 1$. Proceeding one level further, if $b_j$
satisfies $\hat{f}(C_j,b_j) = 1$ then we must have that $\hat f(o_{j,k}^{b_j(k)}) = 1$ for all $k \in \{1, 2, 3\}$.
And due to the boundedness conditions on $\hat{f}$ along with the
constraint that $\hat{f}(x_i^0) +\hat{f}(x_i^0) = 1$, one of these
values must be 1, while the other is zero. Therefore, for any variable
that appears in some clause the corresponding literal states must have
predicted value that is binary. Since the constraints corresponding to
the clauses are all satisfied (or else we could not have value $1$ at
$s_0$), the predicted values at the literal states encodes a
satisfying assignment to $\psi$.
\end{proof}

\subsubsection{Proof of Theorem~\ref{thm:olive_nphard_g}}
After showing that Problem~\eqref{eqn:oliveprobf} is NP-hard when constraints
are chosen adversarially, we extend this result to the class of problems
encountered by running \olive.  Again, we prove a version of the statement with
$\Fcal$ representation but the proof for Theorem~\ref{thm:olive_nphard_g} is
completely analogous.

\begin{theorem}[$\Fcal$ Version of  Theorem~\ref{thm:olive_nphard_g}]
    \label{thm:olive_nphard_f}
  Let $\Pcal_{\olive}$ denote the family of problems of the
  form~\eqref{eqn:oliveprobf}, parameterized by
  $(\Xcal,\Acal,\textrm{Env}, t)$, which describes the optimization
  problem induced by running \olive in the MDP environment
  $\textrm{Env}$ (with states $\Xcal$, actions $\Acal$ and perfect
  evaluation of expectations) for $t$ iterations with
  $\Fcal = (\Xcal \times \Acal \to [0,1])$ and with
  $\phi=0$. $\Pcal_{\olive}$ is NP-hard.
\end{theorem}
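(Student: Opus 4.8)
The plan is to reduce from 3-SAT once more, reusing verbatim the family of MDPs $\Mcal$ and the constraint list from the proof of Theorem~\ref{thm:olive_nphard_f_simple}; the only genuinely new work is to realize an NP-hard instance of the program~\eqref{eqn:oliveprobf} as the optimization problem that \olive actually faces at some round $t$ of an honest run. Concretely, I would fix one environment $\textrm{Env}_\psi$ in (an augmentation of) the family --- say $M_{0^n}$, preprocessed by adding a dummy clause so that $0^n$ does not satisfy $\psi$ --- and set $t_\psi$ to a polynomial in $n,m$. The reduction then outputs $(\Xcal,\Acal,\textrm{Env}_\psi,t_\psi)$ in polynomial time; for the round-$t_\psi$ problem to be well defined and computable by the reduction, I must be able to \emph{predict}, in polynomial time and without solving any NP-hard instance, exactly what \olive does in rounds $1,\dots,t_\psi-1$.

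The device for this is a \textbf{forcing gadget}: I augment $\textrm{Env}_\psi$ with a preamble of fresh states and a few extra actions out of the existing states, engineered so that in each round $k\le t_\psi-1$, every optimistic solution $(\hat f_k,\hat\pi_k)$ of~\eqref{eqn:oliveprobf} has its (strictly) largest average Bellman error at a prescribed level, so that the dataset $D_k$ collected in that round adds precisely the next constraint from the list in the proof of Theorem~\ref{thm:olive_nphard_f_simple}: first the single constraint at $s_0$ (a level-$1$ probe, which conditionally covers the \texttt{[try $x_i$]}, \texttt{[try $C_j$]} and \texttt{[solve]} constraints simultaneously), then the $n$ variable constraints $\hat f(x_i^0)+\hat f(x_i^1)=1$ (level-$2$ probes after \texttt{[try $x_i$]}, one per $i$), and finally the clause constraint from a level-$2$ probe after \texttt{[solve]} --- one such averaged constraint suffices, because $f\le 1$ forces the $m$ per-clause equalities out of the averaged one exactly as in the converse argument of Theorem~\ref{thm:olive_nphard_f_simple}. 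Two properties must be baked into the gadget: \olive must never probe the literal states directly (such a probe would pin the terminal rewards of $M_{0^n}$ and collapse the problem to a tractable one, i.e.\ exactly the forbidden constraints of type~\eqref{eqn:npnotconstr}), and the termination test must never fire before round $t_\psi$ (the optimistic value stays at the maximum, which is consistent with every constraint seen so far). Since the preamble and $M_{0^n}$ carry no hidden-reward ambiguity, \olive's optimistic value and exploration policy in each of these $O(n+m)$ rounds are computable by hand.

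Granting the above, at round $t_\psi$ the constraint set of~\eqref{eqn:oliveprobf} coincides --- after eliminating the gadget variables that the preamble constraints have pinned to constants, and discarding trivially-satisfied duplicates --- with the NP-hard constraint set of Theorem~\ref{thm:olive_nphard_f_simple}; hence the value of that program is $1$ iff $\psi$ is satisfiable, and deciding this (equivalently, $\epsilon$-approximating it for polynomially small $\epsilon$) is NP-hard, which is the claim. This argument is unaffected by replacing $\phi=0$ with a polynomially small $\phi$ and by keeping perfect expectations. The $(\Gcal,\Pi)$ statement, Theorem~\ref{thm:olive_nphard_g}, then follows by the same substitution used to pass from Theorem~\ref{thm:olive_nphard_f_simple} to Theorem~\ref{thm:olive_nphard_g_simple}: replace $f(x,\pi_f(x))$ by $g(x)$ and $\pi_f(x)$ by $\pi$ throughout.

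The main obstacle is the control argument behind the forcing gadget. \olive's choice among the optimizers of~\eqref{eqn:oliveprobf} is not a priori unique, and its choice of which high-Bellman-error level to add a constraint for is not unique either, so the preamble must be designed so that in each of the first $t_\psi-1$ rounds there is a \emph{unique} level whose average Bellman error strictly dominates, and this is true for \emph{every} optimistic solution --- not just one convenient one --- and that level is the intended one. Arranging for the gadget to ``spend'' its Bellman-error budget in exactly the right order, while simultaneously never introducing a symmetry-breaking constraint of the form~\eqref{eqn:npnotconstr} and never letting the optimistic value drop (which would trigger premature termination), is the heart of the proof; everything downstream is inherited from Theorem~\ref{thm:olive_nphard_f_simple}.
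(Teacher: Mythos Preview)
You have overlooked the key simplification the paper exploits. \olive, as specified in \citet{jiang2017contextual}, is under-determined: it does not prescribe which level $h$ to probe among those with sufficiently large average Bellman error, nor which optimizer $f$ of~\eqref{eqn:oliveprobf} to return when there are ties. The paper reads ``the optimization problem induced by running \olive'' as the problem arising under \emph{some} valid execution, and resolves both ambiguities adversarially. With that freedom, no forcing gadget is needed at all: on \emph{any} MDP in the original family $\Mcal$ (no augmentation, no dummy clause), the paper exhibits an explicit $(m{+}n{+}1)$-round run in which the adversary picks $f_t$ so that $\pi_{f_t}(s_0)=\texttt{[try $C_t$]}$ for $t\in[m]$ (probing $h{=}2$ to yield the clause-$t$ constraint), then $\pi_{f_t}(s_0)=\texttt{[try $x_{t-m}$]}$ for $t=m{+}1,\dots,m{+}n$ (probing $h{=}2$ to yield the variable constraints), and finally picks an $f_t$ that creates a Bellman error at $s_0$ and probes $h{=}1$ (yielding all the start-state constraints at once). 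One simply checks in each round that the chosen $f_t$ is a legal optimizer given the constraints accumulated so far and that the chosen level has positive Bellman error; the round-$(m{+}n{+}2)$ program is then exactly the NP-hard instance of Theorem~\ref{thm:olive_nphard_f_simple}.

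Your forcing-gadget route would establish the strictly stronger claim that the problem is NP-hard under \emph{every} tie-breaking rule, but the theorem does not ask for this, and the gadget construction---which you correctly identify as ``the heart of the proof''---is left entirely open in your proposal. Once you accept adversarial tie-breaking (which the paper justifies precisely by pointing to \olive's under-specification), your ``main obstacle'' disappears, and with it the need for the preamble, the dummy clause, and all the uniqueness arguments.
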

\begin{proof}
    The proof uses the same family of MDPs $\Mcal$ and set of constraints as
    the proof of Theorem~\ref{thm:olive_nphard_f_simple} above.  As mentioned there,
    it is crucial that constraints in
    Equations~\eqref{eqn:npconstr1}-\eqref{eqn:npconstr2} are added for all
    clauses and literals but none of the possible constraints of the form in
    Equation~\eqref{eqn:npnotconstr} that arise from distributions over literal
    states after taking actions \texttt{[try $C_j$]} or \texttt{[Solve]}.
    To prove that \olive can encounter NP-hard problems, it therefore remains to show that running \olive on any MDP in $\Mcal$
    can generate the exact set of constraints in Equations~\eqref{eqn:npconstr1}-\eqref{eqn:npconstr2}.

    The specification of \olive by \citet{jiang2017contextual} only
    prescribes that a constraint for one time step $h$ among all that
    have sufficiently large average Bellman error is added. It however
    leaves open how exactly $h$ is chosen and which $f \in \Fcal$ is
    chosen among all that maximize
    Problem~\eqref{eqn:oliveprobf}. Since this component of the
    algorithm is under-specified, we choose $h$ and $f \in \Fcal$ in an
    adversarial manner within the specification, which amounts to
    adversarial tie breaking in the optimization.

    We now provide a run of \olive on an arbitrary MDP in $\Mcal$ that generates exactly the set of constraints in Equations~\eqref{eqn:npconstr1}-\eqref{eqn:npconstr2}:
    \begin{itemize}
        \item For the first $t \in [m]$ iterations, \olive picks any Q-function $f_t \in \Fcal$ with
            $f_t(s_0, b) = \one\{ b = \texttt{[try $C_t$]}\}$ and 
            $f_t(C_t, b) = 1$ and $f_t(x_i^0, \pi_{f_t}(x_i^0)) =f_t(x_i^1, \pi_{f_t}(x_i^1)) = 0$ for all actions $b$ and $i \in [n]$ and chooses to add constraints for $h=2$.
            Since the context distributions is a different $C_t$ for every iteration $t$, this is a valid choice and 
generates constraints
\begin{align*}
    f(C_t,b) &= \frac{f(o_{t,1}^{b(1)}) + f(o_{t,2}^{b(2)})
    + f(o_{t,3}^{b(3)})
    }{3}
    & \textrm{if } \pi_{f}(C_t) =& b
\end{align*}
for all $b$.
\item For the next $n$ iterations $t = m+1, m+2, \dots m+n$, \olive picks any Q-function $f_t \in \Fcal$ with
        $f_t(s_0, b) = \one\{ b = \texttt{[try $x_{t-m}$]}\}$ and
            $f_t(x_{t-m}^0, \pi_{f_t}(x_{t-m}^0)) = f_t(x_{t-m}^1, \pi_{f_t}(x_{t-m}^1)) = 1$ for all $b$. The only positive average Bellman error occurs in the mixture over literal states at $h=2$ and therefore constraints 
\begin{align*}
    \frac{f(x_{t-m}^1, \pi_f(x_{t-m}^1)) + f(x_{t-m}^0, \pi_f(x_{t-m}^0))}{2} & = \frac{1}{2}
\end{align*}
are added.
\item Finally, in iteration $t=m+n+1$, \olive picks any $f_t \in \Fcal$ with
$f_t(s_0, b) = \one\{ b = \texttt{[try $x_{1}$]}\}$ and
$f_t(x_{1}^0, \pi_{f_t}(x_{1}^0)) = f_t(x_{1}^1, \pi_{f_t}(x_{1}^1)) = 1/2$ for all actions $b$. Now there is positive average Bellman error in the initial state $s_0$ and with $h_t = 1$ the following constraints are added
\begin{align*}
    f(s_0, \texttt{[try $c_j$]}) &= \max_b f(C_1, b) - 1/m
    & \textrm{if } \pi_{f}(s_0) =& \texttt{[try $c_j$]}\\
    f(s_0, \texttt{[solve]}) &= \frac{1}{m} \sum_{i=1}^m \max_{b} f(C_j,b)
    & \textrm{if } \pi_{f}(s_0) =& \texttt{[solve]}\\
    f(s_0, \texttt{[try $x_i$]}) &= \frac{f(x_i^0) + f(x_i^1)}{2}
    & \textrm{if } \pi_{f}(s_0) =& \texttt{[try $x_i$]}
\end{align*}
            for all $i \in [n]$ and $j \in [m]$.
    \end{itemize}
    Since at iteration $t=m+n+2$, the set of constraints matches exactly the one in the proof of Theorem~\ref{thm:olive_nphard_f_simple}, \olive solves exactly the problem instance described there which solves the given 3-SAT instance.
\end{proof}

\section{Additional Barriers}
\label{sec:examples}
In this section, we describe several further barriers that we must
resolve in order to obtain tractable algorithms in the stochastic
setting. 

\subsection{Challenges with Credit Assignment}
\label{sec:examples_learning}

We start with the learning step, ignoring the challenges with
exploration, and focus on a family of algorithms that we call
\emph{Bellman backup} algorithms.
\begin{definition}
  \label{def:bellman_backup}
  A \emph{Bellman backup} algorithm collects $n$ samples from every
  state and iterates the policy/value updates
  \begin{align*}
      \hat{\pi}_h &= \argmax_{\pi \in \Pi_h} \smashoperator[lr]{\sum_{s \in \Scal_h}} \Ex_s [r + \hat{g}_{h+1}(x') | a = \pi(x)]\\
    \hat{g}_h &= \argmin_{g \in \Gcal_h} \smashoperator[lr]{\sum_{s \in \Scal_h}} \Ex_s [(g(x) - r - \hat{g}_{h+1}(x'))^2 | a = \hat{\pi}_h(x)].
  \end{align*}
\end{definition}
This algorithm family differs only in the exploration component, which
we are ignoring for now, but otherwise is quite natural. In fact,
these algorithms can be viewed as a variants of Fitted Value Iteration
(FVI)\footnote{This family algorithms are prevalent in empirical RL
  research today: the popular Q-learning algorithm with function
  approximation can be viewed as a stochastic variant of its batch
  version known as Fitted Q-iteration \cite{ernst2005tree}, which fits
  into the FVI framework.} \cite{munos2008finite, farahmand2010error}
adapted to the $(g, \pi)$ representation. Unfortunately,
such algorithms
cannot avoid exponential sample complexity, even ignoring exploration
challenges.

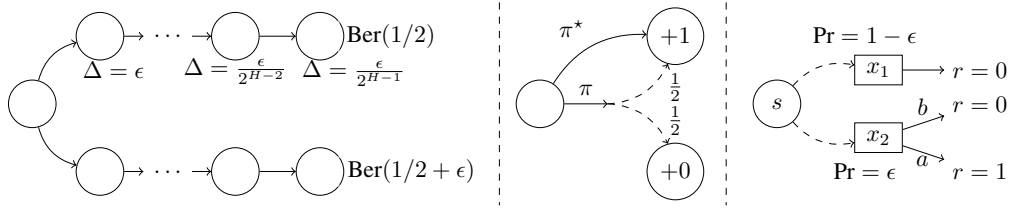
\begin{figure*}
\centering
    \scalebox{0.9}{
        \begin{tikzpicture}
\node[draw=black,circle,name=s0,minimum width=20pt] at (0,0) {};
\node[draw=black,circle,name=g1,minimum width=20pt] at (1,1) {};
\node[draw=black,circle,name=b1,minimum width=20pt] at (1,-1) {};
\node[draw=white,circle,name=g2,minimum width=20pt] at (2,1) {$\ldots$};
\node[draw=white,circle,name=b2,minimum width=20pt] at (2,-1) {$\ldots$};
\node[draw=black,circle,name=g3,minimum width=20pt] at (3,1) {};
\node[draw=black,circle,name=b3,minimum width=20pt] at (3,-1) {};
\node[draw=black,circle,name=g4,minimum width=20pt] at (4.25,1) {};
\node[draw=black,circle,name=b4,minimum width=20pt] at (4.25,-1) {};
\node[draw=white,name=g5,minimum width=20pt] at (5.3,1) {$\textrm{Ber}(1/2)$};
\node[draw=white,circle,name=b5,minimum width=20pt] at (5.6,-1) {$\textrm{Ber}(1/2+\epsilon)$};

\draw[black, ->, bend left]  (s0) to (g1);
\draw[black, ->, bend right] (s0) to (b1);
\draw[black, ->] (g1) to (g2);
\draw[black, ->] (g2) to (g3);
\draw[black, ->] (g3) to (g4);
\draw[black, ->] (b1) to (b2);
\draw[black, ->] (b2) to (b3);
\draw[black, ->] (b3) to (b4);

\draw[black] (4.75,0.45) node{$\Delta = \frac{\epsilon}{2^{H-1}}$};
\draw[black] (3,0.48) node{$\Delta = \frac{\epsilon}{2^{H-2}}$};
\draw[black] (1.2,0.5) node{$\Delta = \epsilon$};

\draw [dashed] (6.9,1.5) --(6.9,-1.5);
\node[draw=black,circle,name=s0,minimum width=20pt] at (7.5,0) {};
\node[draw=black,circle,name=g1,minimum width=20pt] at (9.5,1) {$+1$};
\node[draw=black,circle,name=b1,minimum width=20pt] at (9.5,-1) {$+0$};

\draw[black, ->] (s0) to ["$\pi$"] (8.5,0); 
\draw[black, ->, dashed, bend right] (8.5,0) to (g1); 
\draw[black, ->, dashed, bend left] (8.5,0) to (b1);
\draw[black, ->, bend left] (s0) to ["$\pi^\star$"] (g1); 

\node[] at (9.5,-0.25) {$\tfrac{1}{2}$};
\node[] at (9.5,0.25) {$\tfrac{1}{2}$};

\draw [dashed] (10.25,1.5) --(10.25,-1.5);
\node[draw=black,circle,name=b3,minimum width=20pt] at (11,0) {$s$};
\node[draw=black,rectangle,name=o1,minimum width=20pt] at (12.5,0.5) {$x_1$};
\node[draw=black,rectangle,name=o2,minimum width=20pt] at (12.5,-0.5) {$x_2$};
\node[draw=white,name=r1,minimum width=20pt] at (14,0.5) {$r=0$};
\node[draw=white,name=r2,minimum width=20pt] at (14,0) {$r=0$};
\node[draw=white,name=r3,minimum width=20pt] at (14,-1) {$r=1$};

\draw[black,->] (o1) to (r1);
\draw[black,->] (o2) to (r2);
\draw[black,->] (o2) to (r3);
\draw[black,dashed,->,bend left] (b3) to (o1);
\draw[black,dashed,->,bend right] (b3) to (o2);
\draw[black] (12.3,1) node{$\textrm{Pr} = 1-\epsilon$};
\draw[black] (12.3,-1) node{$\textrm{Pr} = \epsilon$};
\draw[black] (13.15, -0.05) node{$b$};
\draw[black] (13.15, -0.85) node{$a$};

\end{tikzpicture}}
\caption{Further barriers to tractable algorithms. Circles
  denote states, while rectangles denote observations. Solid lines
  denote deterministic transitions. Dashed lines denote stochastic
  transitions (middle) or context emissions (right).  Left: construction for Theorem~\ref{thm:backup}, where $\Delta := g_{\textrm{bad}} - g^\star$ reflects the amount that $g_{\textrm{bad}}$ over-predicts in each state. 
  On the upper chain, statistical fluctuations can favor
  $g_{\textrm{bad}}$ over $g^\star$, which leads to a policy choosing the wrong
  action at the start. Center: construction for
  Proposition~\ref{prop:sq_loss_good}, where most policies induce a
  uniform distribution over states at level two and an average
  constraint cannot drive the agent to the top state. Right:
  construction for Proposition~\ref{prop:sq_loss_bad} where an $\epsilon$ loss in roll-out policy converts into a $\sqrt{\epsilon}$ prediction error in value function.  \label{fig:constructions}}
\end{figure*}

\begin{theorem}
  \label{thm:backup}
For any $H\ge 1$, $\epsilon \in (0, 1)$, there exists a
layered tabular MDP with $H$ levels, 2 states per level, and
constant-sized $\Gcal$ and $\Pi$ satisfying
Assumptions~\ref{asm:pol_realize}
and~\ref{asm:val_realize}, such that when
$n < 4^H / (32\epsilon^2)$, with probability at least $1/4$, the
bellman backup algorithm outputs a policy $\hat{\pi}$ such that
$V^{\hat{\pi}} \le V^\star - \epsilon$.
\end{theorem}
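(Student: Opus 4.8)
The plan is to exhibit a single hard instance: a layered tabular MDP with a start state $s_0$ at level~$1$ that branches, via two actions, into two deterministic ``chains'' running through levels $2,\dots,H$ and terminating in a stochastic reward. The ``good'' chain ends in a reward $\sim\mathrm{Ber}(\mu+\epsilon)$ and the ``bad'' chain in a reward $\sim\mathrm{Ber}(\mu)$, with $\mu=(1-\epsilon)/2$ so that all rewards lie in $[0,1]$; every other reward is $0$. Then $V^\star=\mu+\epsilon$, realized uniquely by the policy taking the ``good'' action at $s_0$ (all other states have a single action). For the value class I would take $\Gcal=\{g^\star,g_{\mathrm{bad}}\}$, where $g^\star$ is the true optimal value function and $g_{\mathrm{bad}}$ agrees with $g^\star$ on $s_0$ and on the good chain but over-predicts on the bad chain: writing $g_{\mathrm{bad}}-g^\star=\Delta_h>0$ on the bad-chain state at level~$h$, I would make these gaps grow geometrically by a factor of (essentially)~$2$ from the leaf up to the chain head, with head gap equal to $\epsilon$ (an infinitesimal amount larger, to break a tie at $s_0$) and hence leaf gap $\Delta_H=\Theta(\epsilon/2^{H})$. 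With $\Pi=\{\pi^\star,\pi_{\mathrm{bad}}\}$ (the two choices at $s_0$), both classes have constant size and satisfy Assumptions~\ref{asm:pol_realize} and~\ref{asm:val_realize}; note that it is exactly completeness (Assumption~\ref{asm:polval_complete}) that fails.

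Next I would trace the Bellman backup of Definition~\ref{def:bellman_backup} from the leaves up. Because both chains are deterministic with zero intermediate reward, the regression target at every \emph{non-leaf} chain state is a deterministic scalar, so the only sampling noise in the entire run enters through the $n$ Bernoulli rewards at the two leaves. On the good chain both members of $\Gcal$ predict $\mu+\epsilon$ at every state, so $\hat g$ equals $\mu+\epsilon$ there regardless of the samples, and in particular $\hat g$ at the good-chain head is $\mu+\epsilon$. On the bad chain, let $\hat p$ be the empirical mean of the $n$ Bernoulli$(\mu)$ leaf rewards; the least-squares step at the leaf selects $g_{\mathrm{bad}}$ (value $\mu+\Delta_H$) over $g^\star$ (value $\mu$) exactly when $\hat p\ge\mu+\Delta_H/2$. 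Conditioned on this event, I claim $g_{\mathrm{bad}}$ is then selected at \emph{every} bad-chain state by an upward induction: at level~$h$ the deterministic target is $\mu+\Delta_{h+1}$, the two candidate values $\mu$ and $\mu+\Delta_h=\mu+2\Delta_{h+1}$ are equidistant from it, so (breaking the tie toward $g_{\mathrm{bad}}$, or after an infinitesimal perturbation of the gaps that makes the preference strict) the backup picks $g_{\mathrm{bad}}$. Hence $\hat g$ at the bad-chain head equals $\mu+\Delta_2>\mu+\epsilon$, the greedy policy step at $s_0$ prefers the bad action, the returned policy routes every trajectory down the bad chain, and $V^{\hat\pi}=\mu=V^\star-\epsilon$. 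All of this is a deterministic consequence of the single leaf event $\{\hat p\ge\mu+\Delta_H/2\}$.

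It remains to lower-bound that event by $1/4$ under $n<4^H/(32\epsilon^2)$. Since $\Delta_H=\Theta(\epsilon/2^{H})$, the threshold $\mu+\Delta_H/2$ sits only $\Theta(\epsilon\sqrt n/2^{H})$ standard deviations above the mean $\mu$, and substituting $n<4^H/(32\epsilon^2)$ makes this a small absolute constant. The conclusion then follows from a binomial anti-concentration estimate of the form $\prob[\mathrm{Bin}(n,\mu)\ge n\mu+c\sqrt n]\ge 1/4$ for a suitable absolute constant $c$ (obtained, e.g., via Stirling / direct counting or a Berry--Esseen-type bound), applied with the threshold within $c\sqrt n$ of the mean.

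I expect two ingredients to carry the argument. The first is the geometric ``doubling'' of the over-predictions $\Delta_h$: this converts one unremarkable $\Theta(1/\sqrt n)$ fluctuation at the leaf into an $\Omega(\epsilon)$ error at the root, and it is tight---the level-$h$ target lies exactly halfway between the two candidates, so the propagation works only because the growth factor is exactly~$2$ (or $2-o(1)$ with a vanishing perturbation), which is precisely what pins down the $4^H/\epsilon^2$ scaling. The second is the quantitative anti-concentration bound with the right constant: getting probability at least $1/4$ (not merely a positive constant) needs a sharp lower bound on the binomial right tail at a small constant number of standard deviations, and making the constant in $n<4^H/(32\epsilon^2)$ line up is where the bookkeeping is most delicate. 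The remaining steps---verifying realizability, checking that the non-leaf regressions carry no noise, and reading off $V^{\hat\pi}$---are routine.
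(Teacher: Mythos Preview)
Your proposal is correct and essentially identical to the paper's proof: the same two-chain construction with Bernoulli leaf rewards, the same $g_{\mathrm{bad}}$ whose over-predictions on the bad chain double from the leaf up (so the leaf threshold sits $\Theta(\epsilon/2^H)$ above the mean), the same propagation via exact ties at intermediate levels (the paper invokes adversarial tie-breaking rather than your infinitesimal perturbation, but the effect is the same), and the same reduction of the whole failure event to a single binomial tail at the leaf. For the anti-concentration step the paper uses Slud's inequality together with the elementary bound $\Phi(x)\le 1/2 + x/\sqrt{2\pi}$, which gives exactly the constant $32$ in $n<4^H/(32\epsilon^2)$; this is the concrete tool behind what you described as a ``Berry--Esseen-type'' estimate.
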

A sketch of the construction is displayed in the left panel of
Figure~\ref{fig:constructions}. The intuition is that statistical
fluctuations at the final state can cause bad predictions, which can
exponentiate as we perform the backup. Ultimately this can lead to
choosing a exponentially bad action at the starting state.
The full proof follows:

\begin{proof}
    Consider an MDP with $H+1$ levels with deterministic transitions and with
one start state $x_0$ and two states per level
$\{x_{h,a}, x_{h,b}\}_{1 \le h \le H}$. From the start state there are
two actions $a,b$ where $a$ transitions to $x_{1,a}$ and $b$
transitions to $x_{1,b}$. From then on, there is just one action which
transitions from $x_{h,z}$ to $x_{h+1,z}$ $z \in \{a,b\}$. 
The reward
from $x_{H,a}$ is $\textrm{Ber}(1/2+\epsilon)$ and the reward
from $x_{H,b}$ is $\textrm{Ber}(1/2)$. 
Both value functions in the class have $g(x_{h,a}) =
1/2+\epsilon$. $g^\star$ is in the class and it has
$g^\star(x_0) = 1/2+\epsilon$, $g^\star(x_{h,b}) = 1/2$. There
is also a bad function 
$g_{bad}(x_0) = 1/2+\epsilon, g_{bad}(x_{h,b}) = 1/2 +
\epsilon/2^{h-1}$.

Since in our construction there are only two policies, and they only
differ at $x_0$, for the majority of the proof we can focus on
policy evaluation. The first step is to show that with
non-trivial probability, we will select $g_{bad}$ in the first square
loss problem. Since all functions make the same predictions on
$x_{H,a}$ we focus on $x_{H,b}$. Our goal is to show that $g_{bad}$ will be chosen by the algorithm with substantial probability.

\paragraph{A lower bound on the binomial tail.} The rewards from
$x_{H,b}$ are drawn from $\textrm{Ber}(1/2)$. Call this values
$r_1,\ldots, r_n$ with average $\bar{r}$. We select $g_{bad}$ if
$\bar{r} \ge 1/2 + \epsilon/2^{H}$. By Slud's lemma, the
probability is
\begin{align*}
  \PP[ \bar{r} \ge 1/2 +  \epsilon/2^{H}] \ge 1- \Phi\left( \frac{n \epsilon/2^{H} }{\sqrt{n/4}}\right)
\end{align*}
where $\Phi$ is the standard Gaussian CDF, which can be upper bounded by
\begin{align*}
\Phi(x) = \frac{1}{\sqrt{2\pi}} \int_{-\infty}^x \exp(-u^2/2)du  
= \frac{1}{2} + \frac{1}{\sqrt{2\pi}} \int_{0}^x \exp(-u^2/2)du \le \frac{1}{2} + \frac{x}{\sqrt{2\pi}}.
\end{align*}
Thus the probability is at least
\begin{align*}
  \ge \frac{1}{2} - \frac{1}{\sqrt{2\pi}} \frac{n \epsilon/2^H}{\sqrt{n/4}} \ge \frac{1}{2} - \sqrt{\frac{8}{2\pi}} \epsilon/2^H \sqrt{n} \ge 1/4
\end{align*}
where the final inequality holds since
$n \le \frac{4^H}{32\epsilon^2} <  \frac{\pi 4^H}{64\epsilon^2}$.

Thus with probability at least $1/4$ the average reward from state
$x_{H,b}$ is at least $1/2 + \epsilon/2^H$, in which case $g_{bad}$ is the
square loss minimizer. From this point on, at every subsequent level
$1 < h < H$, both $g$ and $g_{bad}$ have the same square loss and
tie-breaking can cause $g_{bad}$ to always be chosen. Thus the final
policy optimization step uses $g_{bad}$ to approximate the future but
$g_{bad}(x_{1,a}) = g_{bad}(x_{1,b}) = 1/2 + \epsilon$. Thus the
final policy can select action $b$, which leads to a loss of $\epsilon$.
\end{proof}

Actually, the policy optimization step is inconsequential in the
construction. As such, the theorem shows that FVI style learning rules
cannot avoid bias that propagates exponentially without further
assumptions, leading to an exponential sample complexity
requirement. We emphasize that the result focuses exclusively on the
learning rule and applies \emph{even} with small observation spaces
and \emph{regardless} of exploration component of the algorithm, with
similar conclusions holding for variations including
$Q$-representations and different loss
functions. Theorem~\ref{thm:backup} provides concrete motivation for
stronger realizability conditions such as
Assumption~\ref{asm:polval_complete}, variants of which are also used
in prior analysis of FVI-type methods~\citep{munos2008finite}.

\subsection{Challenges with Exploration}
We now turn to challenges with exploration that arise when factoring
the $Q$-function class into the $(g,\pi)$ pairs, which works well in
the deterministic setting, as in Section~\ref{sec:alg}. However, the
stochastic setting presents further challenges.
Our first construction shows that a decoupled approach using \olive's
average Bellman error in the learning rule can completely fail to
learn in the stochastic setting.

Consider an algorithm that uses an optimistic estimate for
$\hat{g}_{h+1}$ to find a policy $\hat{\pi}_h$ that drives further
exploration. Specifically, suppose that we find an estimate
$\hat{g}_{h+1}$ such that for all previously visited distributions
$D \in \Dcal_{h+1}$ at level $h+1$
\begin{align}  \label{eq:expectation_constraint}
 \EEx_{D} [\hat{g}_{h+1}(x)] =  \EEx_{D} [g^\star(x)],
\end{align}
where we assume that all expectations are exact.  We may further
encourage $\hat{g}_{h+1}$ to be optimistic over some distribution that
provides good coverage over the states at the next level. Then, we use
$\hat{\pi}_h = \argmax_{\pi} \EEx_{D_h} [r + \hat{g}_{h+1}(x') | a =
  \pi(x)]$ as the next exploration policy. Intuitively, optimism in
$\hat{g}_{h+1}$ should encourage $\hat{\pi}_h$ to visit a new
distribution, which will drive the learning process. Unfortunately,
the next proposition shows that this policy $\hat \pi$ may be highly
suboptimal and also fail to visit a new distribution.

\begin{proposition}
  \label{prop:sq_loss_good}
  There exists a problem, in which the algorithm above stops exploring new distributions when the best policy it finds is worse than the optimal policy by constant value.
\end{proposition}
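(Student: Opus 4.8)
The plan is to exhibit a small stochastic-dynamics MDP — the one sketched in the center of Figure~\ref{fig:constructions} — together with small function classes satisfying Assumptions~\ref{asm:pol_realize} and~\ref{asm:val_realize}, on which the displayed optimism-driven rule provably gets stuck. Take a two-level MDP with a single start state $s_0$ at level $1$ and two states $g$ (``good'') and $b$ (``bad'') at level $2$; the only reward is at level $2$, with $r_2 = 1$ deterministically in $g$ and $r_2 = 0$ in $b$ (and $r_1 \equiv 0$). At $s_0$ there are $K+1$ actions: a distinguished action $a^\star$ that transitions deterministically to $g$, and $K$ other actions, each of which transitions to the \emph{uniform} mixture over $\{g,b\}$. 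Then $g^\star(g) = 1$, $g^\star(b) = 0$, $g^\star(s_0) = 1$, so $V^\star = 1$, attained by $\pi^\star$ (which plays $a^\star$), whereas every policy that plays a non-$a^\star$ action at $s_0$ has value exactly $\tfrac12$. I take $\Gcal_2 = \{g^\star,\, g_{\mathrm{bad}}\}$ with the decoy $g_{\mathrm{bad}}(g) = 0$, $g_{\mathrm{bad}}(b) = 1$, and let $\Pi$ contain $\pi^\star$, so Assumptions~\ref{asm:pol_realize} and~\ref{asm:val_realize} hold.

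Next I would place the algorithm in the state it reaches after one exploration step, where the only recorded distribution at level $2$ is $D$, the uniform distribution over $\{g,b\}$: this is exactly the distribution induced by rolling in with any policy that plays a non-$a^\star$ action at $s_0$, and such a policy is a legitimate output of the first (unconstrained) round. The crux is that the decoy is \emph{feasible and optimistic}. It is feasible for the expectation constraint, since $\EEx_D[g_{\mathrm{bad}}] = \tfrac12 = \EEx_D[g^\star]$ (and this constraint is the only one that is ever active, since the recorded set at level $2$ never grows past $\{D\}$). And it is a valid optimistic choice: taking the ``good coverage'' distribution to be the full-support $\mu$ putting $\tfrac13$ on $g$ and $\tfrac23$ on $b$, one gets $\EEx_\mu[g_{\mathrm{bad}}] = \tfrac23 > \tfrac13 = \EEx_\mu[g^\star]$, so $g_{\mathrm{bad}}$ is the \emph{unique} maximizer of $\EEx_\mu[\cdot]$ over the feasible set; alternatively this is just the adversarial tie-breaking exploited in the \olive hardness proofs. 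Hence the rule selects $\hat g_2 = g_{\mathrm{bad}}$ in every round.

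With $\hat g_2 = g_{\mathrm{bad}}$ I would compute $\hat\pi_1(s_0) = \argmax_{a}\Ex[r_1 + \hat g_2(x_2)\mid s_0, a]$: action $a^\star$ scores $g_{\mathrm{bad}}(g) = 0$, while every non-$a^\star$ action scores $\tfrac12\big(g_{\mathrm{bad}}(g) + g_{\mathrm{bad}}(b)\big) = \tfrac12$; thus $\hat\pi_1$ plays a non-$a^\star$ action and $V^{\hat\pi_1} = \tfrac12 = V^\star - \tfrac12$, suboptimal by a constant. Finally, deploying this $\hat\pi_1$ for exploration only ever plays a non-$a^\star$ action at $s_0$, so the level-$2$ distribution it induces is again $D$, already recorded; no new distribution is ever discovered and the process makes no further progress. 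The same analysis applies verbatim to the average-value constraint in Line~\ref{lin:constraint2} of Algorithm~\ref{alg:local_value_dfslearn}, which has exactly this form, yielding the stated conclusion.

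I expect the main obstacle to be stating the claim against a sufficiently sharp model of ``the algorithm above'': one must make precise that exploration is driven \emph{solely} through $\hat\pi_h$, so that a failure to be optimistic about the deterministic edge into $g$ genuinely leaves the algorithm stuck — as opposed to a DFS such as \algname, which would eventually try $a^\star$ directly — and one must justify that selecting $g_{\mathrm{bad}}$ over $g^\star$ lies within the rule's specification, which (as in the \olive proofs) is precisely the underspecification the construction exploits, or is removed outright by the asymmetric coverage distribution $\mu$. A secondary bookkeeping point is ensuring the first recorded level-$2$ distribution is $D$ rather than something more informative (e.g.\ one that also charges the deterministic edge into $g$, which would over-constrain $\hat g_2(g)$); this is why the construction routes all $K$ non-optimal actions through the \emph{same} uniform mixture while keeping $a^\star$ off the exploration path.
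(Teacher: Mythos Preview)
Your construction is correct and captures the same core phenomenon as the paper: an expectation constraint of the form $\EEx_D[\hat g_{h+1}] = \EEx_D[g^\star]$ on the uniform mixture over the two level-$2$ states pins down only the average $\tfrac12$, leaving room for a feasible $\hat g_{h+1}$ that steers the exploration policy back to the same mixture. The details differ in two ways worth noting. First, the paper realizes the uniform level-$2$ distribution via \emph{deterministic} transitions together with many contexts at $s_0$ and a large family of bad policies that each split $50/50$ across the contexts; you instead use genuinely stochastic transitions out of $s_0$. Both are fine for this section (which explicitly targets the stochastic setting), and yours matches the figure more literally. Second, and more substantively, the paper's decoy is the constant function $\hat g_{h+1}\equiv \tfrac12$, under which \emph{all} policies (including $\pi^\star$) look identical in the policy-optimization step, so the argument relies on adversarial tie-breaking among $\Omega(2^m)$ bad policies versus the single $\pi^\star$; your decoy swaps values ($g_{\mathrm{bad}}(g)=0$, $g_{\mathrm{bad}}(b)=1$), which makes $a^\star$ \emph{strictly} dominated and removes the policy-level tie-break --- at the cost of moving the tie-breaking (or the choice of coverage distribution $\mu$) into the value-selection step. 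Either trade is legitimate given the algorithm's underspecification, and you flag this correctly. One minor point: your asymmetric $\mu=(\tfrac13,\tfrac23)$ is only one admissible ``good coverage'' choice; for other natural choices (e.g.\ uniform, or weighted toward $g$) $g_{\mathrm{bad}}$ is merely tied with or dominated by $g^\star$, so your argument ultimately leans on the same underspecification that the paper's does, just at a different step.
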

We sketch the construction in the center panel of
Figure~\ref{fig:constructions}.  We create a two level problem where
most policies lead to a uniform mixture over two subsequent states,
one good and one bad. Constraint~\eqref{eq:expectation_constraint} on
this distribution favors a value function that predicts $1/2$ on both
states, and with this function, the optimistic policy leads us back to
the uniform distribution. Thus no further exploration occurs!

\begin{proof}[Proof of Proposition~\ref{prop:sq_loss_good}]
    Consider a two-layer process with one initial state $s_0$ with
  optimal value $V(s_0) = 1$ and two future states $s_1,s_2$ where
  $V(s_1) = 1$, $V(s_2) = 0$. From $s_0$ action $a$ deterministically
  transitions to $s_1$ and action $b$ deterministically transitions to
  $s_2$, from $s_1$ and $s_2$ all actions deterministically receive
  the corresponding reward. No rewards are received upon making the
  first transition. There are $m$ contexts that are equally likely
  from $s_0$ and the policy class consists of one policy, $\pi^\star$
  that always chooses action $a$, and $\Omega(2^m)$ bad policies that
  choose action $a,b$ with equal probability. These policies each have
  value $1/2$.

  If we perform a roll-in with a bad policy, we generate a constraint
  of the form $|g(s_1)/2 + g(s_2)/2 - 1/2 | \le \epsilon$ at level
  two. Hence, with this constraint we might pick $\hat{g}_{h+1}$ such
  that $\hat{g}_{h+1}(s_1) = \hat{g}_{h+1}(s_2) = 1/2$, since it
  satisfies all the constraints and also has maximal average value on
  any existing roll-in. However, using this future-value function in
  the optimization
  \begin{align} \label{eq:needle_value}
    \EE_{x_h \sim s_0} [r_h + \hat{g}_{h+1}(s') | a_h = \pi(x_h)],
  \end{align}
  we see that all policies, including $\pi^\star$ have the same
  objective value. When we choose any one of them but $\pi^\star$, the ``optimistic'' value computed by maximizing Eq.\eqref{eq:needle_value} will be achieved by the chosen policy and the algorithm stops exploration with a suboptimal policy.
\end{proof}

The main point is that by using the average value
constraints~\eqref{eq:expectation_constraint}, we lose information
about the ``shape" of $g^\star$, which can be useful for exploration.
In fact, the proposition does not rule out approaches that learn the
shape of the state-value function, for example with square loss
constraints that capture higher order information. However square loss
constraints are less natural for value based reinforcement learning,
as we show in the next proposition. We specifically focus on measuring
a value function by its square loss to a near optimal roll-out.

\begin{proposition}
  \label{prop:sq_loss_bad}
  In the environment in the right panel of
  Figure~\ref{fig:constructions}, an $\epsilon$-suboptimal policy
  $\hat{\pi}$ achieves reward $0$, and the square loss of $g^\star$ w.r.t.~the roll-out reward is
$\Ex_{x \sim s}[ (g^\star(x) - r)^2 \mid a \sim \hat{\pi}] = \epsilon$. 
  This square loss is also achieved by a bad value function
  $g_{\textrm{bad}}$ such that $\Ex_{x \sim s}[g_{\textrm{bad}}(x) - g^\star(x)] = O(\sqrt{\epsilon})$. 
\end{proposition}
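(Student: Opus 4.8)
The plan is to verify the three claims by direct computation on the two-observation gadget in the right panel of Figure~\ref{fig:constructions}. Fix the relevant hidden state $s$, which emits observation $x_1$ with probability $1-\epsilon$ and $x_2$ with probability $\epsilon$; from $x_1$ every action gives reward $0$, while from $x_2$ action $a$ gives reward $1$ and action $b$ gives reward $0$. Hence $g^\star(x_1) = 0$, $g^\star(x_2) = 1$, and $V^\star(s) = \Ex_{x\sim s}[g^\star(x)] = \epsilon$. I would take $\Gcal = \{g^\star, g_{\textrm{bad}}\}$ (so that Assumption~\ref{asm:val_realize} holds) and let $\Pi$ contain the optimal policy together with the roll-out policy $\hat\pi$ described next.

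First I would check the statement about $\hat\pi$: let $\hat\pi$ be the policy that picks action $b$ at $x_2$ (and the unique action at $x_1$). Then along any roll-out from $s$ the reward $r$ is identically $0$, so $V^{\hat\pi}(s) = 0$ and $V^\star(s) - V^{\hat\pi}(s) = \epsilon$, i.e.\ $\hat\pi$ is $\epsilon$-suboptimal and achieves reward $0$. This immediately evaluates the roll-out square loss of $g^\star$: since $r \equiv 0$, $\Ex_{x\sim s}[(g^\star(x) - r)^2 \mid a\sim\hat\pi] = \Ex_{x\sim s}[g^\star(x)^2] = (1-\epsilon)\cdot 0 + \epsilon\cdot 1 = \epsilon$.

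Next I would exhibit the bad value function with the same loss but a much larger bias. Set $g_{\textrm{bad}}(x_2) = 0$ and $g_{\textrm{bad}}(x_1) = \sqrt{\epsilon/(1-\epsilon)}$, which lies in $[0,1]$ for $\epsilon \le 1/2$. Its roll-out square loss is $\Ex_{x\sim s}[g_{\textrm{bad}}(x)^2] = (1-\epsilon)\cdot \tfrac{\epsilon}{1-\epsilon} + \epsilon\cdot 0 = \epsilon$, identical to that of $g^\star$, whereas $\Ex_{x\sim s}[g_{\textrm{bad}}(x) - g^\star(x)] = (1-\epsilon)\sqrt{\epsilon/(1-\epsilon)} - \epsilon = \sqrt{\epsilon(1-\epsilon)} - \epsilon$. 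For small $\epsilon$ this is $\Theta(\sqrt\epsilon)$, in particular the $O(\sqrt\epsilon)$ claimed (and also $\Omega(\sqrt\epsilon)$), so a square-loss-to-roll-out constraint at tolerance $\epsilon$ cannot separate $g^\star$ from a value function whose expected prediction is off by order $\sqrt\epsilon$.

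There is essentially no hard step here: the gadget is designed precisely so that the square loss is second-order in, and hence insensitive to, the $\sqrt\epsilon$-scale bias. The only points needing a line of care are (i) observing that $\hat\pi$ makes the roll-out reward deterministically zero, which collapses the square loss to a second moment of $g^\star$, and (ii) checking that $g_{\textrm{bad}}$ stays in $[0,1]$ and that the bias $\sqrt{\epsilon(1-\epsilon)} - \epsilon$ is genuinely of order $\sqrt\epsilon$ rather than $\epsilon$, so the construction is a real barrier and not a vacuous one.
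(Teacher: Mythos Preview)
Your proposal is correct and follows essentially the same direct-computation approach as the paper. The only difference is the concrete choice of $g_{\textrm{bad}}$: the paper takes the constant function $g_{\textrm{bad}}(x_1)=g_{\textrm{bad}}(x_2)=\sqrt{\epsilon}$, which gives square loss $(1-\epsilon)\epsilon+\epsilon\cdot\epsilon=\epsilon$ and bias $\sqrt{\epsilon}-\epsilon$, whereas you place the overestimate entirely on $x_1$; both constructions yield identical square loss $\epsilon$ and $\Theta(\sqrt{\epsilon})$ bias, so the distinction is cosmetic.
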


The claim here is weaker than the previous two barriers, but it does
demonstrate some difficulty with using square loss in an approach that
decouples value function and policy optimization. The essence is that
a roll-out policy $\hat{\pi}$ that is slightly suboptimal on average
may have significantly lower variance than $\pi^\star$. Since the
square loss captures variance information, this means that $g^\star$
may have significantly larger square loss to $\hat{\pi}$'s rewards,
which either forces elimination of $g^\star$ or prevents us from
eliminating other bad functions, like $g_{\textrm{bad}}$ in the
example.

\begin{proof}[Proof of Proposition~\ref{prop:sq_loss_bad}]
  Consider a process with $H=1$ and just one state with two
  observations: $x_1$ and $x_2$, both with two actions. For $x_1$,
  both actions receive $0$ reward, while for $x_2$ action $a$ receives
  reward $1$ while action $b$ receives reward $0$. However,
  observation $x_2$ appears only with probability $\epsilon$. As such,
  an $\epsilon$-optimal policy from this state may choose action $b$
  on both contexts, receiving zero reward. Let $\hat{\pi}$ denote this
  near-optimal policy.

  The value function class $\Gcal$ provided to the algorithm has many
  functions, but three important ones are (1) $g_0$ which always
  predicts zero and is the correct value function for $\hat{\pi}$
  above, (2) $g^\star$ which is the optimal value function, and (3)
  $g_{\textrm{bad}}$, which we now define. These latter two function
  have
  \begin{align*}
    g^\star(x_1) \triangleq 0, & \quad g^\star(x_2) \triangleq 1\\
    g_{\textrm{bad}}(x_1) \triangleq \sqrt{\epsilon}, & \quad g_{\textrm{bad}}(x_2) \triangleq \sqrt{\epsilon}
  \end{align*}

  Now, let us calculate the square loss of these three value functions
  to the roll-out achieved by $\hat{\pi}$.
  \begin{align*}
    \textrm{sqloss}(g_0, \bd_H, \hat{\pi}) &= (1-\epsilon)(0-0)^2 + \epsilon (0 -0)^2 = 0\\
    \textrm{sqloss}(g^\star, \bd_H, \hat{\pi}) &= (1-\epsilon)(0-0)^2 + \epsilon (1 - 0)^2 = \epsilon\\
    \textrm{sqloss}(g_{\textrm{bad}}, \bd_H, \hat{\pi}) &= (1-\epsilon)(\sqrt{\epsilon}-0)^2 + \epsilon (\sqrt{\epsilon} - 0)^2 = \epsilon
  \end{align*}
  We see that $g_{\textrm{bad}}$ and $g^\star$ have identical square
  loss on this single state, which proves the claim.

  Intuitively, this is bad because if we use constraints defined in
  terms of square loss, we risk eliminating $g^\star$ from the
  feasible set, or we need the constraint threshold to be so high that
  bad functions like $g_{\textrm{bad}}$ remain. These bad function can
  cause exploration to stagnate or introduce substantial bias
  depending on the learning rule.
\end{proof}

To summarize, in this section we argue for the necessity of
completeness type conditions for FVI-type learning procedures, and
demonstrate barriers for exploration with decoupled optimization
approaches, both with expectation and square loss constraints. We
believe overcoming these barriers is crucial to the development of a
computationally efficient algorithm.

\end{document}